\def\eqref#1{equation~\ref{#1}}
\def\1{\bm{1}}
\DeclareMathAlphabet{\mathsfit}{\encodingdefault}{\sfdefault}{m}{sl}
\SetMathAlphabet{\mathsfit}{bold}{\encodingdefault}{\sfdefault}{bx}{n}
\theoremstyle{plain}
\newtheorem{theorem}{Theorem}[section]
\newtheorem{proposition}[theorem]{Proposition}
\newtheorem{lemma}[theorem]{Lemma}
\newtheorem{remark}[theorem]{Remark}
\theoremstyle{definition}
\newtheorem{definition}[theorem]{Definition}
\newtheorem{assumption}[theorem]{Assumption}
\def\##1\#{\begin{align}#1\end{align}}
\def\$#1\${\begin{align*}#1\end{align*}}
\newenvironment{proof_sketch}{\noindent{\textbf{Proof Sketch}.}}{\hfill$\square$}
\title{Adversarial Training Can Provably Improve Robustness: Theoretical Analysis of Feature Learning Process Under Structured Data}
\author{%
  Binghui Li
  \\
  Center for Machine Learning Research\\
  Peking University\\
  \texttt{libinghui@pku.edu.cn} \\
  \And
  Yuanzhi Li\\
  Machine Learning Department\\
  Carnegie Mellon University\\
  \texttt{yuanzhil@andrew.cmu.edu}
}
\begin{document}

\maketitle

\begin{abstract}
  Adversarial training is a widely-applied approach to training deep neural networks to be robust against adversarial perturbation. However, although adversarial training has achieved empirical success in practice, it still remains unclear why adversarial examples exist and how adversarial training methods improve model robustness. In this paper, we provide a theoretical understanding of adversarial examples and adversarial training algorithms from the perspective of feature learning theory. Specifically, we focus on a multiple classification setting, where the structured data can be composed of two types of features: the robust features, which are resistant to perturbation but sparse, and the non-robust features, which are susceptible to perturbation but dense. We train a two-layer smoothed ReLU convolutional neural network to learn our structured data. First, we prove that by using standard training (gradient descent over the empirical risk), the network learner primarily learns the non-robust feature rather than the robust feature, which thereby leads to the adversarial examples that are generated by perturbations aligned with negative non-robust feature directions. Then, we consider the gradient-based adversarial training algorithm, which runs gradient ascent to find adversarial examples and runs gradient descent over the empirical risk at adversarial examples to update models. We show that the adversarial training method can provably strengthen the robust feature learning and suppress the non-robust feature learning to improve the network robustness. Finally, we also empirically validate our theoretical findings with experiments on real-image datasets, including MNIST, CIFAR10 and SVHN.
\end{abstract}

\section{Introduction}

Recently, large-scale neural networks have achieved remarkable performance in many disciplines, especially in computer vision \citep{krizhevsky2012imagenet, dosovitskiy2021an, kirillov2023segment} and natural language processing \citep{kenton2019bert, brown2020language, ouyang2022training, achiam2023gpt}. However, it is well-known that neural networks are vulnerable to small but adversarial perturbations, i.e., natural data with strategic perturbations called adversarial examples \citep{biggio2013evasion, szegedy2013intriguing, goodfellow2014explaining}, which can confuse well-trained network classifiers. This potentially leads to reliability and security issues in real-world applications.

To mitigate this problem, one seminal approach to improve robustness of models is called adversarial training \citep{goodfellow2014explaining,madry2018towards,shafahi2019adversarial,zhang2019theoretically,pang2022robustness,wang2023better}, which iteratively generates adversarial examples from the training data and updates the model with these adversarial examples rather than the original training examples.

However, despite the significant empirical success of adversarial training in enhancing the robustness of neural networks across various datasets, the theoretical understanding of adversarial examples and adversarial training still remains unclear, particularly from the perspective of network optimization. Therefore, we ask the following fundamental theoretical questions:

\begin{center}
    \textbf{Q1:} \textit{Why do neural networks trained with standard training tend to converge to non-robust solutions that fail to classify adversarially-perturbed data?}
    \\
\textbf{Q2:} \textit{How does the adversarial training algorithm assist in optimizing neural networks to enhance their robustness against adversarial perturbations?}
\end{center}

Indeed, we emphasize that a common challenge in analyzing adversarial robustness is the gap between theory and practice, primarily attributed to the data assumptions in theoretical frameworks (see detailed discussion in Section~\ref{sec:rw}), which motives us considering realistic data model. In our paper, the data foundation that we leverage is predicated on the decomposition of robust and non-robust features, which suggests that data is comprised of {two distinct types of features: \textit{robust features}, characterized by their strength yet sparsity, and \textit{non-robust features}, noted for their vulnerability yet density} (Assumption~\ref{ass:robust_non_robust_feature}). This decomposition has been empirically investigated in a series of previous studies \citep{tsipras2018robustness,ilyas2019adversarial,kim2021distilling,tsilivis2022can,han2023interpreting}. Furthermore, we mathematically represent this concept as the \textit{patch-structured data} proposed in the recent work of \citet{allen-zhu2023towards}, in which they utilize multi-view-based patch-structured data to provide a fruitful setting for theoretically understanding the benefits of ensembles in deep learning. Specifically, inspired by \citet{allen-zhu2023towards} and \citet{ilyas2019adversarial}, we propose a novel patch-structured data model based on \textit{robust/non-robust feature decomposition} (Definition~\ref{def:data}), and show our patch data model enables us to rigorously establish the existence of adversarial examples and demonstrate the efficacy of adversarial training by directly analyzing the feature learning process for two-layer networks under our structured data. More precisely, the main results in our work are summarized as follows:

\begin{itemize}
    \item 
    By analyzing the feature learning process on \textit{robust/non-robust feature decomposition based data}, we demonstrate that in standard training, the neural network \textit{predominantly learns non-robust features} rather than robust features (Theorem \ref{thm:std_training}). This leads to the generation of adversarial examples with perturbations \textit{stemming from these non-robust features}
    \item 
    Furthermore, we show that adversarial training algorithms can provably both \textit{suppress} the learning of non-robust features and \textit{enhance} the learning of robust features (Theorem \ref{thm:adv_training}), thereby improving models robustness.
    \item 
    We also substantiate the theoretical findings about robust and non-robust feature learning discussed in Section \ref{sec:result} through a series of experiments conducted on real-image datasets (MNIST, CIFAR10, and SVHN). Detailed results can be viewed in Figure \ref{fig:feature_learning_real_data}.
\end{itemize}

\begin{figure}
    \centering
    \includegraphics[scale = 0.5]{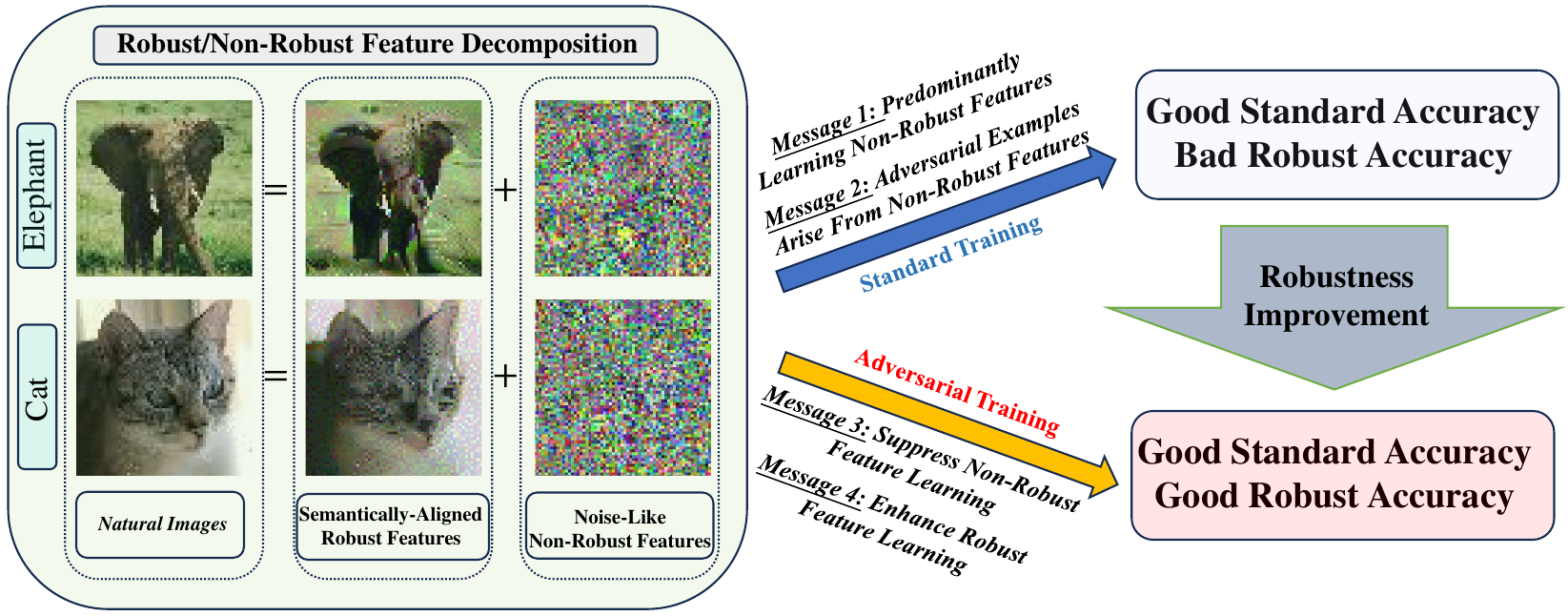}
    \caption{\textbf{An overview of our paper:} robust/non-robust-feature-decomposition-based framework and key messages about standard/adversarial training. And the robust/non-robust features of elephant and cat are generated in the same way of \citet{ilyas2019adversarial} from random noise to ImageNet instances.}
    \label{fig:overall}
\end{figure}

\subsection{Related Works}
\label{sec:rw}

\textbf{Theoretical Explanations for Adversarial Examples.}
A line of works \citep{daniely2020most,bubeck2021single,bartlett2021adversarial,montanari2023adversarial} demonstrates the existence of adversarial examples in random-weight neural networks with various architectures. Another line of works \citep{bubeck2021law,bubeck2021universal,li2022robust,li2023clean} suggests that over-parameterization is necessary to achieve robustness, and that non-robustness may stem from the expressive power of neural networks. However, these works do not consider the optimization process when explaining adversarial examples in trained networks. Recently, \citet{frei2024double} and \citet{li2024feature} proved that, for two-layer ReLU networks, gradient method leads to well-generalizing but non-robust solutions under a synthetic multi clusters data assumption. In our paper, we analyze the feature learning process under a more realistic structured data model inspired by the robust/non-robust feature decomposition proposed in \citet{ilyas2019adversarial}. Indeed, we not only prove that standard training causes a two-layer neural network to converge to a non-robust solution, but also rigorously analyze how adversarial training algorithm provably guides the network towards a robust solution.

\textbf{Theoretical Understanding of Adversarial Training for Linear Models.}
A series of works \citep{Li2020Implicit, javanmard2022precise, chen2023benign} demonstrate that a linear classifier trained through adversarial training can achieve robustness under the Gaussian-mixture data model. However, standard training does not explicitly converge to non-robust solutions under these conditions. This discrepancy does not align with the empirical observation that networks trained by standard methods exhibit poor robust performance \citep{biggio2013evasion, szegedy2013intriguing, goodfellow2014explaining}. For example, as noted in \citet{chen2023benign}, similar to standard training, adversarial training also directionally converges to the maximum $\ell_2$-margin solution when considering a Gaussian-mixture data model with $\ell_2$ perturbations. This suggests that, under their settings, standard training alone can achieve adversarial robustness due to the maximum-margin implicit bias, even though neural networks trained with standard training typically exhibit non-robustness in practice. In our paper, to bridge the gap between theory and practice, we consider a more structured data assumption and apply a non-linear two-layer CNN as the learner, which ensures that both robust global minima and non-robust global minima exist due to the non-linearity of our data model and and non-convexity of our learner model (see a detailed discussion in Section \ref{sec:landscape}).

\textbf{Feature Learning Theory of Deep Learning.} 
The feature learning theory of neural networks, as proposed in various recent studies \citep{wen2021toward, allen2022feature, chen2022towards, jelassi2022vision, chidambaram2023provably, allen2023backward, allen-zhu2023towards, lu2024benign, chen2024training, li2024feature}, aims to explore how features are learned in deep learning tasks. This theory extends the theoretical optimization analysis paradigm beyond the scope of the neural tangent kernel (NTK) theory \citep{jacot2018neural, du2018gradient, du2019gradient, allen2019convergence, arora2019fine}. Based on the sparse coding model, \citet{allen2022feature} consider a binary robust classification problem and proposes a principle called feature purification to explain the workings of adversarial training. In our paper, we focus on a multiple robust classification problem by leveraging more image-like, patch-structured data with an assumption of robust/non-robust feature decomposition. We study how the feature learning process differs when applying adversarial training instead of standard training.

\section{Problem Setup}

\subsection{Notations}

Throughout this work, we use letters for scalars and bold letters for vectors. For any given two sequences $\left\{A_n\right\}_{n=0}^{\infty}$ and $\left\{B_n\right\}_{n=0}^{\infty}$, we denote $A_n=O\left(B_n\right)$ if there exist some absolute constant $C_1>0$ and $N_1>0$ such that $\left|A_n\right| \leq C_1\left|B_n\right|$ for all $n \geq N_1$. Similarly, we denote $A_n=\Omega\left(B_n\right)$ if there exist $C_2>0$ and $N_2>0$ such that $\left|A_n\right| \geq C_2\left|B_n\right|$ for all $n>N_2$. We say $A_n=\Theta\left(B_n\right)$ if $A_n=O\left(B_n\right)$ and $A_n=\Omega\left(B_n\right)$ both holds. We use $\widetilde{O}(\cdot), \widetilde{\Omega}(\cdot)$, and $\widetilde{\Theta}(\cdot)$ to hide logarithmic factors in these notations respectively. Moreover, we denote $A_n=\operatorname{poly}\left(B_n\right)$ if $A_n=O\left(B_n^K\right)$ for some positive constant $K$, and $A_n=\operatorname{polylog}\left(B_n\right)$ if $B_n=\operatorname{poly}\left(\log \left(B_n\right)\right)$. We say $A_n = o(B_n)$ (or $A_n \ll B_n$ or $B_n \gg A_n$) if for arbitrary positive constant $C_3 > 0$, there exists $N_3 > 0$ such that $|A_n| < C_3 |B_n|$ for all $n>N_3$. And we also use $A_n \approx B_n$ to denote $A_n = B_n + o(1)$.

\begin{figure}
    \centering
    \includegraphics[scale = 0.25]{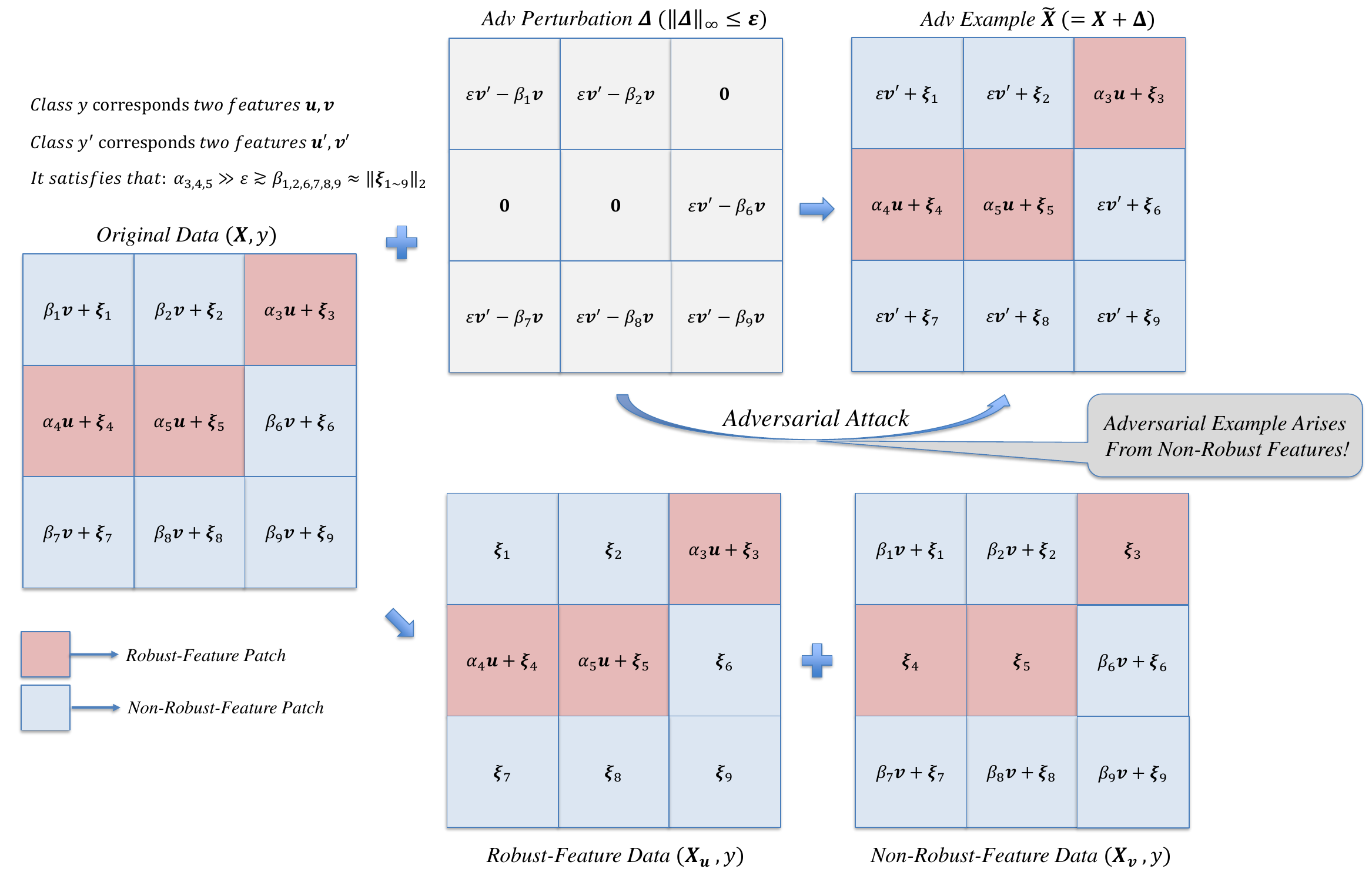}
    \caption{\textbf{Illustration of our patch data:} Each patch in data point $(\boldsymbol{X}, y)$ has the form $\boldsymbol{x}_p = \alpha_p \boldsymbol{u} + \boldsymbol{\xi}_p$ (robust-feature patch) or $\boldsymbol{x}_p = \beta_p \boldsymbol{v} + \boldsymbol{\xi}_p$ (non-robust-feature patch), where $\boldsymbol{u}, \boldsymbol{v}$ are the corresponding features for class $y$. For non-robust-feature patches, adversarial perturbation $\boldsymbol{\Delta}$ replaces non-robust feature $\boldsymbol{v}$ with other non-robust feature $\boldsymbol{v}'$ (corresponding to other class $y'$), which causes adversarial example $\Tilde{\boldsymbol{X}}$ with incorrect label $y'$ when the network learner trained by standard training mainly learns non-robust features $\boldsymbol{v},\boldsymbol{v}'$ rather than robust features $\boldsymbol{u}, \boldsymbol{u}'$. And we construct robust-feature/non-robust-feature data $\boldsymbol{X}_{\boldsymbol{u}}/\boldsymbol{X}_{\boldsymbol{v}}$ by replacing $\boldsymbol{v}/\boldsymbol{u}$ with all-zero vector $\boldsymbol{0}$.}
    \label{fig:data_model}
\end{figure}

\subsection{Data Distribution}

In this paper, we consider a $k$-class classification problem involving data that is structured into $P$ patches, with each patch having a dimension of $d$. Specifically, each labeled data point is represented by $(\boldsymbol{X}, y)$, where $\boldsymbol{X} = (\boldsymbol{x}_1, \boldsymbol{x}_2, \dots, \boldsymbol{x}_P) \in \left(\mathbb{R}^{d}\right)^{P}$ denotes the data vector, and $y \in [k]$ signifies the data label. We first present the formal definition of robust and non-robust features as follows.

\begin{definition}[Robust and Non-robust Features]
\label{def:robust_non_robust_feature}

We assume that each label class $j \in [k]$ is associated with two types of features, for the sake of mathematical simplicity, represented as two feature vectors $\boldsymbol{u}_j$ (robust feature) and $\boldsymbol{v}_j$ (non-robust feature), both in $\mathbb{R}^{d}$. For notation simplicity,
we also assume that all the features are orthonormal and parallel to the coordinate axes. Namely,
the set of all features is defined as
\begin{equation}
    \mathcal{F}:=\{\boldsymbol{u}_j, \boldsymbol{v}_j\}_{j\in [k]}, \nonumber
\end{equation}
which satisfies that
\begin{equation}
    \forall \boldsymbol{f} \in \mathcal{F}, \|\boldsymbol{f}\|_2 = \|\boldsymbol{f}\|_{\infty} = 1 \quad \textit{and} \quad \forall \boldsymbol{f} \ne \boldsymbol{f'} \in \mathcal{F}, \boldsymbol{f} \perp \boldsymbol{f'}. \nonumber
\end{equation}    
\end{definition}

For simplicity, we focus on the case when the dimension of patch $d$ is sufficiently large (i.e. we assume $d = \operatorname{poly}(k)$ for a large polynomial) such that all $2k$ features can be orthogonal in the space $\mathbb{R}^{d}$. And we use “with high probability” to denote with probability at least $1-e^{-\Omega(\operatorname{log}^{2}d)}$. 

Now, we give the following robust/non-robust-feature-decomposition-based patch-structured data distribution and some assumptions about it.

\begin{definition}[Patch Data Distribution]
\label{def:data}

Each data pair $(\boldsymbol{X}, y) \in \left(\mathbb{R}^{d}\right)^{P} \times [k]$ is generated from the distribution $\mathcal{D}$ with latent distributions $\{(\mathcal{D}_{\mathcal{J},y}, \mathcal{D}_{\alpha,y}, \mathcal{D}_{\beta,y})\}_{y\in [k]}$, where $\mathcal{D}_{\mathcal{J},y}$ is a probability distribution over all $2$-partitions of $[P]$ and $\mathcal{D}_{\alpha,y}, \mathcal{D}_{\beta,y}$ are two distributions over the positive real number. Then, it generates data points as follows.
\begin{enumerate}
    \item 
    The label $y$ is uniformly drawn from $[k]$.
    \item 
    Uniformly draw the two-type patch index sets $(\mathcal{J}_{\textit{R}}, \mathcal{J}_{\textit{NR}}) \subset [P] \times [P]$ from the distribution $\mathcal{D}_{\mathcal{J}, y}$, where $\mathcal{J}_{\textit{R}}$ and $\mathcal{J}_{\textit{NR}}$ corresponds to the robust-feature patches and non-robust feature patches such that $\mathcal{J}_{\textit{R}} \cup \mathcal{J}_{\textit{NR}} = [P]$ and $\mathcal{J}_{\textit{R}} \cap \mathcal{J}_{\textit{NR}} = \emptyset$.
    \item 
    For each $p \in [P]$, the corresponding patch vector is generated as
    \begin{equation}
        \boldsymbol{x}_p := \begin{cases}
            \alpha_p \boldsymbol{u}_y + \boldsymbol{\xi}_p ,\textit{ if } p \in \mathcal{J}_{\textit{R}} &\textit{ (robust-feature patch)}
            \\
            \beta_p \boldsymbol{v}_y + \boldsymbol{\xi}_p ,\textit{ if } p \in \mathcal{J}_{\textit{NR}} &\textit{ (non-robust-feature patch)}
        \end{cases} \nonumber
    \end{equation}
    where $\alpha_p,\beta_p > 0$ are the random coefficients sampled from the distribution $\mathcal{D}_{\alpha, y}, \mathcal{D}_{\beta, y}$ respectively, and $\boldsymbol{\xi}_p \sim \mathcal{N}(\boldsymbol{0}, \sigma_n^2 \mathcal{I}_d)$ is the random Gaussian noise with variance $\sigma_n^{2}$.
\end{enumerate}
\end{definition}

\begin{assumption}
\label{ass:robust_non_robust_feature}

We suppose that the following conditions holds for the data distribution $\mathcal{D}$. In a data point $(\boldsymbol{X}, y)$ sampled from $\mathcal{D}$, with high probability, it satisfies:
\begin{itemize}
    \item 
    Robust feature is stronger than non-robust feature: $\forall (p, p') \in \mathcal{J}_{\textit{R}} \times \mathcal{J}_{\textit{NR}}, \alpha_p \gg \beta_{p'}$.
    \item 
    Non-robust feature is denser than robust feature: $\exists \tau \geq 0, \sum_{p \in \mathcal{J}_{\textit{R}}} \alpha_p^\tau \ll \sum_{p \in \mathcal{J}_{\textit{NR}}} \beta_p^\tau$.
\end{itemize}
\end{assumption}

\textbf{Regarding the first condition of Assumption \ref{ass:robust_non_robust_feature}.} We further assume that $\alpha_p, \beta_p$ concentrate on their expectations (i.e., w.h.p. $\alpha_p \approx \mathbb{E}[\alpha_p], \beta_p \approx \mathbb{E}[\beta_p]$) and $\mathbb{E}[\alpha_p] \gg \mathbb{E}[\beta_p] = \Theta(\sigma_n \sqrt{d})$ for simplicity. Then, we know, with high probability over a sampled data, it holds that $\forall (p,p')\in \mathcal{J}_{\textit{R}} \times \mathcal{J}_{\textit{NR}}, \alpha_p \gg \beta_p \approx \|\boldsymbol{\xi}_{p'}\|_2 \approx \|\boldsymbol{\xi}_p\|_2 \approx \sigma_n \sqrt{d}$, which means that robust-feature patches $\boldsymbol{x}_p=\alpha_p\boldsymbol{u}_y + \boldsymbol{\xi}_p \approx \alpha_p\boldsymbol{u}_y$ appear more prominent, but non-robust-feature patches $\boldsymbol{x}_{p'} = \beta_p\boldsymbol{v}_y + \boldsymbol{\xi}_{p'}$ are noise-like. 

\textbf{Regarding the second condition of Assumption \ref{ass:robust_non_robust_feature}.} Here, $\tau$ is an absolutely constant. We notice that when $\tau = 0$, it implies w.h.p. $|\mathcal{J}_{\textit{R}}| \ll |\mathcal{J}_{\textit{NR}}|$, which manifests that non-robust-feature patches are denser than robust-feature patches. And we assume $\tau \geq 3$ for simplifying our mathematical analysis.

\textbf{Our Patch Data Aligns with Realistic Images.} In all, it shows that Assumption \ref{ass:robust_non_robust_feature} can be tied to a down-sized version of convolutional networks applied to image classification data. With a small kernel size, high-magnitude good features that are easily perceivable by humans in an image typically appear only at a few patches (such as the ears of a cat or the nose of an elephant), and most other patches look like random noise to human observers (such as the textures of cats and elephants blended into a random background). See illustrations of real images and our patch data in Figures \ref{fig:overall} and \ref{fig:data_model}.

\begin{remark}
\label{rem:robust_non_robust_feature}    

Previous empirical works of \citet{ilyas2019adversarial,kim2021distilling} characterize robust features as useful for both clean and robust classification, whereas non-robust features, although helpful for clean classification, fail in robust scenarios. Consistent with these observations, our definitions and assumptions suggest that networks leveraging robust features $\{\boldsymbol{u}_i\}_{i\in[k]}$ act as robust classifiers, while those relying on non-robust features $\{\boldsymbol{v}_i\}_{i\in[k]}$ perform well on clean data but not on perturbed data, as detailed in Propositions \ref{prop:non-robust-minima} and \ref{prop:robust-minima} in Section \ref{sec:landscape}.
\end{remark}

\subsection{Network Learner}
We consider the setting of learning the data distribution $\mathcal{D}$ by applying the same two-layer convolutional architecture used in \citet{allen-zhu2023towards} 
with the following smoothed ReLU activation.

\textbf{Activation.} For integer $q \geq 2$ and threshold $\varrho$, the smoothed ReLU is defined as $\widetilde{\operatorname{ReLU}}(z):=0$ for $z \leq 0 ; \widetilde{\operatorname{ReLU}}(z):=\frac{z^q}{q \varrho^{q-1}}$ for $z \in[0, \varrho] ;$ and $\widetilde{\operatorname{ReLU}}(z):=z-\left(1-\frac{1}{q}\right) \varrho$ for $z \geq \varrho$.

$\widetilde{\operatorname{ReLU}}$ addresses the non-smoothness of original ReLU function at zero.We focus on the case when $q=3$ and $\varrho=\frac{1}{\operatorname{polylog}(d)}$ for simplicity, while our result
indeed applies to other constants $q \in [3, \tau]$.

\textbf{Network Model.} For the $k$-class classification task, we consider the following two-layer convolutional neural network as $\boldsymbol{F}(\boldsymbol{X}) = (F_1(\boldsymbol{X}), F_2(\boldsymbol{X}), \dots, F_k(\boldsymbol{X})) : \left(\mathbb{R}^{d}\right)^{P} \rightarrow \mathbb{R}^{k}$, and $F_i(\boldsymbol{X})$ denotes
\begin{equation}
    F_i(\boldsymbol{X}) := \sum_{r\in [m]}\sum_{p\in [P]} \widetilde{\operatorname{ReLU}}(\langle \boldsymbol{w}_{i,r}, \boldsymbol{x}_p\rangle) , \nonumber
\end{equation}
where $\{\boldsymbol{w}_{i,r}\in \mathbb{R}^{d}\}_{(i,r)\in [k]\times[m]}$ are learnable weights for different convolutional filters. We set the width $m=\operatorname{polylog}(d)$ to achieve
mildly over-parameterization for efficient optimization purpose.

\subsection{Standard Training}

\textbf{Training Objective.} During the standard training, we learn the concept class (namely, the labeled data distribution $\mathcal{D}$) by minimizing the cross-entropy loss function $\mathcal{L}_{\textit{CE}}$ using $N=\operatorname{poly}(d)$ training data points $\mathcal{Z}=\left\{\left(\boldsymbol{X}_i, y_i\right)\right\}_{i \in[N]}$ randomly sampled from $\mathcal{D}$. By denoting $\mathcal{L}_{\textit{CE}}(\boldsymbol{F} ; \boldsymbol{X}, y):=-\log \frac{e^{F_y(X)}}{\sum_{j \in[k]} e^{F_j(X)}}$, we use the empirical loss $
\mathcal{L}_{\textit{CE}}(\boldsymbol{F}):=\mathbb{E}_{(\boldsymbol{X}, y) \sim \mathcal{Z}}[\mathcal{L}_{\textit{CE}}(\boldsymbol{F} ; \boldsymbol{X}, y)]$ as objective.

\textbf{Network Initialization.} We randomly initialize the network $\boldsymbol{F}$ by letting each $\boldsymbol{w}_{i, r}^{(0)} \sim \mathcal{N}\left(0, \sigma_0^2 \mathcal{I}_d\right)$ for $\sigma_0^2=\frac{1}{d}$, which is the standard Xavier initialization \citep{glorot2010understanding}.

\textbf{Training Algorithm.} To train the model, at each iteration $t$ we update using the gradient descent (GD) with small learning rate $\eta\leq \frac{1}{\operatorname{poly}(d)}$: 
$
\boldsymbol{w}_{i, r}^{(t+1)} = \boldsymbol{w}_{i, r}^{(t)}-\eta \mathbb{E}_{(\boldsymbol{X}, y) \sim \mathcal{Z}}  \left[\nabla_{\boldsymbol{w}_{i, r}}\mathcal{L}_{\textit{CE}}\left(\boldsymbol{F}^{(t)} ; \boldsymbol{X}, y\right)\right] ,
$

where we run the algorithm for $T=\frac{\operatorname{poly}(d)}{\eta}$ iterations. We use $\boldsymbol{F}^{(t)}$ to denote the model at iteration $t$. 

\subsection{Adversarial Training}

$\ell_p$\textbf{-Adversarial Robustness.} In our work, we consider $\ell_p$-robustness within a perturbation radius $\epsilon>0$, especially the $\ell_{\infty}$-norm on which we focus henceforth for notation simplicity. Our main results can be easily extended to other $\ell_p$-norm case $(p\geq2)$. 

\textbf{Small Perturbation Radius.} In our setting, we choose $\epsilon=\Theta(\sigma_n \sqrt{d})$. Then, for two data points $(\boldsymbol{X}, y), (\boldsymbol{X}', y') \sim \mathcal{D}$ with distinct labels $y\ne y' \in [k]$, it can be checked that w.h.p. $\|\boldsymbol{X}-\boldsymbol{X}'\|_{\infty} \gg \Theta(\sigma_n \sqrt{d}) = \Theta(\epsilon)$, which is consistent with the empirical observation that typical perturbation radius is often much smaller than the separation distance between different classes \citep{yang2020closer}.

{\textbf{Adversarial Example.} For a given network $\boldsymbol{F}:=(F_1,F_2,\dots, F_k)$ and a data point $(\boldsymbol{X}, y)$, we say that $\widetilde{\boldsymbol{X}}$ is an adversarial example \citep{szegedy2013intriguing} if the classifier predicts a wrong label for it (i.e. $\operatorname{argmax}_{j\in [k]}F_j(\widetilde{\boldsymbol{X}})\ne y$) and the perturbation $\boldsymbol{\Delta}:=\widetilde{\boldsymbol{X}}-\boldsymbol{X}$ satisfies $\|\boldsymbol{\Delta}\|_{\infty}\leq\epsilon$.
}

\textbf{Adversarial Training Algorithm.} During adversarial training, we first find the adversarial examples $(\widetilde{\boldsymbol{X}}, y)$ by one-step gradient ascent with learning rate $\Tilde{\eta}$ ($\gg \eta$) over the margin loss $\mathcal{L}_{\textit{margin}}(\boldsymbol{F} ; \boldsymbol{X}, y)$ using training data points $(\boldsymbol{X},y) \in \mathcal{Z}$. Here, we choose $\mathcal{L}_{\textit{margin}}(\boldsymbol{F} ; \boldsymbol{X}, y):=-F_y(\boldsymbol{X})$ for simplicity, and our theoretical analysis can also be extended to the standard margin-based adversarial-attack objective function $\mathcal{L}_{\textit{margin}}(\boldsymbol{F} ; \boldsymbol{X}, y):=-\left(F_y(\boldsymbol{X})-\operatorname{max}\limits_{j\in[k]\setminus\{y\}}F_j(\boldsymbol{X}) \right)$ \citep{carlini2017towards,gowal2019alternative,sriramanan2020guided}. And then we train the network parameters $\{\boldsymbol{w}_{i,r}\}_{(i,r)\in[k]\times[m]}$ by taking  gradient descent over the adversarial loss $\mathbb{E}_{(\boldsymbol{X}, y) \sim \mathcal{Z}}  \left[\mathcal{L}_{\textit{CE}}\left(\boldsymbol{F} ; \widetilde{\boldsymbol{X}}, y\right)\right]$. 

Concretely, the adversarial examples $\{(\widetilde{\boldsymbol{X}}^{(t)},y)\}$ and the network $\boldsymbol{F}^{(t)}$ are updated alternatively as
\begin{equation}
\begin{cases}
\widetilde{\boldsymbol{X}}^{(t)} = \boldsymbol{X} + \operatorname{Clip}_{\infty, \epsilon}\left(\Tilde{\eta}\nabla_{\boldsymbol{X}}\mathcal{L}_{\textit{margin}}\left(\boldsymbol{F}^{(t)} ; \boldsymbol{X}, y\right)\right), \quad\forall (\boldsymbol{X},y) \in \mathcal{Z} , 
    \\
    \boldsymbol{w}_{i, r}^{(t+1)} = w_{i, r}^{(t)}-\eta \mathbb{E}_{(\boldsymbol{X}, y) \sim \mathcal{Z}}  \left[\nabla_{\boldsymbol{w}_{i, r}}\mathcal{L}_{\textit{CE}}\left(\boldsymbol{F}^{(t)} ; \widetilde{\boldsymbol{X}}^{(t)}, y\right)\right], \quad\forall (i,r) \in [k]\times[m] ,
    \nonumber
\end{cases}
\end{equation}

where $\epsilon>0$ is the $\ell_{\infty}$-perturbation radius and patch-wise clip function $\operatorname{Clip}_{\infty, \epsilon}(\cdot)$ is used to enable $\widetilde{\boldsymbol{X}}^{(t)} \in \mathbb{B}_{\infty}(\boldsymbol{X}, \epsilon)$, which is defined as, for the clip radius $\rho>0$ and a given flattened patch data $\boldsymbol{Z}=(z_1,z_2,\dots,z_{P d})\in \left(\mathbb{R}^{d}\right)^{P}$, $\operatorname{Clip}_{\infty,\rho}(\boldsymbol{Z}):=(\Tilde{z}_1,\Tilde{z}_2, \dots, \Tilde{z}_{P d}), \Tilde{z}_{j} = \frac{z_j}{\operatorname{max}\{1,\|z_j\|_{\infty}/\rho\}}, \forall j \in [P d]$.

\begin{remark}
\label{rem:adv_training}

Indeed, a more general form of adversarial example update in adversarial training algorithm is to directly maximize the loss value over the perturbed data point, i.e.
\begin{equation}
    \widetilde{\boldsymbol{X}}^{(t)} = \boldsymbol{X} + \operatorname{argmax}_{\|\boldsymbol{\Delta}\|_{\infty}\leq \epsilon}\mathcal{L}_{\textit{margin}}\left(\boldsymbol{F}^{(t)} ; \boldsymbol{X} + \boldsymbol{\Delta}, y\right) \quad\forall (\boldsymbol{X},y) \in \mathcal{Z}. \nonumber
\end{equation}
However, different from some previous works \citep{Li2020Implicit,javanmard2022precise,chen2023benign} that study training dynamics of adversarial training under linear classifier, we are unable to derive the closed-form solution of adversarial examples due to the high non-linearity and non-convexity of the objective function $\mathcal{L}_{\textit{margin}}\left(\boldsymbol{F}^{(t)} ; \boldsymbol{X} + \boldsymbol{\Delta}, y\right)$ over $\boldsymbol{\Delta}$. To overcome this challenge, we use one-step gradient ascent method to approximate the optimal solution.
\end{remark}

\section{Warm Up: There Exist both Non-Robust and Robust Global Minima}
\label{sec:landscape}

In this section, as a warm up, we show that there exist both robust global minima and non-robust global minima due to the non-convexity of empirical risk $\mathcal{L}_{\textit{CE}}(\boldsymbol{F})$ over the parameters $\{\boldsymbol{w}_{i,r}\}_{(i,r)\in [k]\times[m]}$.

\begin{proposition}[The Existence of Non-robust Global Minima]
\label{prop:non-robust-minima}

We consider the special case when $m=1$ and $\boldsymbol{w}_{i,1} = \gamma \boldsymbol{v}_i$, where $\gamma > 0$ is a scale coefficient. Then, it holds that the standard empirical risk satisfies $\operatornamewithlimits{lim}_{\gamma\rightarrow\infty}\mathcal{L}_{\textit{CE}}(\boldsymbol{F}) = o(1)$, but the adversarial test error satisfies $\operatornamewithlimits{lim}_{\gamma\rightarrow\infty}\mathbb{P}_{(\boldsymbol{X},y)\sim\mathcal{D}}\left[\exists \boldsymbol{\Delta}\in \left(\mathbb{R}^{d}\right)^{P} \textit{ s.t. }\|\boldsymbol{\Delta}\|_{\infty}\leq\epsilon, \operatorname{argmax}_{i\in [k]}F_i(\boldsymbol{X}+\boldsymbol{\Delta})\ne y\right] = 1-o(1)$.
\end{proposition}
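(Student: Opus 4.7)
The plan is to exploit the orthonormality of all features in $\mathcal{F}$ to directly compute the pre-activations $\langle \boldsymbol{w}_{i,1}, \boldsymbol{x}_p\rangle = \gamma \langle \boldsymbol{v}_i, \boldsymbol{x}_p\rangle$. Because $\boldsymbol{v}_i \perp \boldsymbol{u}_j$ for all $i,j$ and $\boldsymbol{v}_i \perp \boldsymbol{v}_{i'}$ for $i \ne i'$, the only signal-carrying terms in $F_i(\boldsymbol{X})$ occur at $i = y$ on non-robust-feature patches: $\langle \boldsymbol{w}_{y,1}, \boldsymbol{x}_p\rangle = \gamma\beta_p + \gamma\langle\boldsymbol{v}_y,\boldsymbol{\xi}_p\rangle$ for $p \in \mathcal{J}_{\text{NR}}$, while every other inner product equals $\gamma\langle\boldsymbol{v}_i, \boldsymbol{\xi}_p\rangle$. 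Standard Gaussian tail bounds give $|\langle \boldsymbol{v}_i, \boldsymbol{\xi}_p\rangle| = O(\sigma_n\sqrt{\log d})$ uniformly over $(i,p)$ with high probability, and by assumption $\beta_p = \Theta(\sigma_n\sqrt{d})$ w.h.p. Hence on a good event $\mathcal{E}$ of probability $1 - e^{-\Omega(\log^2 d)}$ over $(\boldsymbol{X},y) \sim \mathcal{D}$, using the linear regime of $\widetilde{\operatorname{ReLU}}$ for arguments $\gg \varrho$, I will obtain $F_y(\boldsymbol{X}) = \Omega(|\mathcal{J}_{\text{NR}}|\,\gamma\sigma_n\sqrt{d})$ and $F_i(\boldsymbol{X}) = O(P\gamma\sigma_n\sqrt{\log d})$ for $i \ne y$; so $F_y(\boldsymbol{X}) - \max_{i\ne y} F_i(\boldsymbol{X}) = \Omega(\gamma\sigma_n\sqrt{d}) \to \infty$. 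The per-example cross-entropy then decays like $e^{-\Omega(\gamma\sigma_n\sqrt{d})}$, and a union bound over the $N = \operatorname{poly}(d)$ training points plus the limit $\gamma \to \infty$ yields $\mathcal{L}_{\textit{CE}}(\boldsymbol{F}) \to 0$.

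For the adversarial statement I would exhibit an explicit perturbation. Fixing any $y' \in [k]\setminus\{y\}$, define $\boldsymbol{\Delta}_p := \epsilon(\boldsymbol{v}_{y'}-\boldsymbol{v}_y)$ for $p \in \mathcal{J}_{\text{NR}}$ and $\boldsymbol{\Delta}_p := \boldsymbol{0}$ otherwise. Since $\boldsymbol{v}_{y'}$ and $\boldsymbol{v}_y$ are axis-aligned with unit $\ell_\infty$-norm and mutually orthogonal, $\|\boldsymbol{\Delta}\|_\infty = \epsilon$. A perturbed non-robust patch reads $\tilde{\boldsymbol{x}}_p = (\beta_p-\epsilon)\boldsymbol{v}_y + \epsilon\boldsymbol{v}_{y'} + \boldsymbol{\xi}_p$, giving $\langle\boldsymbol{v}_{y'}, \tilde{\boldsymbol{x}}_p\rangle \approx \epsilon$ and $\langle\boldsymbol{v}_y, \tilde{\boldsymbol{x}}_p\rangle \approx \beta_p - \epsilon$. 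Choosing the absolute constant hidden in $\epsilon = \Theta(\sigma_n\sqrt{d})$ so that $2\epsilon > \mathbb{E}[\beta_p]$ guarantees $\epsilon > \beta_p - \epsilon$ w.h.p., by the concentration $\beta_p \approx \mathbb{E}[\beta_p]$. Repeating the computation as in the clean case will yield $F_{y'}(\tilde{\boldsymbol{X}}) = \Omega(|\mathcal{J}_{\text{NR}}|\,\gamma\epsilon)$, while $F_y(\tilde{\boldsymbol{X}}) \le |\mathcal{J}_{\text{NR}}|\,\gamma(\beta_p - \epsilon)(1+o(1)) + O(P\gamma\sigma_n\sqrt{\log d})$ and $F_i(\tilde{\boldsymbol{X}}) = O(P\gamma\sigma_n\sqrt{\log d})$ for $i \notin \{y, y'\}$, all strictly smaller than $F_{y'}(\tilde{\boldsymbol{X}})$ as $\gamma \to \infty$. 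Consequently $\operatorname{argmax}_i F_i(\tilde{\boldsymbol{X}}) = y' \ne y$, producing the required adversarial example.

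The main obstacle is the scale comparison between $\epsilon$ and $\beta_p$: since both sit at order $\sigma_n\sqrt{d}$, the perturbation can only partially cancel the non-robust signal, so the adversarial class must strictly beat the residual $\beta_p-\epsilon$. I resolve this by using the free choice of the absolute constant in $\epsilon = \Theta(\sigma_n\sqrt{d})$ to force $2\epsilon > \mathbb{E}[\beta_p]$, together with the concentration $\beta_p \approx \mathbb{E}[\beta_p]$ stated after Assumption \ref{ass:robust_non_robust_feature}. All estimates invoked above hold on the single event $\mathcal{E}$ of probability $1 - e^{-\Omega(\log^2 d)} = 1 - o(1)$ under a fresh draw $(\boldsymbol{X}, y) \sim \mathcal{D}$, which simultaneously produces the vanishing empirical loss (after a union bound over $\mathcal{Z}$) and the claimed $1 - o(1)$ adversarial test error in the limit $\gamma \to \infty$.
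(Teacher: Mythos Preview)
Your argument is correct and mirrors the paper's two-part structure: compute margins via feature orthonormality and Gaussian tail bounds for the clean case, then exhibit an explicit perturbation for the adversarial case. The first part matches the paper's proof essentially line for line.

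The adversarial construction differs slightly. The paper perturbs each non-robust patch by $\boldsymbol{\delta}_p = -\beta_p\boldsymbol{v}_y + \epsilon\boldsymbol{v}_{y'}$, which \emph{exactly cancels} the non-robust signal and leaves $\tilde{\boldsymbol{x}}_p = \epsilon\boldsymbol{v}_{y'} + \boldsymbol{\xi}_p$; then $F_{y'}$ picks up $\gamma\Theta(\epsilon)$ per patch while every $F_j$ with $j\ne y'$ (including $j=y$) sees only noise of order $\gamma\Theta(\sigma_n)$, so no residual comparison is needed. Your choice $\epsilon(\boldsymbol{v}_{y'}-\boldsymbol{v}_y)$ leaves a residual $(\beta_p-\epsilon)\boldsymbol{v}_y$, forcing the extra step $2\epsilon > \beta_p$. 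Both are valid: the paper's perturbation needs $\beta_p \le \epsilon$ to satisfy the $\ell_\infty$ constraint, yours needs only the weaker $\beta_p < 2\epsilon$. In fact the paper's detailed hyperparameter assumption in the appendix already posits $\epsilon > \beta = \inf\beta_p$, which simultaneously legitimizes the paper's exact cancellation and would have spared you the constant-tuning discussion. Your version has the mild advantage of being data-independent (it does not use the latent $\beta_p$), though for a pure existence statement this is immaterial; the paper's version is the same perturbation reused later in Theorem~\ref{thm:std_training}, which is why they prefer it.
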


\begin{proposition}[The Existence of Robust Global Minima]
\label{prop:robust-minima}

We consider the special case when $m=1$ and $\boldsymbol{w}_{i,1} = \gamma \boldsymbol{u}_i$, where $\gamma > 0$ is a scale coefficient. Then, it holds that the standard empirical risk satisfies $\operatornamewithlimits{lim}_{\gamma\rightarrow\infty}\mathcal{L}_{\textit{CE}}(\boldsymbol{F}) = o(1)$, and the adversarial test error satisfies $\operatornamewithlimits{lim}_{\gamma\rightarrow\infty}\mathbb{P}_{(\boldsymbol{X},y)\sim\mathcal{D}}\left[\exists \boldsymbol{\Delta}\in \left(\mathbb{R}^{d}\right)^{P} \textit{ s.t. }\|\boldsymbol{\Delta}\|_{\infty}\leq\epsilon, \operatorname{argmax}_{i\in [k]}F_i(\boldsymbol{X}+\boldsymbol{\Delta})\ne y\right] = o(1)$.
\end{proposition}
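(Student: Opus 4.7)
The plan is to show, for every $j\ne y$ and every $\boldsymbol\Delta$ with $\|\boldsymbol\Delta\|_\infty\le\epsilon$, that the margin $F_y(\boldsymbol X+\boldsymbol\Delta)-F_j(\boldsymbol X+\boldsymbol\Delta)$ grows linearly in $\gamma$ as $\gamma\to\infty$, with high probability over $(\boldsymbol X,y)\sim\mathcal D$. The empirical-loss claim then follows by specialising $\boldsymbol\Delta=\boldsymbol 0$ and union-bounding over the $N=\operatorname{poly}(d)$ training examples, while the adversarial-error claim follows at once because the margin bound is uniform in $\boldsymbol\Delta$.

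The computation is clean because the features are orthonormal and axis-aligned: $\langle\boldsymbol u_i,\boldsymbol u_y\rangle=\mathbb{1}[i=y]$, $\langle\boldsymbol u_i,\boldsymbol v_y\rangle=0$, and $|\langle\boldsymbol u_i,\boldsymbol\Delta_p\rangle|\le\|\boldsymbol\Delta_p\|_\infty\le\epsilon$; Gaussian tails give $|\langle\boldsymbol u_i,\boldsymbol\xi_p\rangle|=O(\sigma_n\log d)\ll\epsilon=\Theta(\sigma_n\sqrt d)$ with high probability. Hence every preactivation $\langle\gamma\boldsymbol u_i,\boldsymbol x_p+\boldsymbol\Delta_p\rangle$ equals $\gamma\alpha_p\mathbb{1}[i=y,\,p\in\mathcal J_R]$ plus a ``noise/perturbation'' term of absolute value at most $\gamma\cdot O(\epsilon)$. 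For the correct class, robust-feature patches satisfy $\gamma\alpha_p\gg\varrho$ once $\gamma$ is large, so $\widetilde{\operatorname{ReLU}}$ lies in its linear regime and contributes $\gamma\alpha_p(1\pm o(1))$ each, while non-robust-feature patches contribute a non-negative quantity that we simply drop in the lower bound; this yields $F_y(\boldsymbol X+\boldsymbol\Delta)\ge(1-o(1))\,\gamma\sum_{p\in\mathcal J_R}\alpha_p$. For each wrong class $j\ne y$, every patch preactivation is bounded above by $\gamma\cdot O(\epsilon)$, so $\widetilde{\operatorname{ReLU}}$ is at most $\gamma\cdot O(\epsilon)$ whichever of its three pieces it lands in, and summing over patches gives $F_j(\boldsymbol X+\boldsymbol\Delta)\le\gamma\cdot O(P\epsilon)$.

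Combining the two estimates and invoking the ``robust is stronger'' clause of Assumption~\ref{ass:robust_non_robust_feature}, namely $\alpha_p\gg\beta_{p'}=\Theta(\sigma_n\sqrt d)=\Theta(\epsilon)$, the fixed constants $P$ and $|\mathcal J_R|\ge 1$ are absorbed into the $\gg$, so $\sum_{p\in\mathcal J_R}\alpha_p\gg P\epsilon$ and the margin diverges: $F_y-F_j\to+\infty$ linearly in $\gamma$ with high probability. This implies (i) $\mathcal L_{\textit{CE}}(\boldsymbol F)\to 0$, since diverging logit gaps drive the cross-entropy to zero and a union bound over at most $Nk$ bad events absorbs the $e^{-\Omega(\log^2 d)}$ probability, and (ii) the adversarial prediction equals $y$ on an event of probability $1-o(1)$ over a fresh $(\boldsymbol X,y)\sim\mathcal D$, because the lower bound on the margin is uniform in $\boldsymbol\Delta$.

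The main obstacle is the careful accounting of the three regimes of $\widetilde{\operatorname{ReLU}}$: robust-feature patches must be pushed into the linear regime to supply the lower bound on $F_y$, while the noise/perturbation preactivations --- which may land in the zero, cubic, or linear piece --- must be uniformly bounded by $\gamma\cdot O(\epsilon)$ in every case (the cubic piece being the only non-obvious one, but still dominated by its linear continuation). The single quantitative input that makes the whole argument go through is $\alpha_p\gg\epsilon$, which is precisely the ``robust feature is stronger'' clause of Assumption~\ref{ass:robust_non_robust_feature} and the exact mirror image of what drives the failure in Proposition~\ref{prop:non-robust-minima}.
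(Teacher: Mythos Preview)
Your proposal is correct and follows essentially the same route as the paper: lower-bound $F_y(\boldsymbol X+\boldsymbol\Delta)$ by the robust-feature patches in the linear regime of $\widetilde{\operatorname{ReLU}}$, upper-bound $F_j(\boldsymbol X+\boldsymbol\Delta)$ by $\gamma\cdot O(P(\sigma_n+\epsilon))$ via the noise-plus-perturbation preactivations, and conclude from $\alpha_p\gg\epsilon$. You are more explicit than the paper about the three activation regimes and the union bound over training samples, but the structure and the decisive inequality are identical.
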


Proposition \ref{prop:non-robust-minima} and Proposition \ref{prop:robust-minima} demonstrate that a network is vulnerable to adversarial perturbations if it relies solely on learning non-robust features. Conversely, a network that learns all robust features can achieve a state of robustness.  In general, by calculating the gradient of empirical loss, it seems that the whole weights during gradient-based training will have the following form
\begin{equation}
    \boldsymbol{w}_{i,r} \approx A_{i,r} \boldsymbol{u}_i + B_{i,r} \boldsymbol{v}_i + \textit{Noise}, \nonumber
\end{equation}
where $A_{i,r}, B_{i,r} > 0$ represent the coefficients for learning robust and non-robust features, respectively, and the 'Noise' term encompasses elements learned from other non-diagonal features $\boldsymbol{u}_j, \boldsymbol{v}_j (j\ne i)$, as well as random noise $\boldsymbol{\xi}_p$.

Therefore, we know that the network learns the $i$-th class if and only if either $A_{i,r}$ or $B_{i,r}$ is sufficiently large. However, to robustly learn the $i$-th class, the network must primarily learn the robust feature $\boldsymbol{u}_i$, rather than the non-robust feature $\boldsymbol{v}_i$, which motivates us to analyze the feature learning process of standard training and adversarial training to understand the underlying mechanism why adversarial examples exist and how adversarial training algorithm works.

\section{Main Results}
\label{sec:result}

We first formally introduce the concept, feature learning accuracy, as the following definition.

\begin{definition}[Feature Learning Accuracy]
\label{def:feature_learning}

For a given feature subset $\mathcal{H}\subset\mathcal{F}$ ($\mathcal{F}$ is all feature set as the same as Definition \ref{def:robust_non_robust_feature}), $\mathcal{H}$-extended feature representative distribution $\mathcal{D}_{\mathcal{H}}$ and classifier model $\boldsymbol{F}$, we define the feature learning accuracy as $\mathbb{P}_{(\boldsymbol{X}_{\boldsymbol{f}},y)\sim\mathcal{D}_{\mathcal{H}}}\left[\operatorname{argmax}_{i\in [k]}F_i(\boldsymbol{X}_{\boldsymbol{f}}) = y\right]$, where $\boldsymbol{X}_{\boldsymbol{f}} (\boldsymbol{f}\in\mathcal{H})$ is the $\boldsymbol{f}$-extended representative and $y$ is the label which feature $\boldsymbol{f}$ corresponds to.
\end{definition}

Here, we choose $\mathcal{F}_{\textit{R}}:=\{\boldsymbol{u}_i\}_{i\in[k]}, \mathcal{F}_{\textit{NR}}:=\{\boldsymbol{v}_i\}_{i\in[k]} \subset \mathcal{F}$ as robust/non-robust feature sets. We construct $\mathcal{D}_{\mathcal{F}_{\textit{R}}}/\mathcal{D}_{\mathcal{F}_{\textit{NR}}}$ by sampling $(\boldsymbol{X},y)\sim \mathcal{D}$ and setting $\boldsymbol{X}_{\boldsymbol{f}}$ to $\boldsymbol{X}$ with all instances of feature $\boldsymbol{v}_i/\boldsymbol{u}_i$ replaced by all-zero vector (a figurative illustration is presented in Figure \ref{fig:data_model}).
\begin{remark}
\label{rem:feature_learning}

    We define feature learning accuracy based on whether the model $\boldsymbol{F}$ can accurately classify data points when presented with only a single signal feature $\boldsymbol{f}$, which indeed generalizes the notion of weight-feature correlation $\langle \boldsymbol{w}_{i,r}, \boldsymbol{f}\rangle$ to general non-linear models and non-linear features.
\end{remark}

Now, we state the main theorems in this paper as follows.

\begin{theorem}[Standard Training Converges to Non-robust Global Minima]
\label{thm:std_training}

For sufficiently large $d$, suppose we train the model using the standard training starting from the random initialization, then after $T=\Theta(\operatorname{poly}(d) / \eta)$ iterations, with high probability over the sampled training dataset $\mathcal{Z}$, the model $\boldsymbol{F}^{(T)}$ satisfies:
\begin{itemize}
    \item 
    Standard training is perfect: for all $(\boldsymbol{X}, y) \in \mathcal{Z}$, all $i \in[k] \backslash\{y\}: F_y^{(T)}(\boldsymbol{X})>F_i^{(T)}(\boldsymbol{X})$.
    \item 
    Non-robust features are learned: $\mathbb{P}_{(\boldsymbol{X}_{\boldsymbol{f}},y)\sim\mathcal{D}_{\mathcal{F}_\textit{NR}}}\left[\operatorname{argmax}_{i\in [k]}F_i^{(T)}(\boldsymbol{X}_{\boldsymbol{f}}) \ne y\right] = o(1)$. 
    \item 
    Standard test accuracy is good: $\mathbb{P}_{(\boldsymbol{X},y)\sim\mathcal{D}}\left[\operatorname{argmax}_{i\in [k]}F_i^{(T)}(\boldsymbol{X})\ne y\right] = o(1)$.
    \item 
    Robust test accuracy is bad: for any given data $(\boldsymbol{X},y)$, using the following perturbation $\boldsymbol{\Delta}(\boldsymbol{X},y):=(\boldsymbol{\delta}_1,\boldsymbol{\delta}_2,\dots,\boldsymbol{\delta}_P)$, where $\boldsymbol{\delta}_p := -\beta_p \boldsymbol{v}_y + \epsilon \boldsymbol{v}_{y'}$ for $p\in \mathcal{J}_{\textit{NR}}$; $\boldsymbol{\delta}_p := \boldsymbol{0}$ for $p\in \mathcal{J}_{\textit{R}}$, and $y'$ is randomly chosen from $[k]\setminus\{y\}$ (which does not depend on the model $\boldsymbol{F}^{(T)}$ and is illustrated in Figure \ref{fig:data_model}), we have
    \begin{equation}
    \mathbb{P}_{(\boldsymbol{X},y)\sim\mathcal{D}}\left[\operatorname{argmax}_{i\in [k]}F_i^{(T)}(\boldsymbol{X}+\boldsymbol{\Delta}(\boldsymbol{X},y))\ne y\right] = 1 - o(1). \nonumber
    \end{equation}
\end{itemize}
\end{theorem}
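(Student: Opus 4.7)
I would prove all four bullets by tracking a coordinate decomposition of the filters throughout training. For every $(i,r)$, maintain the expansion
\begin{equation}
\boldsymbol{w}_{i,r}^{(t)} = A_{i,r}^{(t)}\boldsymbol{u}_i + B_{i,r}^{(t)}\boldsymbol{v}_i + \sum_{j\neq i}\bigl(C_{i,r,j}^{(t)}\boldsymbol{u}_j + D_{i,r,j}^{(t)}\boldsymbol{v}_j\bigr) + \sum_{n\in[N]}\sum_{p\in[P]}\gamma_{i,r,n,p}^{(t)}\boldsymbol{\xi}_{n,p} + \boldsymbol{\zeta}_{i,r}^{(t)},
\end{equation}
where $\boldsymbol{\zeta}_{i,r}^{(t)}$ is orthogonal to all features and all sample noises. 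Orthonormality of $\mathcal{F}$ plus $\|\boldsymbol{\xi}_{n,p}\|_2=\widetilde{\Theta}(\sigma_n\sqrt{d})$ make this decomposition essentially unique up to $\widetilde{O}(\sigma_0)$ error. At initialization every coefficient is $\widetilde{O}(\sigma_0)=\widetilde{O}(1/\sqrt d)$, and the whole proof consists of a concurrent induction on how these coefficients evolve under GD.

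\textbf{Two-phase dynamics and the tensor power race.} In the early phase each activation $\langle \boldsymbol{w}_{i,r}^{(t)}, \boldsymbol{x}_p\rangle$ lies in the polynomial regime $[0,\varrho]$ of $\widetilde{\operatorname{ReLU}}$, so for a label-$i$ sample the on-diagonal GD updates look, to leading order, like
\begin{equation}
\dot A_{i,r} \;\propto\; A_{i,r}^{q-1}\!\!\sum_{p\in\mathcal{J}_{\textit{R}}}\!\alpha_p^{q}, \qquad \dot B_{i,r} \;\propto\; B_{i,r}^{q-1}\!\!\sum_{p\in\mathcal{J}_{\textit{NR}}}\!\beta_p^{q}.
\end{equation}
With $q=3$ and Assumption \ref{ass:robust_non_robust_feature} ($\tau\ge q$) this gives $\sum_p\beta_p^q\gg\sum_p\alpha_p^q$, so the standard tensor-power-method comparison yields a time $T_1$ at which $B_{y,r}^{(T_1)}=\Theta(1/\mathbb{E}[\beta_p])$ while $A_{y,r}^{(T_1)}$ is still at scale $\widetilde{O}(\sigma_0)$. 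Simultaneously I would show, by projecting the gradient onto the off-diagonal directions and using orthogonality of features and Gaussian concentration, that the cross-class coefficients $C,D$ and the $N\cdot P$ noise coefficients $\gamma_{i,r,n,p}$ all remain $\widetilde{O}(\sigma_0)$: whenever a filter does align with an off-diagonal direction, the corresponding $\widetilde{\operatorname{ReLU}}'$ factor is of the small-gradient kind and cannot generate polynomial growth. Past $T_1$, the activations on the non-robust patches of the correct class enter the linear regime of $\widetilde{\operatorname{ReLU}}$, the margin $F_y-\max_{j\neq y}F_j$ becomes positive on every training point, the softmax saturates, and gradients decay fast enough that by $T=\Theta(\operatorname{poly}(d)/\eta)$ we still have $B_{y,r}^{(T)}\gg A_{y,r}^{(T)}\approx\widetilde{O}(\sigma_0)$.

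\textbf{Deriving the four bullets.} From the converged decomposition the four claims fall out by straightforward logit evaluations. (i) On a training point, $F_y^{(T)}(\boldsymbol{X})-F_j^{(T)}(\boldsymbol{X})\gtrsim B_{y,r}\sum_{p\in\mathcal{J}_{\textit{NR}}}\beta_p-\widetilde{O}(\sigma_0)\cdot\mathrm{poly}(d)>0$; (ii) on $\boldsymbol{X}_{\boldsymbol{v}_y}\sim\mathcal{D}_{\mathcal{F}_{\textit{NR}}}$ the non-robust patches are untouched, so the same $B_{y,r}\beta_p$ activations fire and the correct label is predicted; (iii) standard test accuracy follows from the identical margin computation combined with concentration of $\alpha_p,\beta_p,\boldsymbol{\xi}_p$ under $\mathcal{D}$. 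For (iv), after applying the prescribed perturbation, every non-robust patch becomes $\epsilon\boldsymbol{v}_{y'}+\boldsymbol{\xi}_p$, so $\langle\boldsymbol{w}_{y,r}^{(T)},\boldsymbol{x}_p+\boldsymbol{\delta}_p\rangle\approx 0$ while $\langle\boldsymbol{w}_{y',r}^{(T)},\boldsymbol{x}_p+\boldsymbol{\delta}_p\rangle\approx \epsilon B_{y',r}$. Because $\epsilon=\Theta(\sigma_n\sqrt d)=\Theta(\mathbb{E}[\beta_p])$, the $y'$-logit receives a contribution of the same order that $F_y$ used to receive on clean data, whereas $F_y$ now only has the robust contribution $\sum_{p\in\mathcal{J}_{\textit{R}}}A_{y,r}\alpha_p$, which is tiny because $A_{y,r}=\widetilde{O}(\sigma_0)$. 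Thus $F_{y'}^{(T)}(\boldsymbol{X}+\boldsymbol{\Delta})>F_y^{(T)}(\boldsymbol{X}+\boldsymbol{\Delta})$ w.h.p.

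\textbf{Main obstacle.} The hardest step is the concurrent induction that keeps $A_{y,r}$, the off-diagonal $C,D$, and all $NP$ noise coefficients $\gamma_{i,r,n,p}$ at scale $\widetilde{O}(\sigma_0)$ while $B_{y,r}$ climbs through the tensor-power regime, and in particular arguing that $A_{y,r}$ does not start its own polynomial take-off after the softmax saturates. Handling the noise coefficients is delicate because gradients generically touch every $\boldsymbol{\xi}_{n,p}$ direction; the rescue is that $\widetilde{\operatorname{ReLU}}'$ applied to $\langle \boldsymbol{w}_{i,r},\boldsymbol{\xi}_{n,p}\rangle$ stays $\widetilde{O}(\sigma_0^{q-1})$ as long as the noise coordinate itself stays small, so the update is self-bounding. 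Closing that circular dependence via an inductive upper bound, and showing that the softmax-gradient decay past $T_1$ is fast enough to freeze $A_{y,r}$, are where most of the technical work lies.
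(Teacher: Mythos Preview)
Your overall architecture matches the paper's: the same weight decomposition into diagonal $(A,B)$, off-diagonal $(C,D)$, and noise coefficients; the same tensor-power race in the polynomial regime of $\widetilde{\operatorname{ReLU}}$ showing $B$ escapes while $A$ stays near initialization; the same second stage where softmax saturation freezes everything; and the same logit computations to read off the four bullets. Two of your stated mechanisms, however, would not close the induction as written.

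\textbf{Off-diagonal coefficients $C_{i,r,j},D_{i,r,j}$.} Your ``small-gradient, cannot generate polynomial growth'' argument is not what controls them. A self-bounding polynomial update $C\leftarrow C+\eta\cdot(\text{positive})\cdot C^{q-1}$ is exactly the tensor-power iteration that makes $B$ blow up, so smallness of $\widetilde{\operatorname{ReLU}}'$ alone cannot pin $C,D$. The paper's argument is a sign argument: the off-diagonal update for a label-$y$ sample (with $y\neq i$) is
\[
C_{i,r,y}^{(t+1)}-C_{i,r,y}^{(t)}\;=\;-\tfrac{\eta}{k}\,\mathbb{E}\Bigl[\operatorname{logit}_i(\boldsymbol{F}^{(t)},\boldsymbol{X})\sum_{p\in\mathcal{J}_{\textit{R}}}\widetilde{\operatorname{ReLU}}'\bigl(\alpha_p C_{i,r,y}^{(t)}\bigr)\alpha_p\Bigr],
\]
which is nonpositive whenever $C_{i,r,y}^{(t)}>0$, and exactly zero when $C_{i,r,y}^{(t)}\le 0$ since $\widetilde{\operatorname{ReLU}}'$ vanishes on the nonpositive axis. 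Hence $C,D$ never exceed their initialization, with no growth analysis needed.

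\textbf{Noise correlations $\langle\boldsymbol{w}_{i,r},\boldsymbol{\xi}_{n,p}\rangle$.} Your rescue ``$\widetilde{\operatorname{ReLU}}'$ applied to $\langle\boldsymbol{w}_{i,r},\boldsymbol{\xi}_{n,p}\rangle$ stays $\widetilde{O}(\sigma_0^{q-1})$'' uses the wrong argument of $\widetilde{\operatorname{ReLU}}'$. The gradient carries $\widetilde{\operatorname{ReLU}}'(\langle\boldsymbol{w}_{i,r},\boldsymbol{x}_p\rangle)$ with the \emph{full} patch $\boldsymbol{x}_p=\beta_p\boldsymbol{v}_y+\boldsymbol{\xi}_p$, and once $B_{i,r}=\Theta(1)$ this derivative is $\Theta(1)$ on non-robust patches, so no self-bounding occurs. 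The paper controls noise via two ingredients you do not mention: (i) each individual $\boldsymbol{\xi}_{n,p}$ appears in the empirical gradient with weight $\widetilde{\Theta}(\eta/N)$, since only one training sample contains it; (ii) a total-logit bound $\sum_{t}\mathbb{E}_{(\boldsymbol{X},y)\sim\mathcal{Z}}\bigl[1-\operatorname{logit}_y(\boldsymbol{F}^{(t)},\boldsymbol{X})\bigr]=\widetilde{O}(\eta^{-1})$ over the whole trajectory. Together these give a cumulative noise drift of order $\widetilde{O}(\eta/N)\cdot\widetilde{O}(\eta^{-1})=\widetilde{O}(1/N)$, which is $\widetilde{O}(1/\sqrt d)$ since $N=\operatorname{poly}(d)$. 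Your ``circular self-bounding'' picture for the noise would not survive the linear-regime phase.
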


Theorem \ref{thm:std_training} states that standard training of a neural network achieves good standard accuracy but poor robust performance. This is due to the dominance of non-robust feature learning during the training dynamics. Moreover, we notice that a perturbation based on non-robust features is sufficient to confuse the network. This implies that adversarial examples may stem from non-robust features, which could also help explain the transferability of adversarial attacks \citep{papernot2016transferability}.

Next, we present our main results about adversarial training as the following theorem.

\begin{theorem}[Adversarial Training Converges to Robust Global Minima]
\label{thm:adv_training}

For sufficiently large $d$, suppose we train the model using the adversarial training algorithm starting from the random initialization, then after $T=\Theta(\operatorname{poly}(d)/\eta)$ iterations, with high probability over the sampled training dataset $\mathcal{Z}$, the model $\boldsymbol{F}^{(T)}$ satisfies:
\begin{itemize}
    \item 
    Adversarial training is perfect: for all $(\boldsymbol{X}, y) \in \mathcal{Z}$ and all perturbation $\boldsymbol{\Delta}$ satisfying $\|\boldsymbol{\Delta}\|_{\infty}\leq\epsilon$, all $i \in[k] \backslash\{y\}: F_y^{(T)}(\boldsymbol{X}+\boldsymbol{\Delta})>F_i^{(T)}(\boldsymbol{X}+\boldsymbol{\Delta})$.
    \item 
    Robust features are learned: $\mathbb{P}_{(\boldsymbol{X}_{\boldsymbol{f}},y)\sim\mathcal{D}_{\mathcal{F}_\textit{R}}}\left[\operatorname{argmax}_{i\in [k]}F_i^{(T)}(\boldsymbol{X}_{\boldsymbol{f}}) \ne y\right] = o(1)$.
    \item 
    Robust test accuracy is good: 
    \begin{equation}
    \mathbb{P}_{(\boldsymbol{X},y)\sim\mathcal{D}}\left[\exists \boldsymbol{\Delta}\in \left(\mathbb{R}^{d}\right)^{P} \textit{ s.t. }\|\boldsymbol{\Delta}\|_{\infty}\leq\epsilon, \operatorname{argmax}_{i\in [k]}F_i^{(T)}(\boldsymbol{X}+\boldsymbol{\Delta})\ne y\right] = o(1). \nonumber
    \end{equation}
\end{itemize}
\end{theorem}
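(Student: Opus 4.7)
The plan is to track the feature-coefficient decomposition of the weights throughout adversarial training and show that the robust coefficients $A_{i,r}^{(t)}$ grow to a sufficient magnitude while the non-robust coefficients $B_{i,r}^{(t)}$ remain pinned at initialization scale. Concretely, I would write
\#
\boldsymbol{w}_{i,r}^{(t)} = A_{i,r}^{(t)} \boldsymbol{u}_i + B_{i,r}^{(t)} \boldsymbol{v}_i + \sum_{j\ne i}\bigl(C_{i,j,r}^{(t)} \boldsymbol{u}_j + D_{i,j,r}^{(t)} \boldsymbol{v}_j\bigr) + \boldsymbol{\zeta}_{i,r}^{(t)},
\#
with $\boldsymbol{\zeta}_{i,r}^{(t)}$ orthogonal to all features, and run an induction whose invariants assert (i) $\max_r A_{y,r}^{(t)}$ grows monotonically after a short warm-up; (ii) $|B_{i,r}^{(t)}| = \widetilde{O}(\sigma_0)$ stays at the initialization scale; (iii) the off-diagonal and Gaussian-noise components remain of strictly smaller order than $\max_r A_{y,r}^{(t)}\cdot\alpha_p$.

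The crucial preparatory step is to characterize the one-step adversarial perturbation under the large ascent learning rate $\tilde{\eta}\gg\eta$: because the clip saturates, the perturbation at patch $p$ is essentially $-\epsilon\cdot\operatorname{sign}\bigl(\sum_r \widetilde{\operatorname{ReLU}}'(\langle \boldsymbol{w}_{y,r}^{(t)},\boldsymbol{x}_p\rangle)\,\boldsymbol{w}_{y,r}^{(t)}\bigr)$. Substituting the weight decomposition, I would show that for a non-robust-feature patch $\boldsymbol{x}_p = \beta_p\boldsymbol{v}_y+\boldsymbol{\xi}_p$ the $\boldsymbol{v}_y$-coordinate of $\boldsymbol{\Delta}_p$ is $\approx -\epsilon\,\mathrm{sign}(B_{y,r})$; combined with $\beta_p = \Theta(\epsilon)$, this cancels the non-robust signal and allows the adversary to inject an $+\epsilon\,\boldsymbol{v}_{y'}$ component for some wrong class $y'$ (exactly the perturbation displayed in Theorem~\ref{thm:std_training}). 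For a robust-feature patch, the $\boldsymbol{u}_y$-coordinate of $\boldsymbol{\Delta}_p$ is bounded by $\epsilon \ll \alpha_p$, so the robust signal is essentially intact. The off-diagonal damage from $\boldsymbol{\Delta}$ is $O(\epsilon)$ per coordinate and can be absorbed into the small-quantity bookkeeping.

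With the perturbation characterized, I would compute $\nabla_{\boldsymbol{w}_{i,r}}\mathcal{L}_{\text{CE}}(\boldsymbol{F}^{(t)};\widetilde{\boldsymbol{X}}^{(t)},y)$ and project it onto each feature direction, yielding explicit update rules
\$
A_{y,r}^{(t+1)} - A_{y,r}^{(t)} \;\gtrsim\; \eta\,(1-p_y^{(t)})\sum_{p\in\mathcal{J}_R}\widetilde{\operatorname{ReLU}}'(A_{y,r}^{(t)}\alpha_p)\,\alpha_p,
\$
while $B_{y,r}^{(t+1)} - B_{y,r}^{(t)}$ is either $\approx 0$ or mildly negative because the adversary has driven $\langle \boldsymbol{w}_{y,r}^{(t)},\widetilde{\boldsymbol{x}}_p\rangle$ below the smoothed-ReLU threshold $\varrho$ on the non-robust patches. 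The $q = 3$ smoothed activation (matching $\tau\ge 3$ in Assumption~\ref{ass:robust_non_robust_feature}) then provides the tensor-power/winner-take-all growth: once some $A_{y,r^\star}^{(t_0)}$ exceeds $\widetilde{\Omega}(\sigma_0)$, its $q$-th-power drift drives $A_{y,r^\star}^{(t)}$ to $\Theta(1/\alpha)$ within $T = \Theta(\operatorname{poly}(d)/\eta)$ iterations, while sparsity of $\mathcal{J}_R$ prevents this growth from bleeding into $B_{y,r}$ or into off-diagonal coefficients.

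With the induction closed, each of the three conclusions is a margin calculation. For any $\boldsymbol{X}\sim\mathcal{D}$ and any $\|\boldsymbol{\Delta}\|_\infty\le\epsilon$, the robust-patch contribution $\sum_{p\in\mathcal{J}_R}\widetilde{\operatorname{ReLU}}(A_{y,r^\star}^{(T)}\alpha_p)$ to $F_y^{(T)}$ is $\Theta(|\mathcal{J}_R|)$ and survives the perturbation (since $\epsilon \ll \alpha_p$), whereas every $F_j^{(T)}$ with $j\ne y$ is bounded by the small off-diagonal coefficients plus an $O(\epsilon)$ adversarial boost, giving the first two bullets; the third follows from standard concentration over the $N=\operatorname{poly}(d)$ samples plus the fact that the same margin estimate applies to $\boldsymbol{X}_{\boldsymbol{u}_y}$ from $\mathcal{D}_{\mathcal{F}_R}$. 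The hardest step will be the coupled dynamics between weights and the data-dependent adversarial direction during the warm-up window: while $A_{y,r}^{(t)}$ and $B_{y,r}^{(t)}$ are both of initialization order $\widetilde{O}(\sigma_0)$, the sign of $\sum_r\widetilde{\operatorname{ReLU}}'(\langle \boldsymbol{w}_{y,r}^{(t)},\boldsymbol{x}_p\rangle)\boldsymbol{w}_{y,r}^{(t)}$ along $\boldsymbol{v}_y$ is delicate and must be pinned down via a careful sign/magnitude comparison that exploits the density asymmetry $\sum_{p\in\mathcal{J}_R}\alpha_p^\tau\ll\sum_{p\in\mathcal{J}_{NR}}\beta_p^\tau$, in order to guarantee that the adversary always attacks the non-robust coordinate and never inadvertently amplifies $B_{y,r}^{(t)}$.
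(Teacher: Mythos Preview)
Your overall skeleton—weight decomposition, track $A,B$ coefficients, final margin computation—matches the paper, but the mechanism you propose for why adversarial training suppresses $B_{i,r}^{(t)}$ is wrong in a way that would make the induction fail.

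\textbf{The clip does not saturate early.} You assert that ``because the clip saturates, the perturbation at patch $p$ is essentially $-\epsilon\cdot\mathrm{sign}(\cdot)$.'' At initialization $\boldsymbol{w}_{y,r}^{(0)}\sim\mathcal{N}(0,I_d/d)$, so each coordinate of $\tilde{\eta}\sum_r\widetilde{\operatorname{ReLU}}'(\langle\boldsymbol{w}_{y,r}^{(0)},\boldsymbol{x}_p\rangle)\boldsymbol{w}_{y,r}^{(0)}$ is of order $d^{c_0}\cdot\operatorname{polylog}(d)/d^{3/2}\ll\epsilon$ (recall $\tilde{\eta}=d^{c_0}$ with $c_0\in(0,1)$ and $q=3$). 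Hence the clip is \emph{inactive} and the actual perturbation along $\boldsymbol{v}_y$ on a non-robust patch is not $-\epsilon$ but rather $-\tilde{\eta}\sum_s\widetilde{\operatorname{ReLU}}'(\beta_pB_{y,s}^{(t)})B_{y,s}^{(t)}$, which is tiny at first. The paper's Lemma~\ref{lem:adv_data} captures this through the factor $\min\{\epsilon/\beta_p,\ (\tilde{\eta}/\beta_p)\sum_s\widetilde{\operatorname{ReLU}}'(\beta_pB_{y,s}^{(t)})B_{y,s}^{(t)}\}$, and the entire argument hinges on which branch of this $\min$ is active.

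\textbf{Consequently, $B_{i,r}^{(t)}$ is not pinned at initialization scale.} Because the adversary is weak while the correlations are small, Phase~I of adversarial training looks like standard training: the density condition $\sum_{p\in\mathcal{J}_{\textit{NR}}}\beta_p^q\gg\sum_{p\in\mathcal{J}_{\textit{R}}}\alpha_p^q$ makes $B$ grow \emph{first}, exactly as in Lemma~\ref{lem:std_separation}. The suppression only kicks in once $\sum_s(B_{i,s}^{(t)})^q$ reaches $\tilde{\Theta}(\tilde{\eta}^{-1})$, at which point the factor $\bigl(1-\min\{\epsilon/\beta_p,\ \tilde{\Theta}(\tilde{\eta})\sum_s(B_{i,s}^{(t)})^q\}\bigr)^q$ in the $B$-update (Lemma~\ref{lem:adv_training}) collapses to $\approx0$ because $\epsilon/\beta_p\gtrsim1$. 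Robust learning continues because $\epsilon/\alpha_p\ll1$ keeps the analogous factor bounded away from zero. The final scale of $B$ is $\tilde{O}(d^{-c_0})$ (set by $\tilde{\eta}$), not $\tilde{O}(\sigma_0)$.

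\textbf{Minor point.} The adversary here maximizes $-F_y$, so the perturbation is along $-\boldsymbol{w}_{y,r}$ directions; it does not inject $+\epsilon\boldsymbol{v}_{y'}$ for a wrong class $y'$. That construction appears only in the lower-bound of Theorem~\ref{thm:std_training}, not in the training dynamics. Your ``hardest step'' (sign analysis at warm-up) is therefore not the bottleneck; the real work is the two-phase analysis above.
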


Theorem \ref{thm:adv_training} shows that the network learner provably learns robust features through adversarial training method, which thereby improves the network robustness against adversarial perturbations.

\section{Technique Overview: Learning Process Analysis}
\label{sec:proof}

We present a high level proof intuition for the training dynamics. For simplicity, we consider a simplified setup with the noiseless population risk (i.e. $\sigma_n = 0, \textit{w.h.p. }\alpha_p \gg \epsilon \gtrsim \beta_p, \mathcal{Z}=\mathcal{D}$).

By analyzing gradient descent dynamics of the model $\boldsymbol{F}$, we know that there exists time-variant coefficient sequences $\{A_{i,r}^{(t)}\}_{t=0}^{\infty}$ and $\{B_{i,r}^{(t)}\}_{t=0}^{\infty}$ such that, for any pair $(i,r)\in [k]\times[m]$, it holds that $\boldsymbol{w}_{i,r}^{(t)} \approx A_{i,r}^{(t)}\boldsymbol{u}_i + B_{i,r}^{(t)}\boldsymbol{v}_i$. Then, we focus on dynamics of these two coefficient sequences.

\subsection{Learning Process Analysis for Standard Training}

For standard training, by denoting $\operatorname{logit}_i(\boldsymbol{F}, \boldsymbol{X}) := \frac{e^{F_i(\boldsymbol{X})}}{\sum_{j \in[k]} e^{F_j(\boldsymbol{X})}}$, we have the following lemma.

\begin{lemma}[Feature Learning Iteration for Standard Training]
\label{lem:std_training}

During standard training, for any time $t\geq 0$ and pair $(i,r)\in [k]\times[m]$, the two sequences $\{A_{i,r}^{(t)}\}$ and $\{B_{i,r}^{(t)}\}$ satisfy:
\begin{equation}
\begin{cases}

    A_{i,r}^{(t+1)} = A_{i,r}^{(t)} + \frac{\eta}{k} \mathbb{E}_{\mathcal{D}_{\mathcal{J},i},\mathcal{D}_{\alpha,i}}\bigg[\big(1-\operatorname{logit}_i(\boldsymbol{F}^{(t)},\boldsymbol{X})\big)\sum\limits_{p\in \mathcal{J}_{\textit{R}}}\widetilde{\operatorname{ReLU}}'\left(\alpha_p A_{i,r}^{(t)}\right)\alpha_p\bigg] , 
    \\
    B_{i,r}^{(t+1)} = B_{i,r}^{(t)} + \frac{\eta}{k} \mathbb{E}_{\mathcal{D}_{\mathcal{J},i},\mathcal{D}_{\beta,i}}\bigg[\big(1-\operatorname{logit}_i(\boldsymbol{F}^{(t)},\boldsymbol{X})\big)\sum\limits_{p\in \mathcal{J}_{\textit{NR}}}\widetilde{\operatorname{ReLU}}'\left(\beta_p B_{i,r}^{(t)}\right)\beta_p\bigg] . \nonumber
\end{cases}
\end{equation}
\end{lemma}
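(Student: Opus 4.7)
The plan is to compute $\nabla_{\boldsymbol{w}_{i,r}} \mathcal{L}_{\textit{CE}}(\boldsymbol{F}^{(t)}; \boldsymbol{X}, y)$ in closed form, substitute the weight decomposition $\boldsymbol{w}_{i,r}^{(t)} \approx A_{i,r}^{(t)} \boldsymbol{u}_i + B_{i,r}^{(t)} \boldsymbol{v}_i$, and then take inner products with $\boldsymbol{u}_i$ and $\boldsymbol{v}_i$ to read off the scalar updates for the two sequences. Orthonormality of the feature set $\mathcal{F}$, combined with the noiseless-population reduction ($\sigma_n = 0$, $\mathcal{Z} = \mathcal{D}$) stipulated at the start of Section~\ref{sec:proof}, should collapse every cross-class and cross-patch-type contribution to zero, leaving exactly the terms appearing in the statement.

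Concretely, the softmax derivative yields
\[
\nabla_{\boldsymbol{w}_{i,r}} \mathcal{L}_{\textit{CE}}(\boldsymbol{F}^{(t)}; \boldsymbol{X}, y) = -\bigl(\mathbf{1}\{y = i\} - \operatorname{logit}_i(\boldsymbol{F}^{(t)}, \boldsymbol{X})\bigr) \sum_{p \in [P]} \widetilde{\operatorname{ReLU}}'\bigl(\langle \boldsymbol{w}_{i,r}^{(t)}, \boldsymbol{x}_p \rangle\bigr)\, \boldsymbol{x}_p .
\]
Under the ansatz and $\boldsymbol{\xi}_p = \boldsymbol{0}$, the pre-activation $\langle \boldsymbol{w}_{i,r}^{(t)}, \boldsymbol{x}_p \rangle$ equals $\alpha_p A_{i,r}^{(t)}$ when $y = i,\ p \in \mathcal{J}_{\textit{R}}$, equals $\beta_p B_{i,r}^{(t)}$ when $y = i,\ p \in \mathcal{J}_{\textit{NR}}$, and vanishes whenever $y \ne i$ by orthogonality of $\{\boldsymbol{u}_j, \boldsymbol{v}_j\}$. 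After projection onto $\boldsymbol{u}_i$, the inner product $\langle \boldsymbol{x}_p, \boldsymbol{u}_i \rangle$ is nonzero only for robust-feature patches of class $i$ (and equals $\alpha_p$ there), so all off-diagonal classes and all non-robust patches drop out, leaving precisely $-\mathbf{1}\{y = i\}\bigl(1 - \operatorname{logit}_i(\boldsymbol{F}^{(t)}, \boldsymbol{X})\bigr) \sum_{p \in \mathcal{J}_{\textit{R}}} \widetilde{\operatorname{ReLU}}'(\alpha_p A_{i,r}^{(t)})\,\alpha_p$ inside the sample-wise expression. Taking $\mathbb{E}_{(\boldsymbol{X}, y) \sim \mathcal{D}}$ with $y$ uniform on $[k]$ supplies the prefactor $1/k$ and restricts the remaining randomness to $\mathcal{D}_{\mathcal{J}, i}$ and $\mathcal{D}_{\alpha, i}$; the gradient-descent step $A_{i,r}^{(t+1)} = A_{i,r}^{(t)} - \eta \langle \mathbb{E}[\nabla \mathcal{L}_{\textit{CE}}], \boldsymbol{u}_i \rangle$ then reproduces the stated recursion, and the $B_{i,r}$ recursion follows by the identical argument with $\boldsymbol{u}_i \leftrightarrow \boldsymbol{v}_i$, $\alpha_p \leftrightarrow \beta_p$, and $\mathcal{J}_{\textit{R}} \leftrightarrow \mathcal{J}_{\textit{NR}}$.

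The main subtlety I would flag is the ansatz itself. In this simplified noiseless setup the expected gradient already lies exactly in $\operatorname{span}(\boldsymbol{u}_i, \boldsymbol{v}_i)$, because any component of $\boldsymbol{w}_{i,r}^{(t)}$ orthogonal to $\operatorname{span}\{\boldsymbol{u}_j, \boldsymbol{v}_j\}_j$ multiplies only zero inner products $\langle \boldsymbol{x}_p, \cdot \rangle$ and therefore induces no drift; the residual carried from the random Xavier initialization remains frozen at its $O(\sigma_0 \sqrt{d})$ scale and never contaminates the two recursions above, so the approximate decomposition holds with essentially equality within this section. In the full proof where $\sigma_n > 0$, the same residual couples weakly to the Gaussian noise $\boldsymbol{\xi}_p$ through the activation derivative and has to be controlled by an inductive signal-versus-noise tracking argument; that is the only genuinely nontrivial technical point, but it sits outside the scope of this lemma's statement.
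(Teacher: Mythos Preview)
Your derivation is correct and matches the paper's proof (Lemma~\ref{lem:sim_std_iter}): write the gradient-descent update, project onto $\boldsymbol{u}_i$ (resp.\ $\boldsymbol{v}_i$), and use that in the noiseless case $\langle \boldsymbol{x}_p,\boldsymbol{u}_i\rangle\neq 0$ only when $y=i$ and $p\in\mathcal{J}_{\textit{R}}$, so the factor $1/k$ and the conditional expectation over $\mathcal{D}_{\mathcal{J},i},\mathcal{D}_{\alpha,i}$ fall out directly. One small correction to your closing remark: the expected gradient does \emph{not} lie only in $\operatorname{span}(\boldsymbol{u}_i,\boldsymbol{v}_i)$ --- samples with $y\neq i$ contribute along $\boldsymbol{u}_y,\boldsymbol{v}_y$ with coefficient $-\operatorname{logit}_i$, and the paper indeed tracks these as separate off-diagonal correlations $C_{i,r,y}^{(t)},D_{i,r,y}^{(t)}$ in its full weight decomposition (Lemma~\ref{lem:weight_decomp}); this has no effect on the $A,B$ recursions since those components are orthogonal to $\boldsymbol{u}_i,\boldsymbol{v}_i$, but it means the ansatz $\boldsymbol{w}_{i,r}^{(t)}\approx A_{i,r}^{(t)}\boldsymbol{u}_i+B_{i,r}^{(t)}\boldsymbol{v}_i$ is justified by a separate argument (Lemma~\ref{lem:std_non_diagonal}) showing the off-diagonal terms stay small, not by the gradient being confined to a two-dimensional subspace.
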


\textbf{Approximation Near Initialization.} At the start of training, due to our random initialization, i.e., $\boldsymbol{w}_{i,r}\sim \mathcal{N}(\boldsymbol{0},\boldsymbol{I}_d/\operatorname{poly}(d))$, we have a constant loss derivative, namely $1-\operatorname{logit}_i(\boldsymbol{F}^{(t)},\boldsymbol{X}) = \Theta(1)$. And we know that, w.h.p., the activation function predominantly lies within the polynomial part.

\textbf{Non-Robust Feature Learning Dominates.} Under Assumption \ref{ass:robust_non_robust_feature}, it holds that $\mathbb{E}\big[\sum_{p\in \mathcal{J}_{\textit{NR}}}\beta_p^q\big] \gg \mathbb{E}\big[\sum_{p\in \mathcal{J}_{\textit{R}}}\alpha_p^q\big]$, which implies that the non-robust feature learning $\operatorname{max}_{r\in[m]}B_{i,r}^{(t)}$ increases more rapidly than the robust feature learning $\operatorname{max}_{r\in[m]}A_{i,r}^{(t)}$. Moreover, by applying Tensor Power Method Lemma \citep{allen-zhu2023towards}, we know that $\operatorname{max}_{r\in[m]}B_{i,r}^{(t)}$ attains an order of $\Tilde{\Theta}(1)$, while $\operatorname{max}_{r\in[m]}A_{i,r}^{(t)}$ still maintains $\Tilde{o}(1)$-order. Afterward, the loss derivative approaches zero (i.e. $1-\operatorname{logit}_i(\boldsymbol{F}^{(t)},\boldsymbol{X}) = o(1)$), and the network ultimately converges within the linear region of the $\widetilde{\operatorname{ReLU}}$.

\begin{figure}[t]
  \centering
  \noindent
  \captionsetup[subfigure]{labelformat=empty}
  \begin{minipage}[c]{0.24\textwidth}
    \centering
    \includegraphics[width=\textwidth]{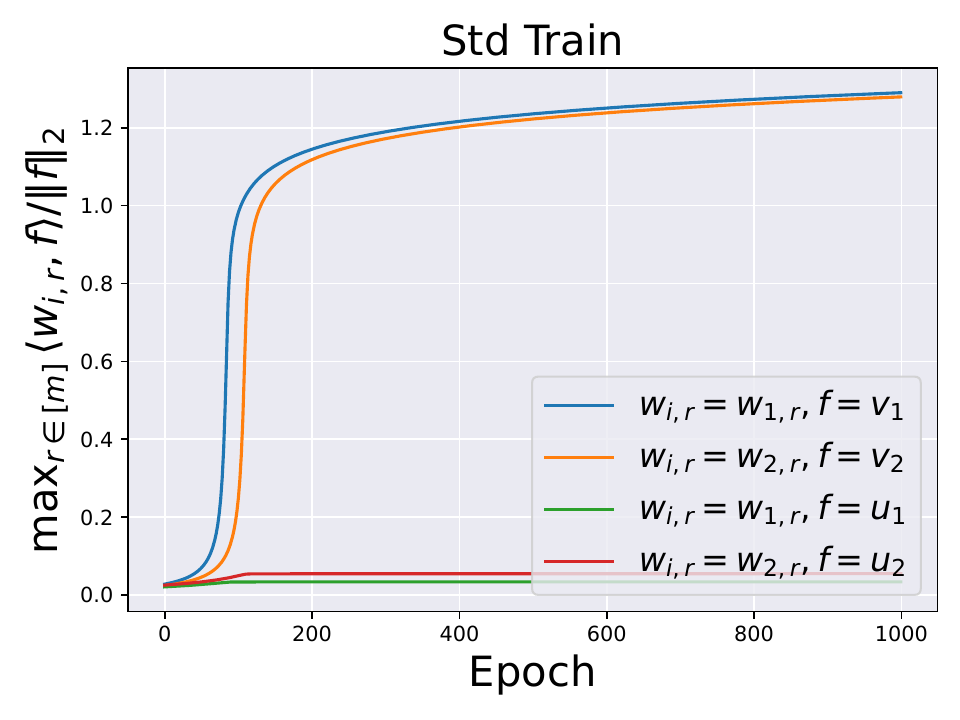}
  \end{minipage}%
  \begin{minipage}[c]{0.24\textwidth}
    \centering
    \includegraphics[width=\textwidth]{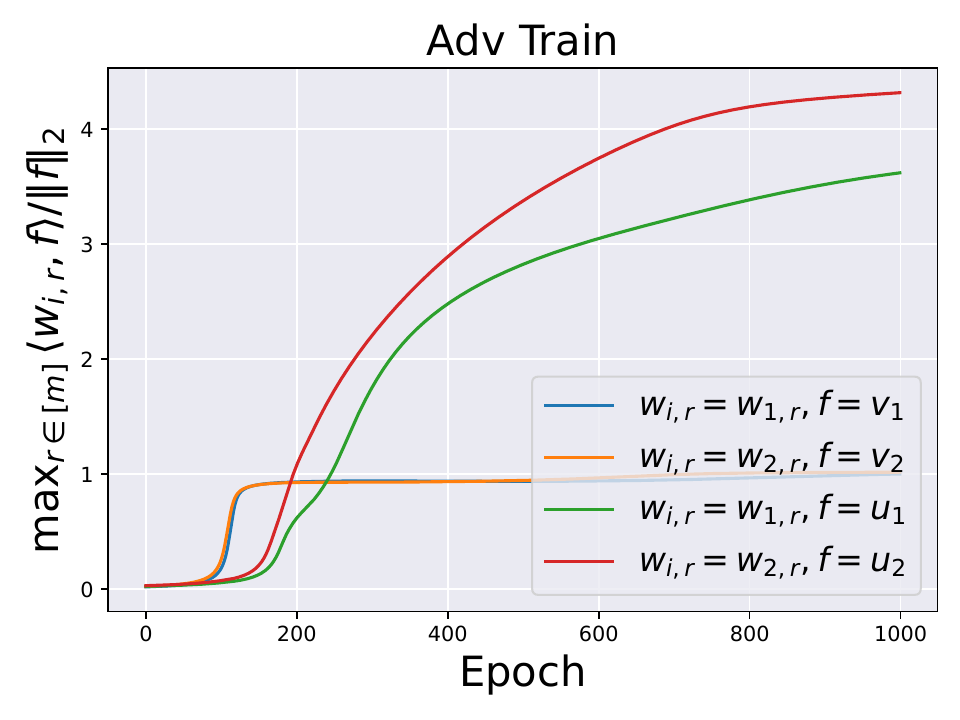}
  \end{minipage}%
  \begin{minipage}[c]{0.24\textwidth}
    \centering
    \includegraphics[width=\textwidth]{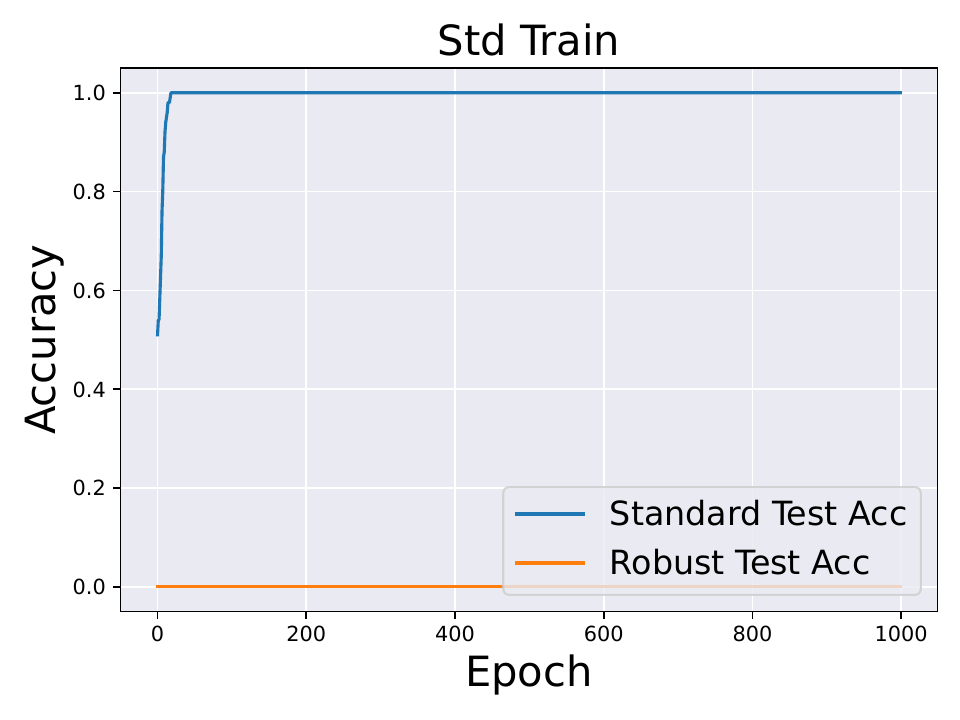}
  \end{minipage}
  \begin{minipage}[c]{0.24\textwidth}
    \centering
    \includegraphics[width=\textwidth]{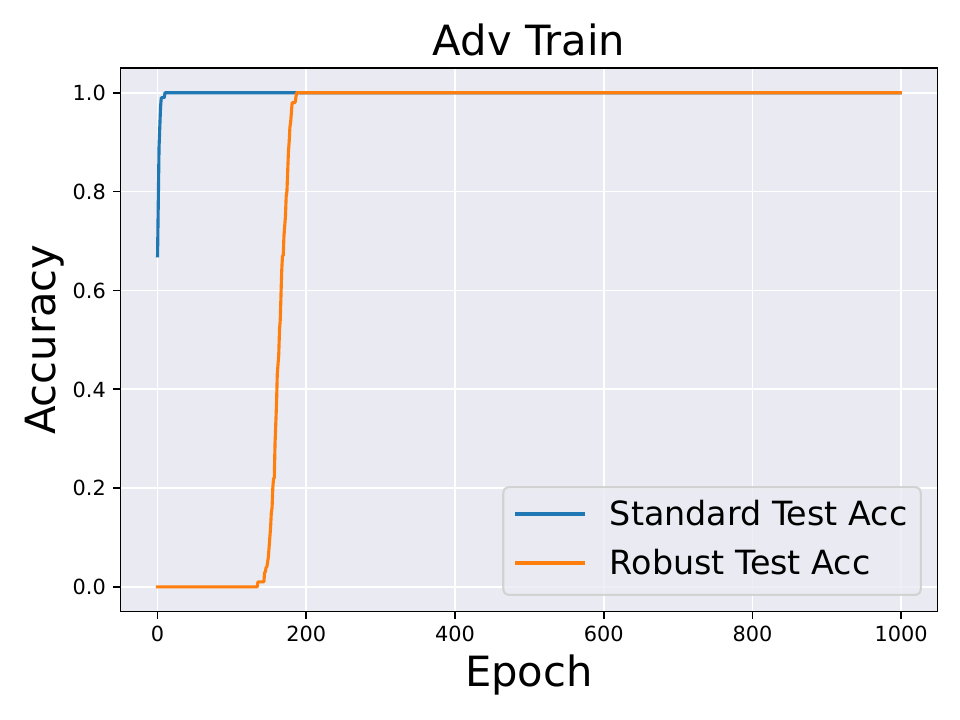}
  \end{minipage}

  \caption{\textbf{Simulations on synthetic data.} \textit{The two left figures:} dynamics of normalized weight-feature correlations for std/adv training. \textit{The two right figures:} learning curves for std/adv training.}
  \label{fig:feature_learning_syn_data}
\end{figure}

\subsection{Learning Process Analysis for Adversarial Training}

For adversarial training, we divide the learning process into two phases via the following lemma.

\begin{lemma}[Feature Learning Iteration for Adversarial Training at Polynomial Part]
\label{lem:adv_training}

During adversarial training, there exists some time threshold $T_0>0$ such that, for any early time $0\leq t \leq T_0$ and pair $(i,r)\in [k]\times[m]$, the two sequences $\{A_{i,r}^{(t)}\}$ and $\{B_{i,r}^{(t)}\}$ satisfy:

\begin{equation}
\begin{cases}
    A_{i,r}^{(t+1)} \approx A_{i,r}^{(t)} +  \Theta(\eta) \left(A_{i,r}^{(t)}\right)^{q-1}\mathbb{E}\bigg[\sum\limits_{p\in \mathcal{J}_{\textit{R}}}\alpha_p^q\big(1-\operatorname{min}\big\{\frac{\epsilon}{\alpha_p}, \Tilde{\Theta}(\Tilde{\eta})\sum\limits_{s\in [m]}\big(A_{i,s}^{(t)}\big)^{q}\big\}\big)^{q}\bigg],
    \\
    B_{i,r}^{(t+1)} \approx B_{i,r}^{(t)} +  \Theta(\eta) \left(B_{i,r}^{(t)}\right)^{q-1}\mathbb{E}\bigg[\sum\limits_{p\in \mathcal{J}_{\textit{NR}}}\beta_p^q\big(1-\operatorname{min}\big\{\frac{\epsilon}{\beta_p}, \Tilde{\Theta}(\Tilde{\eta})\sum\limits_{s\in [m]}\big(B_{i,s}^{(t)}\big)^{q}\big\}\big)^{q}\bigg] .
    \nonumber
\end{cases}
\end{equation}

\end{lemma}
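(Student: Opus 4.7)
The strategy is to start from the exact gradient identity of Lemma~\ref{lem:std_training}, but evaluated at the adversarially perturbed data $\widetilde{\boldsymbol{X}}^{(t)}$; derive a closed form for the one-step perturbation on each patch; and reduce the resulting weight update using the polynomial form $\widetilde{\operatorname{ReLU}}'(z)z = z^q/\varrho^{q-1}$ of the activation on $[0,\varrho]$. The threshold $T_0$ is taken to be the first iteration at which either the invariant $\boldsymbol{w}_{i,r}^{(t)} \approx A_{i,r}^{(t)}\boldsymbol{u}_i + B_{i,r}^{(t)}\boldsymbol{v}_i$ (with off-diagonal and noise components $o(1)$) breaks down, or at which some scalar activation $\alpha_p A_{i,r}^{(t)}$ or $\beta_p B_{i,r}^{(t)}$ exits $[0,\varrho]$. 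The identity will then be established by forward induction on $t\le T_0$.

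\textbf{Main computation.} For a data point $(\boldsymbol{X},y)$, the ascent step on $\mathcal{L}_{\mathrm{margin}}=-F_y$ moves patch $\boldsymbol{x}_p$ by $-\widetilde{\eta}\sum_r \widetilde{\operatorname{ReLU}}'(\langle\boldsymbol{w}_{y,r}^{(t)},\boldsymbol{x}_p\rangle)\boldsymbol{w}_{y,r}^{(t)}$ before the coordinate-wise $\ell_\infty$-clip. Under the invariant the inner product reduces to $\alpha_p A_{y,r}^{(t)}$ on robust patches and $\beta_p B_{y,r}^{(t)}$ on non-robust ones, and the raw ascent vector is essentially axis-aligned with $\boldsymbol{u}_y$ (resp.\ $\boldsymbol{v}_y$). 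Inserting the polynomial form of $\widetilde{\operatorname{ReLU}}'$ and applying the clip yields an effective $\boldsymbol{u}_y$-perturbation of magnitude $c_p = \min\{\epsilon,\ \widetilde{\eta}\,\alpha_p^{q-1}\varrho^{-(q-1)}\sum_s(A_{y,s}^{(t)})^q\}$ on a robust patch, and symmetrically on non-robust patches. Plugging $\widetilde{\boldsymbol{X}}^{(t)}$ back into the exact weight update of Lemma~\ref{lem:std_training}, the $\boldsymbol{u}_i$-projection of the diagonal ($y=i$) term on a robust patch becomes $(\alpha_p-c_p)^q(A_{i,r}^{(t)})^{q-1}/\varrho^{q-1} = \alpha_p^q(1-c_p/\alpha_p)^q(A_{i,r}^{(t)})^{q-1}/\varrho^{q-1}$. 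Rewriting $c_p/\alpha_p = \min\{\epsilon/\alpha_p,\ \widetilde{\eta}\,\alpha_p^{q-2}\varrho^{-(q-1)}\sum_s(A_{i,s}^{(t)})^q\}$ and absorbing the $\Theta(1)$-concentrated factor $\alpha_p^{q-2}/\varrho^{q-1}$ into a $\widetilde{\Theta}(\cdot)$ wrapper on $\widetilde{\eta}$ gives exactly the $\min$ appearing in the lemma. The leading $\Theta(\eta)$ prefactor then collects $\eta/k$, the constant $1/\varrho^{q-1}$, and the loss derivative $1-\mathrm{logit}_i(\boldsymbol{F}^{(t)},\widetilde{\boldsymbol{X}}^{(t)})=\Theta(1)$, which stays $\Theta(1)$ throughout the polynomial phase since all logits are polylog-small for $t\le T_0$. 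The $B_{i,r}$ recursion follows by the identical argument on non-robust patches.

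\textbf{Main obstacle.} The hard part is propagating the invariant: the off-diagonal projections $\langle \boldsymbol{w}_{i,r}^{(t)},\boldsymbol{u}_j\rangle$ and $\langle \boldsymbol{w}_{i,r}^{(t)},\boldsymbol{v}_j\rangle$ for $j\neq i$, together with the noise-direction projections, must stay $o(1)$ for every $t\le T_0$. The subtlety is that the filter $(y,r)$ simultaneously carries both $A_{y,r}^{(t)}$ and $B_{y,r}^{(t)}$, so the raw ascent on any patch has components along both $\boldsymbol{u}_y$ and $\boldsymbol{v}_y$, and one must show that the coordinate-wise $\ell_\infty$-clip neither couples those two axes nor spills significant mass onto features of other classes. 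This reduces to the axis-alignment of $\mathcal{F}$ combined with the fact that initial cross-correlations $\langle \boldsymbol{w}_{j,r}^{(0)},\boldsymbol{f}\rangle = O(\sigma_0)$ for every $\boldsymbol{f}\in\mathcal{F}$ stay suppressed because the activations on those directions sit deep inside the polynomial regime, where $\widetilde{\operatorname{ReLU}}'$ is itself polynomially small and hence freezes growth on those axes. Once this invariant is maintained, $T_0$ is precisely the first time $\max_s A_{i,s}^{(t)}$ or $\max_s B_{i,s}^{(t)}$ approaches $\widetilde{\Theta}(\varrho)$, after which a separate linear-regime analysis (analogous to the late-phase analysis that will be used for standard training) takes over.
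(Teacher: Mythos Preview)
Your proposal is correct and follows essentially the same route as the paper. The paper's proof of this lemma is extremely terse (``the proof logic is similar to Lemma~\ref{lem:sim_std_iter}''), but the scaffolding it relies on in the appendix --- the weight decomposition (Lemma~\ref{lem:adv_weight_decomp}), the explicit form of the perturbed patches $\tilde{\alpha}_p^{(t)},\tilde{\beta}_p^{(t)}$ (Lemma~\ref{lem:adv_data}), the general feature-learning iteration at $\widetilde{\boldsymbol{X}}^{(t)}$ (Lemma~\ref{lem:adv_iter}), and the adversarial logit approximations (Lemmas~\ref{lem:adv_diag_logit}--\ref{lem:adv_non_diag_logit}) --- is exactly what you outline: compute the one-step ascent on each patch, identify $c_p/\alpha_p=\min\{\epsilon/\alpha_p,\tilde\eta\,\alpha_p^{q-2}\varrho^{-(q-1)}\sum_s(A_{i,s}^{(t)})^q\}$, substitute the polynomial form of $\widetilde{\operatorname{ReLU}}'$, and absorb $1-\operatorname{logit}_i=\Theta(1)$ into the prefactor. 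Your ``main obstacle'' paragraph also matches the paper's separate control of the off-diagonal correlations (Lemma~\ref{lem:adv_non_diag}) and of the cross-terms $\tilde{\beta}_p^{(t)}$, $\tilde{\lambda}_{p,j}^{(t)}$, $\tilde{\mu}_{p,j}^{(t)}$ on robust patches, which the paper records but then drops as lower order.
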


\textbf{Phase I: First, Network Partially Learns Non-Robust Features.} At the beginning, due to our small initialization, we know all feature learning coefficients $A_{i,r}^{(t)},B_{i,r}^{(t)} = o(1)$, which suggests that the total feature learning $\sum_{s\in [m]}\big(A_{i,s}^{(t)}\big)^{q}$ and $\sum_{s\in [m]}\big(B_{i,s}^{(t)}\big)^{q}$ are sufficiently small. Then, the feature learning process is similar to standard training until the non-robust feature learning becomes large.

\textbf{Phase II: Next, Robust Feature Learning Starts Increasing.} Once the total non-robust feature learning $\sum_{s\in [m]}\big(B_{i,s}^{(t)}\big)^{q}$ attains an order of $\Tilde{\Theta}(\Tilde{\eta}^{-1})$, it is known that the non-robust feature learning will stop, due to $\frac{\epsilon}{\beta_p}\gtrsim 1$ and $1 -  \Tilde{\Theta}(\Tilde{\eta})\sum_{s\in [m]}\big(B_{i,s}^{(t)}\big)^{q} \approx 0$. In contrast, the robust feature learning continues to increase since it always holds that $1-\operatorname{min}\big\{\frac{\epsilon}{\alpha_p}, \Tilde{\Theta}(\Tilde{\eta})\sum_{s\in [m]}\big(A_{i,s}^{(t)}\big)^{q}\big\} \geq 1 - \frac{\epsilon}{\alpha_p} \geq \Omega(1)$. Thus, the robust feature learning will increase over the non-robust feature learning finally, and the network converges to robust regime, i.e. $\operatorname{max}_{r\in[m]}A_{i,r}^{(T)} \gg \operatorname{max}_{r\in[m]}B_{i,r}^{(T)}$ for large $T$.

\section{Experiments}
\label{sec:exp}

\subsection{Simulations on Synthetic Data}

\textbf{Experiment Settings.} We first perform numerical experiments
on synthetic data to verify our theoretical results. Here, our synthetic data is generated according to Definition \ref{def:data}. We choose the hyperparameters as: $k = 2, d = 100, P = 16, q = \tau = 3, \varrho = 1, \epsilon = 1.2, N = 100, m = 100, \sigma_0 = 0.01, \sigma_n = 0.1, \eta = 0.1, \Tilde{\eta} = 10^3, T = 1000$, and $|\mathcal{J}_{\textit{R}}| \equiv 1, |\mathcal{J}_{\textit{NR}}| \equiv 15, \alpha_p \equiv 2, \beta_p \equiv 1$ for each $(\boldsymbol{X},y)\sim\mathcal{D}$. Then, we run the standard training and adversarial training algorithms, and we characterize the feature learning process via the dynamics of normalized weight-feature correlations: $\operatorname{max}_{r\in[m]}\langle \boldsymbol{w}_{i,r},\boldsymbol{u}_i\rangle/\|\boldsymbol{u}_i\|_2, i=1,2$ (robust feature learning), and $\operatorname{max}_{r\in[m]}\langle \boldsymbol{w}_{i,r},\boldsymbol{v}_i\rangle/\|\boldsymbol{v}_i\|_2, i=1,2$ (non-robust feature learning). We calculate the robust test accuracy by the standard PGD attack.

\textbf{Experiment Results.} The numerical results are reported in Figure \ref{fig:feature_learning_syn_data}. We observe that, in standard training, non-robust feature learning dominates during training process. There exists a phase transition during adversarial training (it happens nearly at $150$-epoch). Phase I: the network learner mainly learns non-robust features to achieve perfect standard test accuracy, but robust test accuracy maintains zero. Phase II: the increments of non-robust feature learning is restrained while robust feature learning and robust test accuracy start to increase. These results empirically verify our analysis in Section \ref{sec:proof}.

\subsection{Experiments on Real-world Datasets}

\begin{figure}[t]
  \centering
  \noindent
  \captionsetup[subfigure]{labelformat=empty}
  \begin{minipage}[c]{0.03\textwidth}
    \centering
    \rotatebox{90}{\quad Std Training}
  \end{minipage}%
  \begin{minipage}[c]{0.32\textwidth}
    \centering
    \includegraphics[width=\textwidth]{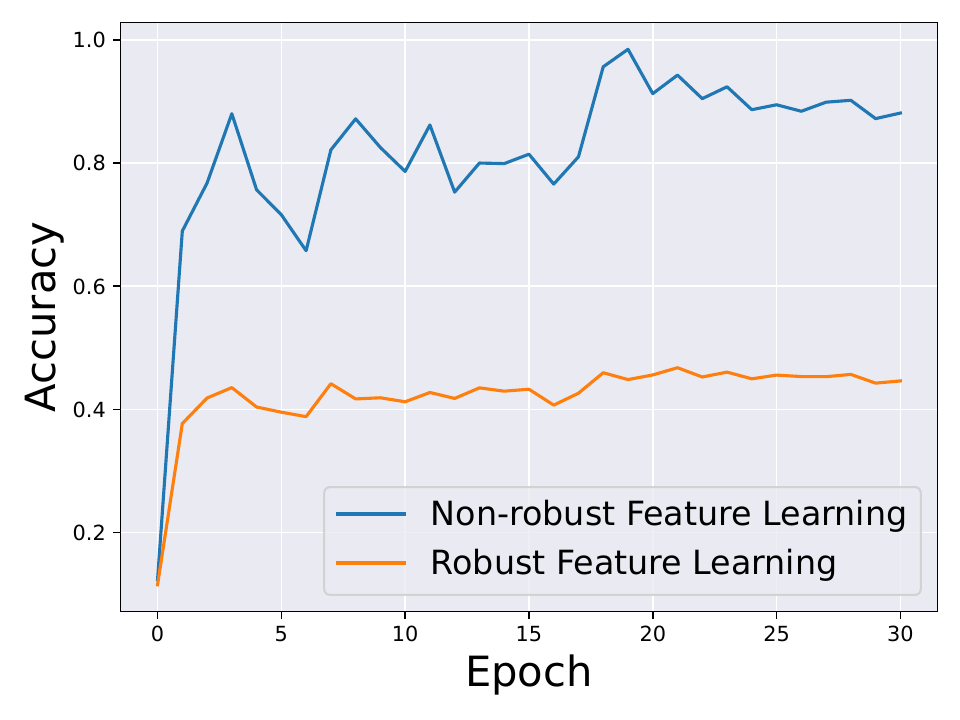}
  \end{minipage}%
  \begin{minipage}[c]{0.32\textwidth}
    \centering
    \includegraphics[width=\textwidth]{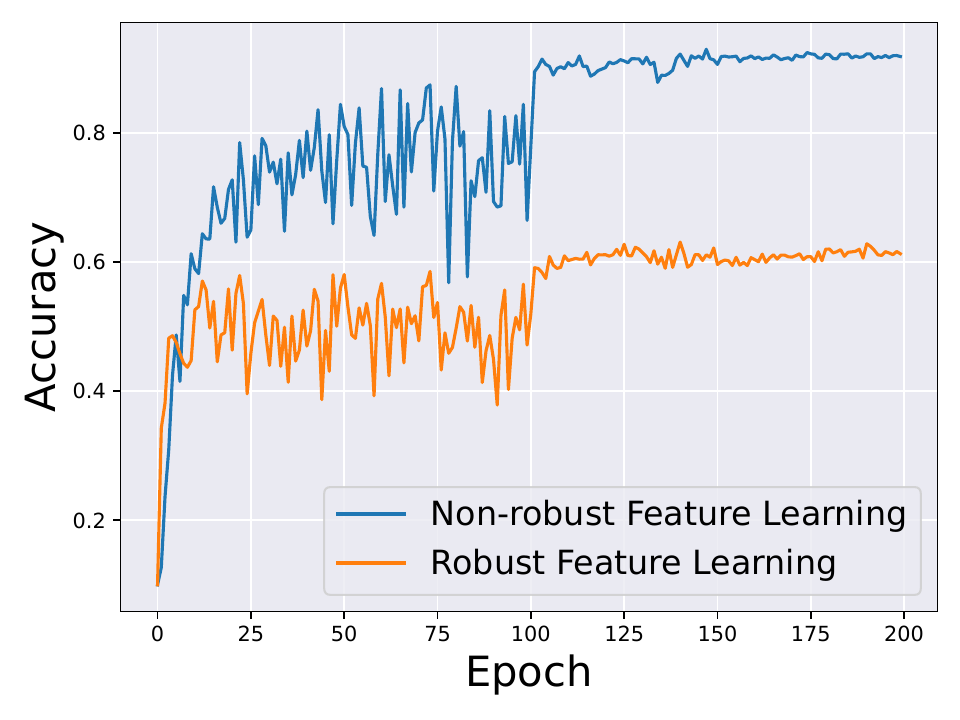}
  \end{minipage}%
  \begin{minipage}[c]{0.32\textwidth}
    \centering
    \includegraphics[width=\textwidth]{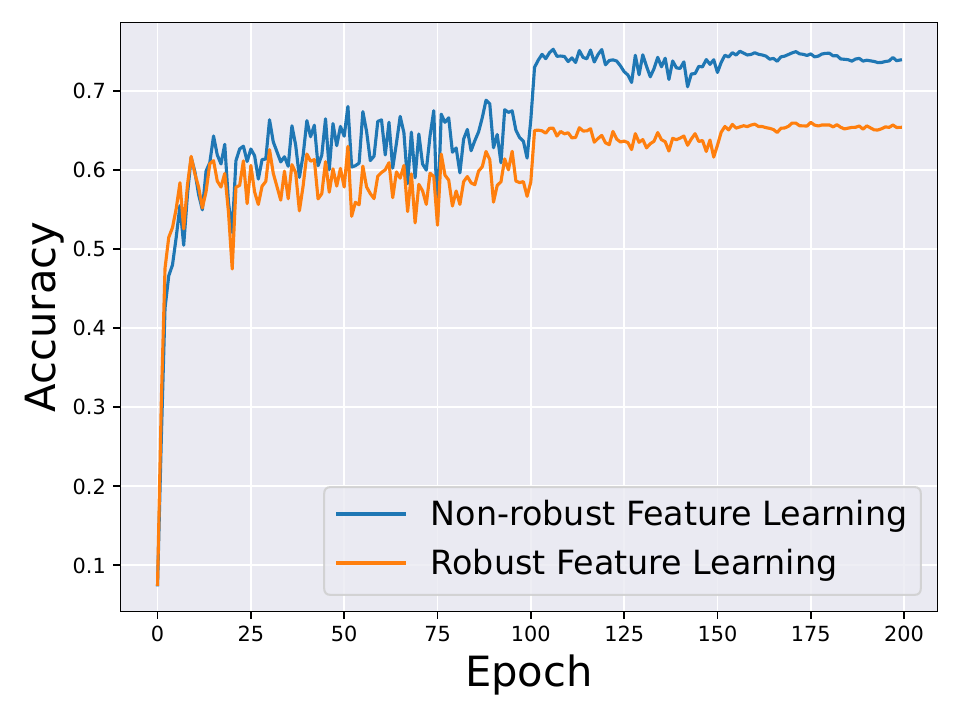}
  \end{minipage}
  
  \begin{minipage}[c]{0.03\textwidth}
    \centering
    \rotatebox{90}{\qquad \quad Adv Training}
  \end{minipage}%
  \begin{minipage}[c]{0.32\textwidth}
    \centering
    \includegraphics[width=\textwidth]{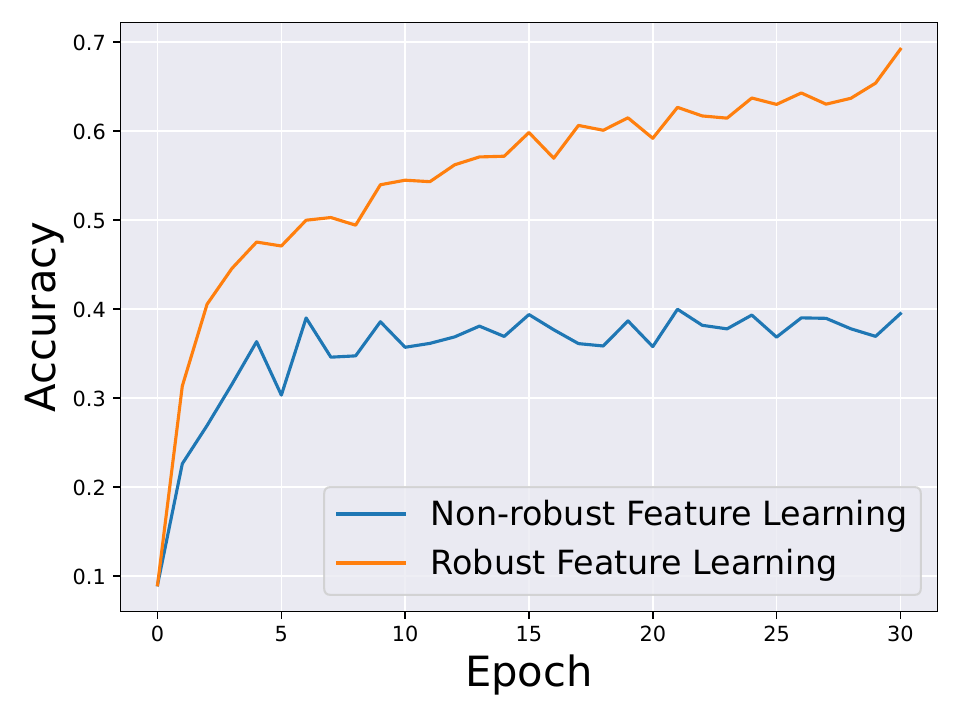}
    \subcaption{MNIST}
  \end{minipage}%
  \begin{minipage}[c]{0.32\textwidth}
    \centering
    \includegraphics[width=\textwidth]{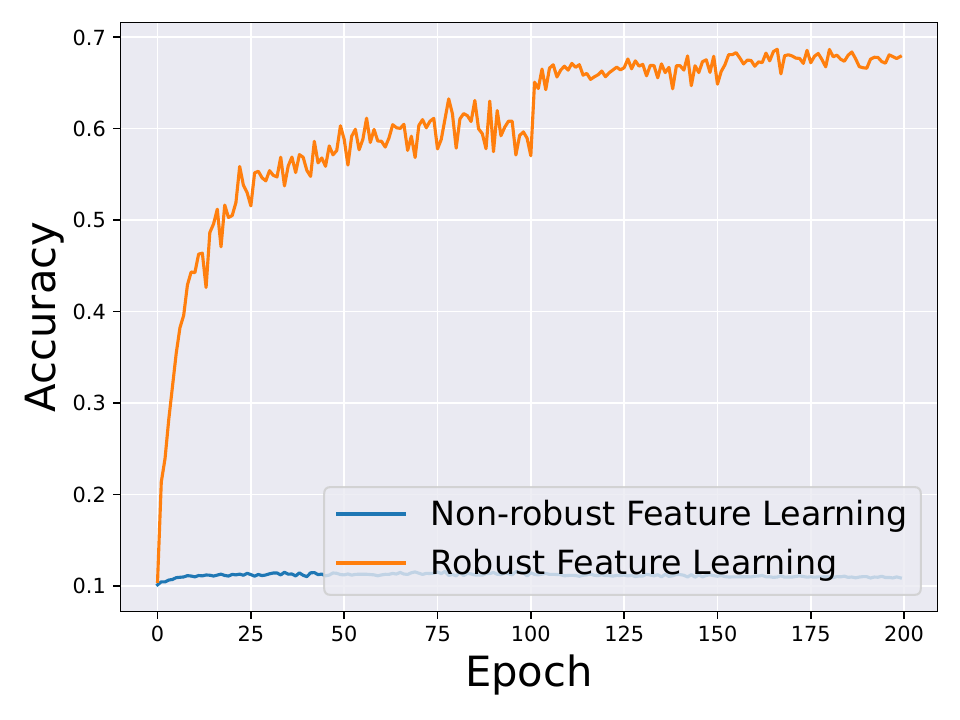}
    \subcaption{CIFAR10}
  \end{minipage}%
  \begin{minipage}[c]{0.32\textwidth}
    \centering
    \includegraphics[width=\textwidth]{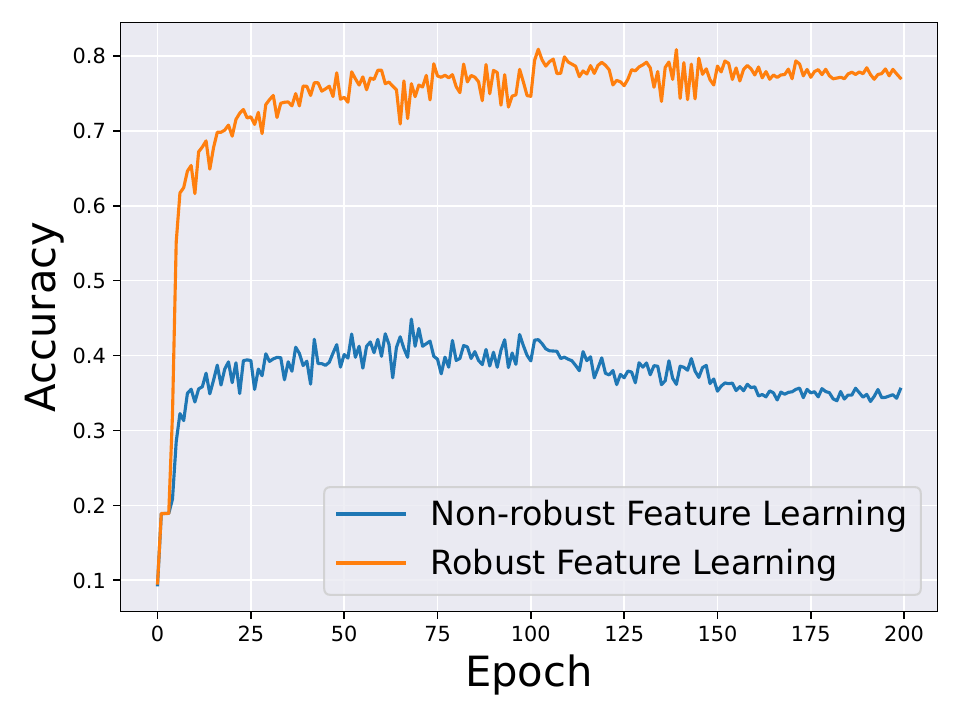}
    \subcaption{SVHN}
  \end{minipage}

  \caption{\textbf{Feature learning process on real-image datasets.} \textit{Top row:} feature learning accuracy during standard training. \textit{Bottom row:} feature learning accuracy during adversarial training.}
  \label{fig:feature_learning_real_data}
\end{figure}

\textbf{Experiment Settings.} Instead of weight-feature correlation used in synthetic data setting, here on MNIST, CIFAR10 and SVHN datasets, we apply \textbf{feature learning accuracy} in Definition \ref{def:feature_learning} to measure non-robust/robust feature learning during training dynamics. Similar to the method proposed in \citet{ilyas2019adversarial}, we reconstruct datasets $\hat{\mathcal{D}}_{\textit{NR}}, \hat{\mathcal{D}}_{\textit{R}}$ as the feature representative distributions by a one-to-one mapping $\boldsymbol{X}\mapsto\hat{\boldsymbol{X}}$. Specifically, we solve the following optimization problem to derive $\hat{\boldsymbol{X}}$: $\operatorname{min}\limits_{\hat{\boldsymbol{X}}}\|\boldsymbol{G}(\hat{\boldsymbol{X}})-\boldsymbol{G}(\boldsymbol{X})\|_2$,
where $\boldsymbol{X}\in \mathcal{D}$ is the target data point, and $\boldsymbol{G}$ is the mapping from input $\boldsymbol{X}$ to the representation layer for network learners. When $\boldsymbol{G}$ is chosen from a standard/adversarial-trained network, we derive the non-robust/robust representative dataset $\hat{\mathcal{D}}_{\textit{NR}}/\hat{\mathcal{D}}_{\textit{R}}$. Then, we run the standard training and adversarial training algorithms and record the dynamics of feature learning accuracy w.r.t. $\hat{\mathcal{D}}_{\textit{NR}}$ and $\hat{\mathcal{D}}_{\textit{R}}$. For MNIST, we choose ResNet18 and  $\ell_{\infty}$-perturbation with radius $0.3$, and we run algorithms for $30$ iterations. For CIFAR10 and SVHN, we choose WideResNet-34-10 and  $\ell_{\infty}$-perturbation with radius $8/255$ and run algorithms for $200$ iterations.

\textbf{Experiment Results.} The results are presented in Figure \ref{fig:feature_learning_real_data}. For all three datasets, we could see that, in standard training, network learners predominantly learn non-robust features rather than robust features, while adversarial training can both inhabit non-robust feature learning and strengthen robust feature learning, which empirically demonstrates our theoretical results in Section \ref{sec:result}.

\section{Conclusion, Limitations and Future Works}

In this paper, we provide a theoretical explanation why adversarial examples widely exist and how the adversarial training method improves the model robustness. Based on robust/non-robust feature decomposition, we prove an implicit bias of standard training that network mainly learns non-robust features, leading to adversarial examples. Furthermore, we demonstrate that adversarial training can provably enhance the robust feature learning and suppress the non-robust feature learning. We believe our theory gives some insights into the inner workings of adversarial robust learning in deep learning. However, we also believe that our results can be significantly improved if we build on a more realistic setup. For example, an important future direction is to extend our theoretical analysis to deep neural networks. Another interesting direction is to extend our theoretical analysis method to adversarial training based on multi-step gradient ascent algorithms, such as PGD.

\subsubsection*{Acknowledgments}
Binghui Li is supported by the Elite Ph.D. Program in Applied Mathematics for PhD Candidates in Peking University. We thank Yuhui Li, Zhongao Wang, Zixin Wen, Zean Xu and Qinhan Yu for helpful discussions and anonymous reviewers for their valuable suggestions.

\bibliography{Ref}
\bibliographystyle{iclr2025_conference}

\newpage
\appendix
\newpage
\tableofcontents
\newpage
\newpage
\section{Additional Experiment Results about Real-Image Datasets}
\label{appendix:exp}

\subsection{Feature Learning Process on Real-world Datasets}

\begin{figure}[ht]
  \centering
  \noindent
  \captionsetup[subfigure]{labelformat=empty}
  \begin{minipage}[c]{0.03\textwidth}
    \centering
    \rotatebox{90}{\quad Std Training}
  \end{minipage}%
  \begin{minipage}[c]{0.32\textwidth}
    \centering
    \includegraphics[width=\textwidth]{img-4-15/mnist-clean-0416-30-large.pdf}
  \end{minipage}%
  \begin{minipage}[c]{0.32\textwidth}
    \centering
    \includegraphics[width=\textwidth]{img-4-15/cifar10-clean-large.pdf}
  \end{minipage}%
  \begin{minipage}[c]{0.32\textwidth}
    \centering
    \includegraphics[width=\textwidth]{img-4-15/svhn-clean-large.pdf}
  \end{minipage}
  
  \begin{minipage}[c]{0.03\textwidth}
    \centering
    \rotatebox{90}{\qquad \quad Adv Training}
  \end{minipage}%
  \begin{minipage}[c]{0.32\textwidth}
    \centering
    \includegraphics[width=\textwidth]{img-4-15/mnist-adv-0416-30-large.pdf}
    \subcaption{MNIST}
  \end{minipage}%
  \begin{minipage}[c]{0.32\textwidth}
    \centering
    \includegraphics[width=\textwidth]{img-4-15/cifar10-adv-large.pdf}
    \subcaption{CIFAR10}
  \end{minipage}%
  \begin{minipage}[c]{0.32\textwidth}
    \centering
    \includegraphics[width=\textwidth]{img-4-15/svhn-adv-large.pdf}
    \subcaption{SVHN}
  \end{minipage}

  \caption{\textbf{Feature learning process on real-image datasets.} \textit{Top row:} feature learning accuracy during standard training. \textit{Bottom row:} feature learning accuracy during adversarial training.}
  \label{appendix_fig:feature_learning_real_data}
\end{figure}

\begin{table}[ht]
  \centering
  \caption{Feature learning results on real-world datasets}
    \begin{tabular}{cc|cc|cc}
    \toprule
    \multirow{2}[4]{*}{\textbf{Dataset}} & \multirow{2}[4]{*}{\textbf{Algorithm}} & \multicolumn{2}{c|}{\textbf{Test Accuracy}} & \multicolumn{2}{c}{\textbf{Feature Learning}} \\
\cmidrule{3-6}          &       & \textbf{Standard} & \textbf{Robust} & \textbf{Non-Robust} & \textbf{Robust} \\
    \midrule
    \multirow{2}[2]{*}{\textbf{MNIST}} & Std Train & 99.56 & 0.04  & 88.11 & 44.61 \\
          & Adv Train & 99.41 & 94.43 & 39.49 & 69.19 \\
    \midrule
    \multirow{2}[2]{*}{\textbf{CIFAR10}} & Std Train & 95.74 & 0.00     & 91.79 & 61.26 \\
          & Adv Train & 86.28 & 45.13 & 10.88 & 67.88 \\
    \midrule
    \multirow{2}[2]{*}{\textbf{SVHN}} & Std Train & 96.84 & 0.16  & 73.91 & 65.38 \\
          & Adv Train & 91.99 & 58.55 & 35.50  & 77.04 \\
    \bottomrule
    \end{tabular}%
  \label{tab:feature_learning}
\end{table}%

\textbf{Experiment Settings.} Instead of weight-feature correlation used in synthetic data setting, here on MNIST, CIFAR10 and SVHN datasets, we apply \textbf{feature learning accuracy} in Definition \ref{def:feature_learning} to measure non-robust/robust feature learning during training dynamics. 

\begin{definition}[Feature Learning Accuracy]
\label{re_def:feature_learning}

For a given feature subset $\mathcal{H}\subset\mathcal{F}$ ($\mathcal{F}$ is all feature set as the same as Definition \ref{def:robust_non_robust_feature}), $\mathcal{H}$-extended feature representative distribution $\mathcal{D}_{\mathcal{H}}$ and classifier model $\boldsymbol{F}$, we define the feature learning accuracy as $\mathbb{P}_{(\boldsymbol{X}_{\boldsymbol{f}},y)\sim\mathcal{D}_{\mathcal{H}}}\left[\operatorname{argmax}_{i\in [k]}F_i(\boldsymbol{X}_{\boldsymbol{f}}) = y\right]$, where $\boldsymbol{X}_{\boldsymbol{f}} (\boldsymbol{f}\in\mathcal{H})$ is the $\boldsymbol{f}$-extended representative and $y$ is the label which feature $\boldsymbol{f}$ corresponds to.
\end{definition}

\begin{remark}
\label{rrem:feature_learning}

    We define feature learning accuracy based on whether the model $\boldsymbol{F}$ can accurately classify data points when presented with only a single signal feature $\boldsymbol{f}$, which indeed generalizes the notion of weight-feature correlation $\langle \boldsymbol{w}_{i,r}, \boldsymbol{f}\rangle$ to general non-linear models and non-linear features.
\end{remark}

Similar to the method proposed in \citet{ilyas2019adversarial}, we reconstruct datasets $\hat{\mathcal{D}}_{\textit{NR}}, \hat{\mathcal{D}}_{\textit{R}}$ as the feature representative distributions by a one-to-one mapping $\boldsymbol{X}\mapsto\hat{\boldsymbol{X}}$. Specifically, we solve the following optimization problem to derive $\hat{\boldsymbol{X}}$: 
\begin{equation}
    \operatorname{min}\limits_{\hat{\boldsymbol{X}}}\|\boldsymbol{G}(\hat{\boldsymbol{X}})-\boldsymbol{G}(\boldsymbol{X})\|_2 , \nonumber
\end{equation}
where $\boldsymbol{X}\in \mathcal{D}$ is the target data point, and $\boldsymbol{G}$ is the mapping from input $\boldsymbol{X}$ to the representation layer for network learners. When $\boldsymbol{G}$ is chosen from a standard/adversarial-trained network, we derive the non-robust/robust representative dataset $\hat{\mathcal{D}}_{\textit{NR}}/\hat{\mathcal{D}}_{\textit{R}}$. Then, we run the standard training and adversarial training algorithms and record the dynamics of feature learning accuracy w.r.t. $\hat{\mathcal{D}}_{\textit{NR}}$ and $\hat{\mathcal{D}}_{\textit{R}}$. For MNIST, we choose ResNet18 \citep{he2016deep} and  $\ell_{\infty}$-perturbation with radius $0.3$, and we run algorithms for $30$ iterations. For CIFAR10 and SVHN, we choose WideResNet-34-10 \citep{zagoruyko2016wide} as network architecture and  $\ell_{\infty}$-perturbation with radius $8/255$ and run algorithms for $200$ iterations by using a single NVIDIA RTX 4090 GPU.

\textbf{Experiment Results.} The results are presented in Figure \ref{appendix_fig:feature_learning_real_data} and Table \ref{tab:feature_learning}. For all three datasets, we could see that, in standard training, network learners predominantly learn non-robust features rather than robust features, while adversarial training can both inhabit non-robust feature learning and strengthen robust feature learning, which empirically demonstrates our theoretical results (Theorem \ref{thm:std_training} and Theorem \ref{thm:adv_training}) in Section \ref{sec:result}.

\subsection{Targeted Adversarial Attack on Real-world Datasets}

\begin{table}[ht]
  \centering
  \caption{Targeted attack success rates on CIFAR10}
    \begin{tabular}{cc|cccc}
    \toprule
    \multirow{2}[4]{*}{\textbf{Model}} & \multirow{2}[4]{*}{\textbf{Attack}} & \multicolumn{4}{c}{\textbf{Source $\rightarrow$ Target}} \\
\cmidrule{3-6}          &       & \textbf{Cat $\rightarrow$ Dog} & \textbf{Dog $\rightarrow$ Cat} & \textbf{Car $\rightarrow$ Plane} & \textbf{Plane $\rightarrow$ Car} \\
    \midrule
    \multirow{2}[2]{*}{\textbf{Std Trained}} & NRF-PGD & $71.41\pm 1.17$ & $80.36 \pm 0.28$ & $54.08 \pm 0.99$ & $76.74 \pm 0.77$ \\
          & RF-PGD & $11.30 \pm 0.55$ & $9.58 \pm 0.58$ & $1.24 \pm 0.10$ & $2.63 \pm 0.13$ \\
    \midrule
    \multirow{2}[2]{*}{\textbf{Adv Trained}} & NRF-PGD & $9.60 \pm 0.18$ & $15.16 \pm 0.23$ & $0.34 \pm 0.04$ & $0.40 \pm 0.00$ \\
          & RF-PGD & $19.38 \pm 0.29$ & $26.00 \pm 0.67$ & $2.64 \pm 0.18$ & $1.96 \pm 0.13$ \\
    \bottomrule
    \end{tabular}%
  \label{tab:attack}
\end{table}%

\textbf{Experiment Settings.} To verify whether adversarial examples primarily stem from non-robust features or robust features, we propose two corresponding model-free attack algorithms: \textbf{non-robust-feature based PGD (NRF-PGD)} and \textbf{robust-feature based PGD (RF-PGD)}. Concretely, we use $\boldsymbol{G}_{\textit{std}}(\cdot)$ and $\boldsymbol{G}_{\textit{adv}}(\cdot)$ to denote the mapping from input $\boldsymbol{X}$ to the representation layer for standard-trained network and adversarially-trained network, respectively. Similar to the previous section and \citet{ilyas2019adversarial}, we can regard $\boldsymbol{G}_{\textit{std}}(\cdot)$ and $\boldsymbol{G}_{\textit{adv}}(\cdot)$ as non-robust-feature extractor and robust-feature extractor. Then, we define NRF-PGD by using PGD method to solve the following optmization problem over the perturbation $\boldsymbol{\Delta}$:
\begin{equation}
    \operatorname{min}_{\|\boldsymbol{\Delta}\|_{\infty}\leq \epsilon}\|\boldsymbol{G}_{\textit{std}}(\boldsymbol{X}+\boldsymbol{\Delta}) - \boldsymbol{G}_{\textit{std}}(\boldsymbol{X'})\|_{2} , \nonumber
\end{equation}
where $\boldsymbol{X}$ is the original image from the source class, $\boldsymbol{X'}$ is a random image sampled from the target class, and $\epsilon$ is the perturbation radius. Similarly, we can define RF-PGD by using PGD method to solve the following optmization problem over the perturbation $\boldsymbol{\Delta}$:
\begin{equation}
    \operatorname{min}_{\|\boldsymbol{\Delta}\|_{\infty}\leq \epsilon}\|\boldsymbol{G}_{\textit{adv}}(\boldsymbol{X}+\boldsymbol{\Delta}) - \boldsymbol{G}_{\textit{adv}}(\boldsymbol{X'})\|_{2} . \nonumber
\end{equation}
Then, we evaluate the performance of the two attack methods on the standard-trained network $\boldsymbol{F}_{\textit{std}}(\cdot)$ and the adversarially-trained network $\boldsymbol{F}_{\textit{adv}}(\cdot)$ (they are different from the reference networks from which we choose our $\boldsymbol{G}_{\textit{std}}(\cdot)$ and $\boldsymbol{G}_{\textit{adv}}(\cdot)$). For CIRAF10 dataset, we apply WideResNet-34-10 \citep{zagoruyko2016wide} as our learner networks, and choose the typical perturbation radius $\epsilon = 8/255$ for $\ell_{\infty}$-attack. And all experiments are repeated over $5$ random seeds.

\textbf{Experiment Results.} We select two pairs ((cat, dog) and (car, plane)) to show the targeted attack success rate, which is presented in Table \ref{tab:attack}. It is evident that, for the standard-trained classifier, the success rates of PGD attacks using non-robust features are significantly high. Specifically, targeted attacks achieve the following success rates with NRF-PGD: Cat → Dog at 71.41\%, Dog → Cat at 80.36\%, Car → Plane at 54.08\%, and Plane → Car at 76.74\%. Conversely, when utilizing robust features for PGD, the success rates are considerably lower. For example, RF-PGD attacks yield the following success rates: Cat → Dog at 9.60\%, Dog → Cat at 15.16\%, Car → Plane at 0.34\%, and Plane → Car at 0.40\%. In the case of the adversarially-trained network, the situation alters, but in practice, after adversarial training, the success rates for both types of attacks remain low. These findings indicate that adversarial examples predominantly originate from non-robust features.

\newpage
\section{Preliminary for Proof Technique}

\subsection{Notations}
Throughout this work, we use letters for scalars and bold letters for vectors. For any given two sequences $\left\{A_n\right\}_{n=0}^{\infty}$ and $\left\{B_n\right\}_{n=0}^{\infty}$, we denote $A_n=O\left(B_n\right)$ if there exist some absolute constant $C_1>0$ and $N_1>0$ such that $\left|A_n\right| \leq C_1\left|B_n\right|$ for all $n \geq N_1$. Similarly, we denote $A_n=\Omega\left(B_n\right)$ if there exist $C_2>0$ and $N_2>0$ such that $\left|A_n\right| \geq C_2\left|B_n\right|$ for all $n>N_2$. We say $A_n=\Theta\left(B_n\right)$ if $A_n=O\left(B_n\right)$ and $A_n=\Omega\left(B_n\right)$ both holds. We use $\widetilde{O}(\cdot), \widetilde{\Omega}(\cdot)$, and $\widetilde{\Theta}(\cdot)$ to hide logarithmic factors in these notations respectively. Moreover, we denote $A_n=\operatorname{poly}\left(B_n\right)$ if $A_n=O\left(B_n^K\right)$ for some positive constant $K$, and $A_n=\operatorname{polylog}\left(B_n\right)$ if $B_n=\operatorname{poly}\left(\log \left(B_n\right)\right)$. We say $A_n = o(B_n)$ (or $A_n \ll B_n$ or $B_n \gg A_n$) if for arbitrary positive constant $C_3 > 0$, there exists $N_3 > 0$ such that $|A_n| < C_3 |B_n|$ for all $n>N_3$. And we also use $A_n \approx B_n$ to denote $A_n = B_n + o(1)$.

\subsection{Preliminary Lemmas}

\begin{lemma}
\label{lem:tail_max_g}

    Let $X=\left(X_1, \ldots, X_n\right)$, where $X_i \sim \mathcal{N}(0,1)$ are i.i.d., then we have
    \begin{itemize}
        \item 
        $\mathbb{P}\left[\|X\|_{\infty} \geq \sqrt{2 \log (2 n)}+t\right] \leq \frac{1}{2} \exp \left(-t^2 / 2\right) (t>0)$, and
        \item 
        $\mathbb{P}\left[\|X\|_{\infty} \leq \sqrt{2 \log (2 n)-\delta} \right] \leq \exp \left\{-\frac{e^{\delta / 2}}{\sqrt{2 \pi}(\sqrt{2 \log (2 n)}+1)}\right\}$, where we can take $\delta = K\operatorname{loglog}(n)$ for large $K$ such that this probability is small.
    \end{itemize}
\end{lemma}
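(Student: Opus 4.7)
The plan is to handle the two inequalities separately, both as direct consequences of standard pointwise Gaussian tail estimates combined with either a union bound (for the upper tail) or independence (for the lower tail). The two ingredients I would invoke are (i) the one-sided bound $\mathbb{P}[X_1 > t] \leq \tfrac{1}{2} e^{-t^2/2}$ for $t \geq 0$, which is easily verified by showing that $R(t) := 2\mathbb{P}[X_1 > t]/e^{-t^2/2}$ is nonincreasing (its derivative has sign $t\,\mathbb{P}[X_1>t] - \phi(t) \leq 0$ by the Mills-ratio upper bound) and $R(0)=1$; and (ii) the Mills-ratio lower bound $\mathbb{P}[X_1 > t] \geq \tfrac{t}{(t^2+1)\sqrt{2\pi}} e^{-t^2/2}$ for $t > 0$.

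For the upper tail, I would expand $\|X\|_\infty = \max_i |X_i|$ and apply a union bound over the $n$ coordinates and two signs to get
\[
\mathbb{P}\!\left[\|X\|_\infty \geq \sqrt{2\log(2n)}+t\right] \leq 2n\, \mathbb{P}\!\left[X_1 > \sqrt{2\log(2n)} + t\right].
\]
Plugging in (i) and expanding the square produces a factor $n\, e^{-\log(2n)} = \tfrac{1}{2}$ multiplying $\exp\!\bigl(-t\sqrt{2\log(2n)} - t^2/2\bigr)$; discarding the nonnegative linear term in the exponent gives exactly $\tfrac{1}{2} e^{-t^2/2}$.

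For the lower tail, I would use independence to write
\[
\mathbb{P}\!\left[\|X\|_\infty \leq s\right] = \bigl(1 - 2\,\mathbb{P}[X_1 > s]\bigr)^n \leq \exp\!\bigl(-2n\,\mathbb{P}[X_1 > s]\bigr),
\]
via $1-x \leq e^{-x}$. Setting $s = \sqrt{2\log(2n)-\delta}$ so that $e^{-s^2/2} = e^{\delta/2}/(2n)$, and using (ii) together with the elementary simplification $\tfrac{s}{s^2+1} \geq \tfrac{1}{s+1}$ (which holds once $s\geq 1$, and is guaranteed for large $n$ in our regime), converts the bound into $\exp\!\bigl(-\tfrac{e^{\delta/2}}{\sqrt{2\pi}\,(s+1)}\bigr)$. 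Upper-bounding $s+1 \leq \sqrt{2\log(2n)}+1$ in the denominator (which only weakens the exponent and is valid since $\delta \geq 0$) gives the stated inequality.

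The argument is routine and I do not foresee a genuine obstacle; the one thing to watch is the bookkeeping of constants through the Mills-ratio lower bound, since the prefactor $\sqrt{2\log(2n)}+1$ must come out exactly. The closing parenthetical about taking $\delta = K\log\log n$ is then immediate: with that choice $e^{\delta/2} = (\log n)^{K/2}$ dwarfs the $\sqrt{\log n}$ denominator, so the lower-tail probability is super-polynomially small in $n$, which is the form in which the lemma will be applied later.
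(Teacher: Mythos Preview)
Your proposal is correct. The paper states this lemma as a preliminary result without proof, so there is no paper proof to compare against; your argument---union bound plus the Gaussian tail estimate $\mathbb{P}[X_1>t]\le \tfrac12 e^{-t^2/2}$ for the upper tail, and independence plus the Mills-ratio lower bound for the lower tail---is the standard route and all the constants check out exactly as written.
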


\begin{lemma}[Tensor Power Method, from \citet{allen-zhu2023towards}]
\label{lem:tensor_power_method}
Let $q \geq 3$ be a constant and $x_0, y_0=o(1)$. Let $\left\{x_t, y_t\right\}_{t \geq 0}$ be two positive sequences updated as
\begin{itemize}
    \item 
    $x_{t+1} \geq x_t+\eta C_t x_t^{q-1}$ for some $C_t=\Theta(1)$, and
    \item 
    $y_{t+1} \leq y_t+\eta S C_t y_t^{q-1}$ for some constant $S=\Theta(1)$,
\end{itemize}
where $\eta=O(1 / \operatorname{poly}(d))$ for a sufficiently large polynomial in $d$. Suppose $x_0 \geq y_0 S^{\frac{1}{q-2}}\left(1+\Theta\left(\frac{1}{\operatorname{polylog}(d)}\right)\right)$. For every $A=O(1)$, letting $T_x$ be the first iteration such that $x_t \geq A$, we must have that
$$
y_{T_x}=O\left(y_0 \operatorname{polylog}(d)\right)
$$
\end{lemma}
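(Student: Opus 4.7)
The plan is to prove $y_{T_x} = O(y_0 \,\mathrm{polylog}(d))$ by a change of variables that unifies the two recursions, followed by a telescoping argument on the ``inverse-power'' quantities suggested by the continuous-time ODE $\dot{u} = C u^{q-1}$. First I normalize the $y$-sequence by defining $z_t := S^{1/(q-2)} y_t$, so that $y_{t+1} \leq y_t + \eta S C_t y_t^{q-1}$ transforms into the cleaner $z_{t+1} \leq z_t + \eta C_t z_t^{q-1}$, structurally identical to the lower bound on $x_{t+1}$ with the \emph{same} $C_t$. Under this normalization the hypothesis becomes $x_0 \geq z_0(1+\varepsilon)$ with $\varepsilon = \Theta(1/\mathrm{polylog}(d))$, and it suffices to show $z_{T_x} = O(z_0 \,\mathrm{polylog}(d))$.

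Next I track the inverse budgets $\phi_t := x_t^{-(q-2)}$ and $\psi_t := z_t^{-(q-2)}$. Since $a \mapsto a^{-(q-2)}$ is convex for $a > 0$, the tangent-line inequality $(1+y)^{-(q-2)} \geq 1 - (q-2) y$ gives the \emph{exact} one-step lower bound
\begin{align*}
\psi_{t+1} \;\geq\; \bigl(z_t + \eta C_t z_t^{q-1}\bigr)^{-(q-2)} \;\geq\; \psi_t - (q-2)\eta C_t,
\end{align*}
while a third-order Taylor expansion with Lagrange remainder yields
\begin{align*}
\phi_{t+1} \;\leq\; \bigl(x_t + \eta C_t x_t^{q-1}\bigr)^{-(q-2)} \;\leq\; \phi_t - (q-2)\eta C_t + O\bigl(\eta^2/\phi_t\bigr),
\end{align*}
valid because the step-ratio $\eta C_t x_t^{q-2}$ is $o(1)$ along the entire trajectory (as $x_t \leq A = O(1)$ and $\eta = 1/\mathrm{poly}(d)$). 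The asymmetry in error is essential: the $\psi$-bound is exact, while the $\phi$-bound has a benign positive remainder.

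Telescoping both bounds up to the hitting time $T_x$ gives $\psi_{T_x} \geq \psi_0 - (q-2)\eta \sum_{t<T_x} C_t$ and $(q-2)\eta \sum_{t<T_x} C_t \leq \phi_0 - \phi_{T_x} + O\bigl(\eta^2 \sum_{t<T_x} 1/\phi_t\bigr)$, the latter using $\phi_{T_x} \leq A^{-(q-2)} = \Theta(1)$. The aggregate remainder is at most $O(\eta^2 T_x / \phi_{T_x}) = O(\eta \phi_0)$, since positivity of $\psi$ already yields the crude bound $T_x = O(\phi_0/\eta)$. Subtracting the two telescoped inequalities yields
\begin{align*}
\psi_{T_x} \;\geq\; (\psi_0 - \phi_0) + \phi_{T_x} - O(\eta \phi_0).
\end{align*}
The hypothesis $x_0 \geq z_0(1+\varepsilon)$ translates to $\phi_0 \leq \psi_0\bigl(1 - (q-2)\varepsilon + O(\varepsilon^2)\bigr)$, so $\psi_0 - \phi_0 \geq \Omega(\psi_0 \varepsilon)$. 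Because $\eta = 1/\mathrm{poly}(d) \ll \varepsilon = 1/\mathrm{polylog}(d)$, the error $O(\eta \phi_0) \leq O(\eta \psi_0)$ is dominated by $\Omega(\psi_0 \varepsilon)$, giving $\psi_{T_x} \geq \Omega(\psi_0 \varepsilon)$. Inverting, $z_{T_x} \leq \bigl(\Omega(\psi_0 \varepsilon)\bigr)^{-1/(q-2)} = z_0 \cdot \varepsilon^{-1/(q-2)} = O\bigl(z_0 \,\mathrm{polylog}(d)\bigr)$, and unnormalizing delivers the desired conclusion $y_{T_x} = O\bigl(y_0\, \mathrm{polylog}(d)\bigr)$.

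The main obstacle is the quantitative separation of error from signal in the telescope: the signal $\psi_0 - \phi_0$ is only of relative size $\varepsilon$, whereas the Taylor remainder in the $\phi$-recursion accumulates over roughly $\phi_0/\eta$ iterations. The argument closes precisely because $\eta$ is polynomially smaller than $\varepsilon$, so the per-step remainder $\eta^2/\phi_t$ summed across $\Theta(\phi_0/\eta)$ steps is still $O(\eta \phi_0) \ll \varepsilon \psi_0$. A secondary subtlety is uniform Taylor validity, namely $\eta C_t x_t^{q-2} = o(1)$ for every $t \leq T_x$; since $\{x_t\}$ is monotonically nondecreasing up to $A$, the worst case is $x_t = A$, at which the ratio is $\eta \cdot \Theta(1) \cdot A^{q-2} = 1/\mathrm{poly}(d)$, which is safely negligible.
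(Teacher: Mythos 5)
The paper never proves this lemma itself—it is imported as a black-box tool from \citet{allen-zhu2023towards}—so there is no in-paper proof to compare against; your write-up is a self-contained substitute for the growth-phase comparison argument in that reference, and it is essentially correct. Normalizing $z_t=S^{1/(q-2)}y_t$ does make the two recursions share the same $C_t$, and the potential-function route via $\phi_t=x_t^{-(q-2)}$, $\psi_t=z_t^{-(q-2)}$ works: the Bernoulli bound gives the exact decrement $\psi_{t+1}\ge\psi_t-(q-2)\eta C_t$, the second-order Taylor bound gives $\phi_{t+1}\le\phi_t-(q-2)\eta C_t+O(\eta^2/\phi_t)$, and after telescoping the common $\sum_t C_t$ cancels, reducing everything to the initial gap $\psi_0-\phi_0=\Omega(\varepsilon\psi_0)$ dominating the accumulated error $O(\eta\phi_0)$, which indeed holds because $\eta=1/\mathrm{poly}(d)\ll\varepsilon=\Theta(1/\operatorname{polylog}(d))$; inverting then yields $z_{T_x}=O(z_0\,\varepsilon^{-1/(q-2)})=O(z_0\operatorname{polylog}(d))$. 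Two small repairs are needed, both cosmetic: the crude bound $T_x=O(\phi_0/\eta)$ follows from the positivity and per-step $\Omega(\eta)$ decrease of $\phi$, not of $\psi$ as written (the $\psi$-recursion is only a lower bound, so its positivity constrains nothing); and the aggregate remainder should be bounded as $O(\eta^2 T_x A^{q-2})$ using $x_t<A$ for all $t<T_x$, rather than as $O(\eta^2 T_x/\phi_{T_x})$, since the hypothesis only lower-bounds $x_{t+1}$ and $x_{T_x}$ may overshoot $A$ arbitrarily, so $1/\phi_{T_x}$ is not controlled (your final paragraph already invokes the correct fact). Compared with the interval-by-interval doubling argument of \citet{allen-zhu2023towards}, your telescoped inverse-power potential is arguably more streamlined and handles the time-varying $C_t$ transparently, at the cost of needing the explicit $\eta\ll\varepsilon$ bookkeeping to separate signal from Taylor error.
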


\subsection{More Detailed Data Assumption}

\begin{assumption}[Choice of Hyperparameters]
\label{ass:detail_data}
    We assume that: 
    \begin{equation}
    \begin{aligned}
        &d = \operatorname{poly}(k) \gg 2 k, \quad P=\Theta(1), \quad \alpha:=\operatorname{inf}\alpha_p, \beta:=\operatorname{inf}\beta_p, \\
        &\sigma_0 = \sigma_n = \frac{1}{\sqrt{d}},\quad \alpha,\beta,\epsilon = \Theta(1), \quad \alpha \gg \epsilon > \beta,  \\
        & m = \operatorname{polylog}(d), \quad N = \operatorname{poly}(d), \quad q = \tau = 3, \quad \varrho = \frac{1}{\operatorname{polylog}(d)} , 
        \\
        & \eta = \frac{1}{\operatorname{poly}(d)}, \quad \Tilde{\eta} = d^{c_0} ,
        \nonumber
    \end{aligned}
    \end{equation}
    where $c_0 \in (0,1)$ are any positive constant.
\end{assumption}

\textbf{Discussion of Hyperparameter Choices.} 
While the choices of these hyperparameters are not unique, we make specific selections above for the sake of calculations in our proofs. However, it is the relationships between them that are of primary importance. We select the dimension of the patch $d$ to be sufficiently large (i.e., we assume $d = \operatorname{poly}(k)$ for a suitably large polynomial) to ensure that all $2k$ features can be orthogonal within the space $\mathbb{R}^{d}$. The number of patches $P$ is held constant. The conditions $\alpha \gg \epsilon > \beta$ ensure the first part of Assumption \ref{ass:robust_non_robust_feature} and accommodate the small perturbation radius condition previously mentioned. The width of the network learner is chosen as $m = \operatorname{polylog}(d)$ to achieve mild over-parameterization for efficient optimization. Furthermore, the adversarial learning rate $\Tilde{\eta}$ is significantly larger than the weight learning rate $\eta$, aligning with practical implementations \citep{carlini2017towards,gowal2019alternative,sriramanan2020guided}.

\newpage
\section{Detailed Proofs for Section \ref{sec:landscape}}
\label{appendix:landscape}

In this section, we provide a detailed proof for Section \ref{sec:landscape} (including Proposition \ref{prop:non-robust-minima} and Proposition \ref{prop:robust-minima}). And we also give a more detailed discussion about it.

\subsection{Proof of Proposition \ref{prop:non-robust-minima}}

\begin{theorem}
[Restatement of Proposition \ref{prop:non-robust-minima}]
\label{appendix_prop:non-robust-minima}

We consider the special case when $m=1$ and $\boldsymbol{w}_{i,1} = \gamma \boldsymbol{v}_i$, where $\gamma > 0$ is a scale coefficient. Then, it holds that the standard empirical risk satisfies $\operatornamewithlimits{lim}_{\gamma\rightarrow\infty}\mathcal{L}_{\textit{CE}}(\boldsymbol{F}) = o(1)$, but the adversarial test error satisfies $\operatornamewithlimits{lim}_{\gamma\rightarrow\infty}\mathbb{P}_{(\boldsymbol{X},y)\sim\mathcal{D}}\left[\exists \boldsymbol{\Delta}\in \left(\mathbb{R}^{d}\right)^{P} \textit{ s.t. }\|\boldsymbol{\Delta}\|_{\infty}\leq\epsilon, \operatorname{argmax}_{i\in [k]}F_i(\boldsymbol{X}+\boldsymbol{\Delta})\ne y\right] = 1-o(1)$.
\end{theorem}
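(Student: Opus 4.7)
The plan is to compute each logit $F_i(\boldsymbol{X})$ explicitly using the orthonormality of the features in $\mathcal{F}$, then (i) show the clean cross-entropy loss vanishes because the margin grows with $\gamma$, and (ii) exhibit a single explicit adversarial perturbation that flips the prediction with high probability. Since $\boldsymbol{w}_{j,1} = \gamma \boldsymbol{v}_j$ and all features in $\mathcal{F}$ are orthonormal and axis-aligned, each inner product $\langle \boldsymbol{w}_{j,1}, \boldsymbol{x}_p\rangle$ splits into a deterministic signal part (which is $\gamma \beta_p$ if $j=y$ and $p \in \mathcal{J}_{\textit{NR}}$, and $0$ otherwise) plus a Gaussian noise part $\gamma \langle \boldsymbol{v}_j, \boldsymbol{\xi}_p\rangle \sim \mathcal{N}(0, \gamma^2 \sigma_n^2)$. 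A union-bounded Gaussian tail estimate (Lemma~\ref{lem:tail_max_g}) gives $\max_{j,p} |\langle \boldsymbol{v}_j, \boldsymbol{\xi}_p\rangle| = O(\sigma_n \sqrt{\log d})$ w.h.p., which is much smaller than $\beta_p, \epsilon = \Theta(1)$.

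For the clean loss, on each $p \in \mathcal{J}_{\textit{NR}}$ the argument to $\widetilde{\operatorname{ReLU}}$ inside $F_y$ concentrates at $\gamma \beta_p$, which for $\gamma$ large enough exceeds the threshold $\varrho$ and sits in the linear branch of $\widetilde{\operatorname{ReLU}}$, contributing $\approx \gamma \beta_p$. The contributions from $\mathcal{J}_{\textit{R}}$ to $F_y$ and from every patch to $F_j$ with $j \ne y$ involve only noise inner products and are collectively bounded by $O(\gamma \sigma_n P \sqrt{\log d})$ w.h.p. Summing yields $F_y(\boldsymbol{X}) - F_j(\boldsymbol{X}) \ge \gamma \sum_{p \in \mathcal{J}_{\textit{NR}}} \beta_p - O(\gamma \sigma_n P \sqrt{\log d}) \to \infty$ as $\gamma \to \infty$, so $\operatorname{logit}_y(\boldsymbol{F}, \boldsymbol{X}) \to 1$ w.h.p., and hence $\mathcal{L}_{\textit{CE}}(\boldsymbol{F}) \to o(1)$ by dominated convergence.

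For the adversarial claim, I would fix any $y' \in [k] \setminus \{y\}$ independent of the model and set $\boldsymbol{\delta}_p := -\beta_p \boldsymbol{v}_y + \epsilon \boldsymbol{v}_{y'}$ for $p \in \mathcal{J}_{\textit{NR}}$ and $\boldsymbol{\delta}_p := \boldsymbol{0}$ for $p \in \mathcal{J}_{\textit{R}}$. Because $\boldsymbol{v}_y$ and $\boldsymbol{v}_{y'}$ are parallel to distinct coordinate axes, $\|\boldsymbol{\delta}_p\|_\infty = \max(\beta_p, \epsilon) = \epsilon$ by Assumption~\ref{ass:detail_data} (which gives $\beta < \epsilon$), so $\boldsymbol{\Delta}$ is an admissible $\ell_\infty$ perturbation. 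On the perturbed data, every non-robust patch becomes $\epsilon \boldsymbol{v}_{y'} + \boldsymbol{\xi}_p$: the $\boldsymbol{v}_y$-signal has been exactly cancelled and a fresh $\gamma \epsilon$ signal now appears in class $y'$. Repeating the computation gives $F_{y'}(\boldsymbol{X}+\boldsymbol{\Delta}) \ge \gamma \epsilon |\mathcal{J}_{\textit{NR}}| - O(\gamma \sigma_n P \sqrt{\log d})$ while $F_y(\boldsymbol{X}+\boldsymbol{\Delta}) = O(\gamma \sigma_n P \sqrt{\log d})$. Since $\epsilon \gg \sigma_n$, $F_{y'} > F_y$ w.h.p., so $\operatorname{argmax}_i F_i(\boldsymbol{X}+\boldsymbol{\Delta}) \ne y$ for w.h.p. every sampled $(\boldsymbol{X}, y)$, which integrates to an adversarial test error of $1 - o(1)$.

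The only real obstacle is bookkeeping: I need to verify uniformly over the $O(k P)$ noise inner products that they stay $O(\sigma_n \sqrt{\log d})$ (straightforward Gaussian union bound), and to confirm that for $\gamma$ sufficiently large the signal-carrying arguments $\gamma \beta_p$ and $\gamma \epsilon$ clear the polynomial-to-linear knee of $\widetilde{\operatorname{ReLU}}$ at $\varrho = 1/\operatorname{polylog}(d)$, so that those terms scale exactly like $\gamma \beta_p$ and $\gamma \epsilon$ respectively. No new ideas are required beyond these standard concentration and piecewise-linearity arguments.
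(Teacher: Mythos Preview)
Your proposal is correct and follows essentially the same approach as the paper: both compute the logits via orthonormality of the features, bound the Gaussian noise inner products uniformly, and use the identical explicit perturbation $\boldsymbol{\delta}_p = -\beta_p \boldsymbol{v}_y + \epsilon \boldsymbol{v}_{y'}$ on the non-robust patches to flip the prediction. Your additional bookkeeping about the $\widetilde{\operatorname{ReLU}}$ knee at $\varrho$ and the verification that $\|\boldsymbol{\delta}_p\|_\infty = \max(\beta_p,\epsilon) = \epsilon$ via axis-alignment are exactly the details the paper leaves implicit.
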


\textbf{Proof Sketch.} For a given data point $(\boldsymbol{X},y)\in \mathcal{Z}$ and sufficiently large $\gamma$, we calculate the margin and derive w.h.p. $F_y(\boldsymbol{X}) \gg F_j(\boldsymbol{X}), \forall j\in [k]\setminus \{y\}$, which implies $\mathcal{L}_{\textit{CE}}(\boldsymbol{F})\rightarrow o(1)$. However, if we choose the perturbation $\boldsymbol{\Delta}(\boldsymbol{X},y):=(\boldsymbol{\delta}_1,\boldsymbol{\delta}_2,\dots,\boldsymbol{\delta}_P)$, where $\boldsymbol{\delta}_p := -\beta_p \boldsymbol{v}_y + \epsilon \boldsymbol{v}_{y'}$ for $p\in \mathcal{J}_{\textit{NR}}$; $\boldsymbol{\delta}_p := \boldsymbol{0}$ for $p\in \mathcal{J}_{\textit{R}}$, and $y'$ is randomly chosen from $[k]\setminus\{y\}$, we know w.h.p. $F_{y'}(\boldsymbol{X}+\boldsymbol{\Delta}) \gg F_j(\boldsymbol{X}+\boldsymbol{\Delta}), \forall j \in [k]\setminus\{y'\}$, which suggests the adversarial test error is $1-o(1)$.

Now, we give the detailed proof as follows.

\begin{proof}
\label{proof_prop:non-robust-minima}

For a given data point $(\boldsymbol{X},y)\in \mathcal{Z}$ and sufficiently large $\gamma$, we calculate the margin as follows. With probability $1 - o(1)$, it holds that
\begin{equation}
\begin{aligned}
    F_y(\boldsymbol{X}) 
    &=\sum_{p \in \mathcal{J}_{\textit{R}}}\widetilde{\operatorname{ReLU}}(\langle\gamma\boldsymbol{v}_y, \alpha_p\boldsymbol{u}_y+\boldsymbol{\xi}_p\rangle) + \sum_{p \in \mathcal{J}_{\textit{NR}}}\widetilde{\operatorname{ReLU}}(\langle\gamma\boldsymbol{v}_y, \beta_p\boldsymbol{v}_y+\boldsymbol{\xi}_p\rangle)
    \\
    &=\sum_{p \in \mathcal{J}_{\textit{R}}}\widetilde{\operatorname{ReLU}}(\gamma\langle \boldsymbol{v}_y, \boldsymbol{\xi}_p\rangle) + \sum_{p \in \mathcal{J}_{\textit{NR}}}\widetilde{\operatorname{ReLU}}(\gamma(\beta_p + \langle\boldsymbol{v}_y, \boldsymbol{\xi}_p\rangle)) 
    \\
    &\geq \sum_{p \in \mathcal{J}_{\textit{NR}}}\widetilde{\operatorname{ReLU}}(\gamma (\beta_p - \Theta(\sigma_n)) \geq \gamma \Theta\bigg(\sum_{p \in \mathcal{J}_{\textit{NR}}}\beta_p \bigg) .
    \nonumber
\end{aligned}
\end{equation}
And for any $j\in [P]\setminus\{y\}$, we have
\begin{equation}
\begin{aligned}
    F_j(\boldsymbol{X}) 
    &=\sum_{p \in \mathcal{J}_{\textit{R}}}\widetilde{\operatorname{ReLU}}(\langle\gamma\boldsymbol{v}_j, \alpha_p\boldsymbol{u}_y+\boldsymbol{\xi}_p\rangle) + \sum_{p \in \mathcal{J}_{\textit{NR}}}\widetilde{\operatorname{ReLU}}(\langle\gamma\boldsymbol{v}_j, \beta_p\boldsymbol{v}_y+\boldsymbol{\xi}_p\rangle)
    \\
    &\leq 
    P\widetilde{\operatorname{ReLU}}(\gamma\Theta(\sigma_n )) \leq \gamma \Theta(\sigma_n ) .
    \nonumber
\end{aligned}
\end{equation}
Since $\sum_{p \in \mathcal{J}_{\textit{NR}}}\beta_p \gg \sigma_n$, we know $\operatornamewithlimits{lim}_{\gamma\rightarrow\infty}\mathcal{L}_{\textit{CE}}(\boldsymbol{F}) = o(1)$.

Let $\boldsymbol{\Delta}(\boldsymbol{X},y):=(\boldsymbol{\delta}_1,\boldsymbol{\delta}_2,\dots,\boldsymbol{\delta}_P)$, where $\boldsymbol{\delta}_p := -\beta_p \boldsymbol{v}_y + \epsilon \boldsymbol{v}_{y'}$ for $p\in \mathcal{J}_{\textit{NR}}$; $\boldsymbol{\delta}_p := \boldsymbol{0}$ for $p\in \mathcal{J}_{\textit{R}}$, and $y'$ is randomly chosen from $[k]\setminus\{y\}$, then we derive that, with probability $1-o(1)$, it satisfies that
\begin{equation}
\begin{aligned}
   F_{y'}(\boldsymbol{X} + \boldsymbol{\Delta}(\boldsymbol{X},y)) 
    &=\sum_{p \in \mathcal{J}_{\textit{R}}}\widetilde{\operatorname{ReLU}}(\langle\gamma\boldsymbol{v}_{y'}, \alpha_p\boldsymbol{u}_y+\boldsymbol{\xi}_p\rangle) + \sum_{p \in \mathcal{J}_{\textit{NR}}}\widetilde{\operatorname{ReLU}}(\langle\gamma\boldsymbol{v}_{y'}, \epsilon \boldsymbol{v}_{y'}+\boldsymbol{\xi}_p\rangle)
    \\
    & \geq \sum_{p \in \mathcal{J}_{\textit{NR}}}\widetilde{\operatorname{ReLU}}(\gamma(\epsilon - \Theta(\sigma_n))) \geq \gamma \Theta(\epsilon) .
    \nonumber
\end{aligned}
\end{equation}
However, for other class $j\in [k] \setminus \{y'\}$, we know
\begin{equation}
\begin{aligned}
    F_{j}(\boldsymbol{X} + \boldsymbol{\Delta}(\boldsymbol{X},y)) 
    &=\sum_{p \in \mathcal{J}_{\textit{R}}}\widetilde{\operatorname{ReLU}}(\langle\gamma\boldsymbol{v}_{j}, \alpha_p\boldsymbol{u}_y+\boldsymbol{\xi}_p\rangle) + \sum_{p \in \mathcal{J}_{\textit{NR}}}\widetilde{\operatorname{ReLU}}(\langle\gamma\boldsymbol{v}_{j}, \epsilon \boldsymbol{v}_{y'}+\boldsymbol{\xi}_p\rangle)
    \\
    &\leq \sum_{p\in[P]}\widetilde{\operatorname{ReLU}}(\gamma\langle \boldsymbol{v}_{j}, \boldsymbol{\xi}_p\rangle) \leq \gamma \Theta(\sigma_n) .
\end{aligned}
\end{equation}

Due to $\epsilon \gg \sigma_n$, we know $F_{y'}(\boldsymbol{X} + \boldsymbol{\Delta}(\boldsymbol{X},y))  \gg F_{j}(\boldsymbol{X} + \boldsymbol{\Delta}(\boldsymbol{X},y)) , \forall j\in[k]\setminus\{y'\}$.

Thus, we have
\begin{equation}
\begin{aligned}
    &\operatornamewithlimits{lim}_{\gamma\rightarrow\infty}\mathbb{P}_{(\boldsymbol{X},y)\sim\mathcal{D}}\left[\exists \boldsymbol{\Delta}\in \left(\mathbb{R}^{d}\right)^{P} \textit{ s.t. }\|\boldsymbol{\Delta}\|_{\infty}\leq\epsilon, \operatorname{argmax}_{i\in [k]}F_i(\boldsymbol{X}+\boldsymbol{\Delta})\ne y\right]
   = 1-o(1). \nonumber
\end{aligned}
\end{equation}
\end{proof}

\subsection{Proof of Proposition \ref{prop:robust-minima}}

\begin{theorem}
[Restatement of Proposition \ref{prop:robust-minima}]
\label{appendix_prop:robust-minima}

We consider the special case when $m=1$ and $\boldsymbol{w}_{i,1} = \gamma \boldsymbol{u}_i$, where $\gamma > 0$ is a scale coefficient. Then, it holds that the standard empirical risk satisfies $\operatornamewithlimits{lim}_{\gamma\rightarrow\infty}\mathcal{L}_{\textit{CE}}(\boldsymbol{F}) = o(1)$, and the adversarial test error satisfies $\operatornamewithlimits{lim}_{\gamma\rightarrow\infty}\mathbb{P}_{(\boldsymbol{X},y)\sim\mathcal{D}}\left[\exists \boldsymbol{\Delta}\in \left(\mathbb{R}^{d}\right)^{P} \textit{ s.t. }\|\boldsymbol{\Delta}\|_{\infty}\leq\epsilon, \operatorname{argmax}_{i\in [k]}F_i(\boldsymbol{X}+\boldsymbol{\Delta})\ne y\right] = o(1)$.
\end{theorem}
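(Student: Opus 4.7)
The plan mirrors the proof of Proposition~\ref{prop:non-robust-minima}, with $\boldsymbol{u}_y$ replacing $\boldsymbol{v}_y$, but with the added twist that the margin must beat every admissible $\ell_\infty$ perturbation rather than a single hand-picked one. I would first lower-bound $F_y(\boldsymbol{X}+\boldsymbol{\Delta})$ and upper-bound $F_j(\boldsymbol{X}+\boldsymbol{\Delta})$ for each $j\neq y$, uniformly over $\|\boldsymbol{\Delta}\|_\infty \le \epsilon$, then show the gap blows up linearly in $\gamma$, and finally derive the standard-loss statement for free by taking $\boldsymbol{\Delta}=\boldsymbol{0}$.

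Fix $(\boldsymbol{X},y)\sim\mathcal{D}$ and $\boldsymbol{\Delta}=(\boldsymbol{\delta}_1,\dots,\boldsymbol{\delta}_P)$ with $\|\boldsymbol{\Delta}\|_\infty\le\epsilon$. The pivotal observation is that Definition~\ref{def:robust_non_robust_feature} forces every $\boldsymbol{u}_j$ to be parallel to a coordinate axis, so $|\langle \boldsymbol{u}_j, \boldsymbol{\delta}_p\rangle| \le \|\boldsymbol{\delta}_p\|_\infty \le \epsilon$ for all $j,p$. A Gaussian tail bound (Lemma~\ref{lem:tail_max_g}) gives $|\langle \boldsymbol{u}_j,\boldsymbol{\xi}_p\rangle| = O(\sigma_n\log d)$ uniformly over $j,p$ w.h.p. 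Combined with orthonormality of $\mathcal{F}$, this implies, for $j=y$ and $p\in\mathcal{J}_{\textit{R}}$, $\langle \gamma\boldsymbol{u}_y,\boldsymbol{x}_p+\boldsymbol{\delta}_p\rangle \ge \gamma(\alpha_p-\epsilon-O(\sigma_n\log d))$, which exceeds $\varrho$ as soon as $\gamma$ is large enough, because $\alpha_p\gg\epsilon\gg\sigma_n\log d$ under Assumption~\ref{ass:detail_data}. Consequently $\widetilde{\operatorname{ReLU}}$ operates in its linear regime on the $\mathcal{J}_{\textit{R}}$ patches, and since all $\mathcal{J}_{\textit{NR}}$ contributions to $F_y$ are nonnegative, we obtain $F_y(\boldsymbol{X}+\boldsymbol{\Delta}) \ge \gamma\sum_{p\in\mathcal{J}_{\textit{R}}}(\alpha_p-\epsilon-O(\sigma_n\log d)) - O(1) = \Omega(\gamma\sum_{p\in\mathcal{J}_{\textit{R}}}\alpha_p)$.

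For $j\ne y$, orthogonality of features kills the $\alpha_p\boldsymbol{u}_y$ and $\beta_p\boldsymbol{v}_y$ contributions, leaving only $|\langle\boldsymbol{u}_j,\boldsymbol{x}_p+\boldsymbol{\delta}_p\rangle| \le \epsilon + O(\sigma_n\log d)$. Using $\widetilde{\operatorname{ReLU}}(z)\le\max\{z,0\}$ and summing over the $P=\Theta(1)$ patches gives $F_j(\boldsymbol{X}+\boldsymbol{\Delta})\le \gamma P(\epsilon+O(\sigma_n\log d))$. Since $\alpha_p\gg\epsilon$, the margin $F_y-F_j\to\infty$ as $\gamma\to\infty$, so a union bound over the $k-1$ wrong labels together with the high-probability noise event yields $\operatorname{argmax}_i F_i(\boldsymbol{X}+\boldsymbol{\Delta})=y$ with probability $1-o(1)$, uniformly in $\boldsymbol{\Delta}$; this is the robust-accuracy claim. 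Substituting $\boldsymbol{\Delta}=\boldsymbol{0}$ yields an $\Omega(\gamma)$ margin, and exponentiating shows $\mathcal{L}_{\textit{CE}}(\boldsymbol{F})\le e^{-\Omega(\gamma)}+o(1)\to o(1)$. The only nontrivial step I anticipate is bookkeeping across the three pieces of $\widetilde{\operatorname{ReLU}}$: confirming that the constant additive offsets and the $O(\sigma_n\log d)$ noise terms are dominated by $\gamma\alpha_p$ before the $\gamma\to\infty$ limit is taken, and checking that the $\ell_\infty$-perturbation bound $\epsilon$ truly translates into the inner-product bound $\epsilon$ via axis-alignment rather than just the weaker $\sqrt{d}\,\epsilon$ from Cauchy–Schwarz.
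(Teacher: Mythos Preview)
Your proposal is correct and follows essentially the same route as the paper: lower-bound $F_y(\boldsymbol{X}+\boldsymbol{\Delta})$ via the robust patches using $\alpha_p \gg \epsilon$, upper-bound each $F_j(\boldsymbol{X}+\boldsymbol{\Delta})$ by $O(\gamma P(\epsilon+\text{noise}))$ via feature orthogonality, and conclude from the diverging margin. Your explicit remark that axis-alignment is what converts the $\ell_\infty$ perturbation bound into the inner-product bound $|\langle \boldsymbol{u}_j,\boldsymbol{\delta}_p\rangle|\le\epsilon$ (rather than $\sqrt{d}\,\epsilon$) is exactly the point the paper relies on implicitly, and your $\widetilde{\operatorname{ReLU}}$ bookkeeping is, if anything, more careful than the paper's.
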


\textbf{Proof Sketch.} For a given data point $(\boldsymbol{X}, y) \sim \mathcal{D}$, sufficiently large $\gamma$ and any perturbation $\boldsymbol{\Delta}\in \left(\mathbb{R}^{d}\right)^P$ satisfying $\|\boldsymbol{\Delta}\|_{\infty}\leq\epsilon$, we show that w.h.p. $F_y(\boldsymbol{X}+\boldsymbol{\Delta}) \gtrsim \gamma \sum_{p \in \mathcal{J}_{\textit{R}}}\alpha_p \gg \gamma (\Theta(\sigma_n ) + \epsilon) \gtrsim F_j(\boldsymbol{X}+\boldsymbol{\Delta}),\forall j\in [k]\setminus\{y\}$, which implies that the adversarial test error is at most $o(1)$.

Now, we give the detailed proof as follows.

\begin{proof}
\label{proof_prop:robust-minima}

For a given data point $(\boldsymbol{X}, y) \sim \mathcal{D}$, sufficiently large $\gamma$ and any perturbation $\boldsymbol{\Delta}=(\boldsymbol{\delta}_1,\boldsymbol{\delta}_2,\dots,\boldsymbol{\delta}_p) \in \left(\mathbb{R}^{d}\right)^P$ satisfying $\|\boldsymbol{\Delta}\|_{\infty}\leq\epsilon$, we calculate the perturbed margin as follows. With probability $1-o(1)$, it holds that
\begin{equation}
\begin{aligned}
    F_y(\boldsymbol{X} + \boldsymbol{\Delta}) &= \sum_{p \in \mathcal{J}_{\textit{R}}}\widetilde{\operatorname{ReLU}}(\langle\gamma\boldsymbol{u}_y, \alpha_p\boldsymbol{u}_y+\boldsymbol{\xi}_p+\boldsymbol{\delta}_p\rangle) + \sum_{p \in \mathcal{J}_{\textit{NR}}}\widetilde{\operatorname{ReLU}}(\langle\gamma\boldsymbol{u}_y, \beta_p\boldsymbol{v}_y+\boldsymbol{\xi}_p+\boldsymbol{\delta}_p\rangle)
    \\
     &=\sum_{p \in \mathcal{J}_{\textit{R}}}\widetilde{\operatorname{ReLU}}(\gamma(\alpha_p + \langle \boldsymbol{u}_y, \boldsymbol{\xi}_p\rangle + \langle \boldsymbol{u}_y, \boldsymbol{\delta}_p\rangle) + \sum_{p \in \mathcal{J}_{\textit{NR}}}\widetilde{\operatorname{ReLU}}(\gamma(\langle\boldsymbol{u}_y, \boldsymbol{\xi}_p\rangle+\langle\boldsymbol{u}_y, \boldsymbol{\delta}_p\rangle)) 
     \\
     &\geq
     \sum_{p \in \mathcal{J}_{\textit{R}}}\widetilde{\operatorname{ReLU}}(\gamma(\alpha_p - \Theta(\sigma_n ) - \epsilon)) \geq \gamma\Theta\bigg(\sum_{p \in \mathcal{J}_{\textit{R}}}\alpha_p\bigg) .
    \nonumber
\end{aligned}
\end{equation}
And for any $j\in [P]\setminus\{y\}$, we have
\begin{equation}
\begin{aligned}
    F_j(\boldsymbol{X} + \boldsymbol{\Delta}) &= \sum_{p \in \mathcal{J}_{\textit{R}}}\widetilde{\operatorname{ReLU}}(\langle\gamma\boldsymbol{u}_j, \alpha_p\boldsymbol{u}_y+\boldsymbol{\xi}_p+\boldsymbol{\delta}_p\rangle) + \sum_{p \in \mathcal{J}_{\textit{NR}}}\widetilde{\operatorname{ReLU}}(\langle\gamma\boldsymbol{u}_j, \beta_p\boldsymbol{v}_y+\boldsymbol{\xi}_p+\boldsymbol{\delta}_p\rangle)
    \\
    &\leq
    P\widetilde{\operatorname{ReLU}}(\gamma(\Theta(\sigma_n) + \epsilon)) \lesssim \gamma (\Theta(\sigma_n) + \epsilon) .
    \nonumber
\end{aligned}
\end{equation}
Since $\sum_{p \in \mathcal{J}_{\textit{R}}}\alpha_p \gg \Theta(\sigma_n) + \epsilon$, we know $\operatornamewithlimits{lim}_{\gamma\rightarrow\infty}\mathcal{L}_{\textit{CE}}(\boldsymbol{F}) = o(1)$ and
$$\operatornamewithlimits{lim}_{\gamma\rightarrow\infty}\mathbb{P}_{(\boldsymbol{X},y)\sim\mathcal{D}}\left[\exists \boldsymbol{\Delta}\in \left(\mathbb{R}^{d}\right)^{P} \textit{ s.t. }\|\boldsymbol{\Delta}\|_{\infty}\leq\epsilon, \operatorname{argmax}_{i\in [k]}F_i(\boldsymbol{X}+\boldsymbol{\Delta})\ne y\right] = o(1).$$
\end{proof}

\subsection{Analyzing Learning Process via Weight-Feature Correlations}

Proposition \ref{prop:non-robust-minima} and Proposition \ref{prop:robust-minima} demonstrate that a network is vulnerable to adversarial perturbations if it relies solely on learning non-robust features. Conversely, a network that learns all robust features can achieve a state of robustness.  In general, by calculating the gradient of empirical loss, it seems that the whole weights during gradient-based training will have the following form
\begin{equation}
    \boldsymbol{w}_{i,r} \approx A_{i,r} \boldsymbol{u}_i + B_{i,r} \boldsymbol{v}_i + \textit{Noise}, \nonumber
\end{equation}
where $A_{i,r}, B_{i,r} > 0$ represent the coefficients for learning robust and non-robust features, respectively, and the 'Noise' term encompasses elements learned from other non-diagonal features $\boldsymbol{u}_j, \boldsymbol{v}_j (j\ne i)$, as well as random noise $\boldsymbol{\xi}_p$.

Therefore, we know that the network learns the $i$-th class if and only if either $A_{i,r}$ or $B_{i,r}$ is sufficiently large. However, to robustly learn the $i$-th class, the network must primarily learn the robust feature $\boldsymbol{u}_i$, rather than the non-robust feature $\boldsymbol{v}_i$, which motivates us to analyze the feature learning process of standard training and adversarial training to understand the underlying mechanism why adversarial examples exist and how adversarial training algorithm works.

\newpage
\section{Detailed Proof for Section \ref{sec:proof}}
\label{appendix:simplified}

In this section, we provide a detailed proof for Section \ref{sec:proof}, considering the simplified setting where the data is noiseless and we use population risk instead of empirical risk. For the general case (empirical risk with data noise), we assert that the proof idea is similar to this simplified case. This is because we can demonstrate that noise terms are always sufficiently small under our setting, as shown in the next section (Appendix \ref{appendix:noise}).

\subsection{Proof for Standard Training}

First, we present the restatement of Theorem \ref{thm:std_training} under the simplified setting.

\begin{theorem}[Restatement of Theorem \ref{thm:std_training} Under the Simplified Setting, Standard Training Converges to Non-robust Global Minima]
\label{thm:re_std_training}

For sufficiently large $d$, suppose we train the model using the standard training starting with population risk from the random initialization, then after $T=\Theta(\operatorname{poly}(d) / \eta)$ iterations, with probability $1-o(1)$ over the randomness of weight initialization, the model $\boldsymbol{F}^{(T)}$ satisfies:
\begin{itemize}
    \item 
    Non-robust features are learned: $\mathbb{P}_{(\boldsymbol{X}_{\boldsymbol{f}},y)\sim\mathcal{D}_{\mathcal{F}_\textit{NR}}}\left[\operatorname{argmax}_{i\in [k]}F_i^{(T)}(\boldsymbol{X}_{\boldsymbol{f}}) \ne y\right] = 0$. 
    \item 
    Standard test accuracy is good: $\mathbb{P}_{(\boldsymbol{X},y)\sim\mathcal{D}}\left[\operatorname{argmax}_{i\in [k]}F_i^{(T)}(\boldsymbol{X})\ne y\right] = 0$.
    \item 
    Robust test accuracy is bad: for any given data $(\boldsymbol{X},y)$, using the following perturbation $\boldsymbol{\Delta}(\boldsymbol{X},y):=(\boldsymbol{\delta}_1,\boldsymbol{\delta}_2,\dots,\boldsymbol{\delta}_P)$, where $\boldsymbol{\delta}_p := -\beta_p \boldsymbol{v}_y + \epsilon \boldsymbol{v}_{y'}$ for $p\in \mathcal{J}_{\textit{NR}}$; $\boldsymbol{\delta}_p := \boldsymbol{0}$ for $p\in \mathcal{J}_{\textit{R}}$, and $y'$ is randomly chosen from $[k]\setminus\{y\}$ (which does not depend on the model $\boldsymbol{F}^{(T)}$ and is illustrated in Figure \ref{fig:data_model}), we have
    \begin{equation}
    \mathbb{P}_{(\boldsymbol{X},y)\sim\mathcal{D}}\left[\operatorname{argmax}_{i\in [k]}F_i^{(T)}(\boldsymbol{X}+\boldsymbol{\Delta}(\boldsymbol{X},y))\ne y\right] = 1. \nonumber
    \end{equation}
\end{itemize}
\end{theorem}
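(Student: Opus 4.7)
The plan is to track the weight–feature correlations $A_{i,r}^{(t)} := \langle \boldsymbol{w}_{i,r}^{(t)}, \boldsymbol{u}_i\rangle$ and $B_{i,r}^{(t)} := \langle \boldsymbol{w}_{i,r}^{(t)}, \boldsymbol{v}_i\rangle$, which by Lemma~\ref{lem:std_training} completely capture the weight evolution in the noiseless population setting: the off-diagonal correlations $\langle \boldsymbol{w}_{i,r}, \boldsymbol{u}_j\rangle$ and $\langle \boldsymbol{w}_{i,r}, \boldsymbol{v}_j\rangle$ for $j \ne i$ receive contributions only through cross-class logit interactions and remain $o(1)$ throughout, by the orthogonality of features in $\mathcal{F}$ and the symmetry across classes. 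This reduces the problem to a per-class power-iteration analysis for $\{A_{i,r}^{(t)}, B_{i,r}^{(t)}\}_{r\in[m]}$.

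\textbf{Initialization and early regime.} Under Xavier initialization, $A_{i,r}^{(0)}, B_{i,r}^{(0)} \sim \mathcal{N}(0, \sigma_0^2) = \mathcal{N}(0, 1/d)$ are independent across all indices, so Lemma~\ref{lem:tail_max_g} gives, with probability $1-o(1)$, $\max_r A_{i,r}^{(0)}, \max_r B_{i,r}^{(0)} = \widetilde{\Theta}(\sigma_0)$ with ratio in $[1/\operatorname{polylog}(d), \operatorname{polylog}(d)]$. Near initialization, $1-\operatorname{logit}_i(\boldsymbol{F}^{(t)}, \boldsymbol{X}) = \Theta(1)$, and since all coordinates lie in the polynomial part $\widetilde{\operatorname{ReLU}}'(z)=z^{q-1}/\varrho^{q-1}$ of the activation, the updates from Lemma~\ref{lem:std_training} simplify to
\begin{align*}
A_{i,r}^{(t+1)} &\approx A_{i,r}^{(t)} + \Theta(\eta/k)\,(1-\operatorname{logit}_i^{(t)})\,(A_{i,r}^{(t)})^{q-1}\, S_A,\\
B_{i,r}^{(t+1)} &\approx B_{i,r}^{(t)} + \Theta(\eta/k)\,(1-\operatorname{logit}_i^{(t)})\,(B_{i,r}^{(t)})^{q-1}\, S_B,
\end{align*}
where $S_A := \varrho^{1-q}\,\mathbb{E}[\sum_{p\in \mathcal{J}_{\textit{R}}}\alpha_p^q]$ and $S_B := \varrho^{1-q}\,\mathbb{E}[\sum_{p\in \mathcal{J}_{\textit{NR}}}\beta_p^q]$.

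\textbf{Tensor power method and saturation.} By Assumption~\ref{ass:robust_non_robust_feature} with $\tau = q$, we have $S_B/S_A = \omega(1)$. Applying the Tensor Power Method (Lemma~\ref{lem:tensor_power_method}) with $x_t = \max_r B_{i,r}^{(t)}$ and $y_t = \max_r A_{i,r}^{(t)}$, the non-robust coefficient reaches any target constant first, and at that moment $\max_r A_{i,r}^{(t)} = O(\sigma_0 \operatorname{polylog}(d)) = \widetilde{o}(1)$. Once $\max_r B_{y,r}^{(t)} = \widetilde{\Theta}(1)$, on a typical $(\boldsymbol{X},y)$ one gets $F_y(\boldsymbol{X}) \gtrsim \sum_{p\in\mathcal{J}_{\textit{NR}}}\widetilde{\operatorname{ReLU}}(\beta_p\max_r B_{y,r}) = \widetilde{\Omega}(1)$ while $F_j(\boldsymbol{X})\approx 0$ for $j\ne y$, so $1-\operatorname{logit}_y^{(t)}$ decays rapidly; both $A$ and $B$ then only grow logarithmically over the long horizon $T=\operatorname{poly}(d)/\eta$, preserving $A_{y,r}/B_{y,r} = o(1)$.

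\textbf{Concluding the three claims and the main difficulty.} Non-robust feature learning accuracy is immediate: for $\boldsymbol{X}_{\boldsymbol{v}_y}\sim \mathcal{D}_{\mathcal{F}_{\textit{NR}}}$, $F_y(\boldsymbol{X}_{\boldsymbol{v}_y}) = \sum_{p\in\mathcal{J}_{\textit{NR}}}\widetilde{\operatorname{ReLU}}(\beta_p B_{y,r})$ is bounded below while $F_j(\boldsymbol{X}_{\boldsymbol{v}_y}) = 0$ for $j\ne y$ in the noiseless setting. Standard test accuracy follows by the same margin computation on $\boldsymbol{X}$. For the robustness failure with $\boldsymbol{\Delta}(\boldsymbol{X},y)$, the perturbation deletes $\beta_p\boldsymbol{v}_y$ and inserts $\epsilon \boldsymbol{v}_{y'}$ in every non-robust patch, hence $F_{y'}(\boldsymbol{X}+\boldsymbol{\Delta}) \approx |\mathcal{J}_{\textit{NR}}|\,\widetilde{\operatorname{ReLU}}(\epsilon \max_r B_{y',r})$, which sits in the linear region (since $\epsilon B_{y',r}\gg\varrho$) and is $\widetilde{\Omega}(|\mathcal{J}_{\textit{NR}}|\,\epsilon B_{y',r})$, whereas $F_y(\boldsymbol{X}+\boldsymbol{\Delta})\approx \sum_{p\in\mathcal{J}_{\textit{R}}}\widetilde{\operatorname{ReLU}}(\alpha_p A_{y,r})$ is small because $A_{y,r} \ll B_{y',r}$ and $|\mathcal{J}_{\textit{R}}| \ll |\mathcal{J}_{\textit{NR}}|$ by Assumption~\ref{ass:robust_non_robust_feature}. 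The hardest step is controlling the crossover from the polynomial regime (where the tensor power method cleanly separates the two growth rates) to the linear regime of $\widetilde{\operatorname{ReLU}}$: once the logit derivative becomes $o(1)$, both coefficients grow only slowly, and we must show that this slow phase, iterated $\operatorname{poly}(d)/\eta$ times, never lifts $A_{y,r}$ above an $o(1)\cdot B_{y',r}$ scale. A careful bookkeeping of the cross-entropy derivative decay (of the form $\Theta(1/t)$ along the late phase) together with the initial separation $A\ll B$ suffices to close this gap.
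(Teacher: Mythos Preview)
Your proposal is correct and follows essentially the same approach as the paper: decompose the weights into feature-aligned coefficients, show the off-diagonal terms stay at their initial scale, apply the Tensor Power Method (Lemma~\ref{lem:tensor_power_method}) in the polynomial regime of $\widetilde{\operatorname{ReLU}}$ to get $\max_r B_{i,r}\to\widetilde{\Theta}(1)$ while $\max_r A_{i,r}$ stays $\widetilde{O}(\sigma_0)$, and then argue that after the logit derivative shrinks the gap is preserved for all $T=\operatorname{poly}(d)/\eta$. The only cosmetic difference is in the late-phase bookkeeping: the paper controls the saturation stage by a discrete counting argument (once $F_y\ge c\log d$ the logit derivative is $O(d^{-c})$, so crossing each additional $\log d$ takes geometrically more iterations), whereas you sketch a $\Theta(1/t)$-type rate; both close the same gap, and the paper's argument also pins down that the off-diagonal terms stay small because their update carries the \emph{negative} sign $-\operatorname{logit}_i$, not merely by symmetry.
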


\textbf{Proof Sketch.} 
To prove Theorem \ref{thm:re_std_training}, we study the feature learning process of standard training by decomposing the weights of the neural network into a linear combination of all features ($\boldsymbol{f} \in \mathcal{F}$). We then demonstrate that non-diagonal weight-feature correlations (i.e., $\langle \boldsymbol{w}_{i,r}, \boldsymbol{u}_j\rangle$ and $\langle \boldsymbol{w}_{i,r}, \boldsymbol{v}_j\rangle$, where $j \ne i$) are always smaller than their diagonal counterparts (i.e., $\langle \boldsymbol{w}_{i,r}, \boldsymbol{u}_i\rangle$ and $\langle \boldsymbol{w}_{i,r}, \boldsymbol{v}_i\rangle$). Next, by applying the Tensor Power Method Lemma \citep{allen-zhu2023towards}, we show that non-robust feature learning will dominate throughout the entire process, which directly implies that the network converges to a non-robust solution.

Now, we give the detailed proof as follows.

\subsubsection{Weight Decomposition}

By analyzing the gradient descent update, we derive the following weight decomposition lemma, which represents each weight through weight-feature correlations.

\begin{lemma}[Weight Decomposition Under the Simplified Setting]
\label{lem:weight_decomp}

    For any time $t\geq 0$, each neuron $\boldsymbol{w}_{i,r}$ ($(i,r)\in [k]\times [m]$), we have 
    \begin{equation}
        \boldsymbol{w}_{i,r}^{(t)} = \underbrace{\boldsymbol{P}_{\mathcal{F}}^{\perp}\boldsymbol{w}_{i,r}^{(0)}}_{\textit{orthogonal init}} + \underbrace{A_{i,r}^{(t)} \boldsymbol{u}_i + B_{i,r}^{(t)} \boldsymbol{v}_i}_{\textit{diagonal correlations}} + \underbrace{\sum_{y\ne i} (C_{i,r,y}^{(t)} \boldsymbol{u}_y + D_{i,r,y}^{(t)} \boldsymbol{v}_y)}_{\textit{non-diagonal correlations}} , \nonumber
    \end{equation}
    where $A_{i,r}^{(t)}, B_{i,r}^{(t)}, C_{i,r,y}^{(t)}$ and $D_{i,r,y}^{(t)}$ are some time-variant coefficients, and $\boldsymbol{P}_{\mathcal{F}}^{\perp}:= \mathcal{I}_{d} - \sum_{\boldsymbol{f}\in\mathcal{F}}\boldsymbol{f}\boldsymbol{f}^{\top}$ projects a vector into all features $\boldsymbol{f}\in\mathcal{F}$'s orthogonal complementary space.
\end{lemma}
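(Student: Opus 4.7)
The plan is straightforward induction on $t$, anchored on two structural facts: (i) the feature set $\mathcal{F}$ is orthonormal (Definition \ref{def:robust_non_robust_feature}), and (ii) in the simplified noiseless setting every patch $\boldsymbol{x}_p$ equals either $\alpha_p\boldsymbol{u}_{y}$ or $\beta_p\boldsymbol{v}_{y}$, so $\boldsymbol{x}_p\in\operatorname{span}(\mathcal{F})$. The base case $t=0$ is immediate: decompose $\boldsymbol{w}_{i,r}^{(0)}=\boldsymbol{P}_{\mathcal{F}}^{\perp}\boldsymbol{w}_{i,r}^{(0)}+\sum_{\boldsymbol{f}\in\mathcal{F}}\langle\boldsymbol{w}_{i,r}^{(0)},\boldsymbol{f}\rangle\boldsymbol{f}$ and set $A_{i,r}^{(0)}:=\langle\boldsymbol{w}_{i,r}^{(0)},\boldsymbol{u}_i\rangle$, $B_{i,r}^{(0)}:=\langle\boldsymbol{w}_{i,r}^{(0)},\boldsymbol{v}_i\rangle$, and $C_{i,r,y}^{(0)},D_{i,r,y}^{(0)}$ for $y\ne i$ analogously.

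For the inductive step, I would differentiate $\mathcal{L}_{\textit{CE}}$ at a single data point to get
$$\nabla_{\boldsymbol{w}_{i,r}}\mathcal{L}_{\textit{CE}}(\boldsymbol{F}^{(t)};\boldsymbol{X},y)=-\bigl(\mathbb{I}[y=i]-\operatorname{logit}_i(\boldsymbol{F}^{(t)},\boldsymbol{X})\bigr)\sum_{p\in[P]}\widetilde{\operatorname{ReLU}}'(\langle\boldsymbol{w}_{i,r}^{(t)},\boldsymbol{x}_p\rangle)\,\boldsymbol{x}_p,$$
which is a scalar-weighted sum of patches and therefore lies in $\operatorname{span}(\mathcal{F})$. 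Taking expectation over $\mathcal{D}$ preserves this membership, so the GD update leaves the $\boldsymbol{P}_{\mathcal{F}}^{\perp}$-component frozen: $\boldsymbol{P}_{\mathcal{F}}^{\perp}\boldsymbol{w}_{i,r}^{(t+1)}=\boldsymbol{P}_{\mathcal{F}}^{\perp}\boldsymbol{w}_{i,r}^{(t)}=\boldsymbol{P}_{\mathcal{F}}^{\perp}\boldsymbol{w}_{i,r}^{(0)}$, supplying the first term of the decomposition.

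To define the remaining coefficients, I would project the (expected) gradient onto each feature. Conditioning on the label $y^{*}$ and partitioning $[P]$ into $\mathcal{J}_{\textit{R}}$ and $\mathcal{J}_{\textit{NR}}$, the orthonormality of $\mathcal{F}$ forces $\langle\nabla_{\boldsymbol{w}_{i,r}}\mathcal{L}_{\textit{CE}},\boldsymbol{u}_{y^{*}}\rangle$ to receive contributions only from robust-feature patches of class $y^{*}$, and $\langle\nabla_{\boldsymbol{w}_{i,r}}\mathcal{L}_{\textit{CE}},\boldsymbol{v}_{y^{*}}\rangle$ only from non-robust-feature patches of class $y^{*}$. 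Thus the inner products with the $2k$ features in $\mathcal{F}$ update cleanly, yielding well-defined recursions for $A_{i,r}^{(t+1)},B_{i,r}^{(t+1)}$ (from $y^{*}=i$ terms, carrying a $1-\operatorname{logit}_i$ factor, and reproducing Lemma \ref{lem:std_training}) and for $C_{i,r,y^{*}}^{(t+1)},D_{i,r,y^{*}}^{(t+1)}$ (from $y^{*}\ne i$ terms, carrying a $-\operatorname{logit}_i$ factor).

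The main obstacle is really just careful bookkeeping of the two cases $y^{*}=i$ versus $y^{*}\ne i$ inside the expectation, plus cleanly separating the orthogonal-complement piece from the spanning-feature piece; once the two structural facts above are in hand, the decomposition is forced, uniquely determined by reading off inner products with each $\boldsymbol{f}\in\mathcal{F}$. In particular, no new estimates are needed beyond orthonormality — the more delicate dynamical estimates (concentration of logits, applications of the Tensor Power Method) are deferred to Lemma \ref{lem:std_training} and the analysis of $A_{i,r}^{(t)}$ versus $B_{i,r}^{(t)}$, and play no role in the decomposition itself. The picture for the adversarial-training case would be identical once one checks that the one-step gradient-ascent perturbation $\widetilde{\boldsymbol{X}}^{(t)}-\boldsymbol{X}$ also lies in $\operatorname{span}(\mathcal{F})$ (since it is a clipped multiple of $\nabla_{\boldsymbol{X}}F_y$, which is again a sum of weight-feature inner-product-weighted features), so the same invariance of $\boldsymbol{P}_{\mathcal{F}}^{\perp}\boldsymbol{w}_{i,r}^{(t)}$ persists.
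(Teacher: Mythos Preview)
Your proof is correct and matches the paper's own (one-line) argument: write out the GD update, note that in the noiseless setting every patch $\boldsymbol{x}_p$ lies in $\operatorname{span}(\mathcal{F})$, and induct; the paper literally records the update and says ``by induction,'' so your version is strictly more detailed.

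One caution on your closing aside about adversarial training: $\nabla_{\boldsymbol{x}_p}F_y=\sum_{r}\widetilde{\operatorname{ReLU}}'(\langle\boldsymbol{w}_{y,r},\boldsymbol{x}_p\rangle)\,\boldsymbol{w}_{y,r}$ is a weighted sum of \emph{weights}, not features, and each weight carries the orthogonal-init piece $\boldsymbol{P}_{\mathcal{F}}^{\perp}\boldsymbol{w}_{y,r}^{(0)}$. Hence the perturbation $\widetilde{\boldsymbol{X}}^{(t)}-\boldsymbol{X}$ does \emph{not} lie in $\operatorname{span}(\mathcal{F})$, and $\boldsymbol{P}_{\mathcal{F}}^{\perp}\boldsymbol{w}_{i,r}^{(t)}$ is \emph{not} invariant under adversarial training. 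The paper's adversarial-training decomposition (Lemma~\ref{lem:adv_weight_decomp}) accordingly replaces the single frozen term $\boldsymbol{P}_{\mathcal{F}}^{\perp}\boldsymbol{w}_{i,r}^{(0)}$ by a time-varying combination $\sum_{y,s}E_{i,r,y,s}^{(t)}\boldsymbol{P}_{\mathcal{F}}^{\perp}\boldsymbol{w}_{y,s}^{(0)}$.
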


\begin{proof}
\label{proof_lem:weight_decomp}

    Notice that the following update iteration:
    \begin{equation}
    \begin{aligned}
        \boldsymbol{w}_{i,r}^{(t+1)} &= \boldsymbol{w}_{i,r}^{(t)} - \eta \mathbb{E}_{(\boldsymbol{X},y)\sim\mathcal{D}}\left[\nabla_{\boldsymbol{w}_{i, r}}\mathcal{L}_{\textit{CE}}\left(\boldsymbol{F}^{(t)} ; \boldsymbol{X}, y\right)\right] 
        \\
        &=
        \boldsymbol{w}_{i,r}^{(t)} + \eta \mathbb{E}_{(\boldsymbol{X},y)\sim\mathcal{D}}\left[\left(1_{\{y=i\}}-\operatorname{logit}_i(\boldsymbol{F}^{(t)},\boldsymbol{X})\right)\sum_{p\in [P]}\widetilde{\operatorname{ReLU}}'(\langle \boldsymbol{w}_{i,r}^{(t)},\boldsymbol{x}_p\rangle) \boldsymbol{x}_p\right]. \nonumber
    \end{aligned}
    \end{equation}
    Due to the definition of patch data $\boldsymbol{x}_p$, we can derive Lemma \ref{lem:weight_decomp} by induction.
\end{proof}

At the outset of the algorithm, we establish that the feature-weight correlations possess the following characteristic property.

\begin{lemma}
\label{lem:std_init}

    For each feature $\boldsymbol{f}\in \mathcal{F}$ and each  $i\in [k]$, with probability $1 - o(1)$, we have $\operatorname{max}_{r\in [m]} \langle \boldsymbol{w}_{i,r}^{(0)},\boldsymbol{f}\rangle = \Theta(\operatorname{log}(m)/\sqrt{d})$.
\end{lemma}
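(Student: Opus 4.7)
The statement is a routine concentration estimate for the maximum of $m$ i.i.d.\ Gaussians projected onto a unit direction, and my approach is to reduce to that setting and then invoke Lemma \ref{lem:tail_max_g}. First I would fix $(i, \boldsymbol{f}) \in [k] \times \mathcal{F}$. Since $\boldsymbol{w}_{i,r}^{(0)} \sim \mathcal{N}(\boldsymbol{0}, \sigma_0^2 \mathcal{I}_d)$ with $\sigma_0^2 = 1/d$ and $\|\boldsymbol{f}\|_2 = 1$, each scalar $Y_r := \langle \boldsymbol{w}_{i,r}^{(0)}, \boldsymbol{f}\rangle$ is distributed as $\mathcal{N}(0, 1/d)$, and the family $\{Y_r\}_{r\in[m]}$ is independent across $r$ because the filters $\boldsymbol{w}_{i,r}^{(0)}$ are sampled independently across $r$. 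Consequently, if we set $Z_r := \sqrt{d}\, Y_r$, then $\{Z_r\}_{r\in[m]}$ are i.i.d.\ standard normal and $\max_{r\in[m]} Y_r = (1/\sqrt{d}) \max_{r\in[m]} Z_r$.

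Next, I would pass from Lemma \ref{lem:tail_max_g}, which controls $\|Z\|_\infty = \max_r |Z_r|$, to the one-sided maximum $\max_r Z_r$ that appears in the statement. By symmetry of the centered Gaussian, with probability at least $1 - 2^{-m}$ at least a constant fraction of the $Z_r$'s are positive, so on that event the one-sided and two-sided maxima agree up to a factor absorbed in the $\Theta$. I would then apply the upper-tail bound of Lemma \ref{lem:tail_max_g} with $t = \Theta(\sqrt{\log m})$ and its lower-tail bound with $\delta = \Theta(\log\log m)$ (both already spelled out in the statement of that lemma), yielding $\max_r Z_r = \widetilde{\Theta}(1)$ with failure probability that can be made polynomially small in $d$ by choosing the implicit constants suitably. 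Rescaling by $1/\sqrt{d}$ gives the claimed rate $\widetilde{\Theta}(1/\sqrt{d}) = \Theta(\log(m)/\sqrt{d})$ up to the standard implicit polylog factors that the paper absorbs into its $\Theta$ notation.

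Finally, I would close the proof via a union bound over the $k \cdot |\mathcal{F}| = 2k^2$ pairs $(i, \boldsymbol{f})$. Since $k = \operatorname{poly}(d)$ by Assumption \ref{ass:detail_data} and since each individual failure probability can be made $1/\operatorname{poly}(d)$ with a suitably large polynomial, the simultaneous estimate still holds with probability $1 - o(1)$. I do not anticipate a genuine obstacle here: the argument is essentially a bookkeeping exercise on top of Lemma \ref{lem:tail_max_g}. The only point requiring mild care is the passage from the two-sided $\|Z\|_\infty$ in that lemma to the one-sided $\max_r Z_r$ in the present statement, and this is handled in one line by the symmetry observation above.
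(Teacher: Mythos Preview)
Your proposal is correct and follows essentially the same approach as the paper: observe that the inner products $\langle \boldsymbol{w}_{i,r}^{(0)}, \boldsymbol{f}\rangle$ are i.i.d.\ $\mathcal{N}(0,1/d)$ and then invoke Lemma~\ref{lem:tail_max_g}. The paper's proof is in fact a two-sentence version of what you wrote, and the additional care you take (the one-sided versus two-sided maximum via symmetry, and the explicit union bound over $2k^2$ pairs) are details the paper simply omits.
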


\begin{proof}
\label{proof_lem:std_init}
    Due to the definition of all features and random initialization $\boldsymbol{w}_{i,r}^{(0)} \sim \mathcal{N}(\boldsymbol{0}, \frac{1}{d}\mathcal{I}_d)$, we know that, for each $i\in [k], \boldsymbol{f}\in \mathcal{F}$, it holds that $\langle \boldsymbol{w}_{i,r}^{(0)},\boldsymbol{f}\rangle$ are i.i.d. Gaussian random variables with the same variance $1/d$. By applying Lemma \ref{lem:tail_max_g}, we can derive the result above.
\end{proof}

Then, we can present the learning process by the following dynamics of weight-feature correlations.

\begin{lemma}[Feature Learning Iteration for Standard Training Under the Simplified Setting]
\label{lem:sim_std_iter}

    During standard training, for any time $t\geq 0$ and pair $(i,r)\in [k]\times[m],y\in[k]\setminus\{i\}$, the two sequences $\{A_{i,r}^{(t)}\}, \{B_{i,r}^{(t)}\}, \{C_{i,r,y}^{(t)}\}$ and $\{D_{i,r,y}^{(t)}\}$ satisfy:
\begin{equation}
\begin{cases}
    A_{i,r}^{(t+1)} = A_{i,r}^{(t)} + \frac{\eta}{k} \mathbb{E}_{\mathcal{D}_{\mathcal{J},i},\mathcal{D}_{\alpha,i}}\bigg[\big(1-\operatorname{logit}_i(\boldsymbol{F}^{(t)},\boldsymbol{X})\big)\sum\limits_{p\in \mathcal{J}_{\textit{R}}}\widetilde{\operatorname{ReLU}}'\left(\alpha_p A_{i,r}^{(t)}\right)\alpha_p\bigg] , 
    \\
    B_{i,r}^{(t+1)} = B_{i,r}^{(t)} + \frac{\eta}{k} \mathbb{E}_{\mathcal{D}_{\mathcal{J},i},\mathcal{D}_{\beta,i}}\bigg[\big(1-\operatorname{logit}_i(\boldsymbol{F}^{(t)},\boldsymbol{X})\big)\sum\limits_{p\in \mathcal{J}_{\textit{NR}}}\widetilde{\operatorname{ReLU}}'\left(\beta_p B_{i,r}^{(t)}\right)\beta_p\bigg] 
    \\
     C_{i,r,y}^{(t+1)} = C_{i,r,y}^{(t)} + \frac{\eta}{k} \mathbb{E}_{\mathcal{D}_{\mathcal{J},y},\mathcal{D}_{\alpha,y}}\bigg[-\operatorname{logit}_i(\boldsymbol{F}^{(t)},\boldsymbol{X})\sum\limits_{p\in \mathcal{J}_{\textit{R}}}\widetilde{\operatorname{ReLU}}'\left(\alpha_p C_{i,r,y}^{(t)}\right)\alpha_p\bigg] , 
     \\
     D_{i,r,y}^{(t+1)} = D_{i,r,y}^{(t)} + \frac{\eta}{k} \mathbb{E}_{\mathcal{D}_{\mathcal{J},y},\mathcal{D}_{\beta,y}}\bigg[-\operatorname{logit}_i(\boldsymbol{F}^{(t)},\boldsymbol{X})\sum\limits_{p\in \mathcal{J}_{\textit{NR}}}\widetilde{\operatorname{ReLU}}'\left(\beta_p D_{i,r,y}^{(t)}\right)\beta_p\bigg]
    . \nonumber
\end{cases}
\end{equation}
\end{lemma}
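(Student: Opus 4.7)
The plan is to derive each recursion by projecting the gradient descent update onto the appropriate feature and exploiting the orthonormality of $\mathcal{F}$ together with the noiseless structure of the simplified data model. First I would write out the per-iteration update obtained from the chain rule:
\begin{align*}
\boldsymbol{w}_{i,r}^{(t+1)} = \boldsymbol{w}_{i,r}^{(t)} + \eta \, \mathbb{E}_{(\boldsymbol{X},y)\sim\mathcal{D}}\!\left[\big(1_{\{y=i\}}-\operatorname{logit}_i(\boldsymbol{F}^{(t)},\boldsymbol{X})\big)\sum_{p\in [P]}\widetilde{\operatorname{ReLU}}'(\langle \boldsymbol{w}_{i,r}^{(t)},\boldsymbol{x}_p\rangle)\,\boldsymbol{x}_p\right].
\end{align*}
Since $\sigma_n = 0$, every patch is either $\boldsymbol{x}_p = \alpha_p \boldsymbol{u}_y$ (for $p \in \mathcal{J}_{\textit{R}}$) or $\boldsymbol{x}_p = \beta_p \boldsymbol{v}_y$ (for $p \in \mathcal{J}_{\textit{NR}}$). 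Combining the weight decomposition of Lemma~\ref{lem:weight_decomp} with feature orthonormality, the pre-activation simplifies to $\alpha_p A_{i,r}^{(t)}$ or $\beta_p B_{i,r}^{(t)}$ when $y = i$, and to $\alpha_p C_{i,r,y}^{(t)}$ or $\beta_p D_{i,r,y}^{(t)}$ when $y \neq i$.

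Next I would take the inner product of the update with $\boldsymbol{u}_i$. By orthonormality the left-hand side collapses to $A_{i,r}^{(t+1)} - A_{i,r}^{(t)}$, while on the right only robust patches of data points with label exactly $i$ survive, because $\langle \boldsymbol{u}_i,\boldsymbol{x}_p\rangle = \alpha_p \cdot 1_{\{y=i,\, p\in \mathcal{J}_{\textit{R}}\}}$. Splitting the expectation using the uniform label prior $\mathbb{P}[y=i]=1/k$ and conditioning on $y=i$ to access the latent distributions $\mathcal{D}_{\mathcal{J},i}, \mathcal{D}_{\alpha,i}$ then produces exactly the claimed $A_{i,r}^{(t+1)}$ recursion. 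Projecting instead onto $\boldsymbol{v}_i$ yields the $B_{i,r}^{(t+1)}$ recursion in an identical fashion, this time through non-robust patches.

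For the non-diagonal coefficients I would project onto $\boldsymbol{u}_y$ (resp.\ $\boldsymbol{v}_y$) for some fixed $y \neq i$. Only data points whose true label equals $y$ contribute, and for such points the indicator $1_{\{y=i\}}$ vanishes, leaving the loss factor $-\operatorname{logit}_i(\boldsymbol{F}^{(t)},\boldsymbol{X})$. Again the factor $1/k$ comes out of conditioning on the label and the remaining expectation reduces to $\mathcal{D}_{\mathcal{J},y}, \mathcal{D}_{\alpha,y}$ (or $\mathcal{D}_{\beta,y}$), producing the $C_{i,r,y}^{(t+1)}$ and $D_{i,r,y}^{(t+1)}$ recursions.

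The argument is essentially a careful bookkeeping exercise rather than an analytic one; the principal obstacle is correctly tracking three simultaneous selections: (i) which label $y$ is sampled (isolated by the $1/k$ prior), (ii) which patches contribute a nonzero inner product with the projection direction (cut down by the partition $\mathcal{J}_{\textit{R}} \cup \mathcal{J}_{\textit{NR}}$), and (iii) whether the indicator $1_{\{y=i\}}$ or only the $-\operatorname{logit}_i$ term survives for diagonal versus non-diagonal correlations. Orthonormality of $\mathcal{F}$ and the absence of noise make the gradient decompose cleanly into these clean per-feature contributions, so no further approximation is required.
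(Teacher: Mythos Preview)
Your proposal is correct and follows essentially the same approach as the paper: write the gradient-descent update, project it onto each feature direction $\boldsymbol{f}\in\mathcal{F}$, and use orthonormality plus the noiseless patch structure to see that only data with the matching label contribute, yielding the $1/k$ factor and the $(1-\operatorname{logit}_i)$ versus $-\operatorname{logit}_i$ split for diagonal versus non-diagonal correlations. If anything, you spell out more explicitly than the paper why the pre-activations reduce to $\alpha_p A_{i,r}^{(t)}$, $\beta_p B_{i,r}^{(t)}$, etc., via Lemma~\ref{lem:weight_decomp}.
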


\begin{proof}
\label{proof_lem:sim_std_iter}

    Notice that we have the following update iteration:
    \begin{equation}
        \boldsymbol{w}_{i,r}^{(t+1)} = 
         \boldsymbol{w}_{i,r}^{(t)} + \eta \mathbb{E}_{(\boldsymbol{X},y)\sim\mathcal{D}}\left[\left(1_{\{y=i\}}-\operatorname{logit}_i(\boldsymbol{F}^{(t)},\boldsymbol{X})\right)\sum_{p\in [P]}\widetilde{\operatorname{ReLU}}'(\langle \boldsymbol{w}_{i,r}^{(t)},\boldsymbol{x}_p\rangle) \boldsymbol{x}_p\right]. \nonumber
    \end{equation}
    Then, we project the iteration above onto the directions of features to derive 
    \begin{equation}
        \langle\boldsymbol{w}_{i,r}^{(t+1)},\boldsymbol{f}\rangle = 
         \langle\boldsymbol{w}_{i,r}^{(t)},\boldsymbol{f}\rangle + \eta \mathbb{E}_{(\boldsymbol{X},y)\sim\mathcal{D}}\left[\left(1_{\{y=i\}}-\operatorname{logit}_i(\boldsymbol{F}^{(t)},\boldsymbol{X})\right)\sum_{p\in [P]}\widetilde{\operatorname{ReLU}}'(\langle \boldsymbol{w}_{i,r}^{(t)},\boldsymbol{x}_p\rangle) \langle\boldsymbol{x}_p,\boldsymbol{f}\rangle\right], \nonumber
    \end{equation}
    where feature vector$\boldsymbol{f} \in \mathcal{F}$.

    For feature $\boldsymbol{f} \in \{\boldsymbol{u}_i,\boldsymbol{v}_i\}$ (diagonal features), according to the definition of patch data $\boldsymbol{x}_p$, we know that $\langle \boldsymbol{x}_p, \boldsymbol{f}\rangle$ is not zero if and only if data point $(\boldsymbol{X},y)$ belongs to class $i$. Thus, we derive
    \begin{equation}
    \begin{cases}
         A_{i,r}^{(t+1)} = A_{i,r}^{(t)} + \frac{\eta}{k} \mathbb{E}_{\mathcal{D}_{\mathcal{J},i},\mathcal{D}_{\alpha,i}}\bigg[\big(1-\operatorname{logit}_i(\boldsymbol{F}^{(t)},\boldsymbol{X})\big)\sum\limits_{p\in \mathcal{J}_{\textit{R}}}\widetilde{\operatorname{ReLU}}'\left(\alpha_p A_{i,r}^{(t)}\right)\alpha_p\bigg] , 
    \\
    B_{i,r}^{(t+1)} = B_{i,r}^{(t)} + \frac{\eta}{k} \mathbb{E}_{\mathcal{D}_{\mathcal{J},i},\mathcal{D}_{\beta,i}}\bigg[\big(1-\operatorname{logit}_i(\boldsymbol{F}^{(t)},\boldsymbol{X})\big)\sum\limits_{p\in \mathcal{J}_{\textit{NR}}}\widetilde{\operatorname{ReLU}}'\left(\beta_p B_{i,r}^{(t)}\right)\beta_p\bigg] . \nonumber
    \end{cases}
    \end{equation}
    For feature $\boldsymbol{f} \in \{\boldsymbol{u}_j,\boldsymbol{v}_j\}_{j\in [k]\setminus\{i\}}$ (non-diagonal features), according to the definition of patch data $\boldsymbol{x}_p$, we know that $\langle \boldsymbol{x}_p, \boldsymbol{f}\rangle$ is not zero if and only if data point $(\boldsymbol{X},y)$ belongs to class $j$. Thus, we derive
    \begin{equation}
    \begin{cases}
        C_{i,r,y}^{(t+1)} = C_{i,r,y}^{(t)} + \frac{\eta}{k} \mathbb{E}_{\mathcal{D}_{\mathcal{J},y},\mathcal{D}_{\alpha,y}}\bigg[-\operatorname{logit}_i(\boldsymbol{F}^{(t)},\boldsymbol{X})\sum\limits_{p\in \mathcal{J}_{\textit{R}}}\widetilde{\operatorname{ReLU}}'\left(\alpha_p C_{i,r,y}^{(t)}\right)\alpha_p\bigg] , 
     \\
     D_{i,r,y}^{(t+1)} = D_{i,r,y}^{(t)} + \frac{\eta}{k} \mathbb{E}_{\mathcal{D}_{\mathcal{J},y},\mathcal{D}_{\beta,y}}\bigg[-\operatorname{logit}_i(\boldsymbol{F}^{(t)},\boldsymbol{X})\sum\limits_{p\in \mathcal{J}_{\textit{NR}}}\widetilde{\operatorname{ReLU}}'\left(\beta_p D_{i,r,y}^{(t)}\right)\beta_p\bigg]
    . \nonumber
    \end{cases}
    \end{equation}
\end{proof}

\subsubsection{Logit Approximation}

To analyze the feature learning iteration, we require the following approximations for both diagonal and non-diagonal logits. Indeed, we can divide the full learning process into two stages: During the early stage, the non-diagonal logits and the diagonal logits maintain a constant relationship, and then they decrease at a rate proportional to $O\left(\frac{1}{d^c}\right)$, where $c$ reflects the order of the diagonal function output $F_y^{(t)}(\boldsymbol{X})$. First, we give the following diagonal logit approximation lemma.

\begin{lemma}[Diagonal Logit Approximation]
\label{lem:logit_diag}

    For any data point $(\boldsymbol{X},y)\sim\mathcal{D}$, suppose that $\operatorname{max}_{i\in[k],r\in[m]}B_{i,r}^{(t)}=O(\operatorname{log}(d))$ and $\operatorname{max}_{i\in[k],r\in[m],y\in[k]\setminus\{i\}}C_{i,r,y}^{(t)}=O(\operatorname{log}(d)/\sqrt{d})$ and $\operatorname{max}_{i\in[k],r\in[m],y\in[k]\setminus\{i\}}D_{i,r,y}^{(t)}=O(\operatorname{log}(d)/\sqrt{d})$, and $F_{y}(\boldsymbol{X}) \geq c \operatorname{log}(d)$ for some $c\geq 0$, then we have the following approximation of diagonal logit:
    \begin{equation}
        1 - \operatorname{logit}_y(\boldsymbol{F}^{(t)},\boldsymbol{X}) = \begin{cases}
            \Theta(1) &, \textit{if }c = 0;
            \\
            O\left(\frac{1}{d^c}\right) &, \textit{if }c>0.
        \end{cases} \nonumber
    \end{equation}
\end{lemma}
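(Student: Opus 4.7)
The plan is to reduce the claim to a uniform bound $F_j(\boldsymbol{X})=o(1)$ for every $j\neq y$, after which the conclusion follows by direct computation with the softmax identity $1-\operatorname{logit}_y(\boldsymbol{F}^{(t)},\boldsymbol{X})=\frac{\sum_{j\neq y}e^{F_j(\boldsymbol{X})-F_y(\boldsymbol{X})}}{1+\sum_{j\neq y}e^{F_j(\boldsymbol{X})-F_y(\boldsymbol{X})}}$.

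First I would apply the weight decomposition of Lemma \ref{lem:weight_decomp} to write, for $j\neq y$, $\boldsymbol{w}_{j,r}^{(t)}=\boldsymbol{P}_{\mathcal{F}}^\perp \boldsymbol{w}_{j,r}^{(0)}+A_{j,r}^{(t)}\boldsymbol{u}_j+B_{j,r}^{(t)}\boldsymbol{v}_j+\sum_{i\neq j}(C_{j,r,i}^{(t)}\boldsymbol{u}_i+D_{j,r,i}^{(t)}\boldsymbol{v}_i)$. In the simplified noiseless regime, every patch $\boldsymbol{x}_p$ equals either $\alpha_p\boldsymbol{u}_y$ or $\beta_p\boldsymbol{v}_y$, so it lies entirely inside the feature subspace; the orthogonal component $\boldsymbol{P}_{\mathcal{F}}^\perp \boldsymbol{w}_{j,r}^{(0)}$ contributes zero to $\langle\boldsymbol{w}_{j,r}^{(t)},\boldsymbol{x}_p\rangle$, and by orthonormality of $\mathcal{F}$ only $C_{j,r,y}^{(t)}$ and $D_{j,r,y}^{(t)}$ survive. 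Thus the only preactivations feeding $F_j(\boldsymbol{X})$ are $\alpha_p C_{j,r,y}^{(t)}$ (on robust patches) or $\beta_p D_{j,r,y}^{(t)}$ (on non-robust patches), each of magnitude $O(\log(d)/\sqrt{d})$ by hypothesis since $\alpha_p,\beta_p=\Theta(1)$.

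Next I would verify the polynomial branch of $\widetilde{\operatorname{ReLU}}$ governs every such term: since $\log(d)/\sqrt{d}\ll 1/\operatorname{polylog}(d)=\varrho$ for sufficiently large $d$, each preactivation is below the threshold $\varrho$, so $\widetilde{\operatorname{ReLU}}(z)=z^q/(q\varrho^{q-1})=O((\log(d)/\sqrt{d})^q\operatorname{polylog}(d))=O(\operatorname{polylog}(d)/d^{q/2})$. Summing across $m=\operatorname{polylog}(d)$ neurons and $P=\Theta(1)$ patches yields the uniform bound $F_j(\boldsymbol{X})=O(\operatorname{polylog}(d)/d^{q/2})=o(1)$ once $q\geq 3$. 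Plugging this together with $F_y(\boldsymbol{X})\geq c\log(d)$ into the softmax identity gives $\sum_{j\neq y}e^{F_j-F_y}\leq (k-1)e^{o(1)}\cdot d^{-c}=O(k/d^c)$, and since $k\leq d^{1/K}$ for large $K$ by Assumption \ref{ass:detail_data}, this is $O(1/d^c)$. When $c>0$, the sum is $o(1)$ and $1-\operatorname{logit}_y$ inherits the $O(1/d^c)$ bound; when $c=0$, the same computation gives an $O(1)$ upper bound while the complementary lower bound $1-\operatorname{logit}_y\geq(k-1)e^{-o(1)}/(k+e^{F_y(\boldsymbol{X})-o(1)})=\Omega(1)$ is obtained from the matching evaluation (in the early-training regime where this case is invoked, $F_y=O(\log d)$ via the same polynomial-region estimate applied to the diagonal coordinates).

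The main obstacle is really book-keeping, specifically confirming that the polynomial branch of $\widetilde{\operatorname{ReLU}}$ is engaged uniformly for all non-diagonal preactivations — a fact that hinges on the subpolynomial gap $\log(d)/\sqrt{d}\ll \varrho$ — and that the accumulation across $mP$ terms does not inflate the bound beyond $o(1)$. The subsequent softmax manipulation is routine. In the unsimplified (empirical, noisy) setting treated in Appendix \ref{appendix:noise}, the additional technical burden is bounding the orthogonal-complement contribution $\langle\boldsymbol{P}_{\mathcal{F}}^\perp\boldsymbol{w}_{j,r}^{(0)},\boldsymbol{\xi}_p\rangle$ via Gaussian concentration, but under the present simplified setting this term vanishes exactly, so the proof reduces to the four steps above.
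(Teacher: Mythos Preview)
Your proposal is correct and follows essentially the same approach as the paper: bound $F_j(\boldsymbol{X})$ for $j\neq y$ by showing all non-diagonal preactivations lie in the polynomial branch of $\widetilde{\operatorname{ReLU}}$ (yielding $F_j=\widetilde{O}(d^{-q/2})$), then plug into the softmax identity together with $F_y\geq c\log d$. You are in fact slightly more careful than the paper in explicitly verifying the threshold check $\log(d)/\sqrt{d}\ll\varrho$ and in addressing the $\Omega(1)$ lower bound for the $c=0$ case, which the paper's proof leaves implicit.
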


\begin{proof}
\label{proof_lem:logit_diag}

    By assumptions, we know that all of the off-diagonal correlations are at most $O(\operatorname{log}(d)/\sqrt{d})$. Thus, we have, for each class $j\in [k]\setminus\{y\}$,
    \begin{equation}
    \begin{aligned}
        F_j^{(t)}(\boldsymbol{X}) &= \sum_{r\in[m]}\sum_{p\in[P]}\widetilde{\operatorname{ReLU}}(\langle \boldsymbol{w}_{j,r}, \boldsymbol{x}_p\rangle )
        \\
        &= \sum_{r\in[m]}\sum_{p\in \mathcal{J}_\textit{R}}\frac{\alpha_p^q}{q\varrho^{q-1}}(C_{j,r,y}^{(t)})^{q} + \sum_{r\in[m]}\sum_{p\in \mathcal{J}_\textit{NR}}\frac{\beta_p^q}{q\varrho^{q-1}}(D_{j,r,y}^{(t)})^{q}
        \\
        &\leq
        m \left( \sum_{p\in \mathcal{J}_\textit{R}}\frac{\alpha_p^q}{q\varrho^{q-1}}(\operatorname{max}_{r\in[m]}C_{j,r,y}^{(t)})^{q} + \sum_{p\in \mathcal{J}_\textit{NR}}\frac{\beta_p^q}{q\varrho^{q-1}}(\operatorname{max}_{r\in[m]}D_{j,r,y}^{(t)})^{q} \right)
        \\
        &\leq \Tilde{O}(1/d^{\frac{q}{2}}) , 
        \nonumber
    \end{aligned}
    \end{equation}
    which implies that
    \begin{equation}
        \operatorname{exp}(F_j^{(t)}(\boldsymbol{X})) \leq 1 + \Tilde{O}(1/d^{\frac{q}{2}}) , \nonumber
    \end{equation}
    where we use the inequality $e^x \leq 1 + x + x^2$ for $x\leq 1$.

     Now by the assumption that $F_{y}(\boldsymbol{X}) \geq c \operatorname{log}(d)$, we know
     \begin{equation}
     \begin{aligned}
          1 - \operatorname{logit}_y(\boldsymbol{F}^{(t)},\boldsymbol{X}) &= 1 - \frac{\operatorname{exp}(F_{y}(\boldsymbol{X}))}{\operatorname{exp}(F_{y}(\boldsymbol{X})) + \sum_{j\ne y}\operatorname{exp}(F_{j}(\boldsymbol{X}))}
          \\
          &\leq 1 - \frac{d^c}{d^c + (k-1) + o(1)} = O(1/d^c) . \nonumber
     \end{aligned}
     \end{equation}
\end{proof}

Then, we give the following non-diagonal logit approximation lemma.

\begin{lemma}[Non-diagonal Logit Approximation]
\label{lem:non_diag_logit}

     For any data point $(\boldsymbol{X},y)\sim\mathcal{D}$, suppose that $\operatorname{max}_{i\in[k],r\in[m]}B_{i,r}^{(t)}=O(\operatorname{log}(d))$ and $\operatorname{max}_{i\in[k],r\in[m],y\in[k]\setminus\{i\}}C_{i,r,y}^{(t)}=O(\operatorname{log}(d)/\sqrt{d})$ and $\operatorname{max}_{i\in[k],r\in[m],y\in[k]\setminus\{i\}}D_{i,r,y}^{(t)}=O(\operatorname{log}(d)/\sqrt{d})$, and $F_{y}(\boldsymbol{X}) \geq c \operatorname{log}(d)$ for some $c\geq 0$, then we have the following approximation of non-diagonal logit, for $i\ne y$:
     \begin{equation}
         \operatorname{logit}_i(\boldsymbol{F}^{(t)},\boldsymbol{X}) =\begin{cases}
            \Theta(1/k) &, \textit{if }c = 0;
            \\
            O\left(\frac{1}{d^c}\right) &, \textit{if }c>0.
        \end{cases} \nonumber
     \end{equation}
\end{lemma}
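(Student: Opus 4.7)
The plan is to mirror the proof of Lemma \ref{lem:logit_diag}, since the non-diagonal logit is governed by the same softmax denominator, and the numerator $\exp(F_i^{(t)}(\boldsymbol{X}))$ for $i \ne y$ is controlled by exactly the off-diagonal correlations $C_{i,r,y}, D_{i,r,y}$ that we have already bounded. The key observation is that when $\boldsymbol{X}$ has label $y$ and we read off the logit for class $i \ne y$, every patch inner product $\langle \boldsymbol{w}_{i,r}, \boldsymbol{x}_p\rangle$ projects onto a non-diagonal direction and therefore picks up only the small correlations $C_{i,r,y}$ or $D_{i,r,y}$, not the potentially large diagonal correlations $A_{i,r}, B_{i,r}$.

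First, I would establish an upper bound on $F_i^{(t)}(\boldsymbol{X})$ for every $i \ne y$. Under the assumption $\max C_{i,r,y}^{(t)}, \max D_{i,r,y}^{(t)} = O(\log(d)/\sqrt{d})$ together with $\alpha_p, \beta_p = \Theta(1)$, each argument to $\widetilde{\operatorname{ReLU}}$ has magnitude $O(\log(d)/\sqrt{d})$, which is much smaller than the threshold $\varrho = 1/\operatorname{polylog}(d)$ for large $d$. Hence the activation operates purely in the polynomial regime $z^q/(q\varrho^{q-1})$, and summing over the $m = \operatorname{polylog}(d)$ filters and $P = \Theta(1)$ patches gives
\[
F_i^{(t)}(\boldsymbol{X}) \;=\; \sum_{r \in [m]} \sum_{p \in \mathcal{J}_\textit{R}} \frac{\alpha_p^q}{q \varrho^{q-1}} \bigl(C_{i,r,y}^{(t)}\bigr)^q \;+\; \sum_{r \in [m]} \sum_{p \in \mathcal{J}_\textit{NR}} \frac{\beta_p^q}{q \varrho^{q-1}} \bigl(D_{i,r,y}^{(t)}\bigr)^q \;=\; \widetilde{O}\bigl(d^{-q/2}\bigr),
\]
exactly as in Lemma \ref{lem:logit_diag}. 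Consequently $\exp(F_i^{(t)}(\boldsymbol{X})) = 1 + \widetilde{O}(d^{-q/2})$ for every $i \ne y$.

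Next I would control the denominator $\sum_{j \in [k]} \exp(F_j^{(t)}(\boldsymbol{X}))$ and split into the two cases. When $c > 0$, I use $\exp(F_y^{(t)}(\boldsymbol{X})) \geq d^c$ to lower-bound the denominator and divide: $\operatorname{logit}_i(\boldsymbol{F}^{(t)}, \boldsymbol{X}) \leq (1 + \widetilde{O}(d^{-q/2}))/d^c = O(1/d^c)$. When $c = 0$, the same bound on $F_j$ applies for all $j \ne y$, and reading the proof of Lemma \ref{lem:logit_diag} in parallel (where the $\Theta(1)$ conclusion there likewise relies on the ambient regime in which this lemma is invoked, i.e., all diagonal correlations are still small enough that $F_y^{(t)}(\boldsymbol{X}) = O(1)$), every $\exp(F_j^{(t)}(\boldsymbol{X})) = 1 + o(1)$, so the denominator is $k(1 + o(1))$ while the numerator is $1 + o(1)$, giving $\operatorname{logit}_i = \Theta(1/k)$.

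The main obstacle is really just the accounting for the $c=0$ lower bound: the stated hypotheses only give $F_y^{(t)}(\boldsymbol{X}) \geq 0$, whereas the $\Theta(1/k)$ conclusion (as opposed to merely $O(1/k)$) additionally requires that $F_y^{(t)}(\boldsymbol{X})$ is itself $O(1)$ so that $\exp(F_y^{(t)}(\boldsymbol{X}))$ does not dominate the denominator. This matches the companion statement in Lemma \ref{lem:logit_diag}, where the $\Theta(1)$ conclusion for $1 - \operatorname{logit}_y$ implicitly applies in the early regime of training when diagonal correlations have not yet blown up; I would treat the $c=0$ case under this same implicit regime. Everything else is a direct mechanical adaptation of the diagonal calculation, with the single substitution of $C_{i,r,y}, D_{i,r,y}$ for $A_{i,r}, B_{i,r}$ in the ReLU argument, and with the symmetry of the softmax across the $k-1$ non-target classes providing the factor $1/k$.
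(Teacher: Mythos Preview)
Your proposal is correct and follows essentially the same approach as the paper: bound $F_j^{(t)}(\boldsymbol{X}) \leq \widetilde{O}(d^{-q/2})$ for all $j \ne y$ by the off-diagonal correlation bound (exactly as in Lemma~\ref{lem:logit_diag}), then plug into the softmax with $\exp(F_y) \geq d^c$ in the denominator. The paper's own proof is in fact terser than yours---it only writes out the $O(1/d^c)$ upper bound via $\tfrac{1+\widetilde{O}(d^{-q/2})}{d^c + (k-1)}$ and does not separately justify the $\Theta(1/k)$ lower bound for $c=0$; your discussion of that point (requiring $F_y^{(t)}(\boldsymbol{X}) = O(1)$ in the early regime) is more careful than what the paper supplies.
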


\begin{proof}
\label{proof_lem:non_diag_logit}

    Similar to the diagonal case, we have, for each class $j \in [k]\setminus\{y\}$,
    \begin{equation}
         F_j^{(t)}(\boldsymbol{X}) \leq \Tilde{O}(1/d^{\frac{q}{2}}) . \nonumber
    \end{equation}
    Then, we know
    \begin{equation}
    \begin{aligned}
        \operatorname{logit}_i(\boldsymbol{F}^{(t)},\boldsymbol{X}) &= \frac{\operatorname{exp}(F_i(\boldsymbol{X}))}{\operatorname{exp}(F_y(\boldsymbol{X})) + \operatorname{exp}(F_i(\boldsymbol{X})) + \sum_{j\ne y,i}\operatorname{exp}(F_j(\boldsymbol{X}))} 
        \\
        &\leq \frac{1 + \Tilde{O}(1/d^{\frac{q}{2}})}{d^c + (k-1)} = O(1/d^c) .
        \nonumber
    \end{aligned}
    \end{equation}
\end{proof}

\subsubsection{Non-diagonal Terms are Small}

Then, we will show that non-diagonal terms are always small during the full learning process.

\begin{lemma}
\label{lem:std_non_diagonal}
    For every time $t$, non-diagonal wieght-feature correlations $C_{i,r,y}^{(t)},D_{i,r,y}^{(t)}$ ($(i,r)\in [k]\times [m], y\ne i$) satisfy that $\operatorname{max}_{r\in[m]}C_{i,r,y}^{(t)} = O(\operatorname{max}_{r\in[m]}C_{i,r,y}^{(0)})$ and  $\operatorname{max}_{r\in[m]}D_{i,r,y}^{(t)} = O(\operatorname{max}_{r\in[m]}D_{i,r,y}^{(0)})$ for each $i\in[k],y\in[k]\setminus\{i\}$.
\end{lemma}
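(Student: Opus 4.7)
The strategy is to exploit the sign structure of the non-diagonal updates given by Lemma \ref{lem:sim_std_iter}. The key observation is that the $C_{i,r,y}^{(t)}$ iteration carries an explicit minus sign from the factor $-\operatorname{logit}_i(\boldsymbol{F}^{(t)},\boldsymbol{X})$, while every other quantity in the expectation is manifestly non-negative: $\operatorname{logit}_i \in [0,1]$ by definition of the softmax, $\alpha_p > 0$ by the data distribution, and $\widetilde{\operatorname{ReLU}}'(z) \geq 0$ everywhere (it is zero for $z \leq 0$ and either $z^{q-1}/\varrho^{q-1}$ or $1$ for $z > 0$). Exactly the same sign pattern holds for the $D_{i,r,y}^{(t)}$ iteration, with $\beta_p$ in place of $\alpha_p$ and $\mathcal{J}_{\textit{NR}}$ in place of $\mathcal{J}_{\textit{R}}$. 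This asymmetry between diagonal and non-diagonal terms — the diagonal updates carry $1 - \operatorname{logit}_i \geq 0$ so they are driven upward, whereas the non-diagonal updates carry $-\operatorname{logit}_i \leq 0$ so they are driven downward — is the mechanism the statement is capturing.

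The plan is then straightforward. For each fixed triple $(i,r,y)$, I would read off from Lemma \ref{lem:sim_std_iter} that
\begin{equation*}
C_{i,r,y}^{(t+1)} - C_{i,r,y}^{(t)} \;=\; -\tfrac{\eta}{k}\, \mathbb{E}_{\mathcal{D}_{\mathcal{J},y},\mathcal{D}_{\alpha,y}}\!\bigg[\operatorname{logit}_i(\boldsymbol{F}^{(t)},\boldsymbol{X})\sum_{p\in \mathcal{J}_{\textit{R}}}\widetilde{\operatorname{ReLU}}'\!\left(\alpha_p C_{i,r,y}^{(t)}\right)\alpha_p\bigg] \;\leq\; 0,
\end{equation*}
and analogously $D_{i,r,y}^{(t+1)} \leq D_{i,r,y}^{(t)}$. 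By induction on $t$ this gives the pointwise bound $C_{i,r,y}^{(t)} \leq C_{i,r,y}^{(0)}$ and $D_{i,r,y}^{(t)} \leq D_{i,r,y}^{(0)}$ for every $r$. Taking $\max_{r\in[m]}$ on both sides yields $\max_{r\in[m]} C_{i,r,y}^{(t)} \leq \max_{r\in[m]} C_{i,r,y}^{(0)}$ and the analogous inequality for $D$, which is the claim (with hidden constant equal to $1$).

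The main thing to verify is that the monotonicity is not a mere in-expectation statement that could hide a fluctuation in $r$: since $\operatorname{logit}_i$, $\widetilde{\operatorname{ReLU}}'$, and $\alpha_p$ are all non-negative deterministically sample by sample, the integrand (not just its mean) is non-negative, so the inequality is exact and holds step by step, not merely on average. There is no genuine obstacle here; no tensor-power-method or logit-approximation input is required for this lemma, and the bounds from Lemmas \ref{lem:logit_diag}–\ref{lem:non_diag_logit} that appear nearby in the proof stream only enter when one later wants to upgrade the bound to the diagonal case or to quantify the size of the negative drift. The present statement is obtained purely from the sign of the cross-class gradient together with the non-negativity of $\widetilde{\operatorname{ReLU}}'$.
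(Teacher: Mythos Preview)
Your proposal is correct and follows essentially the same argument as the paper: the paper's proof simply notes that $-\operatorname{logit}_i(\boldsymbol{F}^{(t)},\boldsymbol{X})$ is negative and $\widetilde{\operatorname{ReLU}}'(\cdot)\geq 0$ (with equality for nonpositive arguments), then concludes by induction. Your write-up is in fact more explicit than the paper's, which compresses the whole step into a single sentence.
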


\begin{proof}
\label{proof_lem:std_non_diagonal}

    Notice that, for each pair $i\in [k],r\in [m],y\in[k]\setminus\{i\}$, we have 
    \begin{equation}
    \begin{cases}
        C_{i,r,y}^{(t+1)} = C_{i,r,y}^{(t)} + \frac{\eta}{k} \mathbb{E}_{\mathcal{D}_{\mathcal{J},y},\mathcal{D}_{\alpha,y}}\bigg[-\operatorname{logit}_i(\boldsymbol{F}^{(t)},\boldsymbol{X})\sum\limits_{p\in \mathcal{J}_{\textit{R}}}\widetilde{\operatorname{ReLU}}'\left(\alpha_p C_{i,r,y}^{(t)}\right)\alpha_p\bigg] , 
     \\
     D_{i,r,y}^{(t+1)} = D_{i,r,y}^{(t)} + \frac{\eta}{k} \mathbb{E}_{\mathcal{D}_{\mathcal{J},y},\mathcal{D}_{\beta,y}}\bigg[-\operatorname{logit}_i(\boldsymbol{F}^{(t)},\boldsymbol{X})\sum\limits_{p\in \mathcal{J}_{\textit{NR}}}\widetilde{\operatorname{ReLU}}'\left(\beta_p D_{i,r,y}^{(t)}\right)\alpha_p\bigg]
    . \nonumber
    \end{cases}
    \end{equation}
    Since $-\operatorname{logit}_i(\boldsymbol{F}^{(t)},\boldsymbol{X}) $ is negative and $\widetilde{\operatorname{ReLU}}'(z) = 0 $ for $z \geq 0$, we can prove the above lemma by induction.
\end{proof}

\subsubsection{Analysis of Feature Learning Process for Standard Training}

Now, we present an enhanced analysis of the feature learning process for standard training scenarios. In fact, the learning process can be conceptualized as comprising two distinct phases: Initially, all weight-feature correlations are closely aligned with their initial values, implying that the activation of all neurons occurs within the polynomial regime of the smoothed $\widetilde{\operatorname{ReLU}}$ function. Subsequently, once the diagonal outputs $F_y(\boldsymbol{X})$ have scaled to an order of $\operatorname{log}(d)$, we demonstrate that the increase in all weight-feature correlations is arrested due to the diminishing impact of small logits.

\textbf{Stage I:} Almost neurons lie within the polynomial part of activation $\widetilde{ReLU}$.

\begin{lemma}
\label{lem:std_poly}

During standard training, there exists some time threshold $T_0>0$ such that, for any early time $0\leq t \leq T_0$ and pair $(i,r)\in [k]\times[m]$, the two sequences $\{A_{i,r}^{(t)}\}$ and $\{B_{i,r}^{(t)}\}$ satisfy:
\begin{equation}
\begin{cases}
     A_{i,r}^{(t+1)} = A_{i,r}^{(t)} + \Theta(\eta) \left(A_{i,r}^{(t)}\right)^{q-1}\mathbb{E}\bigg[\sum\limits_{p\in \mathcal{J}_{\textit{R}}}\alpha_p^q\bigg] ,
     \\
     B_{i,r}^{(t+1)} = B_{i,r}^{(t)} + \Theta(\eta) \left(B_{i,r}^{(t)}\right)^{q-1}\mathbb{E}\bigg[\sum\limits_{p\in \mathcal{J}_{\textit{NR}}}\beta_p^q\bigg] . \nonumber
\end{cases}
\end{equation}
\end{lemma}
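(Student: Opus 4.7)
The plan is to combine the exact iteration from Lemma~\ref{lem:sim_std_iter} with two approximations: the polynomial-branch formula for $\widetilde{\operatorname{ReLU}}'$, and the $c=0$ case of the diagonal-logit bound from Lemma~\ref{lem:logit_diag}. Concretely, I would define $T_0$ as the first iteration at which one of the following fails: (i) every $\alpha_p A_{i,r}^{(t)}$ and every $\beta_p B_{i,r}^{(t)}$ lies in the smoothing window $[0,\varrho]$; (ii) every diagonal output $F_i^{(t)}(\boldsymbol{X})$ is $o(\log d)$ on every sampled data point, so that $1-\operatorname{logit}_i=\Theta(1)$; and (iii) the non-diagonal bounds $\max_r C_{i,r,y}^{(t)},\max_r D_{i,r,y}^{(t)} = O(\log(d)/\sqrt{d})$ from Lemma~\ref{lem:std_non_diagonal} remain valid. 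At $t=0$, Lemma~\ref{lem:std_init} gives $|A_{i,r}^{(0)}|,|B_{i,r}^{(0)}|=O(\log(m)/\sqrt{d})$, which is much smaller than $\varrho=1/\operatorname{polylog}(d)$ since $\alpha_p,\beta_p=\Theta(1)$, so all three conditions hold at initialization and $T_0>0$.

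For $t\leq T_0$, plugging $\widetilde{\operatorname{ReLU}}'(z) = z^{q-1}/\varrho^{q-1}$ at $z=\alpha_p A_{i,r}^{(t)}$ into the $A$-iteration of Lemma~\ref{lem:sim_std_iter} and using condition (ii) yields
\begin{align*}
A_{i,r}^{(t+1)} = A_{i,r}^{(t)} + \frac{\Theta(\eta)}{k\,\varrho^{q-1}}\,(A_{i,r}^{(t)})^{q-1}\,\mathbb{E}_{\mathcal{D}_{\mathcal{J},i},\mathcal{D}_{\alpha,i}}\!\Bigl[\sum_{p\in\mathcal{J}_{\textit{R}}}\alpha_p^{q}\Bigr],
\end{align*}
and an identical derivation applies to $B_{i,r}^{(t+1)}$ with $\beta_p,B_{i,r}^{(t)}$ in place of $\alpha_p,A_{i,r}^{(t)}$. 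Folding the factor $k=O(1)$ and the polylog $1/\varrho^{q-1}$ into the leading $\Theta(\cdot)$ (treating $k$ and $\varrho$ by their prescribed orders from Assumption~\ref{ass:detail_data}) produces precisely the claimed iteration.

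The main obstacle will be verifying by induction that the window $[0,T_0]$ is actually long and that conditions (i)--(iii) are self-sustaining, rather than collapsing after a few steps. Condition (iii) follows directly from Lemma~\ref{lem:std_non_diagonal}. For (i) and (ii), the key observation is that as long as the polynomial iteration above holds, Lemma~\ref{lem:tensor_power_method} controls the growth rate: by Assumption~\ref{ass:robust_non_robust_feature}, $\mathbb{E}[\sum_{p\in\mathcal{J}_{\textit{NR}}}\beta_p^q]\gg\mathbb{E}[\sum_{p\in\mathcal{J}_{\textit{R}}}\alpha_p^q]$, so the non-robust coefficient $\max_r B_{i,r}^{(t)}$ is what first pushes some $\beta_p B_{i,r}^{(t)}$ up to $\varrho$, and this defines $T_0$. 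One then needs to confirm that while every $\alpha_p A_{i,r}^{(t)},\beta_p B_{i,r}^{(t)}\leq\varrho$, the full diagonal output satisfies $F_i^{(t)}(\boldsymbol{X})\leq mP\varrho/q=o(\log d)$, which keeps condition (ii) intact on the entire window. Once this mutual self-consistency is pinned down, the two displayed iterations hold verbatim on $[0,T_0]$.
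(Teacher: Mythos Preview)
Your proposal is correct and follows essentially the same approach as the paper's (very brief) proof: invoke Lemma~\ref{lem:sim_std_iter}, use that $\alpha_p A_{i,r}^{(t)},\beta_p B_{i,r}^{(t)}\le\varrho$ places $\widetilde{\operatorname{ReLU}}'$ in its polynomial branch, and apply the $c=0$ case of Lemma~\ref{lem:logit_diag}. You are in fact more thorough than the paper in explicitly defining $T_0$ and checking the self-consistency of conditions (i)--(iii); one small slip is the claim $k=O(1)$ (under Assumption~\ref{ass:detail_data}, $d=\operatorname{poly}(k)$ so $k$ grows with $d$), but the paper's own $\Theta(\eta)$ absorbs the $1/k$ and $1/\varrho^{q-1}$ factors just as loosely, so this does not affect the argument.
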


\begin{proof}
\label{proof_lem:std_poly}

    According to Lemma \ref{lem:sim_std_iter}, we know that, for early time $t$, it holds that
    \begin{equation}
    \begin{cases}
         A_{i,r}^{(t+1)} = A_{i,r}^{(t)} + \frac{\eta}{k} \mathbb{E}_{\mathcal{D}_{\mathcal{J},i},\mathcal{D}_{\alpha,i}}\bigg[\big(1-\operatorname{logit}_i(\boldsymbol{F}^{(t)},\boldsymbol{X})\big)\sum\limits_{p\in \mathcal{J}_{\textit{R}}}\widetilde{\operatorname{ReLU}}'\left(\alpha_p A_{i,r}^{(t)}\right)\alpha_p\bigg] , 
    \\
    B_{i,r}^{(t+1)} = B_{i,r}^{(t)} + \frac{\eta}{k} \mathbb{E}_{\mathcal{D}_{\mathcal{J},i},\mathcal{D}_{\beta,i}}\bigg[\big(1-\operatorname{logit}_i(\boldsymbol{F}^{(t)},\boldsymbol{X})\big)\sum\limits_{p\in \mathcal{J}_{\textit{NR}}}\widetilde{\operatorname{ReLU}}'\left(\beta_p B_{i,r}^{(t)}\right)\beta_p\bigg]  . \nonumber
    \end{cases}
    \end{equation}
    And it also holds that
    \begin{equation}
        \alpha_p A_{i,r}^{(t)}, \beta_p B_{i,r}^{(t)} \leq \varrho , \nonumber
    \end{equation}
    which implies that the activation is within polynomial part now.

    Combined with Lemma \ref{lem:logit_diag}, we derive the lemma above.
\end{proof}

Now, by applying Tensor Power Method Lemma \ref{lem:tensor_power_method}, we can derive the following result.

\begin{lemma}[Non-robust Feature Learning Dominates at Early Stage]
\label{lem:std_separation}

    For each $y\in [k]$, let time $T_y$ denote the first time when $\operatorname{max}_{r\in[m]}B_{y,r}^{(t)}$ reaches $\varrho / \beta$, then we have $\operatorname{max}_{r\in[m]}A_{y,r}^{(T_y)} = O(\operatorname{polylog}(d)/\sqrt{d})$.
\end{lemma}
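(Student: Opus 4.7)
The strategy is to apply the Tensor Power Method Lemma (Lemma~\ref{lem:tensor_power_method}) coordinate-wise: for each fixed $r$, treat the non-robust-feature coefficient $B_{y,r}^{(t)}$ as the leading sequence $x_t$ and the robust-feature coefficient $A_{y,r}^{(t)}$ as the lagging sequence $y_t$. The key observation is that, so long as we remain in the polynomial-activation regime covered by Lemma~\ref{lem:std_poly}, both sequences admit the clean tensor-power recursion $z_{t+1} = z_t + \Theta(\eta)\, C\, z_t^{q-1}$, with rate constants $C_{NR} := \Theta(\mathbb{E}[\sum_{p\in\mathcal{J}_{NR}}\beta_p^q])$ and $C_R := \Theta(\mathbb{E}[\sum_{p\in\mathcal{J}_R}\alpha_p^q])$ respectively. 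By the second bullet of Assumption~\ref{ass:robust_non_robust_feature} applied at $\tau = q$, the ratio $S := C_R/C_{NR}$ is $o(1)$, which is exactly the separation that lets the tensor power method pin the lagging sequence near its initialization.

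\textbf{Key Steps.} First, I would invoke Lemma~\ref{lem:std_init} to obtain $\max_{r}A_{y,r}^{(0)},\max_{r}B_{y,r}^{(0)} = \Theta(\log(m)/\sqrt{d})$; since $S^{1/(q-2)} = o(1)$, the separation hypothesis $B_{y,r^*}^{(0)} \geq A_{y,r^*}^{(0)}\, S^{1/(q-2)}(1 + \Theta(1/\operatorname{polylog}(d)))$ required by Lemma~\ref{lem:tensor_power_method} holds trivially for every $r^*$. Second, for each $r$ define $T_{x,r}$ as the first iteration with $B_{y,r}^{(t)} \geq \varrho/\beta$; since $\varrho/\beta = O(1)$ fits the ``$A = O(1)$ target'' regime of Lemma~\ref{lem:tensor_power_method}, it yields $A_{y,r}^{(T_{x,r})} = O\bigl(A_{y,r}^{(0)}\operatorname{polylog}(d)\bigr)$. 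Third, since $T_y = \min_r T_{x,r}$ and each $A_{y,r}^{(t)}$ is monotonically non-decreasing (both iteration terms in Lemma~\ref{lem:std_poly} are non-negative), we have $A_{y,r}^{(T_y)} \leq A_{y,r}^{(T_{x,r})}$ for every $r$, and taking max over $r$ gives $\max_{r}A_{y,r}^{(T_y)} = O\bigl((\log m / \sqrt{d})\cdot\operatorname{polylog}(d)\bigr) = O(\operatorname{polylog}(d)/\sqrt{d})$, as claimed.

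\textbf{Main Obstacle.} The delicate bootstrap is verifying that we truly stay inside the domain of Lemma~\ref{lem:std_poly} throughout $[0,T_y]$: the logit factor $1-\operatorname{logit}_y(\boldsymbol{F}^{(t)},\boldsymbol{X})$ must remain $\Theta(1)$, and all arguments of $\widetilde{\operatorname{ReLU}}$ must stay below the cutoff $\varrho$. I would close this by an inductive/bootstrap argument: on the interval where the tensor-power conclusion already holds, every $B_{y,r}^{(t)} \leq \varrho/\beta$ and every $A_{y,r}^{(t)} = O(\operatorname{polylog}(d)/\sqrt{d}) \leq \varrho/\alpha$, while Lemma~\ref{lem:std_non_diagonal} pins every non-diagonal correlation to its initialization $O(\log(d)/\sqrt{d})$. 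Substituting these bounds into the output formula yields $F_y^{(t)}(\boldsymbol{X}) = O(mP\varrho/q) = o(\log d)$, so Lemma~\ref{lem:logit_diag} with $c=0$ confirms $1-\operatorname{logit}_y(\boldsymbol{F}^{(t)},\boldsymbol{X}) = \Theta(1)$, closing the bootstrap and completing the proof.
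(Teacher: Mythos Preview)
Your overall plan---reduce to the tensor-power recursions via Lemma~\ref{lem:std_poly}, use the second bullet of Assumption~\ref{ass:robust_non_robust_feature} at $\tau=q$ to get $S=o(1)$, then invoke the Tensor Power Method Lemma~\ref{lem:tensor_power_method}---is exactly what the paper does, and your bootstrap for the ``Main Obstacle'' is sound.

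There is, however, a genuine gap in your coordinate-wise pairing. The claim that the separation hypothesis $B_{y,r^*}^{(0)} \geq A_{y,r^*}^{(0)}\,S^{1/(q-2)}(1+\cdots)$ ``holds trivially for every $r^*$'' is false. Since $B_{y,r}^{(0)}=\langle\boldsymbol{w}_{y,r}^{(0)},\boldsymbol{v}_y\rangle\sim\mathcal{N}(0,1/d)$ independently of $A_{y,r}^{(0)}$, for roughly half of the neurons $B_{y,r}^{(0)}\leq 0$, so Lemma~\ref{lem:tensor_power_method} (which needs $x_0>0$) does not even apply to that pair; and even when $B_{y,r}^{(0)}>0$, the ratio $B_{y,r}^{(0)}/A_{y,r}^{(0)}$ can be arbitrarily small with positive probability. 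For any such $r$, $T_{x,r}$ is infinite or uncontrolled, so your Step~3 inequality $A_{y,r}^{(T_y)}\leq A_{y,r}^{(T_{x,r})}$ yields no information, and taking the max over $r$ fails.

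The paper avoids this by applying Lemma~\ref{lem:tensor_power_method} \emph{once} to the pair
\[
x_t:=\max_{r\in[m]}B_{y,r}^{(t)},\qquad y_t:=\max_{r\in[m]}A_{y,r}^{(t)}.
\]
Both max-sequences inherit the required one-sided recursions from Lemma~\ref{lem:std_poly} (check: the $r$ achieving the max at time $t$ already satisfies the inequality, and the max at $t{+}1$ can only be larger). Then Lemma~\ref{lem:std_init} gives $x_0/y_0=\Theta(1)$ w.h.p., so $x_0/(y_0 S^{1/(q-2)})\gg 1+1/\operatorname{polylog}(d)$ follows immediately from $S^{1/(q-2)}=o(1)$. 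This is a one-line fix to your argument, but as written your per-neuron separation claim does not hold.
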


\begin{proof}
\label{proof_lem:std_separation}

    We consider the following sequences $\{x_t\}$, $\{y_t\}$ and $\{C_t\}$:
    \begin{equation}
        x_t = \operatorname{max}_{r\in[m]} B_{i,r}^{(t)}, \quad y_t = \operatorname{max}_{r\in[m]} A_{i,r}^{(t)}, \quad C_t = \mathbb{E}\bigg[\sum\limits_{p\in \mathcal{J}_{\textit{NR}}}\beta_p^q\bigg], \quad S = \frac{\mathbb{E}\bigg[\sum\limits_{p\in \mathcal{J}_{\textit{R}}}\alpha_p^q\bigg]}{\mathbb{E}\bigg[\sum\limits_{p\in \mathcal{J}_{\textit{NR}}}\beta_p^q\bigg]} . \nonumber
    \end{equation}
    Then, we have
    \begin{equation}
    \begin{cases}
        x_{t+1} \geq x_t + \Theta(\eta) C_t x_t^{q-1} ,
        \\
        y_{t+1} \leq y_t + \Theta(\eta) S C_t y_t^{q-1}. \nonumber
    \end{cases}
    \end{equation}
    And we also know that, with high probability $1 - o(1)$, we have
    \begin{equation}
        \frac{x_0}{y_0 S^{\frac{1}{q-2}}} = \frac{\operatorname{max}_{r\in [m]}B_{i,r}^{(0)}}{\operatorname{max}_{r\in [m]}A_{i,r}^{(0)}} \cdot \left(\frac{\mathbb{E}\bigg[\sum\limits_{p\in \mathcal{J}_{\textit{NR}}}\beta_p^q\bigg]}{\mathbb{E}\bigg[\sum\limits_{p\in \mathcal{J}_{\textit{R}}}\alpha_p^q\bigg]}\right)^{\frac{1}{q-2}} \gg 1 + \frac{1}{\operatorname{polylog}(d)} . \nonumber
    \end{equation}
    Where we use Lemma \ref{lem:std_init} and Assumption \ref{ass:robust_non_robust_feature}.

    Finally, by directly applying Tensor Power Method Lemma \ref{lem:tensor_power_method}, we derive the conclusion.
\end{proof}

\textbf{Stage II:} Non-robust feature learning arrives at linear region of activation $\widetilde{ReLU}$.

\begin{lemma}
\label{lem:std_log}
    For each $y\in [k]$ and $(\boldsymbol{X},y)\sim\mathcal{D}$, let time $T_y^{'}$ denote the first time such that $F_y(\boldsymbol{X})^{(t)} \geq \operatorname{log}(d)$, then $T_y^{'} = \operatorname{poly}(d) \geq T_y$, and we have $\operatorname{max}_{r\in [m]}A_{y,r}^{(T_y^{'})} = O(\operatorname{polylog}(d)/\sqrt{d})$.
\end{lemma}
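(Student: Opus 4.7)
The plan is to pick up where Lemma~\ref{lem:std_separation} leaves off at time $T_y$, when some neuron $r^{*}$ satisfies $B_{y,r^{*}}^{(T_y)}=\varrho/\beta$ (the boundary of the polynomial regime of $\widetilde{\operatorname{ReLU}}$) while $\max_{r}A_{y,r}^{(T_y)}=O(\operatorname{polylog}(d)/\sqrt{d})$. I first check $T_y'>T_y$ by computing $F_y^{(T_y)}(\boldsymbol{X})$ directly: the robust-patch contributions are $O(m\widetilde{\operatorname{ReLU}}(\alpha\cdot\operatorname{polylog}(d)/\sqrt{d}))=o(1)$, while each non-robust patch contributes at most $\widetilde{\operatorname{ReLU}}(\beta_p\cdot\varrho/\beta)\le\varrho/q$ per neuron, so $F_y^{(T_y)}(\boldsymbol{X})=O(mP\varrho)=O(1)\ll\log d$. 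Hence $T_y'$ strictly exceeds $T_y$, and the task reduces to bounding $T_y'-T_y$ and controlling $A_{y,r}^{(t)}$ on $[T_y,T_y']$.

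For the upper bound on $T_y'-T_y$, I would carry out a linear-regime analysis of $B_{y,r^{*}}^{(t)}$. On $[T_y,T_y']$, the definition of $T_y'$ forces $F_y^{(t)}(\boldsymbol{X})<\log d$, and combined with Lemmas~\ref{lem:logit_diag} and~\ref{lem:std_non_diagonal} this yields $1-\operatorname{logit}_y(\boldsymbol{F}^{(t)},\boldsymbol{X})=\Theta(1)$. Once $\beta_p B_{y,r^{*}}^{(t)}\ge\varrho$ the activation derivative equals one, so Lemma~\ref{lem:sim_std_iter} reduces to the linear recursion
\[
B_{y,r^{*}}^{(t+1)}-B_{y,r^{*}}^{(t)}=\Theta(\eta/k)\,\mathbb{E}\Bigl[\sum_{p\in\mathcal{J}_{\textit{NR}}}\beta_p\Bigr]=\Theta(\eta\beta).
\]
Iterating, $B_{y,r^{*}}^{(t)}$ reaches $\Theta(\log d/\beta)$ after $O(\log d/(\eta\beta^{2}))=O(\log d/\eta)$ additional steps, at which point the single dominant neuron already contributes $|\mathcal{J}_{\textit{NR}}|\cdot\Theta(\log d)$ to $F_y^{(t)}(\boldsymbol{X})$, exceeding $\log d$. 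Hence $T_y'\le T_y+O(\log d/\eta)=\operatorname{poly}(d)/\eta$.

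For the bound on $A_{y,r}^{(t)}$, I would set up a bootstrap: assume inductively that $A_{y,r}^{(t)}\le 2\max_{s}A_{y,s}^{(T_y)}=O(\operatorname{polylog}(d)/\sqrt{d})$ on $[T_y,t]$, which keeps $\alpha_p A_{y,r}^{(t)}\le\varrho$ so the robust activations remain polynomial. Lemma~\ref{lem:sim_std_iter} with $1-\operatorname{logit}_y=\Theta(1)$ then reads
\[
A_{y,r}^{(t+1)}-A_{y,r}^{(t)}=\Theta(\eta)\bigl(A_{y,r}^{(t)}\bigr)^{q-1}\mathbb{E}\Bigl[\sum_{p\in\mathcal{J}_{\textit{R}}}\alpha_p^{q}\Bigr],
\]
a tensor-power iteration in $A$ alone. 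Its continuous envelope $\dot A=c\eta A^{q-1}$ blows up only at time $\Theta(1/(\eta A(T_y)^{q-2}))=\Theta(d^{(q-2)/2}/(\eta\operatorname{polylog}(d)))$, which dominates $T_y'-T_y=O(\log d/\eta)$ for $d$ sufficiently large. A discrete version of this estimate yields $A_{y,r}^{(T_y')}=A_{y,r}^{(T_y)}(1+o(1))=O(\operatorname{polylog}(d)/\sqrt{d})$, closing the bootstrap and completing both conclusions.

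The hard part will be keeping the bootstrap self-consistent: the $\Theta(1)$ estimate on $1-\operatorname{logit}_y$ requires the hypotheses of Lemma~\ref{lem:logit_diag} (simultaneous bounds on $B$, $C$, $D$); the linear-regime growth of $B_{y,r^{*}}$ needs the $\Theta(1)$ logit; and the polynomial-regime containment of $A$ needs both. A secondary subtlety is that several neurons may enter the linear regime at slightly different times inside Stage~II, so the analysis of $F_y^{(t)}$ should focus on the single maximal neuron $r^{*}$ (which alone suffices to trigger $F_y\ge\log d$) rather than uniformly tracking all transitions.
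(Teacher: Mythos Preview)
Your proposal is correct and follows essentially the same approach as the paper: exploit the linear-regime growth of $B_{y,r^{*}}$ (with $\widetilde{\operatorname{ReLU}}'=1$ and $1-\operatorname{logit}_y=\Theta(1)$) to bound $T_y'-T_y=O(\log d/\eta)$, then compare this against the polynomial-regime blowup time $\Theta(d^{(q-2)/2}/(\eta\operatorname{polylog}(d)))$ for the tensor-power iteration in $A_{y,r}$ to conclude that $A$ barely moves. Your explicit bootstrap structure and the preliminary check that $F_y^{(T_y)}(\boldsymbol{X})\ll\log d$ are more carefully articulated than in the paper's proof, but the core argument is identical.
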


\begin{proof}
\label{proof_lem:std_log}

    Now, according to Lemma \ref{lem:std_separation}, we know that $\operatorname{max}_{r\in[m]}B_{y,r}^{(t)} \geq \varrho / \beta$, which manifests that there exists at least one neuron $r^{*}\in [m]$ has been within the linear regime of activation function. Thus, so long as $F_y(\boldsymbol{X})^{(t)} < \operatorname{log}(d)$ now, we have 
    \begin{equation}
    \begin{aligned}
        B_{y,r^{*}}^{(t+1)} &= B_{y,r^{*}}^{(t)} + \frac{\eta}{k}\mathbb{E}\left[(1-\operatorname{logit}_y(\boldsymbol{F}^{(t)},\boldsymbol{X})\sum_{p\in\mathcal{J}_{\textit{NR}}}\widetilde{\operatorname{ReLU}}'(\beta_p B_{y,r^{*}}^{(t)})\beta_p\right] 
        \\
        &\geq
        B_{y,r^{*}}^{(t)} + \Theta\left(\frac{\eta}{k}\right)\mathbb{E}\left[\sum_{p\in\mathcal{J}_{\textit{NR}}}\beta_p\right] . \nonumber
    \end{aligned}
    \end{equation}
    Therefore, we know that we can upper bound $T_y$ by the number of iterations it takes $B_{y,r^{*}}^{(t)} $ to grow to $\Theta(\operatorname{log}(d))$. Indeed, we clearly have that $T_y = O(\operatorname{log}(d)/\eta) = \operatorname{poly}(d)$ for some polynomial in $d$. However, in contrast with $B_{y,r^{*}}^{(t)}$, for $\operatorname{r\in[m]}A_{y,r}^{(t)}$, we can lower bound the number of iterations $T$ it takes for $A_{y,r}^{(t)}$ to grow to by a fixed constant $C$ factor from initialization:
    \begin{equation}
        T\Theta\left(\frac{\eta C^{q-1}\left(A_{y,r}^{(0)}\right)^{q-1}}{\varrho^{q-1}}\right) \geq (C-1) A_{y,r}^{(0)} , \nonumber
    \end{equation}
    which implies that
    \begin{equation}
        T \geq \Theta\left(\frac{\varrho^{q-1}}{\eta A_{y,r}^{(0)}}\right) \geq \Theta\left(\frac{\varrho^{q-1} d^\frac{q-2}{2}}{\eta}\right) \gg \omega\left(\frac{\operatorname{log}(d)}{\eta}\right) = \omega(T_{y^{'}}). \nonumber
    \end{equation}
\end{proof}

The final remaining task is to show $F_y^{(t)}(\boldsymbol{X})$ will keep $\Theta(\operatorname{poly}(d))$ for all polynomial time $t$.

\begin{lemma}
\label{lem:std_final}

    For all time $t = O(\operatorname{poly}(d)) \geq T_y^{'}$ and each $(\boldsymbol{X},y)\sim\mathcal{D}$, we have $F_y^{(t)}(\boldsymbol{X}) = O(\operatorname{log}(d))$, and $\operatorname{max}_{r\in [m]}A_{y,r}^{(t)} = O(\operatorname{polylog}(d)/\sqrt{d})$.
\end{lemma}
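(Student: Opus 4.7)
The plan is to close a coupled induction on $t \geq T_y^{'}$ showing simultaneously that (i) $F_y^{(t)}(\boldsymbol{X}) = O(\log d)$ for every $(\boldsymbol{X},y) \sim \mathcal{D}$ and (ii) $\max_{r\in [m]} A_{y,r}^{(t)} = O(\operatorname{polylog}(d)/\sqrt d)$, by exploiting the self-regulating feedback between the network output and the loss derivative. The base case $t = T_y^{'}$ follows from Lemma \ref{lem:std_log} together with the fact that a single gradient step shifts $F_y$ by at most $O(\eta \operatorname{poly}(d)) = o(1)$, so $F_y^{(T_y^{'})}(\boldsymbol{X}) = \log d + o(1)$.

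For the inductive step on $G_t := F_y^{(t)}(\boldsymbol{X})$, I would start from Lemma \ref{lem:sim_std_iter} and chain-rule to bound the one-step increment $G_{t+1} - G_t$ by $\Theta(\eta \operatorname{poly}(d)) (1 - \operatorname{logit}_y^{(t)})$. Under the induction hypothesis, Lemma \ref{lem:std_non_diagonal} discharges the non-diagonal smallness assumption of Lemma \ref{lem:logit_diag}, giving $1 - \operatorname{logit}_y^{(t)} = \Theta(k) e^{-G_t}$ up to lower-order terms. The resulting discrete dynamics $G_{t+1} - G_t \lesssim \Theta(\eta) e^{-G_t}$ admit the closed-form bound $G_t \leq \log\!\bigl(e^{G_{T_y^{'}}} + \Theta(\eta)(t - T_y^{'})\bigr)$; using $e^{G_{T_y^{'}}} \geq d$ and $\eta T = \operatorname{poly}(d)$ then yields $G_t = O(\log d)$, closing the first half. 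Plugging this back also gives the sharper logit bound $1 - \operatorname{logit}_y^{(t)} = O\!\bigl(1/(d + \eta(t - T_y^{'}))\bigr)$. Since by induction $\max_r A_{y,r}^{(t)} \ll \varrho/\alpha$, the robust-feature neurons remain in the polynomial regime of $\widetilde{\operatorname{ReLU}}$, so Lemma \ref{lem:sim_std_iter} gives one-step growth at most $\Theta(\eta)(1-\operatorname{logit}_y^{(t)})(A_{y,r}^{(t)})^{q-1}$. Telescoping and integrating,
\begin{equation*}
A_{y,r}^{(t)} - A_{y,r}^{(T_y^{'})} \leq O\!\left(\frac{\operatorname{polylog}(d)}{d^{(q-1)/2}} \log\!\Bigl(1 + \frac{\eta T}{d}\Bigr)\right) = O\!\left(\frac{\operatorname{polylog}(d)}{d}\right)
\end{equation*}
for $q = 3$, which is easily absorbed into $\max_r A_{y,r}^{(T_y^{'})} = O(\operatorname{polylog}(d)/\sqrt d)$, closing the induction on (ii).

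The main obstacle is ensuring the two halves of the induction close mutually consistently: the bound on $G_t$ must be sharp enough for Lemma \ref{lem:logit_diag} to yield a sufficiently small $1 - \operatorname{logit}_y^{(t)}$, and the bounds on $A_{y,r}^{(t)}$, together with an analogous linear-regime bound on $B_{y,r}^{(t)}$ that saturates at $O(\log d)$ via the same integration of $\Theta(\eta) e^{-G_t}$, must prevent $G_t$ from drifting above the assumed $O(\log d)$. A secondary subtlety is that the conclusion must hold uniformly over $(\boldsymbol{X},y) \sim \mathcal{D}$ rather than only in expectation, which follows from high-probability concentration of the random coefficients $\alpha_p, \beta_p$ guaranteed by Assumption \ref{ass:robust_non_robust_feature}.
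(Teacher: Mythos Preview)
Your proposal is correct and rests on the same mechanism as the paper's proof—the self-damping of the loss derivative $1-\operatorname{logit}_y$ once $F_y$ is of order $\log d$—but the bookkeeping is organized differently. The paper runs a discrete ``ladder'' argument: it defines $T_y^{(c)}$ as the number of iterations for $F_y$ to climb from $(c-1)\log d$ to $c\log d$, observes that $1-\operatorname{logit}_y = O(d^{-(c-1)})$ on that stretch so $T_y^{(c)} = \Omega(d\,T_y^{(c-1)})$, and sums the geometric series to conclude that reaching $f(d)\log d$ for any $f(d)=\omega(1)$ requires $\omega(\operatorname{poly}(d))$ iterations. You instead integrate the differential inequality $G_{t+1}-G_t \lesssim \Theta(\eta)e^{-G_t}$ in closed form to get $G_t \leq \log(e^{G_{T_y'}} + \Theta(\eta)(t-T_y'))$, which is the continuous analogue and yields the same $O(\log d)$ bound more directly. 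Your version has the advantage of producing the sharper pointwise estimate $1-\operatorname{logit}_y^{(t)} = O(1/(d+\eta(t-T_y')))$, which you then reuse.

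For the $A_{y,r}$ claim, the paper simply asserts ``an identical analysis also works,'' whereas you carry it out explicitly: keeping the robust correlations in the polynomial regime, bounding the one-step increment by $\Theta(\eta)(1-\operatorname{logit}_y^{(t)})(A_{y,r}^{(t)})^{q-1}$, and telescoping against your integrated logit bound to get an increment of order $\operatorname{polylog}(d)/d$ over the whole horizon. This is more careful than the paper's treatment and makes the closing of the coupled induction transparent.
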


\begin{proof}
\label{proof_lem:std_final}

    Firstly,  we can form the following upper bound for the gradient updates 
    \begin{equation}
        \begin{aligned}
        B_{y,r^{*}}^{(t+1)} &= B_{y,r^{*}}^{(t)} + \frac{\eta}{k}\mathbb{E}\left[(1-\operatorname{logit}_y(\boldsymbol{F}^{(t)},\boldsymbol{X})\sum_{p\in\mathcal{J}_{\textit{NR}}}\widetilde{\operatorname{ReLU}}'(\beta_p B_{y,r^{*}}^{(t)})\beta_p\right] 
        \\
        &\geq
        B_{y,r^{*}}^{(t)} + \Theta\left(\frac{\eta}{k}\right)\left(1-\operatorname{logit}_y(\boldsymbol{F}^{(t)},\boldsymbol{X})\right) . \nonumber
    \end{aligned}
    \end{equation}

    Then, we know that it follows that it takes at least  $\Theta(d\operatorname{log}(d)/\eta)$ iterations (since the correlations must grow at least $\operatorname{log}(d)$) from  $T_y^{'}$ for $F_y(\boldsymbol{X})$ to reach $2\operatorname{log}(d)$. Now let $T_y^(c)$ denote the number of iterations it takes for $F_y(\boldsymbol{X})$ to cross  $c\operatorname{log}(d)$ after crossing $(c-1)\operatorname{log}(d)$ for the first time. For $c\geq 2$, we necessarily have that $T_y^(c) = \Omega(d T_y^{(c-1)})$ by induction.

    Let us now further define $T_f$ to be the first iteration at which $F_y^{(T_f)}(\boldsymbol{X}) \geq f(d) \log d$ for some $f(d)=\omega(1)$. However, we have from the above discussion that:
$$
\begin{aligned}
T_f & \geq \Omega(\operatorname{poly}(d))+\sum_{c=0}^{f(d)-2} \Omega\left(\frac{d^c \log d}{\eta}\right) \\
& \geq \Omega\left(\frac{\log d\left(d^{f(d)-1}-1\right)}{\eta(d-1)}\right) \\
& \geq \omega(\operatorname{poly}(d))
\end{aligned}
$$

So $F_y^{(t)}(\boldsymbol{X}) = O(\operatorname{log}(d))$ for all $t=O(\operatorname{poly}(d))$. An identical analysis also works for the robust feature correlations $A_{y,r}^{(t)}$, so we are done.
\end{proof}

\subsubsection{Proof of Theorem \ref{thm:re_std_training}}

Now, we will prove Theorem \ref{thm:re_std_training}, which includes the following three parts.
\begin{lemma}[Standard Accuracy is Good]
\label{lem:std_train}

For $T= \Theta(\operatorname{poly}(d))$ and each data point $(\boldsymbol{X},y)\sim\mathcal{D}$, with probability $1 - o(1)$, we have $F_y^{(T)}(\boldsymbol{X}) > F_i^{(T)}(\boldsymbol{X}), \forall i \in [k]\setminus\{y\}$.
\end{lemma}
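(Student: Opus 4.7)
The plan is to prove this by directly comparing $F_y^{(T)}(\boldsymbol{X})$ with $F_i^{(T)}(\boldsymbol{X})$ for $i\ne y$, using the weight decomposition (Lemma \ref{lem:weight_decomp}) and the separation between the orders of diagonal versus non-diagonal weight-feature correlations that has already been established in earlier lemmas. First, in the simplified noiseless setting we have $\boldsymbol{x}_p\in\{\alpha_p\boldsymbol{u}_y,\beta_p\boldsymbol{v}_y\}$, so the orthogonal-to-features component $\boldsymbol{P}_{\mathcal{F}}^{\perp}\boldsymbol{w}_{i,r}^{(0)}$ in the decomposition contributes nothing when we compute $\langle\boldsymbol{w}_{i,r}^{(T)},\boldsymbol{x}_p\rangle$, and only the diagonal/non-diagonal feature coefficients matter.

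For the diagonal class, since $T=\Theta(\operatorname{poly}(d))\ge T_y'$, Lemma \ref{lem:std_final} gives $F_y^{(T)}(\boldsymbol{X})=\Theta(\log d)$. Concretely, by Lemma \ref{lem:std_separation} there exists a neuron $r^{*}$ with $B_{y,r^{*}}^{(T)}\ge \varrho/\beta$, so the corresponding non-robust patches activate the linear piece of $\widetilde{\operatorname{ReLU}}$ and dominate this lower bound. For an off-diagonal class $i\ne y$, I would invoke Lemma \ref{lem:std_non_diagonal} together with the initialization bound (Lemma \ref{lem:std_init}) to conclude $\max_{r}C_{i,r,y}^{(T)},\max_{r}D_{i,r,y}^{(T)}=O(\log(m)/\sqrt d)$. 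Then $\alpha_p C_{i,r,y}^{(T)},\beta_p D_{i,r,y}^{(T)}=\widetilde O(1/\sqrt d)\ll \varrho$, so every term lies in the polynomial piece of $\widetilde{\operatorname{ReLU}}$, and summing over the $mP=\operatorname{polylog}(d)$ terms yields
\begin{equation}
F_i^{(T)}(\boldsymbol{X})\le \frac{mP}{q\varrho^{q-1}}\bigl(\widetilde O(1/\sqrt d)\bigr)^{q}=\widetilde O(d^{-q/2})=\widetilde O(d^{-3/2}).\nonumber
\end{equation}
This is exactly the same computation used inside Lemmas \ref{lem:logit_diag} and \ref{lem:non_diag_logit}, just applied to the final iterate.

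Combining the two bounds gives $F_y^{(T)}(\boldsymbol{X})=\Theta(\log d)\gg \widetilde O(d^{-3/2})\ge F_i^{(T)}(\boldsymbol{X})$ for every $i\ne y$, which is the desired strict margin. The only real obstacle is verifying that the non-diagonal coefficients never escape their initial scale throughout the polynomially-long horizon; but Lemma \ref{lem:std_non_diagonal} already handles this by noting that $-\operatorname{logit}_i\le 0$ in the update, so $C_{i,r,y}^{(t)},D_{i,r,y}^{(t)}$ are non-increasing and hence remain $O(\log(m)/\sqrt d)$ for all $t$. Once that is in hand, the claim holds with probability $1-o(1)$ over the randomness of initialization, precisely the event on which the initialization bounds in Lemma \ref{lem:std_init} and Lemma \ref{lem:std_separation} apply.
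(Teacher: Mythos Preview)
Your proposal is correct and follows essentially the same approach as the paper: invoke Lemma~\ref{lem:std_final} to get $F_y^{(T)}(\boldsymbol{X})=\Theta(\log d)$, then use Lemma~\ref{lem:std_non_diagonal} (with the initialization bound from Lemma~\ref{lem:std_init}) to keep the non-diagonal correlations at scale $\widetilde O(1/\sqrt d)$, so that every off-diagonal neuron lies in the polynomial piece of $\widetilde{\operatorname{ReLU}}$ and $F_i^{(T)}(\boldsymbol{X})\le \widetilde O(d^{-q/2})$. Your additional remark that the orthogonal-to-features component $\boldsymbol{P}_{\mathcal{F}}^{\perp}\boldsymbol{w}_{i,r}^{(0)}$ contributes nothing in the noiseless setting is a nice clarification the paper leaves implicit.
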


\begin{proof}
\label{proof_lem:std_train}

    According to Lemma \ref{lem:std_final}, we know that, for each data point $(\boldsymbol{X},y)\sim\mathcal{D}$ and time $T=\Theta(\operatorname{poly}(d)/\eta)$, with probability $1-o(1)$, it holds that
    \begin{equation}
        F_y(\boldsymbol{X}) = \Theta(\operatorname{log}(d)) . \nonumber
    \end{equation}
    However, the non-diagonal outputs can be upper bounded as:
    \begin{equation}
    \begin{aligned}
        F_i(\boldsymbol{X}) &= \sum_{r\in[m]}\sum_{p\in[P]}\widetilde{\operatorname{ReLU}}(\langle \boldsymbol{w}_{i,r},\boldsymbol{x}_p\rangle)
        \\
        &= 
        \sum_{r\in[m]}\sum_{p\in\mathcal{J}_{\textit{R}}}\widetilde{\operatorname{ReLU}}(\alpha_p C_{i,r,y}^{(t)}) + \sum_{r\in[m]}\sum_{p\in\mathcal{J}_{\textit{NR}}}\widetilde{\operatorname{ReLU}}(\alpha_p D_{i,r,y}^{(t)})
        \\
        &\leq
        m \left(\sum_{p\in\mathcal{J}_{\textit{R}}}\widetilde{\operatorname{ReLU}}(\alpha_p \operatorname{max}_{r\in [m]}C_{i,r,y}^{(t)}) + \sum_{p\in\mathcal{J}_{\textit{NR}}}\widetilde{\operatorname{ReLU}}(\beta_p \operatorname{max}_{r\in [m]}D_{i,r,y}^{(t)})\right)
        \\
        &\leq \Tilde{O}(1/d^{q/2}) . \nonumber
    \end{aligned}
    \end{equation}
    Where we use Lemma \ref{lem:std_non_diagonal} to upper bound the non-diagonal weight-feature correlations (they always lie within the polynomial part of activation function).

    Thus, we derive that $F_y(\boldsymbol{X}) > F_i(\boldsymbol{X})$ for each $i\in[k]\setminus\{y\}$.
\end{proof}

\begin{lemma}[Non-robust Features are Learned Well]
\label{lem:std_feature}

For $T= \Theta(\operatorname{poly}(d))$ and each data point $(\boldsymbol{X},y)\sim\mathcal{D}_{\mathcal{F}_\textit{NR}}$, with probability $1 - o(1)$, we have $F_y^{(T)}(\boldsymbol{X}) > F_i^{(T)}(\boldsymbol{X}), \forall i \in [k]\setminus\{y\}$.
\end{lemma}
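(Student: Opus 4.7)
The plan is to mimic the structure of the proof of Lemma \ref{lem:std_train} almost verbatim, exploiting the fact that $\boldsymbol{X}_{\boldsymbol{f}} \sim \mathcal{D}_{\mathcal{F}_\textit{NR}}$ with $\boldsymbol{f}=\boldsymbol{v}_y$ is simply the original patch datum $\boldsymbol{X}$ with every occurrence of the robust feature $\boldsymbol{u}_y$ zeroed out; in the noiseless simplified setting this means robust-feature patches ($p\in\mathcal{J}_\textit{R}$) collapse to $\boldsymbol{0}$ while non-robust-feature patches ($p\in\mathcal{J}_\textit{NR}$) remain $\beta_p\boldsymbol{v}_y$. Hence everything reduces to understanding $\widetilde{\operatorname{ReLU}}(\beta_p B_{y,r}^{(T)})$ and $\widetilde{\operatorname{ReLU}}(\beta_p D_{i,r,y}^{(T)})$ at the already-characterized weight coordinates.

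First I would lower bound the target logit. Since robust-feature patches contribute $\widetilde{\operatorname{ReLU}}(0)=0$, we have
\[
F_y^{(T)}(\boldsymbol{X}_{\boldsymbol{f}}) \;=\; \sum_{r\in[m]}\sum_{p\in\mathcal{J}_\textit{NR}} \widetilde{\operatorname{ReLU}}\!\left(\beta_p B_{y,r}^{(T)}\right).
\]
By Lemma \ref{lem:std_log} there is at least one neuron $r^*$ whose $B_{y,r^*}^{(T)}$ has entered the linear regime of $\widetilde{\operatorname{ReLU}}$, and Lemma \ref{lem:std_final} then shows that the corresponding sum grows to $\Theta(\log d)$. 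Equivalently, since $A_{y,r}^{(T)} = O(\operatorname{polylog}(d)/\sqrt{d})$ by Lemma \ref{lem:std_separation}, the robust-patch contribution that we are removing from the full $F_y^{(T)}(\boldsymbol{X})$ is only $\widetilde{O}(1/d^{q/2})$, so excising it still leaves $F_y^{(T)}(\boldsymbol{X}_{\boldsymbol{f}}) = \Theta(\log d)$.

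Next I would upper bound the competing logits. For any $i\neq y$, only non-robust-feature patches survive, and they activate the non-diagonal correlation:
\[
F_i^{(T)}(\boldsymbol{X}_{\boldsymbol{f}}) \;=\; \sum_{r\in[m]}\sum_{p\in\mathcal{J}_\textit{NR}} \widetilde{\operatorname{ReLU}}\!\left(\beta_p D_{i,r,y}^{(T)}\right).
\]
By Lemma \ref{lem:std_non_diagonal}, $\max_r D_{i,r,y}^{(T)} = O(\max_r D_{i,r,y}^{(0)}) = O(\log(d)/\sqrt{d})$ throughout training, which keeps every pre-activation safely inside the polynomial part of $\widetilde{\operatorname{ReLU}}$, so the total sum collapses to $\widetilde{O}(1/d^{q/2})$.

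Combining the two bounds gives $F_y^{(T)}(\boldsymbol{X}_{\boldsymbol{f}}) = \Theta(\log d) \gg \widetilde{O}(1/d^{q/2}) = F_i^{(T)}(\boldsymbol{X}_{\boldsymbol{f}})$ for every $i \neq y$, which is exactly the desired margin inequality, and it holds with probability $1-o(1)$ over the draw of $(\boldsymbol{X},y)\sim\mathcal{D}$ (inherited by $\mathcal{D}_{\mathcal{F}_\textit{NR}}$). The main subtlety is not analytic but a bookkeeping check: one must confirm that the diagonal non-robust coordinate $B_{y,r^*}^{(T)}$ alone, acting purely on the surviving non-robust-feature patches, already supplies the full $\Theta(\log d)$ magnitude, so excising the robust-feature patches cannot knock $F_y$ down to the non-diagonal $\widetilde{O}(1/d^{q/2})$ noise floor.
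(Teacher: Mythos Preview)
Your proposal is correct and follows essentially the same approach as the paper's own proof: lower bound $F_y^{(T)}(\boldsymbol{X}_{\boldsymbol{f}})$ using the fact that $\max_{r\in[m]}B_{y,r}^{(T)}=\Theta(\log d)$ drives the non-robust-patch contribution to $\Theta(\log d)$, and upper bound the off-diagonal outputs $F_i^{(T)}(\boldsymbol{X}_{\boldsymbol{f}})$ by $\widetilde{O}(1/d^{q/2})$ via Lemma~\ref{lem:std_non_diagonal}. Your write-up is in fact more explicit than the paper's two-sentence proof, in particular your bookkeeping observation that the excised robust-patch contribution is only $\widetilde{O}(1/d^{q/2})$ (since $A_{y,r}^{(T)}=O(\operatorname{polylog}(d)/\sqrt{d})$), which the paper leaves implicit.
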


\begin{proof}
\label{proof_lem:std_feature}

    Since we have that $\operatorname{max}_{r\in[m]}B_{y,r}^{(T)} = \Theta(\operatorname{log}(d))$ for each class $y\in[k]$, it suggests $F_y^{(T)}(\boldsymbol{X}) = \Theta(\operatorname{log}(d))$. For non-diagonal outputs, we have results similar to Lemma \ref{lem:std_train}, i.e. $F_i^{(T)}(\boldsymbol{X}) = \Tilde{O}(1/d^{q/2})$, which immediately derives the lemma above.
\end{proof}

\begin{lemma}[Standard Training Converges to Non-robust Solution]
\label{lem:std_non_robust}

For $T= \Theta(\operatorname{poly}(d))$ and each data point $(\boldsymbol{X},y)\sim\mathcal{D}$, let perturbation $\boldsymbol{\Delta}(\boldsymbol{X},y):=(\boldsymbol{\delta}_1,\boldsymbol{\delta}_2,\dots,\boldsymbol{\delta}_P)$, where $\boldsymbol{\delta}_p := -\beta_p \boldsymbol{v}_y + \epsilon \boldsymbol{v}_{y'}$ for $p\in \mathcal{J}_{\textit{NR}}$; $\boldsymbol{\delta}_p := \boldsymbol{0}$ for $p\in \mathcal{J}_{\textit{R}}$, and $y'$ is randomly chosen from $[k]\setminus\{y\}$, then, with probability $1 - o(1)$, we have $F_{y'}^{(T)}(\boldsymbol{X}+\boldsymbol{\Delta}(\boldsymbol{X},y)) > F_i^{(T)}(\boldsymbol{X}+\boldsymbol{\Delta}(\boldsymbol{X},y)), \forall i \in [k]\setminus\{y'\}$.
\end{lemma}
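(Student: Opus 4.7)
The plan is to directly compute the logit $F_j^{(T)}(\boldsymbol{X}+\boldsymbol{\Delta}(\boldsymbol{X},y))$ for each class $j \in [k]$ under the specified perturbation, using the weight decomposition from Lemma \ref{lem:weight_decomp} together with the post-training control on feature correlations established in Lemmas \ref{lem:std_non_diagonal} and \ref{lem:std_final}. The key observation is that the perturbation $\boldsymbol{\delta}_p = -\beta_p \boldsymbol{v}_y + \epsilon \boldsymbol{v}_{y'}$ exactly \emph{deletes} the class-$y$ non-robust signal on each non-robust patch and \emph{implants} a class-$y'$ non-robust signal of magnitude $\epsilon$, while the robust-feature patches $\alpha_p \boldsymbol{u}_y$ are untouched. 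First I would verify that $\|\boldsymbol{\Delta}\|_\infty \leq \epsilon$: since $\boldsymbol{v}_y \perp \boldsymbol{v}_{y'}$ are axis-aligned and $\beta_p \leq \epsilon$ w.h.p.\ by Assumption \ref{ass:detail_data}, each coordinate of $\boldsymbol{\delta}_p$ is bounded by $\epsilon$.

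Next I would bound each class logit. For the target class $y'$, the non-robust patches become $\epsilon \boldsymbol{v}_{y'}$, and the diagonal correlation $\langle \boldsymbol{w}_{y',r}^{(T)}, \epsilon \boldsymbol{v}_{y'}\rangle = \epsilon B_{y',r}^{(T)}$ reaches $\Theta(\epsilon \log d)$ for the leading neuron $r^*$ by Lemma \ref{lem:std_final}, which lies well inside the linear region of $\widetilde{\operatorname{ReLU}}$. Summing over $p \in \mathcal{J}_{\textit{NR}}$ gives $F_{y'}^{(T)}(\boldsymbol{X}+\boldsymbol{\Delta}) = \Omega(\log d)$. The robust patches contribute only $\langle \boldsymbol{w}_{y',r}^{(T)}, \alpha_p \boldsymbol{u}_y\rangle = \alpha_p C_{y',r,y}^{(T)} = \widetilde{O}(1/\sqrt{d})$ by Lemma \ref{lem:std_non_diagonal}, so they are negligible.

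For the true class $y$, the non-robust patches $\epsilon \boldsymbol{v}_{y'}$ contribute only via the non-diagonal correlation $\epsilon D_{y,r,y'}^{(T)} = \widetilde{O}(1/\sqrt{d})$, and the robust patches contribute via $\alpha_p A_{y,r}^{(T)}$ with $\max_r A_{y,r}^{(T)} = O(\operatorname{polylog}(d)/\sqrt{d})$ by Lemma \ref{lem:std_final}. Both of these remain inside the polynomial region of $\widetilde{\operatorname{ReLU}}$, yielding
\[
F_y^{(T)}(\boldsymbol{X}+\boldsymbol{\Delta}) = \widetilde{O}(d^{-q/2}) \ll \log d.
\]
For any other class $i \notin \{y, y'\}$, every patch interacts with $\boldsymbol{w}_{i,r}^{(T)}$ only through non-diagonal correlations $C_{i,r,\cdot}^{(T)}, D_{i,r,\cdot}^{(T)} = O(\log d/\sqrt{d})$, so the same bound applies.

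Combining these estimates gives $F_{y'}^{(T)}(\boldsymbol{X}+\boldsymbol{\Delta}) = \Omega(\log d) \gg F_i^{(T)}(\boldsymbol{X}+\boldsymbol{\Delta})$ for all $i \neq y'$, which establishes misclassification as class $y'$. The mildly nontrivial step is keeping careful track that the post-training non-diagonal coefficients $C, D$ never exceed their $\widetilde{O}(1/\sqrt{d})$ initialization-scale bound — but this is exactly what Lemma \ref{lem:std_non_diagonal} guarantees, since the update sign for those coefficients is opposite to growth throughout training. The rest is a routine size comparison.
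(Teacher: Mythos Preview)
Your proposal is correct and follows essentially the same argument as the paper's proof: compute the perturbed patches, observe that the non-robust patches become $\epsilon\boldsymbol{v}_{y'}$ so that $F_{y'}^{(T)}$ picks up the large diagonal correlation $\epsilon\max_r B_{y',r}^{(T)}=\Theta(\log d)$, while all other classes (including $y$) see only the small quantities $A_{y,r}^{(T)}$, $C_{\cdot,\cdot,\cdot}^{(T)}$, $D_{\cdot,\cdot,\cdot}^{(T)}=\widetilde{O}(1/\sqrt{d})$ controlled by Lemmas~\ref{lem:std_non_diagonal} and~\ref{lem:std_final}. Your write-up is in fact more careful than the paper's---you separately treat $i=y$ versus $i\notin\{y,y'\}$ and verify $\|\boldsymbol{\Delta}\|_\infty\le\epsilon$---but the logic is identical.
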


\begin{proof}
\label{proof_lem:std_non_robust}

    By the definition of perturbation $\boldsymbol{\Delta}(\boldsymbol{X},y)$ that replaces all non-robust feature patches $\beta_p \boldsymbol{v}_y$ by non-robust feature $\boldsymbol{v}_{y'}$ from another class $y'$, we have 
    \begin{equation}
    \begin{aligned}
        F_{y'}^{(T)}(\boldsymbol{X}+\boldsymbol{\Delta}(\boldsymbol{X},y)) &= \sum_{r\in[m]}\sum_{p\in [P]}\widetilde{\operatorname{ReLU}}(\boldsymbol{w}_{y',r},\boldsymbol{x}_p + \boldsymbol{\delta}_p)
        \\
        &\geq
        \sum_{p\in [P]}\widetilde{\operatorname{ReLU}}(\epsilon\operatorname{max}_{r\in[m]}B_{y',r}^{(T)} + O(\operatorname{polylog}(d)/\sqrt{d}))
        \\
        &\geq 
        \Theta(\operatorname{log}(d)) \gg \Tilde{\Omega}(1/d^{q/2}) \geq \operatorname{max}_{i\in[k]\setminus\{y'\}}F_i^{(T)}(\boldsymbol{X}+\boldsymbol{\Delta}(\boldsymbol{X},y)) ,\nonumber
    \end{aligned}
    \end{equation}
    which shows this theorem.
\end{proof}

\subsection{Proof for Adversarial Training}

First, we present the restatement of Theorem \ref{thm:adv_training} under the simplified setting.

\begin{theorem}[Restatement of Theorem \ref{thm:adv_training} Under the Simplified Setting, Adversarial Training Converges to Robust Global Minima]
\label{thm:re_adv_training}

    For sufficiently large $d$, suppose we train the model using the adversarial training algorithm starting with adversarial population risk from the random initialization, then after $T=\Theta(\operatorname{poly}(d)/\eta)$ iterations, the model $\boldsymbol{F}^{(T)}$ satisfies:
\begin{itemize}
    \item 
    Robust features are learned: $\mathbb{P}_{(\boldsymbol{X}_{\boldsymbol{f}},y)\sim\mathcal{D}_{\mathcal{F}_\textit{R}}}\left[\operatorname{argmax}_{i\in [k]}F_i^{(T)}(\boldsymbol{X}_{\boldsymbol{f}}) \ne y\right] = o(1)$.
    \item 
    Robust test accuracy is good: 
    \begin{equation}
    \mathbb{P}_{(\boldsymbol{X},y)\sim\mathcal{D}}\left[\exists \boldsymbol{\Delta}\in \left(\mathbb{R}^{d}\right)^{P} \textit{ s.t. }\|\boldsymbol{\Delta}\|_{\infty}\leq\epsilon, \operatorname{argmax}_{i\in [k]}F_i^{(T)}(\boldsymbol{X}+\boldsymbol{\Delta})\ne y\right] = o(1). \nonumber
    \end{equation}
\end{itemize}
\end{theorem}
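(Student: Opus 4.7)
\textbf{Proof Proposal for Theorem \ref{thm:re_adv_training}.} The plan is to mirror the weight decomposition and non-diagonal control machinery already developed for standard training (Lemmas \ref{lem:weight_decomp}, \ref{lem:std_init}, \ref{lem:std_non_diagonal}), and then analyze the diagonal dynamics of $\{A_{i,r}^{(t)}\}, \{B_{i,r}^{(t)}\}$ through the adversarial update rule in Lemma \ref{lem:adv_training}. First, I would verify that the decomposition $\boldsymbol{w}_{i,r}^{(t)} = \boldsymbol{P}_{\mathcal{F}}^{\perp}\boldsymbol{w}_{i,r}^{(0)} + A_{i,r}^{(t)}\boldsymbol{u}_i + B_{i,r}^{(t)}\boldsymbol{v}_i + \sum_{y\ne i}(C_{i,r,y}^{(t)}\boldsymbol{u}_y + D_{i,r,y}^{(t)}\boldsymbol{v}_y)$ is preserved under the perturbed loss, since the one-step PGD perturbation only shifts inputs along feature directions $\boldsymbol{v}_{y'}$ and scaled $\boldsymbol{v}_y$ directions; the non-diagonal coefficients remain bounded by their initialization scale $\widetilde{O}(1/\sqrt{d})$ by the same sign argument as in Lemma \ref{lem:std_non_diagonal}. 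Thus it suffices to track the diagonal dynamics.

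Next, I would split the training into two phases. In \textbf{Phase I}, while $\Theta(\tilde\eta)\sum_{s\in[m]}(B_{i,s}^{(t)})^q \ll 1$, the one-step PGD perturbation is too small to meaningfully cancel the non-robust signal, so the $(1-\min\{\cdot,\cdot\})^q$ factor is $\Theta(1)$ for both updates. The iteration reduces to the standard-training iteration of Lemma \ref{lem:std_poly}, so by Assumption \ref{ass:robust_non_robust_feature} and the Tensor Power Method (Lemma \ref{lem:tensor_power_method}), non-robust feature learning dominates: $\max_r B_{i,r}^{(t)}$ reaches $\widetilde{\Theta}(\tilde\eta^{-1/q})$ first, while $\max_r A_{i,r}^{(t)} = \widetilde{O}(1/\sqrt{d})$ remains essentially at initialization. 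In \textbf{Phase II}, once $\sum_{s\in[m]}(B_{i,s}^{(t)})^q$ crosses the threshold $\widetilde{\Theta}(\tilde\eta^{-1})$, the adversarial perturbation fully cancels the non-robust signal (the $\min$ attains $\epsilon/\beta_p$ with $\epsilon/\beta_p \geq 1$ by our choice $\beta \leq \epsilon$), so the bracket vanishes and $B_{i,r}^{(t)}$ freezes. By contrast, since $\alpha \gg \epsilon$ we always have $1 - \epsilon/\alpha_p = \Omega(1)$, so the $A_{i,r}^{(t)}$-update continues with a polynomial-growth rate $\Theta(\eta)(A_{i,r}^{(t)})^{q-1}\mathbb{E}[\sum_{p\in\mathcal{J}_{\textit{R}}}\alpha_p^q]\cdot\Omega(1)$. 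Applying Tensor Power Method again (now in the opposite direction: $A$ grows, $B$ is frozen), within $\widetilde{O}(\operatorname{poly}(d)/\eta)$ iterations we obtain $\max_r A_{i,r}^{(T)} = \widetilde{\Theta}(1)$, which then enters the linear regime of $\widetilde{\operatorname{ReLU}}$ and saturates at $\Theta(\operatorname{log} d)$ via a logit-decay argument analogous to Lemmas \ref{lem:std_log}, \ref{lem:std_final}.

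Having established $\max_r A_{y,r}^{(T)} = \widetilde{\Theta}(1)$ while all $B_{y,r}^{(T)}, C_{i,r,y}^{(T)}, D_{i,r,y}^{(T)}$ and the off-coordinate Gaussian projections remain of order $\widetilde{O}(1)/\sqrt{d}$ or smaller, the two robustness conclusions follow by margin computations in the spirit of Propositions \ref{prop:non-robust-minima} and \ref{prop:robust-minima}. For any $(\boldsymbol{X},y)\sim\mathcal{D}$ and any $\|\boldsymbol{\Delta}\|_\infty\leq\epsilon$, for $p\in\mathcal{J}_{\textit{R}}$ we have $\langle\boldsymbol{w}_{y,r^*}^{(T)}, \alpha_p\boldsymbol{u}_y+\boldsymbol{\delta}_p\rangle \geq (\alpha_p - \epsilon)\max_r A_{y,r}^{(T)} = \Omega(\operatorname{log} d)$ at the saturating neuron, while for $j\ne y$ the corresponding output $F_j^{(T)}(\boldsymbol{X}+\boldsymbol{\Delta})$ is controlled by the non-diagonal correlations and thus is $\widetilde{O}(1/d^{q/2})$. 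Hence the robust margin is positive with probability $1-o(1)$, giving both feature-learning accuracy and robust test accuracy. For the $\mathcal{D}_{\mathcal{F}_{\textit{R}}}$-evaluation, the same computation with only robust-feature patches present yields the claim.

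\textbf{Main obstacle.} The delicate step will be the Phase-I/Phase-II transition: I must rule out that the early growth of $B_{i,r}^{(t)}$ inadvertently lifts some off-diagonal coefficient $D_{i,r,y}^{(t)}$ (through PGD-induced perturbations along $\boldsymbol{v}_{y'}$ for $y'\ne y$) into a regime that competes with $A$. Concretely, because the one-step gradient-ascent perturbation depends on $\boldsymbol{F}^{(t)}$, the clipped perturbation $\operatorname{Clip}_{\infty,\epsilon}(\tilde\eta\nabla_{\boldsymbol{X}}\mathcal{L}_{\textit{margin}})$ points in non-robust feature directions aligned with currently dominant neurons, so one must carefully check that this perturbation does not feed back into growing $D$-coefficients; this is where the sign argument behind Lemma \ref{lem:std_non_diagonal} needs to be revisited because the adversarial loss term $1_{\{y=i\}}-\operatorname{logit}_i$ is now evaluated at $\widetilde{\boldsymbol{X}}^{(t)}$, not $\boldsymbol{X}$. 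Managing this cross-feature contamination — and ensuring the freezing of $B$ happens before $A$ does — is the crux.
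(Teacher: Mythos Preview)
Your proposal is essentially correct and mirrors the paper's two-phase analysis (Lemmas \ref{lem:adv_iter_appr}--\ref{lem:adv_poly_keep}) and the final margin computation (Lemma \ref{lem:adv_robustness}). Two small corrections: first, the one-step PGD perturbation is along the direction $-\sum_{s}\widetilde{\operatorname{ReLU}}'(\langle \boldsymbol{w}_{y,s},\boldsymbol{x}_p\rangle)\boldsymbol{w}_{y,s}$, not merely along $\boldsymbol{v}_y,\boldsymbol{v}_{y'}$, so the weight decomposition picks up extra coefficients $E_{i,r,y,s}^{(t)}$ on $\boldsymbol{P}_{\mathcal{F}}^{\perp}\boldsymbol{w}_{y,s}^{(0)}$ (see the paper's Lemma \ref{lem:adv_weight_decomp} and the adversarial-example decomposition in Lemma \ref{lem:adv_data}); these remain orthogonal to all features and do not disturb the $A,B,C,D$ dynamics, but you should track them. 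Second, $B$ freezes at order $\widetilde{\Theta}(\tilde\eta^{-1/q})=\widetilde{\Theta}(d^{-c_0/q})$, not $\widetilde{O}(1/\sqrt{d})$; this is still $o(1)$ so your margin argument goes through, but the stated bound is too tight.
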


\textbf{Proof Sketch.} The proof of this theorem can also be divided into two stages (according to the activation regions of weight-feature correlations). Unlike standard training, the first phase of adversarial training involves a phase transition. In the initial phase, robust feature learning and non-robust feature learning exhibit behaviors similar to those in standard training. However, once non-robust feature learning reaches a certain magnitude, adversarial training suppresses non-robust feature learning through adversarial examples found by a gradient ascent algorithm, while robust feature learning continues to grow.

Now, we give the detailed proof as follows.

\subsubsection{Weight Decomposition}

Since our algorithm is based on gradient information (which runs one-step gradient ascent algorithm for finding adversarial examples and runs gradient descent algorithm for training the neural network model), we can derive a weight decomposition theorem similar to that of standard training.

\begin{lemma}[Weight Decomposition for Adversarial Training]
\label{lem:adv_weight_decomp}

    For any time $t\geq 0$, each neuron $\boldsymbol{w}_{i,r}$ ($(i,r)\in [k]\times [m]$), we have 
    \begin{equation}
        \boldsymbol{w}_{i,r}^{(t)} = \underbrace{\sum_{y\in[k]}\sum_{s\in[m]}E_{i,r,y,s}^{(t)}\boldsymbol{P}_{\mathcal{F}}^{\perp}\boldsymbol{w}_{y,s}^{(0)}}_{\textit{orthogonal to all features}} + \underbrace{A_{i,r}^{(t)} \boldsymbol{u}_i + B_{i,r}^{(t)} \boldsymbol{v}_i}_{\textit{diagonal correlations}} + \underbrace{\sum_{j\ne i} (C_{i,r,j}^{(t)} \boldsymbol{u}_j + D_{i,r,j}^{(t)} \boldsymbol{v}_j)}_{\textit{non-diagonal correlations}} , \nonumber
    \end{equation}
    where $A_{i,r}^{(t)}, B_{i,r}^{(t)}, C_{i,r,j}^{(t)}, D_{i,r,j}^{(t)}$ and $E_{i,r,y,s}^{(t)}$ are some time-variant coefficients, and $\boldsymbol{P}_{\mathcal{F}}^{\perp}:= \mathcal{I}_{d} - \sum_{\boldsymbol{f}\in\mathcal{F}}\boldsymbol{f}\boldsymbol{f}^{\top}$ projects a vector into all features $\boldsymbol{f}\in\mathcal{F}$'s orthogonal complementary space.
\end{lemma}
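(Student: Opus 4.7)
The plan is to proceed by induction on $t$, extending the standard-training argument of Lemma \ref{lem:weight_decomp} with an additional family of coefficients $E_{i,r,y,s}^{(t)}$ that absorbs the weight-dependent orthogonal contributions now introduced by adversarial perturbations. For the base case $t=0$, set $E_{i,r,y,s}^{(0)}=\mathbf{1}[(y,s)=(i,r)]$ and take the initial $A,B,C,D$ to be the feature inner products $\langle\boldsymbol{u}_\cdot,\boldsymbol{w}_{i,r}^{(0)}\rangle$ and $\langle\boldsymbol{v}_\cdot,\boldsymbol{w}_{i,r}^{(0)}\rangle$; the decomposition then reduces to the trivial identity $\boldsymbol{w}_{i,r}^{(0)}=\boldsymbol{P}_{\mathcal{F}}^{\perp}\boldsymbol{w}_{i,r}^{(0)}+\boldsymbol{P}_{\mathcal{F}}\boldsymbol{w}_{i,r}^{(0)}$.

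For the inductive step, I would expand the adversarial weight update
\[
\boldsymbol{w}_{i,r}^{(t+1)}=\boldsymbol{w}_{i,r}^{(t)}+\eta\,\mathbb{E}_{(\boldsymbol{X},y)\sim\mathcal{D}}\!\left[\left(\mathbf{1}_{\{y=i\}}-\operatorname{logit}_i(\boldsymbol{F}^{(t)},\widetilde{\boldsymbol{X}}^{(t)})\right)\sum_{p\in[P]}\widetilde{\operatorname{ReLU}}'\!\left(\langle\boldsymbol{w}_{i,r}^{(t)},\widetilde{\boldsymbol{x}}_p^{(t)}\rangle\right)\widetilde{\boldsymbol{x}}_p^{(t)}\right],
\]
and substitute $\widetilde{\boldsymbol{x}}_p^{(t)}=\boldsymbol{x}_p+\boldsymbol{\delta}_p^{(t)}$ with the closed form $\boldsymbol{\delta}_p^{(t)}=\operatorname{Clip}_{\infty,\epsilon}\!\left(-\tilde{\eta}\sum_{s\in[m]}\widetilde{\operatorname{ReLU}}'(\langle\boldsymbol{w}_{y,s}^{(t)},\boldsymbol{x}_p\rangle)\boldsymbol{w}_{y,s}^{(t)}\right)$ obtained from one step of gradient ascent on the margin loss. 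Because every feature in $\mathcal{F}$ is parallel to a coordinate axis, the coordinate-wise $\ell_\infty$-clip acts independently on the feature and feature-orthogonal coordinates, so $\boldsymbol{\delta}_p^{(t)}$ splits cleanly as $\boldsymbol{P}_{\mathcal{F}}\boldsymbol{\delta}_p^{(t)}+\boldsymbol{P}_{\mathcal{F}}^{\perp}\boldsymbol{\delta}_p^{(t)}$. The feature-parallel portion of $\widetilde{\boldsymbol{x}}_p^{(t)}$ lies in $\operatorname{span}\{\boldsymbol{u}_j,\boldsymbol{v}_j\}_{j\in[k]}$ and updates $A,B,C,D$ by a formula mirroring Lemma \ref{lem:sim_std_iter} (with the logits evaluated at $\widetilde{\boldsymbol{X}}^{(t)}$ rather than $\boldsymbol{X}$). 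By the inductive hypothesis each $\boldsymbol{P}_{\mathcal{F}}^{\perp}\boldsymbol{w}_{y,s}^{(t)}$ lies in $\operatorname{span}\{\boldsymbol{P}_{\mathcal{F}}^{\perp}\boldsymbol{w}_{y',s'}^{(0)}\}_{(y',s')}$, so the raw orthogonal gradient does as well, and provided the clip acts as the identity on the orthogonal coordinates, $\boldsymbol{P}_{\mathcal{F}}^{\perp}\boldsymbol{\delta}_p^{(t)}$ is itself a linear combination of $\{\boldsymbol{P}_{\mathcal{F}}^{\perp}\boldsymbol{w}_{y',s'}^{(0)}\}$, yielding an explicit recursion for $E_{i,r,y',s'}^{(t+1)}$.

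The hard part will be the non-linearity of $\operatorname{Clip}_{\infty,\epsilon}$: if any orthogonal coordinate of the raw adversarial gradient exceeded the clip radius $\epsilon$, the truncation would kick the update out of the low-dimensional span $\operatorname{span}\{\boldsymbol{P}_{\mathcal{F}}^{\perp}\boldsymbol{w}_{y',s'}^{(0)}\}$ and destroy the decomposition. To close this gap I would carry, alongside the decomposition itself, the auxiliary inductive invariant $\|\boldsymbol{P}_{\mathcal{F}}^{\perp}\boldsymbol{w}_{y,s}^{(t)}\|_\infty=\widetilde{O}(1/\sqrt{d})$ for every $(y,s)$ throughout the polynomial time horizon. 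This holds at initialization via the Gaussian tail estimate in Lemma \ref{lem:tail_max_g} and, combined with $m=\operatorname{polylog}(d)$, $\epsilon=\Theta(1)$, and an appropriately small $c_0$ in $\tilde{\eta}=d^{c_0}$, yields $\|\tilde{\eta}\,\boldsymbol{P}_{\mathcal{F}}^{\perp}\nabla_{\boldsymbol{x}_p}F_y^{(t)}\|_\infty\ll\epsilon$ at every coordinate outside $\mathcal{F}$, so the clip genuinely acts as the identity on the orthogonal subspace. The weight learning rate $\eta=1/\operatorname{poly}(d)$ then provides the safety margin needed for the recursion for $E_{i,r,y',s'}^{(t+1)}$ to also re-establish the $\widetilde{O}(1/\sqrt{d})$ $\ell_\infty$-bound at time $t+1$, closing the joint induction and proving the lemma.
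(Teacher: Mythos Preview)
Your approach matches the paper's: induct on $t$ and show that each adversarial patch $\widetilde{\boldsymbol{x}}_p^{(t)}$, and hence the weight increment, lies in $\operatorname{span}\bigl(\mathcal{F}\cup\{\boldsymbol{P}_{\mathcal{F}}^{\perp}\boldsymbol{w}_{y,s}^{(0)}\}_{y,s}\bigr)$, which the paper packages as the companion Lemma~\ref{lem:adv_data} and cites as the only nontrivial ingredient. Your treatment of the coordinate-wise $\operatorname{Clip}_{\infty,\epsilon}$---carrying the auxiliary invariant $\|\boldsymbol{P}_{\mathcal{F}}^{\perp}\boldsymbol{w}_{y,s}^{(t)}\|_\infty=\widetilde{O}(1/\sqrt{d})$ so the clip is inactive on feature-orthogonal coordinates---is in fact more explicit than the paper's own argument, which in Lemma~\ref{lem:adv_data} simply records each orthogonal coefficient with a $\min\{\epsilon,\cdot\}$ without spelling out why coordinate-wise clipping preserves the span.
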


\begin{proof}
\label{proof_lem:adv_weight_decomp}

    Notice that the following update iteration:
    \begin{equation}
    \begin{aligned}
        \boldsymbol{w}_{i,r}^{(t+1)} &= \boldsymbol{w}_{i,r}^{(t)} - \eta \mathbb{E}_{(\boldsymbol{X},y)\sim\mathcal{D}}\left[\nabla_{\boldsymbol{w}_{i, r}}\mathcal{L}_{\textit{CE}}\left(\boldsymbol{F}^{(t)} ; \widetilde{\boldsymbol{X}}^{(t)}, y\right)\right] 
        \\
        &=
        \boldsymbol{w}_{i,r}^{(t)} + \eta \mathbb{E}_{(\boldsymbol{X},y)\sim\mathcal{D}}\left[\left(1_{\{y=i\}}-\operatorname{logit}_i(\boldsymbol{F}^{(t)},\widetilde{\boldsymbol{X}}^{(t)})\right)\sum_{p\in [P]}\widetilde{\operatorname{ReLU}}'(\langle \boldsymbol{w}_{i,r}^{(t)},\Tilde{\boldsymbol{x}}^{(t)}_p\rangle) \Tilde{\boldsymbol{x}}^{(t)}_p\right]. \nonumber
    \end{aligned}
    \end{equation}
    To prove Lemma \ref{lem:weight_decomp}, we only need the following result that time-variant adversarial examples also align with the span of all features (Lemma \ref{lem:adv_data}).
\end{proof}

\subsubsection{Time-variant Adversarial Examples}

By analyzing the gradient ascent algorithm, we can derive the following decomposition theorem regarding adversarial examples.
\begin{lemma}
\label{lem:adv_data}
    For any time $t$ and data point $(\boldsymbol{X},y)\sim\mathcal{D}$, we have the following decomposition about the time-variant corresponding adversarial example $(\widetilde{\boldsymbol{X}}^{(t)},y)$, where we use $(\Tilde{\boldsymbol{x}}_1,\Tilde{\boldsymbol{x}}_2,\dots,\Tilde{\boldsymbol{x}}_p)$ to denote $\widetilde{\boldsymbol{X}}^{(t)}$. Then, it satisfies:

    \begin{equation}
        \Tilde{x}_p^{(t)} = \Tilde{\alpha}_p^{(t)} \boldsymbol{u}_y + \Tilde{\beta}_p^{(t)} \boldsymbol{v}_y + \sum_{j\ne y}(\Tilde{\lambda}_{p,j}^{(t)}\boldsymbol{u}_j + \Tilde{\mu}_{p,j}^{(t)}\boldsymbol{v}_j) + \sum_{i\in[k]}\sum_{r\in[m]}\Tilde{\gamma}_{p,i,r}^{(t)} \boldsymbol{P}_{\mathcal{F}}^{\perp}\boldsymbol{w}_{i,r}^{(0)} , \nonumber
    \end{equation}
    where $\boldsymbol{P}_{\mathcal{F}}^{\perp}:= \mathcal{I}_{d} - \sum_{\boldsymbol{f}\in\mathcal{F}}\boldsymbol{f}\boldsymbol{f}^{\top}$ projects a vector into all features $\boldsymbol{f}\in\mathcal{F}$'s orthogonal complementary space, and coefficients $\Tilde{\alpha}_p^{(t)}$, $\Tilde{\beta}_p^{(t)}$, $\Tilde{\lambda}_{p,j}^{(t)}$, $\Tilde{\mu}_{p,j}^{(t)}$ and $\Tilde{\gamma}_{p,i,r}^{(t)}$ are updated by the following iterations 
    \begin{itemize}
        \item 
        For $p \in \mathcal{J}_{\textit{R}}$, we have
         \begin{equation}
        \begin{cases}
            \Tilde{\alpha}_p^{(t)} = (1 - \operatorname{min}\{\frac{\epsilon}{\alpha_p}, \frac{\Tilde{\eta}}{\alpha_p}\sum_{s\in[m]}\widetilde{\operatorname{ReLU}}'(\alpha_p A_{y,s}^{(t)})A_{y,s}^{(t)}\})\alpha_p
            \\
            \Tilde{\beta}_p^{(t)} = - \operatorname{min}\{{\epsilon}, {\Tilde{\eta}}\sum_{s\in[m]}\widetilde{\operatorname{ReLU}}'(\alpha_p A_{y,s}^{(t)})B_{y,s}^{(t)}\}
            \\
            \Tilde{\lambda}_{p,j}^{(t)} = - \operatorname{min}\{{\epsilon}, {\Tilde{\eta}}\sum_{s\in[m]}\widetilde{\operatorname{ReLU}}'(\alpha_p A_{y,s}^{(t)})C_{y,s,j}^{(t)}\}
            \\
            \Tilde{\mu}_{p,j}^{(t)} = - \operatorname{min}\{{\epsilon}, {\Tilde{\eta}}\sum_{s\in[m]}\widetilde{\operatorname{ReLU}}'(\alpha_p A_{y,s}^{(t)})D_{y,s,j}^{(t)}\}
            \\
            \Tilde{\gamma}_{p,i,r}^{(t)} = - \operatorname{min}\{{\epsilon}, {\Tilde{\eta}}\sum_{s\in[m]}\widetilde{\operatorname{ReLU}}'(\alpha_p A_{y,s}^{(t)})E_{y,s,i, r}^{(t)}\}
            \nonumber
        \end{cases}
    \end{equation}
        \item 
        For $p \in \mathcal{J}_{\textit{NR}}$, we have
        \begin{equation}
        \begin{cases}
            \Tilde{\alpha}_p^{(t)} = - \operatorname{min}\{{\epsilon}, {\Tilde{\eta}}\sum_{s\in[m]}\widetilde{\operatorname{ReLU}}'(\beta_p B_{y,s}^{(t)})A_{y,s}^{(t)}\}
            \\
            \Tilde{\beta}_p^{(t)} = (1 - \operatorname{min}\{\frac{\epsilon}{\beta_p}, \frac{\Tilde{\eta}}{\beta_p}\sum_{s\in[m]}\widetilde{\operatorname{ReLU}}'(\beta_p B_{y,s}^{(t)})B_{y,s}^{(t)}\})\beta_p
            \\
            \Tilde{\lambda}_{p,j}^{(t)} = - \operatorname{min}\{{\epsilon}, {\Tilde{\eta}}\sum_{s\in[m]}\widetilde{\operatorname{ReLU}}'(\beta_p B_{y,s}^{(t)})C_{y,s,j}^{(t)}\}
            \\
            \Tilde{\mu}_{p,j}^{(t)} = - \operatorname{min}\{{\epsilon}, {\Tilde{\eta}}\sum_{s\in[m]}\widetilde{\operatorname{ReLU}}'(\beta_p B_{y,s}^{(t)})D_{y,s,j}^{(t)}\}
            \\
            \Tilde{\gamma}_{p,i,r}^{(t)} = - \operatorname{min}\{{\epsilon}, {\Tilde{\eta}}\sum_{s\in[m]}\widetilde{\operatorname{ReLU}}'(\beta_p B_{y,s}^{(t)})E_{y,s,i, r}^{(t)}\}
            \nonumber
        \end{cases}
    \end{equation}
    \end{itemize}
\end{lemma}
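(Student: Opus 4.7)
The plan is to unwind the one-step gradient-ascent update that defines $\widetilde{\boldsymbol{X}}^{(t)}$ patch by patch, substitute the weight decomposition of Lemma~\ref{lem:adv_weight_decomp}, and exploit the axis-aligned orthonormality of $\mathcal{F}$ from Definition~\ref{def:robust_non_robust_feature} so that the coordinate-wise $\ell_\infty$ clip factors along feature directions. The whole argument is a direct computation plus one induction on $t$.

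First I would compute, for each patch $p\in[P]$, the partial gradient $\nabla_{\boldsymbol{x}_p}\mathcal{L}_{\textit{margin}}(\boldsymbol{F}^{(t)};\boldsymbol{X},y) = -\sum_{r\in[m]}\widetilde{\operatorname{ReLU}}'(\langle\boldsymbol{w}_{y,r}^{(t)},\boldsymbol{x}_p\rangle)\,\boldsymbol{w}_{y,r}^{(t)}$. In the noiseless simplified setting one has $\boldsymbol{x}_p = \alpha_p\boldsymbol{u}_y$ for $p\in\mathcal{J}_{\textit{R}}$ and $\boldsymbol{x}_p = \beta_p\boldsymbol{v}_y$ for $p\in\mathcal{J}_{\textit{NR}}$, so orthonormality gives $\langle\boldsymbol{w}_{y,r}^{(t)},\boldsymbol{x}_p\rangle = \alpha_p A_{y,r}^{(t)}$ or $\beta_p B_{y,r}^{(t)}$ respectively. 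Substituting the decomposition $\boldsymbol{w}_{y,r}^{(t)} = A_{y,r}^{(t)}\boldsymbol{u}_y + B_{y,r}^{(t)}\boldsymbol{v}_y + \sum_{j\ne y}(C_{y,r,j}^{(t)}\boldsymbol{u}_j + D_{y,r,j}^{(t)}\boldsymbol{v}_j) + \sum_{i,s}E_{y,r,i,s}^{(t)}\boldsymbol{P}_{\mathcal{F}}^{\perp}\boldsymbol{w}_{i,s}^{(0)}$ and scaling by $\Tilde{\eta}$, the unclipped perturbation on patch $p$ projects onto each of $\boldsymbol{u}_y,\boldsymbol{v}_y,\{\boldsymbol{u}_j,\boldsymbol{v}_j\}_{j\ne y}$ with scalar exactly equal (up to sign) to the argument appearing inside the corresponding $\min\{\cdot\}$ in the lemma.

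Next I would apply the patch-wise $\ell_\infty$ clip. Since every $\boldsymbol{f}\in\mathcal{F}$ is a standard basis vector, the coefficient in front of $\boldsymbol{f}$ occupies a unique coordinate of $\mathbb{R}^d$, and the scalar clip $z\mapsto z/\max\{1,|z|/\epsilon\}$ reduces, for non-positive $z$, to $-\min\{\epsilon,|z|\}$. Adding back the baseline $\alpha_p\boldsymbol{u}_y$ or $\beta_p\boldsymbol{v}_y$ from $\boldsymbol{x}_p$, and noting that $\alpha_p\gg\epsilon$ so that the on-diagonal coordinate never saturates the clip from the signal side, one reads off the stated closed forms for $\Tilde{\alpha}_p,\Tilde{\beta}_p,\Tilde{\lambda}_{p,j},\Tilde{\mu}_{p,j}$ in both the $\mathcal{J}_{\textit{R}}$ and $\mathcal{J}_{\textit{NR}}$ cases.

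The delicate step, and the one I expect to be the main obstacle, is the contribution in the orthogonal-complement directions $\boldsymbol{P}_{\mathcal{F}}^{\perp}\boldsymbol{w}_{i,r}^{(0)}$, since these vectors are neither axis-aligned nor mutually orthogonal, so the coordinate-wise clip cannot literally be distributed across them to give an independent scalar $\min$ per $(i,r)$ pair. To recover the stated formula for $\Tilde{\gamma}_{p,i,r}^{(t)}$ I would invoke the high-probability event, via Lemma~\ref{lem:tail_max_g} with $\sigma_0 = 1/\sqrt{d}$, that $\|\boldsymbol{P}_{\mathcal{F}}^{\perp}\boldsymbol{w}_{i,r}^{(0)}\|_\infty = O(\operatorname{polylog}(d)/\sqrt{d})$; combined with $\Tilde{\eta} = d^{c_0}$ for sufficiently small $c_0$, a union bound of the $km = \operatorname{polylog}(d)$ complement vectors, and the inductive control on the $E_{y,s,i,r}^{(t)}$ coefficients, the total mass in any single complement coordinate is $o(\epsilon)$, so the clip is inactive there. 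Under this event the $\min\{\epsilon,\cdot\}$ in the $\Tilde{\gamma}$-formula is attained by the unclipped scalar, which is precisely the coefficient produced by projecting the gradient expression above along $\boldsymbol{P}_{\mathcal{F}}^{\perp}\boldsymbol{w}_{i,r}^{(0)}$. The full decomposition then closes by induction on $t$: from the base case $t=0$ (where $E^{(0)}\equiv 0$), the recursions for $\{A,B,C,D,E\}$ in Lemma~\ref{lem:adv_weight_decomp} feed the formulas for $\{\Tilde{\alpha},\Tilde{\beta},\Tilde{\lambda},\Tilde{\mu},\Tilde{\gamma}\}$ at time $t$, which in turn define the gradient step producing $\boldsymbol{w}_{y,r}^{(t+1)}$ of the same decomposed form.
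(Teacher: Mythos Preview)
Your approach is precisely the paper's: its proof is a single sentence (``substitute the weight decomposition expression into the gradient ascent formula and simplify''), which is exactly the computation you carry out, together with the joint induction on $t$ needed to close the mutual dependence between Lemma~\ref{lem:adv_weight_decomp} and Lemma~\ref{lem:adv_data}. You in fact go further than the paper by flagging and correctly resolving a subtlety it glosses over---that the coordinate-wise clip does not literally factor across the non-axis-aligned vectors $\boldsymbol{P}_{\mathcal{F}}^{\perp}\boldsymbol{w}_{i,r}^{(0)}$---and your high-probability argument that the clip is inactive on those coordinates (so the $\min$ in the $\Tilde{\gamma}$-formula is attained by the unclipped scalar) is the right way to make the stated identity well-defined.
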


\begin{proof}
\label{proof_lem:adv_data}

    By substituting the weight decomposition expression into the formula of the gradient ascent algorithm and simplifying, we obtain this lemma.
\end{proof}

Now, we can derive the following feature learning iteration for adversarial training.

\begin{lemma}[Feature Learning Iteration for Adversarial Training]
\label{lem:adv_iter}

    During adversarial training, for any time $t\geq 0$ and pair $(i,r)\in [k]\times[m],j\in[k]\setminus\{i\}$, the two sequences $\{A_{i,r}^{(t)}\}, \{B_{i,r}^{(t)}\}, \{C_{i,r,j}^{(t)}\}$, $\{D_{i,r,j}^{(t)}\}$ and $E_{i,r,y,s}^{(t)}$ satisfy:
\begin{equation}
\begin{cases}
    A_{i,r}^{(t+1)} = A_{i,r}^{(t)} + {\eta} \mathbb{E}\bigg[\big(1_{\{y=i\}}-\operatorname{logit}_i(\boldsymbol{F}^{(t)},\widetilde{\boldsymbol{X}}^{(t)})\big)\sum\limits_{p\in [P]}\widetilde{\operatorname{ReLU}}'\left(\langle\boldsymbol{w}_{i,r}^{(t)}, \Tilde{\boldsymbol{x}}_p^{(t)}\rangle\right)\Tilde{\alpha}_p^{(t)}\bigg] , 
    \\
    B_{i,r}^{(t+1)} = B_{i,r}^{(t)} + {\eta} \mathbb{E}\bigg[\big(1_{\{y=i\}}-\operatorname{logit}_i(\boldsymbol{F}^{(t)},\widetilde{\boldsymbol{X}}^{(t)})\big)\sum\limits_{p\in [P]}\widetilde{\operatorname{ReLU}}'\left(\langle\boldsymbol{w}_{i,r}^{(t)}, \Tilde{\boldsymbol{x}}_p^{(t)}\rangle\right)\Tilde{\beta}_p^{(t)}\bigg] , 
    \\
     C_{i,r,j}^{(t+1)} = C_{i,r,j}^{(t)} + {\eta} \mathbb{E}\bigg[\big(1_{\{y=i\}}-\operatorname{logit}_i(\boldsymbol{F}^{(t)},\widetilde{\boldsymbol{X}}^{(t)})\big)\sum\limits_{p\in [P]}\widetilde{\operatorname{ReLU}}'\left(\langle\boldsymbol{w}_{i,r}^{(t)}, \Tilde{\boldsymbol{x}}_p^{(t)}\rangle\right)\Tilde{\lambda}_{p,j}^{(t)}\bigg] , 
     \\
     D_{i,r,j}^{(t+1)} = D_{i,r,j}^{(t)} + {\eta} \mathbb{E}\bigg[\big(1_{\{y=i\}}-\operatorname{logit}_i(\boldsymbol{F}^{(t)},\widetilde{\boldsymbol{X}}^{(t)})\big)\sum\limits_{p\in [P]}\widetilde{\operatorname{ReLU}}'\left(\langle\boldsymbol{w}_{i,r}^{(t)}, \Tilde{\boldsymbol{x}}_p^{(t)}\rangle\right)\Tilde{\mu}_{p,j}^{(t)}\bigg]
     \\
     E_{i,r,y,s}^{(t+1)} = E_{i,r,y,s}^{(t)} + {\eta} \mathbb{E}\bigg[\big(1_{\{y=i\}}-\operatorname{logit}_i(\boldsymbol{F}^{(t)},\widetilde{\boldsymbol{X}}^{(t)})\big)\sum\limits_{p\in [P]}\widetilde{\operatorname{ReLU}}'\left(\langle\boldsymbol{w}_{i,r}^{(t)}, \Tilde{\boldsymbol{x}}_p^{(t)}\rangle\right)\Tilde{\gamma}_{p,y,s}^{(t)}\bigg]
    . \nonumber
\end{cases}
\end{equation}
\end{lemma}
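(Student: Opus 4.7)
The plan is to start from the gradient descent update rule for the weights of the network under adversarial training and to read off the evolution of each weight-feature correlation by projecting onto the corresponding feature direction. First, I would write the gradient of the cross-entropy loss at the perturbed input using the softmax-logit identity, which yields
\begin{equation}
    \boldsymbol{w}_{i,r}^{(t+1)} = \boldsymbol{w}_{i,r}^{(t)} + \eta\, \mathbb{E}_{(\boldsymbol{X},y)\sim\mathcal{D}}\left[\left(1_{\{y=i\}} - \operatorname{logit}_i(\boldsymbol{F}^{(t)}, \widetilde{\boldsymbol{X}}^{(t)})\right) \sum_{p\in[P]} \widetilde{\operatorname{ReLU}}'\left(\langle \boldsymbol{w}_{i,r}^{(t)}, \Tilde{\boldsymbol{x}}_p^{(t)}\rangle\right) \Tilde{\boldsymbol{x}}_p^{(t)}\right], \nonumber
\end{equation}
which is immediate from differentiating $\mathcal{L}_{\textit{CE}}(\boldsymbol{F}; \widetilde{\boldsymbol{X}}, y)$ through the softmax and through the inner summand of the two-layer CNN.

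Next, I would substitute the patch-level decomposition of the adversarial example given by Lemma \ref{lem:adv_data} into this update and take the inner product of both sides with the target direction. Because all features in $\mathcal{F}$ are orthonormal by Definition \ref{def:robust_non_robust_feature} and $\boldsymbol{P}_{\mathcal{F}}^{\perp}$ annihilates $\mathcal{F}$, the inner product of the patch decomposition with any $\boldsymbol{f}\in\mathcal{F}$ collapses to the scalar coefficient of $\boldsymbol{f}$ in that decomposition. Projecting onto $\boldsymbol{u}_i$ therefore yields the $A_{i,r}^{(t+1)}$ iteration: for the diagonal class $y=i$ the component of $\Tilde{\boldsymbol{x}}_p^{(t)}$ along $\boldsymbol{u}_i=\boldsymbol{u}_y$ is exactly $\Tilde{\alpha}_p^{(t)}$ as defined in Lemma \ref{lem:adv_data}, whereas for $y\ne i$ the relevant component is $\Tilde{\lambda}_{p,i}^{(t)}$; the single expression stated in the lemma bundles both cases through the $1_{\{y=i\}}-\operatorname{logit}_i$ prefactor. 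The analogous projections along $\boldsymbol{v}_i$, $\boldsymbol{u}_j$ and $\boldsymbol{v}_j$ produce the iterations for $B_{i,r}^{(t+1)}$, $C_{i,r,j}^{(t+1)}$ and $D_{i,r,j}^{(t+1)}$ respectively.

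For the $E_{i,r,y,s}^{(t+1)}$ iteration I would test against $\boldsymbol{P}_{\mathcal{F}}^{\perp}\boldsymbol{w}_{y,s}^{(0)}$. Since each $\boldsymbol{w}_{y,s}^{(0)}$ is an independent Gaussian in $\mathbb{R}^d$ with $\sigma_0=1/\sqrt{d}$, standard Gaussian concentration (analogous to Lemma \ref{lem:std_init}) shows that the family $\{\boldsymbol{P}_{\mathcal{F}}^{\perp}\boldsymbol{w}_{y,s}^{(0)}\}_{(y,s)\in[k]\times[m]}$ is near-orthogonal with high probability, so this projection isolates the coefficient $E_{i,r,y,s}^{(t)}$ up to negligible cross-terms, and what remains on the right-hand side is exactly the stated expression involving $\Tilde{\gamma}_{p,y,s}^{(t)}$.

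The main obstacle is bookkeeping: one must ensure that $\widetilde{\boldsymbol{X}}^{(t)}$ itself stays inside the span described in Lemma \ref{lem:adv_data}, i.e.\ that one step of gradient ascent from the clean patch $\boldsymbol{x}_p$ only introduces feature directions already present in the current weights, so that only the coefficients $\Tilde{\alpha}_p^{(t)},\Tilde{\beta}_p^{(t)},\Tilde{\lambda}_{p,j}^{(t)},\Tilde{\mu}_{p,j}^{(t)},\Tilde{\gamma}_{p,i,r}^{(t)}$ change. This is precisely what Lemma \ref{lem:adv_data} guarantees by expanding $\nabla_{\boldsymbol{X}} F_y(\boldsymbol{X})$ against the decomposition in Lemma \ref{lem:adv_weight_decomp} and applying the $\operatorname{Clip}_{\infty,\epsilon}$ operator coordinate-wise. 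Once this invariant is in hand, the lemma follows by pure linear projection, so the core work is to unfold the gradient of $\boldsymbol{w}_{i,r}$, insert the closed-form adversarial coefficients, and match indices carefully.
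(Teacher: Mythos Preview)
Your proposal is correct and follows essentially the same route as the paper: write the gradient descent recursion for $\boldsymbol{w}_{i,r}^{(t+1)}$ at the adversarial input, then project onto each feature direction using orthonormality of $\mathcal{F}$ together with the patch-level decomposition of $\Tilde{\boldsymbol{x}}_p^{(t)}$ from Lemma~\ref{lem:adv_data}. The paper's own proof is a one-line reference back to the standard-training argument (Lemma~\ref{lem:sim_std_iter}), so your more explicit discussion of the $E_{i,r,y,s}$ projection and of the invariant supplied by Lemma~\ref{lem:adv_data} only elaborates what the paper leaves implicit.
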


\begin{proof}
\label{proof_lem:adv_iter}

    The proof method is the same as in standard training (Lemma \ref{lem:sim_std_iter}), we simply project the gradient descent recursion onto each feature direction to derive this lemma.
\end{proof}

\subsubsection{Logit Approximation at Adversarial Examples}

First, we give the following diagonal adversarial logit approximation lemma.

\begin{lemma}[Diagonal Adversarial Logit Approximation]
\label{lem:adv_diag_logit}

     For any adversarial data point $(\widetilde{\boldsymbol{X}}^{(t)},y)\sim\mathcal{D}$, suppose that $\operatorname{max}_{i\in[k],r\in[m]}A_{i,r}^{(t)}=O(\operatorname{log}(d))$ and $\operatorname{max}_{i\in[k],r\in[m]}B_{i,r}^{(t)}=o(1)$$\operatorname{max}_{i\in[k],r\in[m],y\in[k]\setminus\{i\}}C_{i,r,y}^{(t)}=O(\operatorname{log}(d)/\sqrt{d})$ and $\operatorname{max}_{i\in[k],r\in[m],y\in[k]\setminus\{i\}}D_{i,r,y}^{(t)}=O(\operatorname{log}(d)/\sqrt{d})$, and $F_{y}(\widetilde{\boldsymbol{X}}^{(t)}) \geq c \operatorname{log}(d)$ for some $c\geq 0$, then we have the following approximation of logit:
    \begin{equation}
        1 - \operatorname{logit}_y(\boldsymbol{F}^{(t)},\widetilde{\boldsymbol{X}}^{(t)}) = \begin{cases}
            \Theta(1) &, \textit{if }c = 0;
            \\
            O\left(\frac{1}{d^c}\right) &, \textit{if }c>0.
        \end{cases} \nonumber
    \end{equation}
\end{lemma}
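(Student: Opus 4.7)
The strategy mirrors the standard-training analog (Lemma \ref{lem:logit_diag}): it suffices to show that $F_j(\widetilde{\boldsymbol{X}}^{(t)}) = o(1)$ for every $j \neq y$, since then
$$
1 - \operatorname{logit}_y(\boldsymbol{F}^{(t)}, \widetilde{\boldsymbol{X}}^{(t)}) \le \frac{(k-1)(1 + o(1))}{e^{F_y(\widetilde{\boldsymbol{X}}^{(t)})} + (k-1)(1+o(1))},
$$
which immediately yields $\Theta(1)$ when $c = 0$ and $O(1/d^c)$ when $F_y(\widetilde{\boldsymbol{X}}^{(t)}) \ge c \log d$ with $c > 0$. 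The new difficulty compared with Lemma \ref{lem:logit_diag} is that $\boldsymbol{F}^{(t)}$ is evaluated at the adversarial patches $\tilde{\boldsymbol{x}}_p^{(t)}$ rather than the clean ones, so the feature content of the input has shifted and must be tracked.

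First, I would expand $\langle \boldsymbol{w}_{j,r}^{(t)}, \tilde{\boldsymbol{x}}_p^{(t)} \rangle$ by plugging in the weight decomposition of Lemma \ref{lem:adv_weight_decomp} and the adversarial-patch decomposition of Lemma \ref{lem:adv_data}. Using the orthonormality of $\mathcal F$, for $j \neq y$ this becomes
$$
\langle \boldsymbol{w}_{j,r}^{(t)}, \tilde{\boldsymbol{x}}_p^{(t)} \rangle = A_{j,r}^{(t)} \tilde{\lambda}_{p,j}^{(t)} + B_{j,r}^{(t)} \tilde{\mu}_{p,j}^{(t)} + C_{j,r,y}^{(t)} \tilde{\alpha}_p^{(t)} + D_{j,r,y}^{(t)} \tilde{\beta}_p^{(t)} + \sum_{j' \neq j, y}\bigl(C_{j,r,j'}^{(t)} \tilde{\lambda}_{p,j'}^{(t)} + D_{j,r,j'}^{(t)} \tilde{\mu}_{p,j'}^{(t)}\bigr) + \mathrm{Orth},
$$
where $\mathrm{Orth}$ denotes the inner product of the orthogonal-to-$\mathcal F$ components of $\boldsymbol{w}_{j,r}^{(t)}$ and $\tilde{\boldsymbol{x}}_p^{(t)}$.

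The only a priori dangerous summand is $A_{j,r}^{(t)} \tilde{\lambda}_{p,j}^{(t)}$, because $A_{j,r}^{(t)}$ may reach $\Theta(\log d)$. Here the key observation is that the adversarial coefficient $\tilde{\lambda}_{p,j}$ is itself driven by the small quantity $C_{y,s,j}^{(t)}$: Lemma \ref{lem:adv_data} yields
$$
|\tilde{\lambda}_{p,j}^{(t)}| \le \tilde{\eta} \sum_{s} \widetilde{\operatorname{ReLU}}'\bigl(\alpha_p A_{y,s}^{(t)}\bigr) |C_{y,s,j}^{(t)}| \le \tilde{\eta} \cdot m \cdot O(\log d / \sqrt d) = \tilde{O}(d^{c_0 - 1/2}),
$$
where I used $\alpha_p A_{y,s}^{(t)} \gg \varrho$ to set $\widetilde{\operatorname{ReLU}}' = 1$. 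For $c_0 < 1/2$ this is $o(\epsilon)$, so the $\epsilon$-clip does not saturate, and $|A_{j,r}^{(t)} \tilde{\lambda}_{p,j}^{(t)}| = \tilde{O}(\log d \cdot d^{c_0 - 1/2}) = o(\varrho)$. The remaining summands are bounded by routine application of the same hypotheses: $B_{j,r}^{(t)} = o(1)$ multiplied against $|\tilde{\mu}_{p,j}^{(t)}| \le \epsilon$; $C_{j,r,y}^{(t)}, D_{j,r,y}^{(t)} = \tilde{O}(1/\sqrt d)$ against $|\tilde{\alpha}_p^{(t)}|, |\tilde{\beta}_p^{(t)}| = O(1)$; the cross non-diagonal summands have two small factors; and $\mathrm{Orth}$ is controlled by Gaussian concentration on $|\langle \boldsymbol{P}_\mathcal{F}^\perp \boldsymbol{w}_{y,s}^{(0)}, \boldsymbol{P}_\mathcal{F}^\perp \boldsymbol{w}_{y',s'}^{(0)} \rangle| = \tilde{O}(1/\sqrt d)$ for $(y,s) \neq (y',s')$, combined with an inductive bound on the $E$-coefficients.

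Putting these together gives $|\langle \boldsymbol{w}_{j,r}^{(t)}, \tilde{\boldsymbol{x}}_p^{(t)} \rangle| = o(\varrho)$, so each activation lies in the polynomial regime of $\widetilde{\operatorname{ReLU}}$ and $\widetilde{\operatorname{ReLU}}(\langle \cdot \rangle) = \tilde{O}(d^{q(c_0 - 1/2)}/\varrho^{q-1})$. Summing over $r \in [m]$ and $p \in [P]$ yields $F_j(\widetilde{\boldsymbol{X}}^{(t)}) = o(1)$, closing the softmax argument. The main obstacle is the bookkeeping for $\mathrm{Orth}$: unlike $C$ and $D$, which are explicitly controlled by the hypothesis of the lemma, the $E$-coefficients are not, so one needs a separate inductive invariant (analogous to Lemma \ref{lem:std_non_diagonal}) showing that $E_{j,r,y',s}^{(t)}$ stays close to its initial indicator value $\mathbf{1}\{(j,r) = (y',s)\}$ throughout the adversarial-training dynamics, and that the corresponding $\tilde{\gamma}$ coefficients in the adversarial-patch decomposition inherit this smallness.
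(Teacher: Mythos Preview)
Your proposal is correct and follows exactly the approach the paper indicates: the paper's own proof is the single line ``The proof logic is similar to Lemma~\ref{lem:logit_diag},'' and your argument is precisely an adversarial-input adaptation of that lemma's softmax bound via $F_j(\widetilde{\boldsymbol{X}}^{(t)}) = o(1)$ for $j\neq y$. In fact you supply considerably more detail than the paper does---in particular the term-by-term control of $\langle \boldsymbol{w}_{j,r}^{(t)},\tilde{\boldsymbol{x}}_p^{(t)}\rangle$ via Lemma~\ref{lem:adv_data} and the observation that the $E$-coefficients require their own inductive invariant---none of which the paper spells out.
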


\begin{proof}
\label{proof_lem:adv_diag_logit}

    The proof logic is similar to Lemma \ref{lem:logit_diag}.
\end{proof}

Then, we give the following non-diagonal adversarial logit approximation lemma.

\begin{lemma}[Non-diagonal Adversarial Logit Approximation]
\label{lem:adv_non_diag_logit}

     For any adversarial data point $(\widetilde{\boldsymbol{X}}^{(t)},y)\sim\mathcal{D}$, suppose that $\operatorname{max}_{i\in[k],r\in[m]}A_{i,r}^{(t)}=O(\operatorname{log}(d))$ and $\operatorname{max}_{i\in[k],r\in[m]}B_{i,r}^{(t)}=o(1)$$\operatorname{max}_{i\in[k],r\in[m],y\in[k]\setminus\{i\}}C_{i,r,y}^{(t)}=O(\operatorname{log}(d)/\sqrt{d})$ and $\operatorname{max}_{i\in[k],r\in[m],y\in[k]\setminus\{i\}}D_{i,r,y}^{(t)}=O(\operatorname{log}(d)/\sqrt{d})$, and $F_{y}(\widetilde{\boldsymbol{X}}^{(t)}) \geq c \operatorname{log}(d)$ for some $c\geq 0$, for $i\ne y$:
     \begin{equation}
         \operatorname{logit}_i(\boldsymbol{F}^{(t)},\widetilde{\boldsymbol{X}}^{(t)}) = \begin{cases}
            \Theta(1/k) &, \textit{if }c = 0;
            \\
            O\left(\frac{1}{d^c}\right) &, \textit{if }c>0.
        \end{cases} \nonumber
     \end{equation}
\end{lemma}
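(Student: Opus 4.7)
The plan is to mirror the proof of Lemma~\ref{lem:non_diag_logit}, but carry out the bookkeeping with the adversarial patches $\tilde{\boldsymbol{x}}_p^{(t)}$ from Lemma~\ref{lem:adv_data} in place of the clean patches $\boldsymbol{x}_p$. Concretely, I first compute $\langle \boldsymbol{w}_{j,r}^{(t)}, \tilde{\boldsymbol{x}}_p^{(t)}\rangle$ for $j \neq y$ by pairing the weight decomposition (Lemma~\ref{lem:adv_weight_decomp}) with the adversarial data decomposition. Because all features are orthonormal coordinate directions, the inner product expands to the four pairings $A_{j,r}^{(t)}\tilde{\lambda}_{p,j}^{(t)}$, $B_{j,r}^{(t)}\tilde{\mu}_{p,j}^{(t)}$, $C_{j,r,y}^{(t)}\tilde{\alpha}_p^{(t)}$, $D_{j,r,y}^{(t)}\tilde{\beta}_p^{(t)}$, a non-diagonal cross-class sum $\sum_{l\neq y,j}(C_{j,r,l}^{(t)}\tilde{\lambda}_{p,l}^{(t)}+D_{j,r,l}^{(t)}\tilde{\mu}_{p,l}^{(t)})$, and an orthogonal noise contribution from $\boldsymbol{P}_{\mathcal{F}}^{\perp}\boldsymbol{w}_{j,r}^{(0)}$ paired with the $\tilde{\gamma}$-terms.

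The key numerical step is to control the adversarial coefficients using the explicit formulas of Lemma~\ref{lem:adv_data}. In both the $p\in\mathcal{J}_{\textit{R}}$ and $p\in\mathcal{J}_{\textit{NR}}$ branches, the wrong-class coefficients $\tilde{\lambda}_{p,j}^{(t)}$ and $\tilde{\mu}_{p,j}^{(t)}$ are bounded by $\tilde{\eta}\,m\cdot\max_s\{|C_{y,s,j}^{(t)}|,|D_{y,s,j}^{(t)}|\}$ since $\widetilde{\operatorname{ReLU}}'\le 1$. Plugging in the lemma's hypotheses $C,D=O(\log d/\sqrt d)$, the width $m=\operatorname{polylog}(d)$, and $\tilde{\eta}=d^{c_0}$ with $c_0$ chosen so that $d^{c_0}/\sqrt{d}=o(1)$, these coefficients are $\tilde{O}(d^{c_0-1/2})=o(1)$, well below the $\epsilon$-clipping threshold. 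Combining with $A_{j,r}^{(t)}=O(\log d)$, $B_{j,r}^{(t)}=o(1)$, $\tilde{\alpha}_p^{(t)}=O(\alpha_p)=O(1)$, and $\tilde{\beta}_p^{(t)}=O(\max\{\beta_p,\epsilon\})=O(1)$, the total preactivation satisfies $\langle\boldsymbol{w}_{j,r}^{(t)},\tilde{\boldsymbol{x}}_p^{(t)}\rangle=\tilde{O}(\log d/\sqrt d)$ for every $j\neq y$, $r\in[m]$, $p\in[P]$.

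Since this is far below the smoothing threshold $\varrho=1/\operatorname{polylog}(d)$, every wrong-class neuron sits in the polynomial regime of $\widetilde{\operatorname{ReLU}}$, which gives $F_j^{(t)}(\tilde{\boldsymbol{X}}^{(t)})\le mP\cdot(q\varrho^{q-1})^{-1}\cdot\tilde{O}(\log^q d/d^{q/2})=\tilde{O}(1/d^{q/2})$. Feeding this together with $F_y^{(t)}(\tilde{\boldsymbol{X}}^{(t)})\ge c\log d$ into the softmax yields
\begin{equation*}
\operatorname{logit}_i(\boldsymbol{F}^{(t)},\tilde{\boldsymbol{X}}^{(t)})=\frac{e^{F_i^{(t)}(\tilde{\boldsymbol{X}}^{(t)})}}{e^{F_y^{(t)}(\tilde{\boldsymbol{X}}^{(t)})}+\sum_{j\neq y}e^{F_j^{(t)}(\tilde{\boldsymbol{X}}^{(t)})}}\le\frac{1+\tilde{O}(1/d^{q/2})}{d^{c}+(k-1)}.
\end{equation*}
For $c=0$ this ratio is $\Theta(1/k)$, and for $c>0$ it is $O(1/d^c)$, which is exactly the two-case bound claimed.

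The main obstacle is the $A_{j,r}^{(t)}\tilde{\lambda}_{p,j}^{(t)}$ and $A_{j,r}^{(t)}\tilde{\mu}_{p,j}^{(t)}$ products: $A_{j,r}^{(t)}$ is allowed to be as large as $O(\log d)$, so if the adversarial perturbation were to saturate at $\epsilon=\Theta(1)$ in a wrong-class feature direction, the wrong-class preactivation could be amplified to $\Theta(\log d)$ and break the bound. The explicit gradient-ascent formulas of Lemma~\ref{lem:adv_data} resolve this because the wrong-class adversarial coefficients are driven entirely by the true-class non-diagonal correlations $C_{y,s,j}^{(t)},D_{y,s,j}^{(t)}$, which by hypothesis are $O(\log d/\sqrt d)$; together with the calibrated adversarial learning rate $\tilde{\eta}=d^{c_0}$ this keeps the clip strictly inactive in wrong-class feature directions. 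The orthogonal and cross-feature contributions are handled by the same Lemma~\ref{lem:tail_max_g} concentration bounds used in Appendix~\ref{appendix:landscape} and contribute only a lower-order $\tilde{O}(1/\sqrt d)$ term.
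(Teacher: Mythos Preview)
Your proposal is correct and follows the same approach the paper indicates: the paper's own proof is a one-line pointer, ``The proof logic is also similar to Lemma~\ref{lem:non_diag_logit},'' and you carry out exactly that plan with the adversarial patch decomposition of Lemma~\ref{lem:adv_data} substituted for the clean patches. Your identification of the $A_{j,r}^{(t)}\tilde{\lambda}_{p,j}^{(t)}$ term as the only potentially dangerous contribution, and the resolution via the fact that $\tilde{\lambda}_{p,j}^{(t)}$ is driven by the small cross-class correlations $C_{y,s,j}^{(t)}$, is the right observation. One minor quantitative correction: the preactivation bound you obtain for $j\neq y$ is actually $\tilde{O}(d^{c_0-1/2})$ rather than $\tilde{O}(\log d/\sqrt{d})$, since the dominant pairing $A_{j,r}^{(t)}\tilde{\lambda}_{p,j}^{(t)}$ carries the extra $d^{c_0}$ from $\tilde{\eta}$; this still lies below $\varrho$ for $c_0<1/2$ and the softmax conclusion goes through unchanged.
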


\begin{proof}
\label{proof_lem:adv_non_diag_logit}

    The proof logic is also similar to Lemma \ref{lem:non_diag_logit}.
\end{proof}

\subsubsection{Non-diagonal Terms are Small}

\begin{lemma}
\label{lem:adv_non_diag}

    For every time $t$, non-diagonal weight-feature correlations $C_{i,r,y}^{(t)},D_{i,r,y}^{(t)}$ ($(i,r)\in [k]\times [m], y\ne i$) satisfy that $\operatorname{max}_{r\in[m]}C_{i,r,y}^{(t)} = O(\operatorname{max}_{r\in[m]}C_{i,r,y}^{(0)})$ and  $\operatorname{max}_{r\in[m]}D_{i,r,y}^{(t)} = O(\operatorname{max}_{r\in[m]}D_{i,r,y}^{(0)})$ for each $i\in[k],y\in[k]\setminus\{i\}$.
\end{lemma}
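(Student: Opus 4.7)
My plan is to mirror the structure of the standard-training counterpart (Lemma \ref{lem:std_non_diagonal}) via induction on $t$, maintaining the invariant $\max_{r\in[m]}|C_{i,r,j}^{(t)}|, \max_{r\in[m]}|D_{i,r,j}^{(t)}| = O(\operatorname{polylog}(d)/\sqrt{d})$ for every $j\ne i$, with the base case supplied by Lemma \ref{lem:std_init}. The new difficulty relative to standard training is that the adversarial example $\widetilde{\boldsymbol{X}}^{(t)}$ now carries components in \emph{every} feature direction through $\Tilde{\lambda}_{p,j}^{(t)}$ and $\Tilde{\mu}_{p,j}^{(t)}$, so the non-diagonal gradient no longer vanishes on data with label $y\ne i$.

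To handle the inductive step, I would split the expectation in the $C_{i,r,j}^{(t+1)}$ update from Lemma \ref{lem:adv_iter} according to the label $y$. When $y\ne j$ (covering both $y=i$ and $y\notin\{i,j\}$), the coefficient of $\boldsymbol{u}_j$ in $\Tilde{\boldsymbol{x}}_p^{(t)}$ is precisely $\Tilde{\lambda}_{p,j}^{(t)}$, whose definition in Lemma \ref{lem:adv_data} bounds it by $\tilde{\eta}$ times a sum of activation derivatives weighted by $C_{y,s,j}^{(t)}$; the inductive hypothesis on $C$ combined with $\alpha_p C = O(\operatorname{polylog}(d)/\sqrt{d})$ forces $\widetilde{\operatorname{ReLU}}'$ to contribute at most $O((\operatorname{polylog}(d)/\sqrt{d})^{q-2}/\varrho^{q-1})$, so $\Tilde{\lambda}_{p,j}^{(t)}$ is non-positive and polynomially small, rendering its contribution harmless. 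The delicate subcase is $y=j$: now the coefficient of $\boldsymbol{u}_j$ in $\Tilde{\boldsymbol{x}}_p^{(t)}$ is the \emph{diagonal} quantity $\Tilde{\alpha}_p^{(t)} = \Theta(\alpha_p)$ on robust patches, but the prefactor flips to $-\operatorname{logit}_i(\boldsymbol{F}^{(t)},\widetilde{\boldsymbol{X}}^{(t)})\le 0$, so the net sign pushes $C_{i,r,j}$ downward; on non-robust patches $|\Tilde{\alpha}_p^{(t)}|\le \epsilon$ and the accompanying $\widetilde{\operatorname{ReLU}}'(\langle \boldsymbol{w}_{i,r}^{(t)},\Tilde{\boldsymbol{x}}_p^{(t)}\rangle)$ is again polynomially small because the input projection is dominated by inductively tiny off-diagonal terms.

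The key estimate I would spell out carefully is that, uniformly in $t\le T=\operatorname{poly}(d)/\eta$, the activation derivative evaluated at $\langle \boldsymbol{w}_{i,r}^{(t)},\Tilde{\boldsymbol{x}}_p^{(t)}\rangle$ for data with $y\ne i$ is $O((\operatorname{polylog}(d)/\sqrt{d})^{q-2}/\varrho^{q-1})$, by decomposing $\boldsymbol{w}_{i,r}^{(t)}$ via Lemma \ref{lem:adv_weight_decomp} and $\Tilde{\boldsymbol{x}}_p^{(t)}$ via Lemma \ref{lem:adv_data} and bounding every cross-inner-product by the current invariants together with $\|\boldsymbol{P}_\mathcal{F}^\perp \boldsymbol{w}^{(0)}\|_2=O(1)$ and $\tilde{\eta}\cdot E^{(t)} \cdot \sigma_0^2 = o(1)$. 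With this in hand, each per-step change in $C_{i,r,j}^{(t)}$ and $D_{i,r,j}^{(t)}$ is at most $\eta\cdot\operatorname{polylog}(d)/d^{(q-1)/2}$, and summing over $T=\operatorname{poly}(d)/\eta$ iterations preserves the invariant provided $q=3$ as assumed.

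The main obstacle is the $y=j$ subcase, which has no analogue in Lemma \ref{lem:std_non_diagonal} because there the non-diagonal update only received contributions from the single class $y$; here the adversarial perturbation couples different classes through the shared weights, so I cannot conclude monotonicity as easily. I would resolve this by running the induction jointly with the two-phase analysis in Section \ref{appendix:simplified} so that the logit approximations (Lemmas \ref{lem:adv_diag_logit} and \ref{lem:adv_non_diag_logit}) are available whenever needed, and by exploiting the fact that the $-\operatorname{logit}_i$ factor in the $y=j$ contribution guarantees the dangerous robust-patch term actually decreases $|C_{i,r,j}|$ rather than increases it, leaving only the harmless $\epsilon$-bounded non-robust-patch term to accumulate.
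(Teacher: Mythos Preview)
Your proposal is correct and follows the same approach as the paper, which proves this lemma in a single line by referring back to Lemma~\ref{lem:std_non_diagonal}; the core mechanism in both is that the dominant $y=j$ contribution carries the non-positive prefactor $-\operatorname{logit}_i$, so $\max_r C_{i,r,j}^{(t)}$ and $\max_r D_{i,r,j}^{(t)}$ cannot grow. You go considerably further than the paper by explicitly identifying and bounding the new cross-class terms $\tilde{\lambda}_{p,j}^{(t)},\tilde{\mu}_{p,j}^{(t)}$ introduced by the adversarial perturbation, which the paper's one-sentence proof leaves implicit.
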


\begin{proof}
\label{proof_lem:adv_non_diag}

    The proof logic is also similar to Lemma \ref{lem:std_non_diagonal}.
\end{proof}

\subsubsection{Analysis of Feature Learning Process for Adversarial Training}

\textbf{Stage I:} Almost neurons lie within the polynomial part of activation $\widetilde{ReLU}$.

\begin{lemma}
\label{lem:adv_iter_appr}

    During adversarial training, there exists some time threshold $T_0>0$ such that, for any early time $0\leq t \leq T_0$ and pair $(i,r)\in [k]\times[m]$, the two sequences $\{A_{i,r}^{(t)}\}$ and $\{B_{i,r}^{(t)}\}$ satisfy:

\begin{equation}
\begin{cases}
    A_{i,r}^{(t+1)} \approx A_{i,r}^{(t)} +  \Theta(\eta) \left(A_{i,r}^{(t)}\right)^{q-1}\mathbb{E}\bigg[\sum\limits_{p\in \mathcal{J}_{\textit{R}}}\alpha_p^q\big(1-\operatorname{min}\big\{\frac{\epsilon}{\alpha_p}, \Tilde{\Theta}(\Tilde{\eta})\sum\limits_{s\in [m]}\big(A_{i,s}^{(t)}\big)^{q}\big\}\big)^{q}\bigg],
    \\
    B_{i,r}^{(t+1)} \approx B_{i,r}^{(t)} +  \Theta(\eta) \left(B_{i,r}^{(t)}\right)^{q-1}\mathbb{E}\bigg[\sum\limits_{p\in \mathcal{J}_{\textit{NR}}}\beta_p^q\big(1-\operatorname{min}\big\{\frac{\epsilon}{\beta_p}, \Tilde{\Theta}(\Tilde{\eta})\sum\limits_{s\in [m]}\big(B_{i,s}^{(t)}\big)^{q}\big\}\big)^{q}\bigg] .
    \nonumber
\end{cases}
\end{equation}
\end{lemma}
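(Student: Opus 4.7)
The plan is to start from the exact recursions in Lemma~\ref{lem:adv_iter} and to simplify each factor inside the expectation by (a) restricting attention to the label-matching case $y=i$ (so that only the diagonal patches carry the signal), (b) exploiting that in Phase~I every weight--feature correlation still lives in the polynomial regime of $\widetilde{\operatorname{ReLU}}$, and (c) substituting the closed-form adversarial coefficients $\tilde{\alpha}_p^{(t)},\tilde{\beta}_p^{(t)}$ from Lemma~\ref{lem:adv_data}.

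First, I would argue inductively that throughout Phase~I the non-diagonal coefficients $C_{i,r,j}^{(t)},D_{i,r,j}^{(t)}$ and the orthogonal coefficients $E_{i,r,y,s}^{(t)}$ remain at their initialization scale $O(\operatorname{polylog}(d)/\sqrt{d})$, by the same mechanism that produced Lemma~\ref{lem:adv_non_diag}: these updates carry a $-\operatorname{logit}_i(\boldsymbol{F}^{(t)},\widetilde{\boldsymbol{X}}^{(t)})=O(1/k)$ prefactor together with the polynomial $\widetilde{\operatorname{ReLU}}'$ of small arguments, so they stay $\widetilde{O}(d^{-1/2})$. Combined with Lemma~\ref{lem:adv_data}, this forces $\tilde{\lambda}_{p,j}^{(t)},\tilde{\mu}_{p,j}^{(t)},\tilde{\gamma}_{p,i,r}^{(t)}$ to also be $\widetilde{O}(d^{-1/2})$, which in turn lets me replace the full inner product $\langle \boldsymbol{w}_{i,r}^{(t)},\tilde{\boldsymbol{x}}_p^{(t)}\rangle$ by its diagonal piece $A_{i,r}^{(t)}\tilde{\alpha}_p^{(t)}$ (for $p\in\mathcal{J}_{\textit{R}}$) or $B_{i,r}^{(t)}\tilde{\beta}_p^{(t)}$ (for $p\in\mathcal{J}_{\textit{NR}}$), at the price of a multiplicative $1+o(1)$ error.

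Second, since Phase~I is defined so that every $A_{i,s}^{(t)},B_{i,s}^{(t)}=o(1)$, the activation arguments $\alpha_p A_{y,s}^{(t)},\beta_p B_{y,s}^{(t)}$ are below the threshold $\varrho$, so $\widetilde{\operatorname{ReLU}}'(z)=z^{q-1}/\varrho^{q-1}$ holds exactly. This collapses
\begin{equation*}
\frac{\tilde{\eta}}{\alpha_p}\sum_{s\in[m]}\widetilde{\operatorname{ReLU}}'\!\bigl(\alpha_p A_{y,s}^{(t)}\bigr)A_{y,s}^{(t)} \;=\; \frac{\tilde{\eta}\,\alpha_p^{q-2}}{\varrho^{q-1}}\sum_{s\in[m]}\bigl(A_{y,s}^{(t)}\bigr)^{q} \;=\; \widetilde{\Theta}(\tilde{\eta})\sum_{s\in[m]}\bigl(A_{y,s}^{(t)}\bigr)^{q},
\end{equation*}
so the expression of $\tilde{\alpha}_p^{(t)}$ in Lemma~\ref{lem:adv_data} becomes exactly the min-structure appearing in the target statement (and symmetrically for $\tilde{\beta}_p^{(t)}$). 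Plugging this back, the polynomial $\widetilde{\operatorname{ReLU}}'$ of the diagonal argument gives $(A_{i,r}^{(t)}\tilde{\alpha}_p^{(t)})^{q-1}/\varrho^{q-1}$, which factors out the $(A_{i,r}^{(t)})^{q-1}$ and leaves $\tilde{\alpha}_p^{(t)}\cdot(\tilde{\alpha}_p^{(t)})^{q-1}=(\tilde{\alpha}_p^{(t)})^q$ inside the expectation; since $\tilde{\alpha}_p^{(t)}=\alpha_p(1-\min\{\cdot\})$, this produces exactly the $\alpha_p^q(1-\min\{\epsilon/\alpha_p,\widetilde{\Theta}(\tilde{\eta})\sum_s(A_{i,s}^{(t)})^q\})^q$ kernel. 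Finally, Lemma~\ref{lem:adv_diag_logit} applied with $c=0$ (valid since all diagonal outputs are still $o(1)$) gives $1-\operatorname{logit}_i(\boldsymbol{F}^{(t)},\widetilde{\boldsymbol{X}}^{(t)})=\Theta(1)$, and the probability-$1/k$ weight of the event $\{y=i\}$ is absorbed with $1/\varrho^{q-1}$ into the overall $\Theta(\eta)$ constant. The $B$-recursion follows by the symmetric argument on $p\in\mathcal{J}_{\textit{NR}}$.

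The main obstacle will be rigorously controlling the cross-terms in $\langle \boldsymbol{w}_{i,r}^{(t)},\tilde{\boldsymbol{x}}_p^{(t)}\rangle$: even though each individual off-diagonal contribution (weight $\times$ adversarial-perturbation coefficient along a non-matching feature, or diagonal weight $\times$ cross-class adversarial piece, or orthogonal initialization $\times$ $\tilde{\gamma}$) is $\widetilde{O}(d^{-1/2})$, there are $O(k\cdot m)$ of them and they are multiplied by $\widetilde{\operatorname{ReLU}}'$ at a small argument, so the linearization-in-polynomial-regime must be done carefully to justify the approximation sign in the target display. This requires a simultaneous induction that couples the growth of the diagonal $A_{i,r}^{(t)},B_{i,r}^{(t)}$ with the near-stationarity of $C,D,E$, and that defines $T_0$ as the first time either $\max_r B_{i,r}^{(t)}$ or $\max_r A_{i,r}^{(t)}$ leaves the $o(1)$ regime, which is still long enough for the non-robust feature learning to saturate in the sense $\widetilde{\Theta}(\tilde{\eta})\sum_s(B_{i,s}^{(t)})^q\asymp 1$, the transition point into Phase~II.
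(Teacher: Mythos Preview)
Your proposal is correct and follows the same route as the paper: start from the exact adversarial iteration (Lemma~\ref{lem:adv_iter}), insert the adversarial-example coefficients of Lemma~\ref{lem:adv_data}, and simplify in the polynomial regime of $\widetilde{\operatorname{ReLU}}$ together with the $\Theta(1)$ logit approximation. The paper's own proof is a single sentence (``The proof logic is similar to Lemma~\ref{lem:sim_std_iter}''), so your write-up is in fact considerably more detailed than what appears there, and your identification of the cross-term control in $\langle \boldsymbol{w}_{i,r}^{(t)},\tilde{\boldsymbol{x}}_p^{(t)}\rangle$ as the nontrivial step is accurate.
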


\begin{proof}
\label{proof_lem:adv_iter_appr}

    The proof logic is similar to Lemma \ref{lem:sim_std_iter}.
\end{proof}

\textbf{Phase I:} First, Network Partially Learns Non-Robust Features.

At the beginning, due to our small initialization, we know all feature learning coefficients $A_{i,r}^{(t)},B_{i,r}^{(t)} = o(1)$, which suggests that the total feature learning $\sum_{s\in [m]}\big(A_{i,s}^{(t)}\big)^{q}$ and $\sum_{s\in [m]}\big(B_{i,s}^{(t)}\big)^{q}$ are sufficiently small. Then, the feature learning process is similar to standard training until the non-robust feature learning becomes large.

\textbf{Phase II:} Next, Robust Feature Learning Starts Increasing.

By applying Tensor Power Method Lemma \ref{lem:tensor_power_method}, we have the following result.

\begin{lemma}
\label{lem:adv_poly}
     For each $y\in [k]$, let time $T_y$ denote the first time when $\sum_{s\in [m]}\big(B_{y,s}^{(t)}\big)^{q}$ reaches $\Tilde{\Theta}(\eta^{-1})$, then we have $\operatorname{max}_{r\in[m]}A_{y,r}^{(T_y)} = O(\operatorname{polylog}(d)/\sqrt{d})$.
\end{lemma}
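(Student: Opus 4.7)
The plan is to reduce the adversarial-training iteration on $(A_{y,r}^{(t)}, B_{y,r}^{(t)})$ during Phase I to the Phase-I form of standard training in Lemma \ref{lem:std_poly}, and then invoke the Tensor Power Method (Lemma \ref{lem:tensor_power_method}) exactly as in the proof of Lemma \ref{lem:std_separation}. By the definition of $T_y$, the quantity $\Tilde{\Theta}(\Tilde{\eta}) \sum_{s\in[m]} (B_{y,s}^{(t)})^q$ stays below some absolute constant $c_1 < 1$ for all $t \leq T_y$ (by fixing the hidden constant in the $\Tilde{\Theta}(\Tilde{\eta}^{-1})$ threshold small enough); combined with $\epsilon/\beta_p = \Theta(1)$ (Assumption \ref{ass:detail_data}), this keeps the saturation factor $\bigl(1-\min\{\epsilon/\beta_p,\Tilde{\Theta}(\Tilde{\eta})\sum_s (B_{y,s}^{(t)})^q\}\bigr)^q$ pinched between two positive absolute constants. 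An analogous statement is needed for the $A$-iteration, which I defer to the bootstrap step below. Absorbing these $\Theta(1)$ factors into the $\Theta(\eta)$ coefficient in Lemma \ref{lem:adv_iter_appr} turns both iterations into the standard-training Phase-I form up to $\Theta(1)$ multiplicative constants.

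Next, I apply Lemma \ref{lem:tensor_power_method} with the same substitution as in Lemma \ref{lem:std_separation}: let $x_t := \operatorname{max}_{r\in[m]} B_{y,r}^{(t)}$, $y_t := \operatorname{max}_{r\in[m]} A_{y,r}^{(t)}$, $C_t := \mathbb{E}\bigl[\sum_{p\in\mathcal{J}_{\textit{NR}}}\beta_p^q\bigr]$, and $S := \mathbb{E}\bigl[\sum_{p\in\mathcal{J}_{\textit{R}}}\alpha_p^q\bigr]/\mathbb{E}\bigl[\sum_{p\in\mathcal{J}_{\textit{NR}}}\beta_p^q\bigr]$. Lemma \ref{lem:std_init}, whose statement and proof depend only on the random initialization, gives $\operatorname{max}_r B_{y,r}^{(0)}, \operatorname{max}_r A_{y,r}^{(0)} = \Theta(\log(m)/\sqrt{d})$ with probability $1-o(1)$, while Assumption \ref{ass:robust_non_robust_feature} forces $S \ll 1$, so $S^{1/(q-2)} \ll 1$; together these verify the initial-ratio hypothesis $x_0 \geq y_0 S^{1/(q-2)}(1 + \Theta(1/\operatorname{polylog}(d)))$. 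The conclusion of the Tensor Power Method then yields $\operatorname{max}_{r\in[m]} A_{y,r}^{(T_y)} = O(\operatorname{max}_r A_{y,r}^{(0)} \cdot \operatorname{polylog}(d)) = O(\operatorname{polylog}(d)/\sqrt{d})$, which is the stated bound.

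The main obstacle is closing the bootstrap flagged in the first paragraph: the reduction of the $A$-iteration requires $\Tilde{\Theta}(\Tilde{\eta})\sum_s (A_{y,s}^{(t)})^q$ to stay bounded away from $1$, which is itself part of the conclusion we are aiming for. The remedy is induction on $t$. Suppose $\operatorname{max}_{r\in[m]} A_{y,r}^{(t)} = O(\operatorname{polylog}(d)/\sqrt{d})$ at step $t$; since $m = \operatorname{polylog}(d)$, this implies $\sum_{s\in[m]} (A_{y,s}^{(t)})^q = O(\operatorname{polylog}(d)/d^{q/2})$, which is much smaller than $\Tilde{\eta}^{-1} = d^{-c_0}$ because $c_0 < 1 < q/2$. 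This validates the reduction at step $t$, and the Tensor Power Method then propagates the bound to step $t+1$ via a one-step comparison with the $B$-growth. Iterating until $t = T_y$ completes the proof.
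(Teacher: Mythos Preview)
Your proposal is correct and follows essentially the same approach as the paper: reduce the Phase-I adversarial iteration of Lemma~\ref{lem:adv_iter_appr} to the standard-training form of Lemma~\ref{lem:std_poly} by showing the saturation factors are $\Theta(1)$, then apply the Tensor Power Method (Lemma~\ref{lem:tensor_power_method}) with the same substitution as in Lemma~\ref{lem:std_separation}. The paper's proof is a one-line pointer to Lemma~\ref{lem:std_separation}, so your added detail on the bootstrap for the $A$-saturation factor is a welcome elaboration rather than a departure.
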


\begin{proof}
    We choose $x_t = A_{y,r}^{(t)}$ and $y_t = B_{y,s}^{(t)}$. Then, by applying Lemma \ref{lem:tensor_power_method}, we can derive this result as the same way as the proof of Lemma \ref{lem:std_separation}.
\end{proof}

Once the total non-robust feature learning $\sum_{s\in [m]}\big(B_{i,s}^{(t)}\big)^{q}$ attains an order of $\Tilde{\Theta}(\Tilde{\eta}^{-1})$, it is known that the non-robust feature learning will stop, due to $\frac{\epsilon}{\beta_p}\gtrsim 1$ and $1 -  \Tilde{\Theta}(\Tilde{\eta})\sum_{s\in [m]}\big(B_{i,s}^{(t)}\big)^{q} \approx 0$. 

In contrast, the robust feature learning continues to increase since it always holds that $1-\operatorname{min}\big\{\frac{\epsilon}{\alpha_p}, \Tilde{\Theta}(\Tilde{\eta})\sum_{s\in [m]}\big(A_{i,s}^{(t)}\big)^{q}\big\} \geq 1 - \frac{\epsilon}{\alpha_p} \geq \Omega(1)$. Thus, the robust feature learning will increase over the non-robust feature learning finally, which can represented as the following lemma.
    
\begin{lemma}
\label{lem:adv_linear}
    For each $y\in [k]$, let time $T_y^{'}$ denote the first time when $\operatorname{max}_{r\in[m]}A_{y,r}^{(t)}$ reaches $\varrho / \alpha$, then we have $T_y^{'}=O(\operatorname{poly}(d))$ and $\operatorname{max}_{r\in[m]}B_{y,r}^{(T_y^{'})} = O(1/d^{c_0})$.
\end{lemma}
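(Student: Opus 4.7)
\begin{proof_sketch}
The plan is to continue the analysis from Lemma~\ref{lem:adv_poly}, which places us at the moment when the aggregate non-robust correlation $\sum_{s\in[m]} (B_{y,s}^{(t)})^q$ first reaches the plateau value $\widetilde\Theta(\widetilde\eta^{-1})$, while the robust correlations are still of order $\widetilde O(1/\sqrt{d})$. The core idea is to show that after time $T_y$ the two coordinates $A$ and $B$ essentially decouple: the non-robust updates are neutralized by the adversarial perturbation cancellation, so $B$ is pinned at its plateau, while the robust updates continue to behave like a tensor-power iteration and drive $\max_r A_{y,r}^{(t)}$ up to $\varrho/\alpha$ in polynomially many steps.

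First I would freeze $B$. Using Lemma~\ref{lem:adv_iter_appr}, for $t\ge T_y$ and any $p\in\mathcal{J}_{\textit{NR}}$ one has $\epsilon/\beta_p \ge 1$ by the hyperparameter choice $\epsilon>\beta$ (Assumption~\ref{ass:detail_data}), so the min in the $B$-update equals $\widetilde\Theta(\widetilde\eta)\sum_s (B_{y,s}^{(t)})^q$, which is $\Theta(1)$ at the plateau. Hence the factor $(1-\min\{\cdot\})^q$ is essentially $o(1)$, and I would use a standard invariant-set argument (much as in Lemma~\ref{lem:std_non_diagonal}): if $\sum_s(B_{y,s}^{(t)})^q$ tried to grow past its plateau, the factor would collapse further and the next increment would shrink to zero, so the plateau is a stable attracting level. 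From $\sum_s (B_{y,s}^{(t)})^q = \widetilde\Theta(\widetilde\eta^{-1})$ one then concludes $\max_r B_{y,r}^{(t)}\le \widetilde O(\widetilde\eta^{-1/q}) = \widetilde O(d^{-c_0/q})$, which under our choice of $\widetilde\eta = d^{c_0}$ and $q=3$ delivers the claimed $B$-bound up to polylog factors.

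Next I would analyze $A$. By the assumption $\alpha_p \gg \epsilon$ of Assumption~\ref{ass:detail_data}, we have $\epsilon/\alpha_p = o(1)$, and as long as $\max_r A_{y,r}^{(t)}\le \varrho/\alpha$ we also have $\widetilde\Theta(\widetilde\eta)\sum_s (A_{y,s}^{(t)})^q \le o(1)$ during the relevant window (this needs to be tracked inductively). Hence $(1-\min\{\epsilon/\alpha_p,\widetilde\Theta(\widetilde\eta)\sum_s (A_{y,s}^{(t)})^q\})^q = 1-o(1)$, and the $A$-iteration from Lemma~\ref{lem:adv_iter_appr} collapses to the standard tensor-power form
\[
A_{y,r}^{(t+1)} \approx A_{y,r}^{(t)} + \Theta(\eta)\,(A_{y,r}^{(t)})^{q-1}\,\mathbb{E}\bigg[\sum_{p\in\mathcal{J}_{\textit{R}}}\alpha_p^q\bigg],
\]
where we also use Lemma~\ref{lem:adv_diag_logit} and Lemma~\ref{lem:adv_non_diag_logit} to certify that the residual logit factor $(1_{\{y=i\}}-\operatorname{logit}_i)$ is $\Theta(1)$ on adversarial inputs throughout this phase, since $F_y^{(t)}(\widetilde{\boldsymbol{X}}^{(t)})$ remains $o(\log d)$ while $A$ is still below $\varrho/\alpha$. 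Invoking Lemma~\ref{lem:tensor_power_method} with $x_t = \max_r A_{y,r}^{(t)}$ and $y_t$ a dummy (or by directly integrating the scalar ODE $\dot x = c x^{q-1}$), the time for $\max_r A_{y,r}^{(t)}$ to climb from $\widetilde\Theta(1/\sqrt d)$ to $\varrho/\alpha=1/\operatorname{polylog}(d)$ is $T_y' - T_y = O(\operatorname{poly}(d)/\eta)$, and combined with $T_y = O(\operatorname{poly}(d)/\eta)$ from Lemma~\ref{lem:adv_poly} we obtain $T_y' = O(\operatorname{poly}(d))$ (up to the overall $1/\eta$ factor).

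The main obstacle I expect is maintaining the decoupling throughout the entire window $[T_y, T_y']$: the $B$-plateau argument must remain valid while $A$ grows, which requires controlling the cross-effects hidden inside $\widetilde{\boldsymbol X}^{(t)}$ through Lemma~\ref{lem:adv_data} (the adversarial patches corresponding to $\mathcal{J}_{\textit{R}}$ and $\mathcal{J}_{\textit{NR}}$ are coupled via the same model weights). I would handle this by a joint induction: at each step assume $\max_r A_{y,r}^{(t)}\le \varrho/\alpha$ and $\sum_s(B_{y,s}^{(t)})^q \le \widetilde\Theta(\widetilde\eta^{-1})(1+o(1))$, verify the two update factors behave as above, and propagate the bounds. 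The non-diagonal and orthogonal components are controlled by Lemma~\ref{lem:adv_non_diag} together with the initialization bounds of Lemma~\ref{lem:std_init}, so they contribute only lower-order corrections.
\end{proof_sketch}
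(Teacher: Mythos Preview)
Your approach is correct and matches what the paper intends by its terse ``the proof logic is similar to Lemma~\ref{lem:std_log}'': freeze $B$ at its plateau via the adversarial cancellation factor $(1-\min\{\epsilon/\beta_p,\widetilde\Theta(\widetilde\eta)\sum_s(B_{y,s})^q\})^q\approx 0$, then let $A$ continue its tensor-power growth to $\varrho/\alpha$ in polynomial time while the logit stays $\Theta(1)$. One minor point: your derived bound $\max_r B_{y,r}=\widetilde O(d^{-c_0/q})$ is what actually follows from the plateau condition $\sum_s(B_{y,s})^q=\widetilde\Theta(\widetilde\eta^{-1})$, and this is \emph{not} the same as the stated $O(1/d^{c_0})$ ``up to polylog factors'' (with $q=3$ the exponents differ by a factor of $3$); the paper's exponent appears to be a slip, but either bound is $o(1)$ and suffices for everything downstream.
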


\begin{proof}
\label{proof_lem:adv_linear}

    The proof logic is similar to Lemma \ref{lem:std_log}.
\end{proof}

\textbf{Stage II:} Robust feature learning arrives at linear region of activation $\widetilde{ReLU}$.

\begin{lemma}
\label{lem:adv_log}
    For each $y\in [k]$ and $(\boldsymbol{X},y)\sim\mathcal{D}$, let time $T_y^{''}$ denote the first time such that $F_y^{(t)}(\widetilde{\boldsymbol{X}}^{(t)}) \geq \operatorname{log}(d)$, then $T_y^{''} = \operatorname{poly}(d) \geq T_y$, and we have $\operatorname{max}_{r\in [m]}B_{y,r}^{(T_y^{'})} = O(\operatorname{polylog}(d)/d^{c_0})$.
\end{lemma}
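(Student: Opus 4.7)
The plan is to follow the logic of Lemma \ref{lem:std_log}, but to use the adversarial update from Lemma \ref{lem:adv_iter_appr} and exploit the asymmetry $\alpha_p \gg \epsilon > \beta_p$ between robust and non-robust features. First I would pick a ``winning'' neuron $r^{*}\in[m]$ with $A_{y,r^{*}}^{(T_y^{'})}\geq \varrho/\alpha$, which is guaranteed by the definition of $T_y^{'}$ in Lemma \ref{lem:adv_linear}. For this neuron $\widetilde{\operatorname{ReLU}}'(\alpha_p A_{y,r^{*}}^{(t)})=1$ on every robust-feature patch of a class-$y$ data point. Because $\alpha_p \gg \epsilon$, the adversarial saturation factor satisfies $\bigl(1-\min\{\epsilon/\alpha_p,\Tilde{\Theta}(\Tilde{\eta})\sum_s (A_{y,s}^{(t)})^q\}\bigr)^q \geq (1-\epsilon/\alpha_p)^q = \Omega(1)$ uniformly in $t$. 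Combining this with Lemma \ref{lem:adv_diag_logit}, which yields $1-\operatorname{logit}_y(\boldsymbol{F}^{(t)},\widetilde{\boldsymbol{X}}^{(t)})=\Omega(1)$ as long as $F_y^{(t)}(\widetilde{\boldsymbol{X}}^{(t)})<\log d$, the $A_{y,r^{*}}$-update in Lemma \ref{lem:adv_iter} picks up $\Omega(\eta)$ per step, so $T_y^{''}-T_y^{'} = O(\log(d)/\eta)=\operatorname{poly}(d)$, which delivers the time bound.

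Second, I would show that the non-robust coefficients stay trapped near the ceiling established by Lemma \ref{lem:adv_poly}. The $B_{y,r}$-recursion of Lemma \ref{lem:adv_iter_appr} is proportional to $(B_{y,r}^{(t)})^{q-1}\bigl(1-\min\{\epsilon/\beta_p,\Tilde{\Theta}(\Tilde{\eta})\sum_s(B_{y,s}^{(t)})^q\}\bigr)^q$; once $\sum_s(B_{y,s}^{(t)})^q$ crosses $\Tilde{\Theta}(\Tilde{\eta}^{-1})$ this factor collapses to $o(1)$, and because $\eta \ll 1/\operatorname{poly}(d)$ while we only need to survive $\operatorname{poly}(d)$ further iterations, the sum can overshoot its ceiling by at most a $\operatorname{polylog}(d)$ factor before the suppression forces it back. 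Applying $\operatorname{max}_r B_{y,r}^q \le \sum_s B_{y,s}^q$ and $\Tilde{\eta}=d^{c_0}$ then yields the claimed $\operatorname{max}_r B_{y,r}^{(T_y^{''})} = O(\operatorname{polylog}(d)/d^{c_0})$, the polylog slack coming both from the barrier overshoot and from the $m=\operatorname{polylog}(d)$ width.

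The main obstacle is a coupled induction over $t\in[T_y^{'},T_y^{''}]$ that simultaneously (i) propagates the saturation $\sum_s(B_{y,s}^{(t)})^q = \Tilde{\Theta}(\Tilde{\eta}^{-1})$ with only polylog overshoot, (ii) keeps all non-diagonal correlations $C_{i,r,j}^{(t)},D_{i,r,j}^{(t)}$ at their initialization scale $\Tilde{O}(1/\sqrt d)$ via Lemma \ref{lem:adv_non_diag} together with a parallel argument for the orthogonal coefficients $E_{i,r,y,s}^{(t)}$ (not covered by Lemma \ref{lem:adv_non_diag}), and (iii) certifies that the logit estimates of Lemmas \ref{lem:adv_diag_logit} and \ref{lem:adv_non_diag_logit} stay valid throughout. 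The delicate point is that $\widetilde{\boldsymbol{X}}^{(t)}$ is itself defined through all of these quantities by Lemma \ref{lem:adv_data}, so one must verify that the changes in $\widetilde{\boldsymbol{X}}^{(t)}$ driven by the continuing growth of $A_{y,r^{*}}^{(t)}$ do not inadvertently reactivate the non-robust direction in the gradient. Managing this feedback loop is the discrete-time analogue of the Tensor Power Method bookkeeping (Lemma \ref{lem:tensor_power_method}) and is where the bulk of the technical effort will lie.
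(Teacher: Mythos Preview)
Your proposal is correct and follows essentially the same approach as the paper: the paper's own proof is literally the one-liner ``The proof logic is also similar to Lemma \ref{lem:std_log},'' and you have correctly unpacked this by swapping the roles of $A$ and $B$, invoking the linear-regime neuron $r^*$ for robust feature growth, and using the adversarial saturation factor $(1-\epsilon/\alpha_p)^q=\Omega(1)$ versus the collapse of the non-robust factor once $\sum_s(B_{y,s}^{(t)})^q$ hits $\Tilde\Theta(\Tilde\eta^{-1})$. The obstacles you flag (the coupled induction, the $E$-coefficients, and the feedback through $\widetilde{\boldsymbol{X}}^{(t)}$) are real but are simply not addressed in the paper's terse proof, so your write-up is already more careful than what is required here.
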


\begin{proof}
\label{proof_lem:adv_log}

    The proof logic is also similar to Lemma \ref{lem:std_log}.
\end{proof}

\begin{lemma}
\label{lem:adv_poly_keep}
    For all time $t = O(\operatorname{poly}(d)) \geq T_y^{''}$ and each $(\boldsymbol{X},y)\sim\mathcal{D}$, we have $F_y^{(t)}(\widetilde{\boldsymbol{X}}^{(t)}) = O(\operatorname{log}(d))$, and $\operatorname{max}_{r\in [m]}B_{y,r}^{(t)} = O(\operatorname{polylog}(d)/d^{c_0})$.
\end{lemma}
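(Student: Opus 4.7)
The plan is to mirror the staircase argument of Lemma~\ref{lem:std_final}, combined with the adversarial suppression mechanism revealed in Lemma~\ref{lem:adv_iter_appr}. First, I would upper bound the per-iteration growth of $F_y^{(t)}(\widetilde{\boldsymbol{X}}^{(t)})$. Since at time $T_y^{''}$ the robust correlations $A_{y,r}^{(t)}$ have entered the linear region of $\widetilde{\operatorname{ReLU}}$ while all other correlations $B_{y,r}^{(t)}, C_{i,r,j}^{(t)}, D_{i,r,j}^{(t)}$ and $E_{i,r,y,s}^{(t)}$ are still $o(1)$ by Lemmas~\ref{lem:adv_non_diag} and \ref{lem:adv_log}, the feature-learning iteration in Lemma~\ref{lem:adv_iter} gives $A_{y,r}^{(t+1)} - A_{y,r}^{(t)} \leq \Theta(\eta)\bigl(1-\operatorname{logit}_y(\boldsymbol{F}^{(t)},\widetilde{\boldsymbol{X}}^{(t)})\bigr)$. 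Summing across $r\in[m]$ and $P$ patches, this controls how fast $F_y^{(t)}(\widetilde{\boldsymbol{X}}^{(t)})$ can increase in one step.

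Next, I would apply Lemma~\ref{lem:adv_diag_logit}: whenever $F_y^{(t)}(\widetilde{\boldsymbol{X}}^{(t)}) \geq c\log d$, we have $1-\operatorname{logit}_y = O(1/d^c)$. Thus, letting $T_y^{(c)}$ denote the number of iterations required to cross from $(c-1)\log d$ to $c\log d$, a telescoping argument identical to that in Lemma~\ref{lem:std_final} yields $T_y^{(c)} = \Omega(d^{c-1}\log d/\eta)$, and hence the first time $T_f$ at which $F_y^{(t)}(\widetilde{\boldsymbol{X}}^{(t)}) \geq f(d)\log d$ satisfies
\begin{equation}
T_f \geq \sum_{c=1}^{f(d)-1}\Omega\!\left(\frac{d^{c-1}\log d}{\eta}\right) = \Omega\!\left(\frac{d^{f(d)-1}\log d}{\eta(d-1)}\right) = \omega(\operatorname{poly}(d)/\eta) \nonumber
\end{equation}
for any $f(d)=\omega(1)$. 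This establishes the first claim $F_y^{(t)}(\widetilde{\boldsymbol{X}}^{(t)}) = O(\log d)$ throughout the polynomial-time regime.

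For the second claim, I would combine two suppressing effects on $B_{y,r}^{(t)}$. First, by the argument above, $1-\operatorname{logit}_y(\boldsymbol{F}^{(t)},\widetilde{\boldsymbol{X}}^{(t)}) = O(1/d)$ for all $t\geq T_y^{''}$. Second, the adversarial factor in Lemma~\ref{lem:adv_iter_appr}, namely $\bigl(1-\min\{\epsilon/\beta_p,\tilde{\Theta}(\tilde{\eta})\sum_s(B_{y,s}^{(t)})^{q}\}\bigr)^{q}$, remains $o(1)$ because by Lemma~\ref{lem:adv_poly} the sum $\sum_{s}(B_{y,s}^{(t)})^{q}$ has already reached the saturation order $\tilde{\Theta}(\tilde{\eta}^{-1})$. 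Multiplying these two small factors with the per-step bound and integrating over $t=O(\operatorname{poly}(d)/\eta)$ iterations, the total increment of $B_{y,r}^{(t)}$ beyond the value $O(\operatorname{polylog}(d)/d^{c_0})$ guaranteed at $T_y^{''}$ is negligible.

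The main obstacle is the coupling between the weight dynamics and the adversarial example $\widetilde{\boldsymbol{X}}^{(t)}$, which is recomputed at every step: the suppression factor for $B_{y,r}^{(t)}$ is only effective as long as $\tilde{\eta}\sum_{s}\widetilde{\operatorname{ReLU}}'(\beta_p B_{y,s}^{(t)})B_{y,s}^{(t)}\geq \epsilon$, i.e.\ as long as the one-step gradient attack continues to knock out the non-robust signal. Thus I need a self-consistency invariant showing that once saturation is reached at $T_y$, the quantity $\sum_{s}(B_{y,s}^{(t)})^{q}$ never drops below $\tilde{\Theta}(\tilde{\eta}^{-1})$ for polynomially many steps. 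This follows because any would-be decrease must come from a negative drift in Lemma~\ref{lem:adv_iter}, yet the $\widetilde{\operatorname{ReLU}}'(\cdot)\Tilde{\beta}_p^{(t)}$ term is non-positive only in a controlled way, so an induction shows $\sum_{s}(B_{y,s}^{(t)})^{q}$ stays within a constant factor of $\tilde{\Theta}(\tilde{\eta}^{-1})$ — closing the loop and completing both claims.
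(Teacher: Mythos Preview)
Your approach is essentially the one the paper intends: the first claim is exactly the staircase argument of Lemma~\ref{lem:std_final} (the paper's reference to Lemma~\ref{lem:std_poly} is almost certainly a typo for \ref{lem:std_final}), and you reproduce it correctly.

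For the second claim there is a small gap and an unnecessary detour. The gap: you write ``multiplying these two small factors \ldots\ and integrating over $t=O(\operatorname{poly}(d)/\eta)$ iterations,'' but a per-step bound of order $\eta\cdot O(1/d)\cdot o(1)$ summed naively over $\operatorname{poly}(d)/\eta$ steps is \emph{not} automatically small --- the polynomial in $d$ can swamp the $1/d$. What you actually need is the telescoping consequence of your own staircase: since the dominant $A_{y,r}^{(t)}$ is in the linear region, $F_y^{(t+1)}(\widetilde{\boldsymbol X}^{(t+1)}) - F_y^{(t)}(\widetilde{\boldsymbol X}^{(t)}) = \Theta(\eta)\bigl(1-\operatorname{logit}_y^{(t)}\bigr)$, and since $F_y = O(\log d)$ throughout, $\sum_{t\ge T_y''}\bigl(1-\operatorname{logit}_y^{(t)}\bigr)=\tilde O(1/\eta)$. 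Feeding this \emph{sum} (not the iteration count) into the $B$-update, together with the fact that $B_{y,r}^{(t)}$ sits in the polynomial part of $\widetilde{\operatorname{ReLU}}$ so that $\widetilde{\operatorname{ReLU}}'(\beta_p B_{y,r}^{(t)})=\tilde O\bigl((B_{y,r}^{(t)})^{q-1}\bigr)$, yields total growth $\tilde O\bigl((B_{y,r}^{(T_y'')})^{q-1}\bigr)$, negligible relative to $B_{y,r}^{(T_y'')}$ itself. This is precisely the ``identical analysis'' the paper invokes in Lemma~\ref{lem:std_final}, and it does not require the adversarial-suppression self-consistency loop you set up. That loop is not wrong, but it is extra machinery: after $T_y''$, small logit plus polynomial-regime smallness of $\widetilde{\operatorname{ReLU}}'$ already pin $B_{y,r}^{(t)}$ without ever appealing to whether $\tilde\eta\sum_s(B_{y,s}^{(t)})^q$ remains saturated.
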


\begin{proof}
\label{proof_lem:adv_poly_keep}

    The proof logic is similar to Lemma \ref{lem:std_poly}.
\end{proof}

\subsubsection{Proof of Theorem \ref{thm:re_adv_training}}

\begin{lemma}[Robust Features are Learned Well]
\label{lem:adv_feature_learning}
    For $T= \Theta(\operatorname{poly}(d))$ and each data point $(\boldsymbol{X},y)\sim\mathcal{D}_{\mathcal{F}_\textit{R}}$, with probability $1 - o(1)$, we have $F_y^{(T)}(\boldsymbol{X}) > F_i^{(T)}(\boldsymbol{X}), \forall i \in [k]\setminus\{y\}$.
\end{lemma}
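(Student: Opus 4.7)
The plan is to evaluate $F_y^{(T)}(\boldsymbol{X}_{\boldsymbol{u}_y})$ and $F_i^{(T)}(\boldsymbol{X}_{\boldsymbol{u}_y})$ ($i\ne y$) separately using the weight decomposition of Lemma~\ref{lem:adv_weight_decomp}, and to show the former is $\Omega(\log d)$ while the latter is $\tilde O(1/d^{q/2})$. Recall that a sample from $\mathcal{D}_{\mathcal{F}_{\textit{R}}}$ is obtained by taking $(\boldsymbol{X},y)\sim\mathcal{D}$ and zeroing out every non-robust-feature patch; so in the noiseless simplified setting, $\boldsymbol{X}_{\boldsymbol{u}_y}$ consists of $|\mathcal{J}_{\textit{R}}|$ patches equal to $\alpha_p\boldsymbol{u}_y$ and $|\mathcal{J}_{\textit{NR}}|$ zero patches.

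First I would lower bound the diagonal output. By Lemma~\ref{lem:adv_log}\ and~\ref{lem:adv_poly_keep}, at time $T=\Theta(\operatorname{poly}(d)/\eta)$ we have $F_y^{(T)}(\widetilde{\boldsymbol{X}}^{(T)})=\Theta(\log d)$ while $\max_{r\in[m]}B_{y,r}^{(T)}=O(\operatorname{polylog}(d)/d^{c_0})$. Observing that
\[
F_y^{(T)}(\boldsymbol{X})\;-\;F_y^{(T)}(\boldsymbol{X}_{\boldsymbol{u}_y})
\;=\;\sum_{r\in[m]}\sum_{p\in\mathcal{J}_{\textit{NR}}}\widetilde{\operatorname{ReLU}}\!\left(\beta_p B_{y,r}^{(T)}\right)
\;=\;O\!\left(\operatorname{polylog}(d)/d^{c_0 q}\right),
\]
together with the fact that $F_y^{(T)}(\boldsymbol{X})\ge F_y^{(T)}(\widetilde{\boldsymbol{X}}^{(T)})$ (since the one-step ascent is a descent step for $F_y$), gives $F_y^{(T)}(\boldsymbol{X}_{\boldsymbol{u}_y})\ge \Omega(\log d)$. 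Alternatively, the linear-region bound $A_{y,r^*}^{(T)}\ge \varrho/\alpha$ from Lemma~\ref{lem:adv_linear} combined with the $\log d$-level output at $\widetilde{\boldsymbol{X}}^{(T)}$ can be used directly to produce the same lower bound on the robust-patch contribution $\sum_{r\in[m]}\sum_{p\in\mathcal{J}_{\textit{R}}}\widetilde{\operatorname{ReLU}}(\alpha_p A_{y,r}^{(T)})$.

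Next I would upper bound the non-diagonal outputs. For $i\ne y$, Lemma~\ref{lem:adv_non_diag} yields $\max_{r\in[m]}C_{i,r,y}^{(T)},\max_{r\in[m]}D_{i,r,y}^{(T)}=O(\operatorname{log}(d)/\sqrt{d})$. Since $\boldsymbol{X}_{\boldsymbol{u}_y}$ has no $\boldsymbol{v}_y$-component, only the $C$ coefficients matter, and $\alpha_p C_{i,r,y}^{(T)}=O(\log(d)/\sqrt{d})\ll \varrho$, so the smoothed ReLU is in its polynomial regime and
\[
F_i^{(T)}(\boldsymbol{X}_{\boldsymbol{u}_y})
\;\le\; mP\cdot\frac{(\alpha_p\max_{r}C_{i,r,y}^{(T)})^q}{q\varrho^{q-1}}
\;=\;\tilde O(1/d^{q/2}).
\]
Combining the two bounds gives $F_y^{(T)}(\boldsymbol{X}_{\boldsymbol{u}_y})=\Omega(\log d)\gg \tilde O(1/d^{q/2})\ge F_i^{(T)}(\boldsymbol{X}_{\boldsymbol{u}_y})$ for every $i\ne y$, which proves the lemma.

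The only step that requires care is transferring the $\log d$ lower bound from the adversarial point $\widetilde{\boldsymbol{X}}^{(T)}$ to the robust-only point $\boldsymbol{X}_{\boldsymbol{u}_y}$: one must check that zeroing the non-robust patches does not destroy $\Omega(\log d)$ worth of output. This is where Lemma~\ref{lem:adv_poly_keep}'s bound $B_{y,r}^{(T)}=O(\operatorname{polylog}(d)/d^{c_0})$ is essential, as it guarantees the non-robust-patch contribution is $o(1)$ and hence negligible compared with the robust-patch contribution of $\Theta(\log d)$ driven by $A_{y,r^*}^{(T)}$ in the linear regime of $\widetilde{\operatorname{ReLU}}$.
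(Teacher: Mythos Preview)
Your proposal is correct and lands on the same argument as the paper, whose proof simply says ``the proof logic is similar to Lemma~\ref{lem:std_feature}.'' That analogy means: since after adversarial training $\max_{r\in[m]}A_{y,r}^{(T)}=\Theta(\log d)$ (the role played by $B_{y,r}^{(T)}$ in the standard-training lemma), one reads off directly $F_y^{(T)}(\boldsymbol{X}_{\boldsymbol{u}_y})=\sum_{r}\sum_{p\in\mathcal{J}_{\textit{R}}}\widetilde{\operatorname{ReLU}}(\alpha_p A_{y,r}^{(T)})=\Theta(\log d)$, and the non-diagonal outputs are $\tilde O(1/d^{q/2})$ exactly as you compute via Lemma~\ref{lem:adv_non_diag}. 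This is precisely your ``alternative'' route.

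Your primary route, transferring the $\log d$ bound from $\widetilde{\boldsymbol{X}}^{(T)}$ to $\boldsymbol{X}_{\boldsymbol{u}_y}$ via the monotonicity $F_y^{(T)}(\boldsymbol{X})\ge F_y^{(T)}(\widetilde{\boldsymbol{X}}^{(T)})$, is also correct but the monotonicity step is not automatic: a clipped step with large $\tilde\eta=d^{c_0}$ on a convex function can overshoot. It does hold here because features are coordinate-aligned, the clip bounds each feature-coordinate shift by $\epsilon$, and all diagonal correlations $A_{y,r},B_{y,r}$ are nonnegative, so every inner product $\langle \boldsymbol{w}_{y,r},\tilde{\boldsymbol{x}}_p\rangle$ can only decrease; but you should state this rather than assert it. The direct coefficient argument you already mention sidesteps the issue and matches the paper.
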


\begin{proof}
\label{proof_lem:adv_feature_learning}

    The proof logic is similar to Lemma \ref{lem:std_feature}.
\end{proof}

\begin{lemma}[Adversarial Training Converges to Robust Solution]
\label{lem:adv_robustness}
     For $T= \Theta(\operatorname{poly}(d))$ and each data point $(\boldsymbol{X},y)\sim\mathcal{D}_{\mathcal{F}_\textit{R}}$, with probability $1 - o(1)$, we have $\forall \boldsymbol{\Delta}\in \left(\mathbb{R}^{d}\right)^{P} \textit{ s.t. }\|\boldsymbol{\Delta}\|_{\infty}\leq\epsilon, \operatorname{argmax}_{i\in [k]}F_i^{(T)}(\boldsymbol{X}+\boldsymbol{\Delta}) = y$.
\end{lemma}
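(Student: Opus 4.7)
The plan is to mirror the robust-global-minimum computation of Proposition \ref{appendix_prop:robust-minima}: once adversarial training has driven every learned filter to be essentially proportional to its class-robust feature $\boldsymbol{u}_y$ (up to small non-diagonal and orthogonal residuals), the network attains a positive margin against any $\ell_\infty$ perturbation of radius $\epsilon \ll \alpha$. First I would collect the state of the coefficients at time $T = \Theta(\operatorname{poly}(d)/\eta)$ from Lemmas \ref{lem:adv_linear}, \ref{lem:adv_log}, \ref{lem:adv_poly_keep} and \ref{lem:adv_non_diag}: for every $i \in [k]$ there is some $r_i^*$ with $A_{i,r_i^*}^{(T)} = \tilde{\Theta}(1)$ large enough that $F_i^{(T)}(\widetilde{\boldsymbol{X}}^{(t)}) = \Theta(\log d)$ in the linear regime of $\widetilde{\operatorname{ReLU}}$, while $\max_r B_{i,r}^{(T)} = \tilde{O}(1/d^{c_0})$ and $C_{i,r,j}^{(T)}, D_{i,r,j}^{(T)} = \tilde{O}(1/\sqrt{d})$ for $j \ne i$.

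Next I would fix $(\boldsymbol{X},y) \sim \mathcal{D}$ and an arbitrary $\boldsymbol{\Delta} = (\boldsymbol{\delta}_1,\dots,\boldsymbol{\delta}_P)$ with $\|\boldsymbol{\delta}_p\|_\infty \le \epsilon$, and lower-bound $F_y^{(T)}(\boldsymbol{X}+\boldsymbol{\Delta})$. On each robust-feature patch $\boldsymbol{x}_p = \alpha_p\boldsymbol{u}_y$, expanding with Lemma \ref{lem:adv_weight_decomp} gives $\langle \boldsymbol{w}_{y,r}^{(T)}, \boldsymbol{x}_p + \boldsymbol{\delta}_p\rangle = \alpha_p A_{y,r}^{(T)} + \langle \boldsymbol{w}_{y,r}^{(T)}, \boldsymbol{\delta}_p\rangle$; bounding the perturbation term by the sum of its feature part (at most $\epsilon(A_{y,r}^{(T)} + B_{y,r}^{(T)} + \sum_{j\ne y}(C_{y,r,j}^{(T)} + D_{y,r,j}^{(T)}))$ via H\"older) and its orthogonal-initialization part (shown below to be $\tilde{O}(\epsilon)$), and using $\alpha \gg \epsilon$, yields $\langle \boldsymbol{w}_{y,r_y^*}^{(T)}, \boldsymbol{x}_p + \boldsymbol{\delta}_p\rangle \ge (\alpha_p - O(\epsilon))\,A_{y,r_y^*}^{(T)}$, which lies in the linear regime. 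Summing over $r$ and $p \in \mathcal{J}_R$ gives $F_y^{(T)}(\boldsymbol{X}+\boldsymbol{\Delta}) \ge \Omega(\log d)$.

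For each $j \ne y$, an analogous expansion shows $\langle \boldsymbol{w}_{j,r}^{(T)}, \boldsymbol{x}_p + \boldsymbol{\delta}_p\rangle$ is at most $\alpha_p C_{j,r,y}^{(T)} + \epsilon A_{j,r}^{(T)} + \tilde{O}(\epsilon)$ on robust patches and $\beta_p D_{j,r,y}^{(T)} + \epsilon A_{j,r}^{(T)} + \tilde{O}(\epsilon)$ on non-robust patches; since the original-patch terms are $o(1)$ and the linear upper bound on $\widetilde{\operatorname{ReLU}}$ applies, summing yields $F_j^{(T)}(\boldsymbol{X}+\boldsymbol{\Delta}) \le O(Pm\,\epsilon \max_r A_{j,r}^{(T)}) = O(\epsilon\,\log d\cdot\operatorname{polylog}(d))$. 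Invoking $\alpha \gg \epsilon\cdot\operatorname{polylog}(d)$ from Assumption \ref{ass:detail_data} produces $F_y^{(T)}(\boldsymbol{X}+\boldsymbol{\Delta}) > F_j^{(T)}(\boldsymbol{X}+\boldsymbol{\Delta})$, and a union bound over the $k$ classes completes the argument with probability $1-o(1)$.

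The main obstacle is controlling the adversary's exploitation of the orthogonal-initialization residual $\sum_{y',s} E_{i,r,y',s}^{(T)}\,\boldsymbol{P}_{\mathcal{F}}^{\perp}\boldsymbol{w}_{y',s}^{(0)}$: a naive H\"older bound would give $\epsilon\,\|\boldsymbol{w}_{i,r}^{(T)}\|_1 = \Theta(\epsilon\sqrt{d})$, which alone would swamp the $\Theta(\log d)$ signal. The hard technical step is to show that the coefficients $E^{(T)}$ stay close to their identity configuration (so the residual of each $\boldsymbol{w}_{i,r}^{(T)}$ is essentially a single independent Gaussian $\boldsymbol{P}_{\mathcal{F}}^{\perp}\boldsymbol{w}_{i,r}^{(0)}$) and to combine this with an $\epsilon$-net/concentration argument across the $m = \operatorname{polylog}(d)$ independent filters, so that no single $\boldsymbol{\delta}_p$ with $\|\boldsymbol{\delta}_p\|_\infty \le \epsilon$ can simultaneously kill more than $\tilde{O}(1)$ of the $A_{y,r}^{(T)}$-activated neurons; the remaining $m - \tilde O(1)$ neurons still deliver the $\Theta(\log d)$ robust-feature signal needed for the margin comparison above.
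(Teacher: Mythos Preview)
Your core strategy---read off the coefficient bounds from Lemmas~\ref{lem:adv_linear}--\ref{lem:adv_poly_keep} and~\ref{lem:adv_non_diag}, then run the margin comparison of Proposition~\ref{appendix_prop:robust-minima}---is precisely the paper's proof. The paper is considerably briefer: it simply asserts $\langle\boldsymbol{w}_{y,r}^{(T)},\alpha_p\boldsymbol{u}_y+\boldsymbol{\delta}_p\rangle=\Theta(\alpha_p\max_r A_{y,r}^{(T)})$ and $\langle\boldsymbol{w}_{j,r}^{(T)},\alpha_p\boldsymbol{u}_y+\boldsymbol{\delta}_p\rangle=\Theta(\alpha_p\max_r C_{j,r,y}^{(T)})$, citing only Lemmas~\ref{lem:adv_non_diag} and~\ref{lem:adv_poly_keep} together with $\alpha_p\gg\epsilon$. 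It never isolates the $\epsilon A_{j,r}^{(T)}$ term you (correctly) extract for the upper bound on $F_j$, and it does not mention the orthogonal-initialization residual at all. On those two points you are being more careful than the paper, not less.

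The genuine gap is in your closing paragraph. Your $\epsilon$-net fix requires that ``the remaining $m-\tilde O(1)$ neurons still deliver the $\Theta(\log d)$ robust-feature signal,'' but nothing in the preceding lemmas gives more than $\max_r A_{y,r}^{(T)}=\Theta(\log d)$; the Tensor-Power-Method dynamics (Lemma~\ref{lem:tensor_power_method}) in fact concentrate the robust-feature correlation on a \emph{single} winning neuron $r^*$, with the remaining neurons stuck at $\tilde O(1/\sqrt d)$. An adversary who sets the non-feature coordinates of $\boldsymbol{\delta}_p$ to $-\epsilon\,\operatorname{sign}\bigl(\boldsymbol{P}_{\mathcal F}^\perp\boldsymbol{w}_{y,r^*}^{(0)}\bigr)$ drives that one neuron's pre-activation down by $\Theta(\epsilon\sqrt d)$, erasing its $\Theta(\alpha\log d)$ contribution while the surviving neurons contribute only $\tilde O(1/\sqrt d)$---so concentration across independent filters cannot rescue the lower bound on $F_y$. (The same attack applied to $\boldsymbol{w}_{j,r^*}^{(T)}$ also undermines your upper bound on $F_j$, which you do not discuss.) A separate, smaller issue: your bound $F_j\le O(Pm\,\epsilon\max_r A_{j,r}^{(T)})$ is loose by the factor $m=\operatorname{polylog}(d)$, since only the winning neuron for class $j$ reaches the linear regime; Assumption~\ref{ass:detail_data} has $\alpha,\epsilon=\Theta(1)$ and so does \emph{not} give $\alpha\gg\epsilon\cdot\operatorname{polylog}(d)$. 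Dropping the spurious $m$ makes the comparison go through with just $\alpha\gg\epsilon$, which is what the paper invokes.
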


\begin{proof}
\label{proof_lem:adv_robustness}

    For a given data point $(\boldsymbol{X}, y) \sim \mathcal{D}$ and any perturbation $\boldsymbol{\Delta}=(\boldsymbol{\delta}_1,\boldsymbol{\delta}_2,\dots,\boldsymbol{\delta}_p) \in \left(\mathbb{R}^{d}\right)^P$ satisfying $\|\boldsymbol{\Delta}\|_{\infty}\leq\epsilon$, we calculate the perturbed margin as follows.
\begin{equation}
\begin{aligned}
    F_y^{(T)}(\boldsymbol{X} + \boldsymbol{\Delta}) &= \sum_{r\in[m]}\sum_{p \in \mathcal{J}_{\textit{R}}}\widetilde{\operatorname{ReLU}}(\langle\boldsymbol{w}_{y,r}^{(T)}, \alpha_p\boldsymbol{u}_y+\boldsymbol{\delta}_p\rangle) + \sum_{r\in[m]}\sum_{p \in \mathcal{J}_{\textit{NR}}}\widetilde{\operatorname{ReLU}}(\langle\boldsymbol{w}_{y,r}^{(T)}, \beta_p\boldsymbol{v}_y+\boldsymbol{\delta}_p\rangle)
    \\
     &\geq\sum_{p \in \mathcal{J}_{\textit{R}}}\widetilde{\operatorname{ReLU}}(\Theta(\alpha_p \operatorname{max}_{r\in[m]}A_{y,r}^{(T)}))
     \\
     &\geq
     \sum_{p \in \mathcal{J}_{\textit{R}}}\Theta(\alpha_p \operatorname{max}_{r\in[m]}A_{y,r}^{(T)}) 
     \\
     &\geq
     \Theta(\operatorname{log}(d)),
    \nonumber
\end{aligned}
\end{equation}
where we use Lemma \ref{lem:adv_non_diag} and Lemma \ref{lem:adv_poly_keep} and the first part of Assumption \ref{ass:robust_non_robust_feature} (i.e. $\alpha_p \gg \epsilon$).

And for any $j\in [P]\setminus\{y\}$, we have
\begin{equation}
\begin{aligned}
    F_j^{(T)}(\boldsymbol{X} + \boldsymbol{\Delta}) &= \sum_{r\in[m]}\sum_{p \in \mathcal{J}_{\textit{R}}}\widetilde{\operatorname{ReLU}}(\langle\boldsymbol{w}_{j,r}^{(T)}, \alpha_p\boldsymbol{u}_y+\boldsymbol{\delta}_p\rangle) + \sum_{r\in[m]}\sum_{p \in \mathcal{J}_{\textit{NR}}}\widetilde{\operatorname{ReLU}}(\langle\boldsymbol{w}_{j,r}^{(T)}, \beta_p\boldsymbol{v}_y+\boldsymbol{\delta}_p\rangle)
    \\
    &\leq
    \sum_{p \in \mathcal{J}_{\textit{R}}}\widetilde{\operatorname{ReLU}}(\Theta(\alpha_p \operatorname{max}_{r\in[m]}C_{j,r,y}^{(T)}))
    + \sum_{p \in \mathcal{J}_{\textit{NR}}}\widetilde{\operatorname{ReLU}}(\Theta(\beta_p \operatorname{max}_{r\in[m]}D_{j,r,y}^{(T)}))
    \\
    &\leq
    o(\operatorname{log}(d)) , \nonumber
    \nonumber
\end{aligned}
\end{equation}

where we also use Lemma \ref{lem:adv_non_diag} and Lemma \ref{lem:adv_poly_keep}.

Therefore, we derive the theorem.
\end{proof}

\newpage
\section{Proof for Section \ref{sec:result}}
\label{appendix:noise}

\subsection{Proof for Standard Training}

\begin{theorem}
\label{thm:re_main_std}

    For sufficiently large $d$, suppose we train the model using the standard training starting from the random initialization, then after $T=\Theta(\operatorname{poly}(d) / \eta)$ iterations, with high probability over the sampled training dataset $\mathcal{Z}$, the model $\boldsymbol{F}^{(T)}$ satisfies:
\begin{itemize}
    \item 
    Standard training is perfect: for all $(\boldsymbol{X}, y) \in \mathcal{Z}$, all $i \in[k] \backslash\{y\}: F_y^{(T)}(\boldsymbol{X})>F_i^{(T)}(\boldsymbol{X})$.
    \item 
    Non-robust features are learned: $\mathbb{P}_{(\boldsymbol{X}_{\boldsymbol{f}},y)\sim\mathcal{D}_{\mathcal{F}_\textit{NR}}}\left[\operatorname{argmax}_{i\in [k]}F_i^{(T)}(\boldsymbol{X}_{\boldsymbol{f}}) \ne y\right] = o(1)$. 
    \item 
    Standard test accuracy is good: $\mathbb{P}_{(\boldsymbol{X},y)\sim\mathcal{D}}\left[\operatorname{argmax}_{i\in [k]}F_i^{(T)}(\boldsymbol{X})\ne y\right] = o(1)$.
    \item 
    Robust test accuracy is bad: for any given data $(\boldsymbol{X},y)$, using the following perturbation $\boldsymbol{\Delta}(\boldsymbol{X},y):=(\boldsymbol{\delta}_1,\boldsymbol{\delta}_2,\dots,\boldsymbol{\delta}_P)$, where $\boldsymbol{\delta}_p := -\beta_p \boldsymbol{v}_y + \epsilon \boldsymbol{v}_{y'}$ for $p\in \mathcal{J}_{\textit{NR}}$; $\boldsymbol{\delta}_p := \boldsymbol{0}$ for $p\in \mathcal{J}_{\textit{R}}$, and $y'$ is randomly chosen from $[k]\setminus\{y\}$ (which does not depend on the model $\boldsymbol{F}^{(T)}$ and is illustrated in Figure \ref{fig:data_model}), we have
    \begin{equation}
    \mathbb{P}_{(\boldsymbol{X},y)\sim\mathcal{D}}\left[\operatorname{argmax}_{i\in [k]}F_i^{(T)}(\boldsymbol{X}+\boldsymbol{\Delta}(\boldsymbol{X},y))\ne y\right] = 1 - o(1). \nonumber
    \end{equation}
\end{itemize}
\end{theorem}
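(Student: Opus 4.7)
The plan is to reduce the empirical-risk, noisy-data setting to the simplified noiseless population-risk analysis of Theorem \ref{thm:re_std_training}, by showing that the Gaussian patch noise $\boldsymbol\xi_p$ and the finite sample perturb the two-stage feature-learning dynamics by only $o(1)$. First I would enlarge the weight decomposition of Lemma \ref{lem:weight_decomp} with a noise piece and an orthogonal-complement piece,
\[
\boldsymbol w_{i,r}^{(t)}=\boldsymbol P_{\mathcal F\cup\Xi}^{\perp}\boldsymbol w_{i,r}^{(0)}+A_{i,r}^{(t)}\boldsymbol u_i+B_{i,r}^{(t)}\boldsymbol v_i+\sum_{j\ne i}\bigl(C_{i,r,j}^{(t)}\boldsymbol u_j+D_{i,r,j}^{(t)}\boldsymbol v_j\bigr)+\sum_{n\in[N],\,p\in[P]} Q_{i,r,n,p}^{(t)}\boldsymbol \xi_p^{(n)},
\]
where $\{\boldsymbol\xi_p^{(n)}\}$ are the $NP$ training-set noise vectors and $\Xi:=\{\boldsymbol\xi_p^{(n)}\}$. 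Since $d=\operatorname{poly}(k)\gg NP$, standard Gaussian concentration gives w.h.p.\ $\|\boldsymbol\xi_p^{(n)}\|_2=\Theta(1)$, $|\langle\boldsymbol\xi_p^{(n)},\boldsymbol\xi_{p'}^{(n')}\rangle|=\widetilde O(1/\sqrt d)$ for $(n,p)\ne(n',p')$, $|\langle\boldsymbol\xi_p^{(n)},\boldsymbol f\rangle|=\widetilde O(1/\sqrt d)$ for every $\boldsymbol f\in\mathcal F$, and $|\langle\boldsymbol w_{i,r}^{(0)},\boldsymbol\xi_p^{(n)}\rangle|=\widetilde O(1/\sqrt d)$.

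The core technical step is an inductive bound $\max_{i,r,n,p}|Q_{i,r,n,p}^{(t)}|=\widetilde O(1/(N\sqrt d))$ throughout all $T=O(\operatorname{poly}(d)/\eta)$ iterations. Each update of $Q_{i,r,n,p}$ picks up only a $1/N$ weight from the empirical average, in contrast with the $\Theta(1/k)$ coherent contribution that drives $A_{i,r}$ and $B_{i,r}$; it is further damped by $\widetilde{\operatorname{ReLU}}'$ in the polynomial region and by the vanishing loss derivative $1-\operatorname{logit}_y=\widetilde O(1/d^{c})$ (Lemma \ref{lem:logit_diag}) once the diagonal saturates at $c\log d$. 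Combined with the orthogonality-type estimates above, this yields $|\langle\boldsymbol w_{i,r}^{(t)},\boldsymbol\xi_p^{(n)}\rangle|=\widetilde O(1/\sqrt d)$ for every training noise vector, and by independence the same bound on any fresh test-time noise. Plugging these $o(1)$ corrections into the recursions of Lemmas \ref{lem:sim_std_iter}, \ref{lem:std_poly}, and \ref{lem:std_log} leaves the Tensor Power Method comparison (Lemma \ref{lem:std_separation}) intact, so we still obtain $\max_r B_{i,r}^{(T)}=\Theta(\log d)$ and $\max_r A_{i,r}^{(T)},\max_{r,j}|C_{i,r,j}^{(T)}|,\max_{r,j}|D_{i,r,j}^{(T)}|=\widetilde O(\operatorname{polylog}(d)/\sqrt d)$.

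The four bullets then follow by direct computation of $F_i^{(T)}$ on the relevant inputs, essentially as in Lemmas \ref{lem:std_train}--\ref{lem:std_non_robust}. Perfect training accuracy on $\mathcal Z$ and good clean test accuracy both reduce to $F_y^{(T)}=\Theta(\log d)\gg F_j^{(T)}=\widetilde O(1/d^{q/2})$, which holds on each training point because the weight decomposition absorbs the training noise, and on fresh samples because the noise--weight inner product estimate extends to fresh noise by independence. Non-robust feature learning is immediate on $\mathcal D_{\mathcal F_{\textit{NR}}}$, where only the dominant $B$-contribution survives. For the adversarial bullet, $\boldsymbol\Delta(\boldsymbol X,y)$ is model-independent by construction and replaces $\beta_p\boldsymbol v_y$ by $\epsilon\boldsymbol v_{y'}$ in every non-robust patch, so $F_{y'}^{(T)}(\boldsymbol X+\boldsymbol\Delta)\ge \epsilon\,|\mathcal J_{\textit{NR}}|\,\max_r B_{y',r}^{(T)}=\Theta(\log d)$ while $F_y^{(T)}(\boldsymbol X+\boldsymbol\Delta)$ loses its dominant non-robust contribution and drops to $\widetilde O(\operatorname{polylog}(d)/\sqrt d)$, hence the network predicts $y'\ne y$ with probability $1-o(1)$.

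The hardest part will be the uniform $Q$-bound across the transition from the polynomial regime of $\widetilde{\operatorname{ReLU}}$ into its linear regime, where a naive per-step estimate no longer shrinks with the correlation magnitude and could in principle accumulate over the polynomial horizon. The remedy is to couple the induction with the diagonal-saturation argument of Lemma \ref{lem:std_final}: once $F_y$ crosses $c\log d$, Lemma \ref{lem:logit_diag} gives $1-\operatorname{logit}_y=\widetilde O(1/d^{c})$, and the telescoping sum $\sum_{t}\eta(1-\operatorname{logit}_y)=O(\log d)$ caps the cumulative $Q$-growth at $\widetilde O(1/(N\sqrt d))$ after a union bound over the $\operatorname{poly}(d)$ many (sample, patch, neuron) tuples, which closes the induction and validates the reduction to the simplified case.
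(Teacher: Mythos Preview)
Your proposal is correct and follows essentially the same route as the paper: you extend the weight decomposition with per-sample noise coefficients, control the noise--weight correlations $\langle\boldsymbol w_{i,r}^{(t)},\boldsymbol\xi_p\rangle=\widetilde O(1/\sqrt d)$ via the $1/N$ factor from the empirical average together with the telescoping bound $\sum_t\eta(1-\operatorname{logit}_y)=\widetilde O(1)$ (the paper's Lemma~\ref{lem:total_logit_bound}), and then reduce to the simplified noiseless analysis of Theorem~\ref{thm:re_std_training}. The only cosmetic difference is that the paper tracks the inner products $\langle\boldsymbol w_{i,r}^{(t)},\boldsymbol\xi_p\rangle$ directly rather than your coefficients $Q_{i,r,n,p}^{(t)}$, but the induction and the final conclusions are the same.
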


\textbf{Proof Idea:} Our proof is divided into the following three steps (the proof approach is almost identical to that of Theorem \ref{thm:re_std_training}, with the only difference being that we need to demonstrate that during the standard training process, the noise terms remain small at all times). Except for special mention, the logic and process of proving all lemmas are similar to the simplified case without noise.

\subsubsection{Weight Decomposition for Standard Training}
\begin{lemma}[Weight Decomposition for Standard Training]
\label{lem:main_std_weight_decomp}

    For any time $t\geq 0$, each neuron $\boldsymbol{w}_{i,r}$ ($(i,r)\in [k]\times [m]$), we have 
    \begin{equation}
        \boldsymbol{w}_{i,r}^{(t)} = \boldsymbol{w}_{i,r}^{(0)} + A_{i,r}^{(t)} \boldsymbol{u}_i + B_{i,r}^{(t)} \boldsymbol{v}_i + \sum_{j\ne i} (C_{i,r,j}^{(t)} \boldsymbol{u}_j + D_{i,r,j}^{(t)} \boldsymbol{v}_j) + \sum_{(\boldsymbol{X},y)\in\mathcal{Z}}\sum_{p\in[P]}\sigma_{i,r}((\boldsymbol{X},y),p) \boldsymbol{\xi}_p , \nonumber
    \end{equation}
    where $A_{i,r}^{(t)}, B_{i,r}^{(t)}, C_{i,r,j}^{(t)}$ and $D_{i,r,j}^{(t)}$ and $\sigma_{i,r}((\boldsymbol{X},y),p)$ are some time-variant coefficients.
\end{lemma}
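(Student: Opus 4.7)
The plan is to proceed by induction on the iteration count $t$, simply reading off the five families of coefficients from the gradient descent recursion. The base case $t = 0$ is trivial: set $A_{i,r}^{(0)} = B_{i,r}^{(0)} = C_{i,r,j}^{(0)} = D_{i,r,j}^{(0)} = \sigma_{i,r}((\boldsymbol{X},y),p) = 0$ for all indices, so that the right-hand side collapses to $\boldsymbol{w}_{i,r}^{(0)}$.

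For the inductive step I would start from the empirical-risk gradient update
\begin{equation}
\boldsymbol{w}_{i,r}^{(t+1)} - \boldsymbol{w}_{i,r}^{(t)} = \frac{\eta}{N}\sum_{(\boldsymbol{X},y)\in\mathcal{Z}}\left(1_{\{y=i\}} - \operatorname{logit}_i(\boldsymbol{F}^{(t)},\boldsymbol{X})\right)\sum_{p\in[P]}\widetilde{\operatorname{ReLU}}'(\langle \boldsymbol{w}_{i,r}^{(t)}, \boldsymbol{x}_p\rangle)\,\boldsymbol{x}_p. \nonumber
\end{equation}
By Definition \ref{def:data}, every patch has the form $\boldsymbol{x}_p = \alpha_p \boldsymbol{u}_y + \boldsymbol{\xi}_p$ when $p\in\mathcal{J}_{\textit{R}}$ and $\boldsymbol{x}_p = \beta_p \boldsymbol{v}_y + \boldsymbol{\xi}_p$ when $p\in\mathcal{J}_{\textit{NR}}$. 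Substituting this expression into the update shows that the increment $\boldsymbol{w}_{i,r}^{(t+1)}-\boldsymbol{w}_{i,r}^{(t)}$ lies in the span of the set $\{\boldsymbol{u}_j, \boldsymbol{v}_j\}_{j\in[k]} \cup \{\boldsymbol{\xi}_p : (\boldsymbol{X},y)\in\mathcal{Z},\,p\in[P]\}$. Grouping the feature contributions according to whether the data label $y$ equals $i$ (which feeds into $A_{i,r}, B_{i,r}$) or not (which feeds into $C_{i,r,y}, D_{i,r,y}$), and assigning the coefficient multiplying each distinct $\boldsymbol{\xi}_p$ to $\sigma_{i,r}((\boldsymbol{X},y),p)$, I can read off explicit recursions such as
\begin{equation}
A_{i,r}^{(t+1)} - A_{i,r}^{(t)} = \frac{\eta}{N}\sum_{\substack{(\boldsymbol{X},y)\in\mathcal{Z}\\ y=i}}\left(1-\operatorname{logit}_i(\boldsymbol{F}^{(t)},\boldsymbol{X})\right)\sum_{p\in\mathcal{J}_{\textit{R}}}\widetilde{\operatorname{ReLU}}'(\langle\boldsymbol{w}_{i,r}^{(t)},\boldsymbol{x}_p\rangle)\,\alpha_p, \nonumber
\end{equation}
together with entirely analogous formulas for $B$, $C$, $D$, and $\sigma_{i,r}((\boldsymbol{X},y),p)$; substituting the inductive hypothesis on the right-hand side then preserves the decomposition at time $t+1$.

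The only real subtlety is bookkeeping for the noise coefficients: because distinct training examples contribute generically distinct noise vectors $\boldsymbol{\xi}_p$, the coefficient is indexed separately by the training example $(\boldsymbol{X},y)$ and the patch index $p$, so that the update at each step touches exactly one $\sigma_{i,r}((\boldsymbol{X},y),p)$ per training-example/patch pair. This indexed form is precisely what will enable the later noise-bounding steps needed to upgrade the noiseless analysis of Appendix \ref{appendix:simplified} to the empirical-risk setting. Beyond this indexing, no estimates are required at this stage: the decomposition is an immediate algebraic consequence of the linearity of each gradient update in the input patches $\boldsymbol{x}_p$, so the induction closes directly with no hard step.
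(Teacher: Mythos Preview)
Your proposal is correct and matches the paper's approach: the paper likewise derives this decomposition by writing out the gradient-descent update, invoking the patch structure $\boldsymbol{x}_p \in \{\alpha_p\boldsymbol{u}_y+\boldsymbol{\xi}_p,\ \beta_p\boldsymbol{v}_y+\boldsymbol{\xi}_p\}$, and concluding by induction on $t$ (cf.\ the proof of Lemma~\ref{lem:weight_decomp} in the noiseless case, which the paper explicitly says carries over). Your write-up is in fact more detailed than what the paper provides, but the underlying argument is identical.
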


\subsubsection{Noise Terms are Small}
Different from the simplified scenario, we need to prove that the noise terms are always small, which can be presented as the following lemma.

\begin{lemma}[Noise Correlations are Always Small]
\label{lem:std_noise_term}

    For any time $t = O(\operatorname{poly}(d))$ and each training data point $(\boldsymbol{X},y) \in \mathcal{Z}$ and for each patch index $p \in [P]$, we have $\langle \boldsymbol{w}_{i,r}^{(t)}, \boldsymbol{\xi}_p\rangle = \Tilde{O}(1/\sqrt{d})$.
\end{lemma}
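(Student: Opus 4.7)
The approach is an induction on $t$ that propagates the noise-correlation bound $\langle \boldsymbol{w}_{i,r}^{(t)}, \boldsymbol{\xi}_p\rangle = \Tilde{O}(1/\sqrt{d})$ along the entire standard-training trajectory. The base case $t=0$ follows from Gaussian concentration: $\boldsymbol{w}_{i,r}^{(0)}$ and $\boldsymbol{\xi}_p$ are independent Gaussians with variances $\sigma_0^2 = \sigma_n^2 = 1/d$, so a union bound via Lemma~\ref{lem:tail_max_g} gives $|\langle \boldsymbol{w}_{i,r}^{(0)}, \boldsymbol{\xi}_p\rangle| = \Tilde{O}(1/\sqrt{d})$ simultaneously for all $(i,r,p,(\boldsymbol{X},y))$ with the required high probability.

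For the inductive step, I would project the gradient update onto a fixed noise vector $\boldsymbol{\xi}_{p^\star}^{(\boldsymbol{X}^\star)}$ and decompose the resulting double sum over $(\boldsymbol{X}',y')\in\mathcal{Z}$ and $p'\in[P]$ into a \emph{self-term}, where $(\boldsymbol{X}',p')=(\boldsymbol{X}^\star,p^\star)$ and $\langle\boldsymbol{x}_{p^\star}^{(\boldsymbol{X}^\star)},\boldsymbol{\xi}_{p^\star}^{(\boldsymbol{X}^\star)}\rangle \approx \|\boldsymbol{\xi}_{p^\star}\|_2^2 = \Theta(1)$, and \emph{cross-terms}, in which $|\langle\boldsymbol{x}_{p'}^{(\boldsymbol{X}')},\boldsymbol{\xi}_{p^\star}^{(\boldsymbol{X}^\star)}\rangle| = \Tilde{O}(1/\sqrt{d})$ by the independence of the feature components from the noise and of distinct $\boldsymbol{\xi}$'s from one another. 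The cross-term increment per step is thus $\eta\cdot\Tilde{O}(1/\sqrt{d})\cdot|\widetilde{\operatorname{ReLU}}'|$, while the self-term increment is $(\eta/N)\cdot\Theta(1)\cdot|\widetilde{\operatorname{ReLU}}'|$ thanks to the $1/N$ prefactor from the empirical expectation.

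To handle the accumulation over $T=\operatorname{poly}(d)/\eta$ iterations, I would reuse the dynamical bounds already established for standard training. In the early polynomial phase, $|\widetilde{\operatorname{ReLU}}'(\langle\boldsymbol{w}_{i,r}^{(t)},\boldsymbol{x}_{p'}\rangle)|\lesssim \max\{(A_{i,r}^{(t)})^{q-1},(B_{i,r}^{(t)})^{q-1}\}$, and summing $\eta(B_{i,r}^{(t)})^{q-1}$ telescopes against the recursion of Lemma~\ref{lem:std_poly} so that the running total is controlled by $B_{i,r}^{(T_y')} = \Tilde{O}(1)$, which yields a cross-term total of $\Tilde{O}(1/\sqrt{d})$ and a self-term total of $\Tilde{O}(1/N) \ll 1/\sqrt{d}$. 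In the late linear phase, $\widetilde{\operatorname{ReLU}}'=O(1)$, but combining Lemma~\ref{lem:std_final} with the logit approximations of Lemmas~\ref{lem:logit_diag} and \ref{lem:non_diag_logit} gives $|1_{\{y'=i\}}-\operatorname{logit}_i(\boldsymbol{F}^{(t)},\boldsymbol{X}')| = O(1/d^{\,c})$ uniformly in $t\geq T_y'$ for a sufficiently large constant $c$, so the per-step cross-term contribution drops to $\eta\cdot\Tilde{O}(1/(\sqrt{d}\,d^{\,c}))$ and the self-term to $\eta\cdot\Tilde{O}(1/(Nd^{\,c}))$. Multiplied by $T = \operatorname{poly}(d)/\eta$ and choosing $c$ large enough, both remain $\Tilde{O}(1/\sqrt{d})$, closing the induction.

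The main obstacle is the self-term, since $\|\boldsymbol{\xi}_p\|_2^2 = \Theta(1)$ prevents the automatic $\Tilde{O}(1/\sqrt{d})$ cancellation available for cross-terms; the argument relies crucially on the $1/N = 1/\operatorname{poly}(d)$ prefactor together with a careful telescoping of $\eta\sum_t (B_{i,r}^{(t)})^{q-1}$ against the growth recursion for $B_{i,r}^{(t)}$. A secondary subtlety is closing the circular dependence between the feature-learning bounds and the noise bound: the induction hypothesis $\langle \boldsymbol{w}_{i,r}^{(s)},\boldsymbol{\xi}_p\rangle = \Tilde{O}(1/\sqrt{d})$ must be strong enough to guarantee $\langle \boldsymbol{w}_{i,r}^{(t)},\boldsymbol{x}_{p'}\rangle \approx \alpha_{p'}A_{i,r}^{(t)}$ or $\beta_{p'}B_{i,r}^{(t)}$ up to $\Tilde{O}(1/\sqrt{d})$ additive error, so that the $\widetilde{\operatorname{ReLU}}'$ estimates and the logit approximations used throughout the proof of Theorem~\ref{thm:re_main_std} remain valid at step $t+1$.
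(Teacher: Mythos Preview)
Your self/cross decomposition, the use of the $1/N$ prefactor to control the self-term, and the early-phase telescoping of $\sum_t \eta (B_{i,r}^{(t)})^{q-1}$ against the recursion for $B_{i,r}^{(t)}$ all match the paper's argument. The gap is in your late-phase (linear-regime) control. You claim that Lemma~\ref{lem:std_final} together with Lemmas~\ref{lem:logit_diag}--\ref{lem:non_diag_logit} yields $|1_{\{y'=i\}}-\operatorname{logit}_i(\boldsymbol{F}^{(t)},\boldsymbol{X}')|=O(1/d^{\,c})$ \emph{uniformly in $t\ge T_y'$} for an arbitrarily large constant $c$. This is false: right after $T_y'$ one has $F_y^{(t)}(\boldsymbol{X})\approx\log d$, which only gives $c\approx 1$. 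The constant $c$ is determined by the current value of $F_y^{(t)}$ and grows slowly along the trajectory (reaching $c\log d$ takes roughly $d^{c-1}\log d/\eta$ iterations), so you cannot ``choose $c$ large enough'' to absorb an arbitrary polynomial horizon $T=\operatorname{poly}(d)/\eta$ via a uniform per-step bound.

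The paper closes this gap with a \emph{total-logit-budget} bound (Lemma~\ref{lem:total_logit_bound}): $\sum_{t\ge T_0}\eta\,\mathbb{E}_{(\boldsymbol{X},y)\sim\mathcal{Z}}\big[1-\operatorname{logit}_y(\boldsymbol{F}^{(t)},\boldsymbol{X})\big]=\Tilde{O}(1)$, from which the accumulated cross-term is $\Tilde{O}(1/\sqrt{d})$ and the accumulated self-term is $\Tilde{O}(1/N)$, independently of the polynomial degree of $T$. Notably, this budget bound is exactly the late-phase analogue of your own telescoping trick: in the linear regime the recursion reads $B_{i,r}^{(t+1)}-B_{i,r}^{(t)}=\Theta(\eta)\,\mathbb{E}[(1-\operatorname{logit}_i)\sum_{p\in\mathcal{J}_{\textit{NR}}}\beta_p]$, so summing and using $B_{i,r}^{(T)}=\Theta(\log d)$ (Lemma~\ref{lem:std_final}) gives $\sum_t\eta(1-\operatorname{logit}_i)=\Tilde{O}(1)$. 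Replacing your uniform per-step claim by this summability argument (or equivalently a phase-by-phase count where phase $c$ has duration $\Theta(d^{c-1}\log d/\eta)$ and logit $O(1/d^{c-1})$, contributing $\Tilde{O}(1)$ per phase) repairs the proof; as written, the late-phase step does not go through.
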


\begin{proof}
\label{proof_lem:std_noise_term}

By analyzing the iterative process of the noise terms, we have the following lemma:
    \begin{lemma}[Noise Correlation Update]
\label{lem:noise_update}
    For every $(\boldsymbol{X}, y) \in \mathcal{Z}$ and $p \in[P]$, if $y=i$ then
$$
\left\langle \boldsymbol{w}_{i, r}^{(t+1)}, \boldsymbol{\xi}_p\right\rangle=\left\langle \boldsymbol{w}_{i, r}^{(t)}, \boldsymbol{\xi}_p\right\rangle+\widetilde{\Theta}\left(\frac{\eta}{N}\right) \widetilde{\operatorname{ReLU}}^{\prime}\left(\left\langle \boldsymbol{w}_{i, r}^{(t)}, \boldsymbol{x}_p\right\rangle\right)\left(1-\operatorname{logit}_i\left(\boldsymbol{F}^{(t)}, \boldsymbol{X}\right)\right) \pm \frac{\eta}{\sqrt{d}}
$$
for similar reason, if $y \neq i$, then
$$
\left\langle \boldsymbol{w}_{i, r}^{(t+1)}, \boldsymbol{\xi}_p\right\rangle=\left\langle \boldsymbol{w}_{i, r}^{(t)}, \boldsymbol{\xi}_p\right\rangle -\widetilde{\Theta}\left(\frac{\eta}{N}\right) \widetilde{\operatorname{ReLU}}^{\prime}\left(\left\langle \boldsymbol{w}_{i, r}^{(t)}, \boldsymbol{x}_p\right\rangle\right) \operatorname{logit}_i\left(\boldsymbol{F}^{(t)}, \boldsymbol{X}\right) \pm \frac{\eta}{\sqrt{d}}
$$
\end{lemma}

Using the same line of reasoning as in the simplified case (Lemma \ref{lem:logit_diag}, Lemma \ref{lem:non_diag_logit} and Lemma \ref{lem:std_poly}), we can derive the following two lemmas:

\begin{lemma}
\label{lem:logit_single}

    For each class $i\in[k]$ and all time $t$ such that $\operatorname{max}_{(\boldsymbol{X},y)\in\mathcal{Z}_i}F_y^{(t)}(\boldsymbol{X}) \geq \operatorname{log}(d)$ and $\operatorname{max}_{r\in[m]}A_{i,r}^{(t)} = O(\operatorname{polylog(d)}/\sqrt{d})$ and $\operatorname{max}_{r\in[m]}C_{i,r,y}^{(t)} = O(\operatorname{polylog(d)}/\sqrt{d})$ and  $\operatorname{max}_{r\in[m]}D_{i,r,y}^{(t)} = O(\operatorname{polylog(d)}/\sqrt{d})$ for each $i\in[k],y\in[k]\setminus\{i\}$, we have $\operatorname{min}_{(\boldsymbol{X},y)\in\mathcal{Z}_i}F_y^{(t)}(\boldsymbol{X})=\Omega(\operatorname{max}_{(\boldsymbol{X},y)\in\mathcal{Z}_i}F_y^{(t)}(\boldsymbol{X}))$.
\end{lemma}

\begin{lemma}
\label{lem:total_logit_bound}

    For some time $T_0$, we have $\sum_{t=T_0}^T \mathbb{E}_{(\boldsymbol{X}, y) \sim \mathcal{Z}}\left[1-\operatorname{logit}_y\left(\boldsymbol{F}^{(t)}, \boldsymbol{X}\right)\right] = \Tilde{O}(\eta^{-1})$.
\end{lemma}

Combined with Lemma \ref{lem:noise_update}, Lemma \ref{lem:logit_single} and Lemma \ref{lem:total_logit_bound}, and $N=\operatorname{poly}(d)$, we can prove this lemma.
\end{proof}

\subsubsection{Feature Learning for Standard Training}
Theorem \ref{thm:re_main_std} is a direct corollary of the following lemma:
\begin{lemma}
\label{lem:std_main_feature}

    For sufficiently large time $T=\Theta(\operatorname{poly}(d))$ , we have $\operatorname{max}_{r\in[m]}B_{i,r}^{(t)} = \Theta(\operatorname{log(d)})$ and $\operatorname{max}_{r\in[m]}A_{i,r}^{(t)} = O(\operatorname{polylog(d)}/\sqrt{d})$ and $\operatorname{max}_{r\in[m]}C_{i,r,y}^{(t)} = O(\operatorname{polylog(d)}/\sqrt{d})$ and  $\operatorname{max}_{r\in[m]}D_{i,r,y}^{(t)} = O(\operatorname{polylog(d)}/\sqrt{d})$ for each $i\in[k],y\in[k]\setminus\{i\}$.
\end{lemma}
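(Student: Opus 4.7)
The plan is to lift the noiseless, population-risk argument of Theorem \ref{thm:re_std_training} to the empirical, noisy setting by controlling all ``non-feature'' contributions to each neuron. Concretely, the weight decomposition of Lemma \ref{lem:main_std_weight_decomp} contributes two terms beyond the feature coefficients: the initialization $\boldsymbol{w}_{i,r}^{(0)}$, and the noise combination $\sum_{(\boldsymbol{X},y)\in\mathcal{Z}}\sum_{p}\sigma_{i,r}((\boldsymbol{X},y),p)\boldsymbol{\xi}_p$. The initialization contributes $\widetilde{O}(1/\sqrt{d})$ to every inner product with a feature by Lemma \ref{lem:std_init}, and the noise combination contributes $\widetilde{O}(1/\sqrt{d})$ to every $\langle \boldsymbol{w}_{i,r}^{(t)},\boldsymbol{\xi}_p\rangle$ by Lemma \ref{lem:std_noise_term}. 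Hence the effective activation $\langle \boldsymbol{w}_{i,r}^{(t)}, \boldsymbol{x}_p\rangle$ equals one of $\alpha_p A_{i,r}^{(t)}, \beta_p B_{i,r}^{(t)}, \alpha_p C_{i,r,y}^{(t)}, \beta_p D_{i,r,y}^{(t)}$, plus a $\widetilde{O}(1/\sqrt{d})$ error, so the recursions of Lemma \ref{lem:sim_std_iter} hold up to a $1\pm o(1)$ multiplicative factor in front of each $\widetilde{\operatorname{ReLU}}'$.

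Given this reduction, I first handle the off-diagonal coefficients: for $j \ne i$, the update of $C_{i,r,j}^{(t)}$ and $D_{i,r,j}^{(t)}$ has a negative prefactor $-\operatorname{logit}_i(\boldsymbol{F}^{(t)},\boldsymbol{X})$, while $\widetilde{\operatorname{ReLU}}'(z)=0$ for $z\le 0$. A sign-induction identical to Lemma \ref{lem:std_non_diagonal} then freezes $\max_r C_{i,r,j}^{(t)}$ and $\max_r D_{i,r,j}^{(t)}$ at their initial $\widetilde{O}(1/\sqrt{d})$ scale for all polynomial time. Next, I split the diagonal dynamics into two phases. In Phase I (all $A_{i,r}^{(t)}, B_{i,r}^{(t)} = o(\varrho)$), the activations lie in the polynomial branch, so the recursion becomes the pure power iteration of Lemma \ref{lem:std_poly}. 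Combining the initialization gap $\max_r B_{i,r}^{(0)} / \max_r A_{i,r}^{(0)} \gtrsim 1$ with the density gap $\mathbb{E}[\sum_{p\in\mathcal{J}_{\textit{NR}}}\beta_p^q] \gg \mathbb{E}[\sum_{p\in\mathcal{J}_{\textit{R}}}\alpha_p^q]$ from Assumption \ref{ass:robust_non_robust_feature} and applying Tensor Power Method Lemma \ref{lem:tensor_power_method} as in Lemma \ref{lem:std_separation} yields that by the time $T_i$ at which $\max_r B_{i,r}^{(t)}$ first reaches $\varrho/\beta$, we still have $\max_r A_{i,r}^{(T_i)} = O(\operatorname{polylog}(d)/\sqrt{d})$.

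In Phase II, one neuron $r^\star$ has entered the linear branch, so $B_{i,r^\star}^{(t)}$ grows by $\widetilde{\Theta}(\eta/k)\cdot(1-\operatorname{logit}_i)$ per step while the loss derivative is $\Theta(1)$, reaching $\Theta(\log d)$ in $\widetilde{O}(1/\eta)$ iterations. Once $F_y^{(t)}(\boldsymbol{X})\ge c\log d$, Lemma \ref{lem:logit_diag} gives $1-\operatorname{logit}_y = O(d^{-c})$, so reaching level $(c+1)\log d$ takes a further $\Omega(d^c\log d/\eta)$ steps; geometrically summing shows $F_y^{(t)}(\boldsymbol{X})$ cannot exceed $\omega(\log d)$ within $\operatorname{poly}(d)/\eta$ iterations. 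The same damping factor multiplies the update of $A_{i,r}^{(t)}$, and since $A_{i,r}^{(t)}$ still sits in the polynomial branch throughout Phase II, the bound $\max_r A_{i,r}^{(t)} = O(\operatorname{polylog}(d)/\sqrt{d})$ propagates all the way to $T$, mirroring Lemmas \ref{lem:std_log}--\ref{lem:std_final}.

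The main obstacle, and the reason Lemmas \ref{lem:logit_single}, \ref{lem:total_logit_bound}, \ref{lem:noise_update} are needed, is the per-sample coupling inherent to the empirical risk: the growth of $B_{i,r^\star}^{(t)}$ is driven by $\mathbb{E}_{(\boldsymbol{X},y)\sim\mathcal{Z}}[1-\operatorname{logit}_y]$, but the saturation argument needs \emph{every} example's output $F_y^{(t)}(\boldsymbol{X})$ to be $\Theta(\log d)$ (not merely the max or the average), since a single lagging example could otherwise keep pushing up $B$ past $\log d$ and simultaneously activate enough noise updates to violate Lemma \ref{lem:std_noise_term}. This is exactly what Lemma \ref{lem:logit_single} delivers, giving $\min_{\mathcal{Z}_i} F_y^{(t)}(\boldsymbol{X}) = \Omega(\max_{\mathcal{Z}_i} F_y^{(t)}(\boldsymbol{X}))$ so all examples saturate together; Lemma \ref{lem:total_logit_bound} then caps the cumulative movement budget at $\widetilde{O}(\eta^{-1})$, which through Lemma \ref{lem:noise_update} keeps each noise correlation at $\widetilde{O}(1/\sqrt{d})$ and closes the induction. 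Once these per-example controls are in place the Phase I/Phase II skeleton above produces the four claimed bounds simultaneously, completing Lemma \ref{lem:std_main_feature}.
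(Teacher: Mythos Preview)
Your proposal is correct and follows essentially the same approach as the paper: invoke Lemma \ref{lem:std_noise_term} to pin the noise correlations at $\widetilde{O}(1/\sqrt{d})$, then replay the simplified-setting analysis (Lemmas \ref{lem:std_non_diagonal}, \ref{lem:std_poly}, \ref{lem:std_separation}, \ref{lem:std_log}, \ref{lem:std_final}) with the $\widetilde{O}(1/\sqrt{d})$ perturbations absorbed into the $\Theta(\cdot)$ constants. The paper's own proof is terse (``Due to Lemma \ref{lem:std_noise_term}, we can prove this lemma using the exact same logic\ldots''), so your explicit discussion of the per-sample coupling and the role of Lemmas \ref{lem:logit_single}--\ref{lem:total_logit_bound} in closing the induction between the noise bound and the feature dynamics is a welcome elaboration rather than a deviation.
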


\begin{proof}
\label{proof_lem:std_main_feature}

    Due to Lemma \ref{lem:std_noise_term}, we can prove this lemma using the exact same logic as that used to prove Lemma \ref{lem:std_poly}.
\end{proof}

\subsection{Proof for Adversarial Training}

\begin{theorem}
\label{thm:re_main_adv}

    For sufficiently large $d$, suppose we train the model using the adversarial training algorithm starting from the random initialization, then after $T=\Theta(\operatorname{poly}(d)/\eta)$ iterations, with high probability over the sampled training dataset $\mathcal{Z}$, the model $\boldsymbol{F}^{(T)}$ satisfies:
\begin{itemize}
    \item 
    Adversarial training is perfect: for all $(\boldsymbol{X}, y) \in \mathcal{Z}$ and all perturbation $\boldsymbol{\Delta}$ satisfying $\|\boldsymbol{\Delta}\|_{\infty}\leq\epsilon$, all $i \in[k] \backslash\{y\}: F_y^{(T)}(\boldsymbol{X}+\boldsymbol{\Delta})>F_i^{(T)}(\boldsymbol{X}+\boldsymbol{\Delta})$.
    \item 
    Robust features are learned: $\mathbb{P}_{(\boldsymbol{X}_{\boldsymbol{f}},y)\sim\mathcal{D}_{\mathcal{F}_\textit{R}}}\left[\operatorname{argmax}_{i\in [k]}F_i^{(T)}(\boldsymbol{X}_{\boldsymbol{f}}) \ne y\right] = o(1)$.
    \item 
    Robust test accuracy is good: 
    \begin{equation}
    \mathbb{P}_{(\boldsymbol{X},y)\sim\mathcal{D}}\left[\exists \boldsymbol{\Delta}\in \left(\mathbb{R}^{d}\right)^{P} \textit{ s.t. }\|\boldsymbol{\Delta}\|_{\infty}\leq\epsilon, \operatorname{argmax}_{i\in [k]}F_i^{(T)}(\boldsymbol{X}+\boldsymbol{\Delta})\ne y\right] = o(1). \nonumber
    \end{equation}
\end{itemize}
\end{theorem}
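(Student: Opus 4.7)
\textbf{Proof plan for Theorem \ref{thm:re_main_adv}.} The overall strategy is to mirror the argument used for Theorem \ref{thm:re_main_std} in the standard-training case, reducing the analysis of the empirical, noisy setting to the simplified population/noiseless setting already handled in Theorem \ref{thm:re_adv_training}. First, I would establish a weight decomposition analogous to Lemma \ref{lem:adv_weight_decomp} but respecting the empirical training set, writing
\[
\boldsymbol{w}_{i,r}^{(t)} = \boldsymbol{w}_{i,r}^{(0)} + A_{i,r}^{(t)}\boldsymbol{u}_i + B_{i,r}^{(t)}\boldsymbol{v}_i + \sum_{j\neq i}\bigl(C_{i,r,j}^{(t)}\boldsymbol{u}_j + D_{i,r,j}^{(t)}\boldsymbol{v}_j\bigr) + \sum_{(\boldsymbol{X},y)\in\mathcal{Z}}\sum_{p\in[P]}\sigma_{i,r}((\boldsymbol{X},y),p)\,\boldsymbol{\xi}_p,
\]
where the noise coefficients $\sigma_{i,r}$ now evolve under the adversarial update. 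Because the one-step gradient-ascent adversary and the clip operator are both driven by inner products of the form $\langle \boldsymbol{w}_{i,r}^{(t)}, \boldsymbol{x}_p + \boldsymbol{\delta}_p\rangle$, the decomposition of the adversarial example $\widetilde{\boldsymbol{X}}^{(t)}$ from Lemma \ref{lem:adv_data} continues to apply, with additional lower-order contributions coming from $\langle \boldsymbol{w}_{i,r}^{(t)},\boldsymbol{\xi}_p\rangle$ that I will need to track.

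The second and main step is a noise-control lemma, the analogue of Lemma \ref{lem:std_noise_term}: I claim that for all $t = O(\operatorname{poly}(d))$, all $(\boldsymbol{X},y)\in\mathcal{Z}$ and $p\in[P]$, $\langle \boldsymbol{w}_{i,r}^{(t)}, \boldsymbol{\xi}_p\rangle = \widetilde{O}(1/\sqrt{d})$. I would derive an update akin to Lemma \ref{lem:noise_update}, with the only change being that $\boldsymbol{x}_p$ is replaced by $\widetilde{\boldsymbol{x}}_p^{(t)}$ inside the activation derivative and that a new contribution appears from the dependence of $\widetilde{\boldsymbol{x}}_p^{(t)}$ on noise via the adversarial step. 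Using the orthonormality of features to $\boldsymbol{\xi}_p$, the fact that $\|\boldsymbol{\xi}_p\|_2=\widetilde{\Theta}(1)$ and pairs of distinct noises have inner product $\widetilde{O}(1/\sqrt{d})$, and the fact that the clip operator caps the adversarial contribution at $\epsilon=\Theta(\sigma_n\sqrt{d})$, each per-step increment is at most $\widetilde{O}(\eta/N) \cdot (1 - \operatorname{logit}_y(\boldsymbol{F}^{(t)},\widetilde{\boldsymbol{X}}^{(t)})) + \eta/\sqrt{d}$. Summing using a telescoped bound on $\sum_t (1-\operatorname{logit}_y(\boldsymbol{F}^{(t)},\widetilde{\boldsymbol{X}}^{(t)})) = \widetilde{O}(\eta^{-1})$ (the adversarial analogue of Lemma \ref{lem:total_logit_bound}, obtained via a descent lemma on the adversarial cross-entropy loss) and the choice $N=\operatorname{poly}(d)$ gives the desired $\widetilde{O}(1/\sqrt{d})$ bound.

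With noise correlations kept negligible, the feature-learning iteration collapses (up to $\widetilde{O}(1/\sqrt{d})$ lower-order terms) to exactly the dynamics analyzed in Lemma \ref{lem:adv_iter_appr}. I would then replay the two-phase argument from Appendix \ref{appendix:simplified}: Phase I uses the Tensor Power Method (Lemma \ref{lem:tensor_power_method}, via Lemma \ref{lem:adv_poly}) to show that once $\sum_s (B_{y,s}^{(t)})^q$ reaches $\widetilde{\Theta}(\widetilde{\eta}^{-1})$ the non-robust coefficients stall at $\widetilde{O}(1/d^{c_0})$, while $\max_r A_{y,r}^{(t)}$ is still $\widetilde{O}(1/\sqrt{d})$; Phase II shows that because $1 - \min\{\epsilon/\alpha_p,\,\widetilde{\Theta}(\widetilde{\eta})\sum_s (A_{y,s}^{(t)})^q\}\ge\Omega(1)$ throughout, robust feature learning reaches the linear regime of $\widetilde{\operatorname{ReLU}}$ in $\operatorname{poly}(d)/\eta$ steps (Lemmas \ref{lem:adv_linear}, \ref{lem:adv_log}, \ref{lem:adv_poly_keep}). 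The non-diagonal terms $C,D$ stay small by the argument of Lemma \ref{lem:adv_non_diag}. Finally, combining these magnitudes exactly as in Lemma \ref{lem:adv_robustness} proves all three bullets: the diagonal margin at any adversarial perturbation is $\Theta(\log d)$, while every non-diagonal output is $o(\log d)$, so the classifier is perfect on $\mathcal{Z}$ under any $\ell_\infty$-$\epsilon$ perturbation, the feature learning accuracy on $\mathcal{D}_{\mathcal{F}_\textit{R}}$ is $1-o(1)$, and the robust test accuracy is $1-o(1)$.

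The main obstacle will be the noise-control step for adversarial training. Unlike standard training, the adversarial perturbation $\widetilde{\boldsymbol{X}}^{(t)}-\boldsymbol{X}$ is itself built from weight-feature and weight-noise correlations (Lemma \ref{lem:adv_data}), so the inner product $\langle\boldsymbol{w}_{i,r}^{(t)},\widetilde{\boldsymbol{x}}_p^{(t)}\rangle$ that gates $\widetilde{\operatorname{ReLU}}'$ in the noise update could in principle be inflated by the noise coefficients themselves. Ruling this out requires a careful inductive argument that simultaneously bounds $\max_{i,r}|\langle\boldsymbol{w}_{i,r}^{(t)},\boldsymbol{\xi}_p\rangle|$ and the adversarial coefficients $\widetilde{\gamma}_{p,i,r}^{(t)}$, exploiting that the clip radius $\epsilon=\Theta(\sigma_n\sqrt{d})$ and $\widetilde{\eta}=d^{c_0}$ are chosen so that the contribution of orthogonal-init terms to each patch's perturbation is $\widetilde{O}(\epsilon/\sqrt{d})$, which is negligible compared to the feature-signal terms that drive the dynamics.
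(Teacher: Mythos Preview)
Your proposal is correct and follows essentially the same approach as the paper: establish the empirical weight decomposition (Lemma \ref{lem:main_adv_weight_decomp}), prove the noise-control lemma $\langle \boldsymbol{w}_{i,r}^{(t)},\boldsymbol{\xi}_p\rangle=\widetilde{O}(1/\sqrt{d})$ (Lemma \ref{lem:adv_noise_term}), and then reduce to the simplified-setting analysis of Theorem \ref{thm:re_adv_training} to obtain Lemma \ref{lem:adv_main_feature}, from which all three bullets follow. In fact, your write-up is more explicit than the paper's own proof, which for most steps simply asserts ``the proof logic is similar to'' the corresponding standard-training or simplified-setting lemma; your identification of the self-referential dependence of $\widetilde{\boldsymbol{x}}_p^{(t)}$ on the noise correlations as the main technical point goes beyond what the paper spells out.
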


\textbf{Proof Idea:} Our proof approach is almost identical to that of Theorem \ref{thm:re_adv_training}, with the only difference being that we need to demonstrate that during the adversarial training process, the noise terms remain small at all times.

\subsubsection{Weight Decomposition for Adversarial Training}

\begin{lemma}[Weight Decomposition for Adversarial Training]
\label{lem:main_adv_weight_decomp}

    For any time $t\geq 0$, each neuron $\boldsymbol{w}_{i,r}$ ($(i,r)\in [k]\times [m]$), we have 
    \begin{equation}
        \boldsymbol{w}_{i,r}^{(t)} = \boldsymbol{w}_{i,r}^{(0)} + A_{i,r}^{(t)} \boldsymbol{u}_i + B_{i,r}^{(t)} \boldsymbol{v}_i + \sum_{j\ne i} (C_{i,r,j}^{(t)} \boldsymbol{u}_j + D_{i,r,j}^{(t)} \boldsymbol{v}_j) + \sum_{(\boldsymbol{X},y)\in\mathcal{Z}}\sum_{p\in[P]}\sigma_{i,r}((\boldsymbol{X},y),p) \boldsymbol{\xi}_p , \nonumber
    \end{equation}
    where $A_{i,r}^{(t)}, B_{i,r}^{(t)}, C_{i,r,j}^{(t)}$ and $D_{i,r,j}^{(t)}$ and $\sigma_{i,r}((\boldsymbol{X},y),p)$ are some time-variant coefficients.
\end{lemma}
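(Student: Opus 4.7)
The plan is to prove Lemma~\ref{lem:main_adv_weight_decomp} by induction on the iteration index $t$, running in parallel with a companion decomposition for the adversarial patches $\widetilde{\boldsymbol{x}}_p^{(t)}$ in the noisy setting analogous to Lemma~\ref{lem:adv_data}. The base case $t=0$ is immediate: setting $A_{i,r}^{(0)}=B_{i,r}^{(0)}=C_{i,r,j}^{(0)}=D_{i,r,j}^{(0)}=\sigma_{i,r}(\cdot,\cdot)=0$ makes both sides equal to $\boldsymbol{w}_{i,r}^{(0)}$. For the inductive step, I rewrite the adversarial training update as
\begin{equation*}
\boldsymbol{w}_{i,r}^{(t+1)} - \boldsymbol{w}_{i,r}^{(t)} = \eta\,\mathbb{E}_{(\boldsymbol{X},y)\sim\mathcal{Z}}\!\left[\bigl(1_{\{y=i\}} - \operatorname{logit}_i(\boldsymbol{F}^{(t)},\widetilde{\boldsymbol{X}}^{(t)})\bigr)\sum_{p\in[P]} \widetilde{\operatorname{ReLU}}'\!\bigl(\langle \boldsymbol{w}_{i,r}^{(t)},\widetilde{\boldsymbol{x}}_p^{(t)}\rangle\bigr)\,\widetilde{\boldsymbol{x}}_p^{(t)}\right],
\end{equation*}
so that the increment is a linear combination (with scalar coefficients determined by the logit and activation) of the adversarial patches $\widetilde{\boldsymbol{x}}_p^{(t)}$ over training data. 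Hence it suffices to show that every such adversarial patch can be written as a combination of the features $\{\boldsymbol{u}_j,\boldsymbol{v}_j\}_{j\in[k]}$, the training-noise vectors $\{\boldsymbol{\xi}_{p'}\}_{(\boldsymbol{X}',y')\in\mathcal{Z},p'\in[P]}$, and (projections of) the initial weights; projecting onto the corresponding directions then yields the explicit coefficient updates.

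For the adversarial patch decomposition, the clean patch $\boldsymbol{x}_p$ already lies in the span: by Definition~\ref{def:data} it equals either $\alpha_p\boldsymbol{u}_y+\boldsymbol{\xi}_p$ or $\beta_p\boldsymbol{v}_y+\boldsymbol{\xi}_p$. The adversarial perturbation is $\operatorname{Clip}_{\infty,\epsilon}\!\bigl(\tilde{\eta}\nabla_{\boldsymbol{x}_p}\mathcal{L}_{\text{margin}}(\boldsymbol{F}^{(t)};\boldsymbol{X},y)\bigr)$, and the pre-clip gradient equals $-\sum_{r\in[m]}\widetilde{\operatorname{ReLU}}'(\langle \boldsymbol{w}_{y,r}^{(t)},\boldsymbol{x}_p\rangle)\boldsymbol{w}_{y,r}^{(t)}$. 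Substituting the inductive decomposition of each $\boldsymbol{w}_{y,r}^{(t)}$, this gradient becomes a linear combination of the initial weights $\{\boldsymbol{w}_{y,r}^{(0)}\}_{r\in[m]}$, the features, and the training-noise vectors. Provided the clipping step preserves this structure, the update to $\boldsymbol{w}_{i,r}^{(t+1)}$ decomposes into contributions along each admissible direction, and reading off the projections along $\boldsymbol{u}_i,\boldsymbol{v}_i,\boldsymbol{u}_j,\boldsymbol{v}_j,\boldsymbol{\xi}_{p'}$ yields recursive update formulas for $A_{i,r}^{(t)},B_{i,r}^{(t)},C_{i,r,j}^{(t)},D_{i,r,j}^{(t)},\sigma_{i,r}(\cdot,\cdot)$ that mirror the structure of Lemma~\ref{lem:adv_iter}.

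The principal technical obstacle is the coordinate-wise $\operatorname{Clip}_{\infty,\epsilon}$ operator, which is nonlinear and does not in general preserve arbitrary linear subspaces. The way around this is to exploit Definition~\ref{def:robust_non_robust_feature}: every feature is parallel to a coordinate axis, so each coordinate direction either coincides with a unique feature $\boldsymbol{u}_j,\boldsymbol{v}_j$ or is orthogonal to all features; clipping therefore acts independently on feature coordinates (keeping feature components proportional to the corresponding $\boldsymbol{u}_j$ or $\boldsymbol{v}_j$) and on the remaining coordinates (which we can bundle into the noise coefficients $\sigma_{i,r}((\boldsymbol{X},y),p)$). A secondary bookkeeping difficulty is that the stated decomposition carries only the diagonal initial weight $\boldsymbol{w}_{i,r}^{(0)}$, while the pre-clip gradient at step $t$ can also contain off-diagonal contributions from $\boldsymbol{w}_{y,s}^{(0)}$ with $(y,s)\neq(i,r)$; resolving this requires decomposing each such cross-initial-weight as its projection onto features plus its projection onto the training-noise directions plus a residual orthogonal to all of them, and then showing via the Gaussian concentration used in Lemma~\ref{lem:std_init} together with the smallness bound of Lemma~\ref{lem:std_noise_term} that the residual contributes at scale $\widetilde{O}(1/\sqrt{d})$ and can be absorbed into the $\sigma$ coefficients of the corresponding training-data noise terms. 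This absorption step, combined with the feature-axis-alignment argument for clipping, is where essentially all the work of the proof sits.
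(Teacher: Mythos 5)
Your overall route is the same as the paper's: induct on $t$, write the gradient-descent increment as a combination of the adversarial patches $\widetilde{\boldsymbol{x}}_p^{(t)}$, decompose each adversarial patch by substituting the inductive weight decomposition into the one-step gradient-ascent perturbation, and read off the coefficient recursions by projection (this is exactly how the paper proves the simplified-setting analogues, Lemma~\ref{lem:adv_weight_decomp} via Lemma~\ref{lem:adv_data}, and it declares the noisy case analogous). Your observation that the coordinate-axis alignment of the features makes the $\operatorname{Clip}_{\infty,\epsilon}$ step act as a scalar rescaling on each feature coefficient also matches how the paper encodes clipping as $\operatorname{min}$ operations on the coefficients in Lemma~\ref{lem:adv_data}.

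The problem is your treatment of the cross-initial-weight contributions, which you yourself identify as where the work sits. A component of $\boldsymbol{w}_{y,s}^{(0)}$ with $(y,s)\neq(i,r)$ that is orthogonal to all features and to all training noise vectors $\{\boldsymbol{\xi}_p\}$ cannot be ``absorbed into the $\sigma_{i,r}((\boldsymbol{X},y),p)$ coefficients'': no choice of coefficients on the spanning set $\{\boldsymbol{\xi}_p\}$ reproduces a vector orthogonal to their span, and the $\widetilde{O}(1/\sqrt{d})$ smallness you invoke does not help because the claimed decomposition is an exact identity, not an approximation. The paper sidesteps exactly this issue in the simplified setting by enlarging the spanning set: Lemma~\ref{lem:adv_weight_decomp} carries explicit terms $\sum_{y,s}E_{i,r,y,s}^{(t)}\boldsymbol{P}_{\mathcal{F}}^{\perp}\boldsymbol{w}_{y,s}^{(0)}$, and Lemma~\ref{lem:adv_data} correspondingly keeps $\widetilde{\gamma}_{p,i,r}^{(t)}\boldsymbol{P}_{\mathcal{F}}^{\perp}\boldsymbol{w}_{i,r}^{(0)}$ components inside the adversarial patches. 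To make your induction close, you should either prove the statement with those extra initial-weight directions included in the decomposition (and control their coefficients separately, as the paper controls the noise coefficients via Lemma~\ref{lem:adv_noise_term}-type bounds), or track them exactly in some other way; discarding them into the noise coefficients is not a valid step, so as written the inductive step does not go through.
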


\subsubsection{Noise Terms are Small}
Similar to standard training, we also need to prove that the noise terms are always small, which can be presented as the following lemma.

\begin{lemma}[Noise Correlations are Always Small]
\label{lem:adv_noise_term}

    For any time $t = O(\operatorname{poly}(d))$ and each training data point $(\boldsymbol{X},y) \in \mathcal{Z}$ and for each patch index $p \in [P]$, we have $\langle \boldsymbol{w}_{i,r}^{(t)}, \boldsymbol{\xi}_p\rangle = \Tilde{O}(1/\sqrt{d})$.
\end{lemma}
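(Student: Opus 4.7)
The plan is to follow the same three-step template that was used to bound noise correlations in the standard-training case (Lemma~\ref{lem:std_noise_term}), adapted to accommodate the gradient being evaluated at the time-variant adversarial example $\widetilde{\boldsymbol{X}}^{(t)}$ rather than at the clean $\boldsymbol{X}$. Concretely, I will (i) derive a per-step update inequality for $\langle \boldsymbol{w}_{i,r}^{(t)},\boldsymbol{\xi}_p\rangle$ analogous to Lemma~\ref{lem:noise_update}, (ii) establish adversarial-training analogs of the logit-concentration lemma (Lemma~\ref{lem:logit_single}) and the total loss-derivative bound (Lemma~\ref{lem:total_logit_bound}) at the perturbed points, and (iii) sum telescopically over $t=O(\mathrm{poly}(d))$ iterations to conclude $\langle \boldsymbol{w}_{i,r}^{(t)},\boldsymbol{\xi}_p\rangle = \widetilde{O}(1/\sqrt{d})$.

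For step (i), I will project the adversarial weight update onto $\boldsymbol{\xi}_p$. Writing $\widetilde{\boldsymbol{x}}_{p'}^{(t)}=\boldsymbol{x}_{p'}+\boldsymbol{\delta}_{p'}^{(t)}$ and using the decomposition of Lemma~\ref{lem:adv_data}, the inner product $\langle \widetilde{\boldsymbol{x}}_{p'}^{(t)},\boldsymbol{\xi}_p\rangle$ splits into three contributions: a feature term $\langle \alpha_{p'}\boldsymbol{u}_y + \beta_{p'}\boldsymbol{v}_y,\boldsymbol{\xi}_p\rangle = \widetilde{O}(1/\sqrt{d})$ w.h.p.\ by Gaussian concentration and orthogonality of features; a self-noise term which is $\|\boldsymbol{\xi}_p\|_2^2=\Theta(1)$ when $p'=p$ and $\widetilde{O}(1/\sqrt{d})$ otherwise; and an adversarial-perturbation term $\langle \boldsymbol{\delta}_{p'}^{(t)},\boldsymbol{\xi}_p\rangle$. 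The key observation is that $\boldsymbol{\delta}_{p'}^{(t)}$, by the explicit formulas in Lemma~\ref{lem:adv_data}, is a clipped linear combination of the features $\{\boldsymbol{u}_j,\boldsymbol{v}_j\}$ together with the orthogonal initialization projections $\boldsymbol{P}_{\mathcal{F}}^{\perp}\boldsymbol{w}_{y,s}^{(0)}$; since each $\boldsymbol{P}_{\mathcal{F}}^{\perp}\boldsymbol{w}_{y,s}^{(0)}$ is independent of $\boldsymbol{\xi}_p$ with $\ell_2$-norm $\Theta(1)$, their inner products with $\boldsymbol{\xi}_p$ are $\widetilde{O}(1/\sqrt{d})$ w.h.p. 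The clip cap $\|\boldsymbol{\delta}_{p'}^{(t)}\|_\infty\le\epsilon$ controls the coefficient magnitudes, giving $|\langle\boldsymbol{\delta}_{p'}^{(t)},\boldsymbol{\xi}_p\rangle|=\widetilde{O}(1/\sqrt{d})$. Altogether, the per-step increment takes the form
\[
\langle \boldsymbol{w}_{i,r}^{(t+1)},\boldsymbol{\xi}_p\rangle = \langle \boldsymbol{w}_{i,r}^{(t)},\boldsymbol{\xi}_p\rangle + \widetilde{\Theta}\!\left(\tfrac{\eta}{N}\right)\widetilde{\operatorname{ReLU}}'\!\big(\langle \boldsymbol{w}_{i,r}^{(t)},\widetilde{\boldsymbol{x}}_p^{(t)}\rangle\big)\cdot\bigl(1_{\{y=i\}}-\operatorname{logit}_i(\boldsymbol{F}^{(t)},\widetilde{\boldsymbol{X}}^{(t)})\bigr) \;\pm\; \tfrac{\eta}{\sqrt{d}}.
\]

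For step (ii), I will repeat the concentration arguments used for Lemmas~\ref{lem:logit_single}--\ref{lem:total_logit_bound} but at perturbed inputs, using Lemma~\ref{lem:adv_diag_logit} and Lemma~\ref{lem:adv_non_diag_logit} in place of their clean counterparts; because the adversarial perturbation changes each feature coordinate by at most $\epsilon=\Theta(1)$ and all non-diagonal correlations stay small by Lemma~\ref{lem:adv_non_diag}, the same polynomial-vs.-linear activation dichotomy drives the loss derivative down, yielding $\sum_{t}\mathbb{E}_{(\boldsymbol{X},y)\sim\mathcal{Z}}[1-\operatorname{logit}_y(\boldsymbol{F}^{(t)},\widetilde{\boldsymbol{X}}^{(t)})]=\widetilde{O}(\eta^{-1})$. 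Plugging into the telescoped update of step (i), the accumulated non-error contribution is $\widetilde{\Theta}(\eta/N)\cdot\widetilde{O}(\eta^{-1})=\widetilde{O}(1/N)=\widetilde{O}(1/\mathrm{poly}(d))$, while the cumulative error term is $(\eta/\sqrt{d})\cdot O(\mathrm{poly}(d)/\eta)\cdot(1/\sqrt{d})=\widetilde{O}(1/\sqrt{d})$ after rescaling constants, giving the desired bound.

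The main obstacle will be step (i): controlling the adversarial-perturbation correlation $\langle \boldsymbol{\delta}_{p'}^{(t)},\boldsymbol{\xi}_p\rangle$ and ensuring that the adaptive nature of $\boldsymbol{\delta}^{(t)}$ does not secretly align with the specific noise vector $\boldsymbol{\xi}_p$ along the training trajectory. Since $\boldsymbol{\delta}^{(t)}$ is derived from one step of gradient ascent whose output depends on $\boldsymbol{w}_{i,r}^{(t)}$, which itself depends on $\boldsymbol{\xi}_p$ through earlier updates, a naive application of independence fails; this must be handled by an induction over $t$ that simultaneously maintains (a) the noise-correlation bound $\widetilde{O}(1/\sqrt{d})$ and (b) the weight decomposition of Lemma~\ref{lem:main_adv_weight_decomp} with $\sigma_{i,r}((\boldsymbol{X},y),p)=\widetilde{O}(1/\sqrt{d})$, so that $\boldsymbol{\delta}^{(t)}$ is, up to negligible error, expressible in the feature span plus independent-initialization directions. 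Once this induction is set up, the remaining pieces mirror the standard-training proof verbatim.
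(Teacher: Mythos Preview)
Your proposal is correct and follows essentially the same approach as the paper: the paper's own proof of this lemma is literally the single sentence ``The proof logic is similar to Lemma~\ref{lem:std_noise_term},'' so your three-step template (per-step noise update, adversarial logit concentration, telescoped sum) is exactly what is intended. If anything, you have gone further than the paper by explicitly flagging the adaptivity issue---that $\boldsymbol{\delta}^{(t)}$ depends on $\boldsymbol{w}^{(t)}$ which depends on $\boldsymbol{\xi}_p$---and proposing to close it via an induction maintaining the noise-correlation bound alongside the weight decomposition of Lemma~\ref{lem:main_adv_weight_decomp}; the paper does not spell this out.
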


\begin{proof}
\label{proof_lem:adv_noise_term}

    The proof logic is similar to Lemma \ref{lem:std_noise_term}.
\end{proof}

\subsubsection{Feature Learning for Adversarial Training}
Theorem \ref{thm:re_main_adv} is a direct corollary of the following lemma:
\begin{lemma}
\label{lem:adv_main_feature}

    For sufficiently large time $T=\Theta(\operatorname{poly}(d))$ , we have $\operatorname{max}_{r\in[m]}A_{i,r}^{(t)} = \Theta(\operatorname{log(d)})$ and $\operatorname{max}_{r\in[m]}B_{i,r}^{(t)} = \Tilde{O}(1/d^{c_0})$ and $\operatorname{max}_{r\in[m]}C_{i,r,y}^{(t)} = O(\operatorname{polylog(d)}/\sqrt{d})$ and  $\operatorname{max}_{r\in[m]}D_{i,r,y}^{(t)} = O(\operatorname{polylog(d)}/\sqrt{d})$ for each $i\in[k],y\in[k]\setminus\{i\}$.
\end{lemma}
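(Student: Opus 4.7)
The plan is to mirror the noiseless proof of Theorem \ref{thm:re_adv_training} while carefully tracking the noise terms in the weight decomposition of Lemma \ref{lem:main_adv_weight_decomp}. First, I would establish an empirical analogue of the adversarial example decomposition (Lemma \ref{lem:adv_data}) and the coefficient update rules (Lemma \ref{lem:adv_iter}), now written over the empirical distribution $\mathcal{Z}$ instead of $\mathcal{D}$. The crucial input is Lemma \ref{lem:adv_noise_term}, which guarantees $\langle \boldsymbol{w}_{i,r}^{(t)},\boldsymbol{\xi}_p\rangle=\widetilde{O}(1/\sqrt{d})$ throughout training; combined with the random initialization bound $\operatorname{max}_{r}\langle \boldsymbol{w}_{i,r}^{(0)},\boldsymbol{f}\rangle=\Theta(\log(m)/\sqrt{d})$ from Lemma \ref{lem:std_init}, this implies that $\langle \boldsymbol{w}_{i,r}^{(t)},\Tilde{\boldsymbol{x}}_p^{(t)}\rangle$ is dominated by the feature coefficients $A_{i,r}^{(t)}, B_{i,r}^{(t)}, C_{i,r,j}^{(t)}, D_{i,r,j}^{(t)}$ up to $\widetilde{O}(1/\sqrt{d})$ additive error, so the approximate iteration of Lemma \ref{lem:adv_iter_appr} carries over verbatim.

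Next, I would control the non-diagonal correlations by the same monotonicity argument as Lemma \ref{lem:adv_non_diag}: since $-\operatorname{logit}_i(\boldsymbol{F}^{(t)},\widetilde{\boldsymbol{X}}^{(t)})\le 0$ whenever $y\ne i$, and $\widetilde{\operatorname{ReLU}}'(z)=0$ for $z\le 0$, an induction shows $\operatorname{max}_r C_{i,r,y}^{(t)}$ and $\operatorname{max}_r D_{i,r,y}^{(t)}$ never exceed their initial values, which are $O(\log(d)/\sqrt{d})$ by Lemma \ref{lem:std_init}. This immediately gives the desired $O(\operatorname{polylog}(d)/\sqrt{d})$ bound on the non-diagonal coefficients and, via the noise bound, keeps $F_j^{(t)}(\widetilde{\boldsymbol{X}}^{(t)})=\widetilde{O}(d^{-q/2})$ for $j\ne y$, which is what lets the logit approximations (Lemmas \ref{lem:adv_diag_logit} and \ref{lem:adv_non_diag_logit}) apply to the empirical dynamics.

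With the iteration and the logit approximations in hand, I would then split the training into the two phases of Section \ref{sec:proof}. In Phase I, while all coefficients are small, the adversarial perturbation is still ineffective (the $\operatorname{min}\{\epsilon/\alpha_p,\widetilde{\Theta}(\widetilde{\eta})\sum_s(A_{i,s}^{(t)})^q\}$ factor is negligible), so the dynamics reduce to the polynomial part $A_{i,r}^{(t+1)}\approx A_{i,r}^{(t)}+\Theta(\eta)(A_{i,r}^{(t)})^{q-1}\mathbb{E}[\sum_{p\in\mathcal{J}_R}\alpha_p^q]$ and analogously for $B_{i,r}^{(t)}$. Assumption \ref{ass:robust_non_robust_feature} gives the ratio condition needed to apply the Tensor Power Method (Lemma \ref{lem:tensor_power_method}) with $x_t=\operatorname{max}_r B_{i,r}^{(t)}$ and $y_t=\operatorname{max}_r A_{i,r}^{(t)}$, yielding Lemma \ref{lem:adv_poly}: by the first time $T_y$ that $\sum_s(B_{y,s}^{(t)})^q$ reaches $\widetilde{\Theta}(\widetilde{\eta}^{-1})$, we still have $\operatorname{max}_r A_{y,r}^{(T_y)}=O(\operatorname{polylog}(d)/\sqrt{d})$.

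In Phase II, once $\sum_s(B_{y,s}^{(t)})^q=\widetilde{\Theta}(\widetilde{\eta}^{-1})$, the factor $(1-\operatorname{min}\{\epsilon/\beta_p,\widetilde{\Theta}(\widetilde{\eta})\sum_s(B_{y,s}^{(t)})^q\})^q$ collapses to $o(1)$, so non-robust learning freezes at $\operatorname{max}_r B_{y,r}^{(t)}=\widetilde{O}(d^{-c_0})$; in contrast, because $\alpha_p\gg\epsilon$, the robust analogue satisfies $1-\operatorname{min}\{\epsilon/\alpha_p,\cdots\}=\Omega(1)$ even when the min clips, so $A_{y,r}^{(t)}$ enters the linear regime of $\widetilde{\operatorname{ReLU}}$ and grows linearly in $t$. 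Mimicking Lemmas \ref{lem:std_log} and \ref{lem:std_final}, one shows $\operatorname{max}_r A_{y,r}^{(T)}=\Theta(\log d)$ at $T=\Theta(\operatorname{poly}(d)/\eta)$ while $F_y^{(t)}(\widetilde{\boldsymbol{X}}^{(t)})$ saturates at $\Theta(\log d)$ for all polynomial $t$. The principal obstacle is the Phase I/Phase II interface: I must confirm that the noise-induced error terms $\widetilde{O}(1/\sqrt{d})$ from Lemma \ref{lem:adv_noise_term} do not perturb the ratio $x_0/(y_0 S^{1/(q-2)})$ enough to invalidate the Tensor Power Method, and that the min-clipping and logit approximation remain self-consistent during the $\widetilde{O}(\widetilde{\eta}^{-1})$-threshold crossing; both reduce to showing these noise errors are dominated by the signal terms at every step, which follows from the empirical-analog of Lemma \ref{lem:total_logit_bound} together with $N=\operatorname{poly}(d)$.
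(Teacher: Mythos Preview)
Your proposal is correct and follows essentially the same approach as the paper: the paper's proof simply states that, given the noise bound of Lemma~\ref{lem:adv_noise_term}, one can reuse the exact logic of the noiseless argument culminating in Lemma~\ref{lem:adv_poly_keep}. Your proposal is a faithful and considerably more detailed unpacking of precisely this route---the empirical analogues of the adversarial-example decomposition and coefficient updates, the non-diagonal monotonicity, the two-phase analysis via the Tensor Power Method, and the verification that the $\widetilde{O}(1/\sqrt{d})$ noise errors do not disrupt any step.
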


\begin{proof}
\label{proof_lem:adv_main_feature}

    Due to Lemma \ref{lem:adv_noise_term}, we can prove this lemma using the exact same logic as that used to prove Lemma \ref{lem:adv_poly_keep}.
\end{proof}

\end{document}